\newcommand{\arxiv}[1]{\iftoggle{colt}{}{#1}}
\newcommand{\colt}[1]{\iftoggle{colt}{#1}{}}
\newcommand{\loose}{\colt{\looseness=-1}}
\newcommand{\epsnot}{\eps_0}
\let\OldStatex\Statex
\renewcommand{\Statex}[1][3]{%
  \setlength\@tempdima{\algorithmicindent}%
  \OldStatex\hskip\dimexpr#1\@tempdima\relax}
\newtheorem{claim}{Claim}
\newcommand{\algcommentul}[1]{\textbf{\underline{#1}}}
\newcommand{\algcomment}[1]{\textcolor{blue!70!black}{\footnotesize{//\hspace{2pt}#1}}}
\newcommand{\algcommentbig}[1]{\textcolor{blue!70!black}{\footnotesize{/*
          #1~*/}}}
\newcommand{\algspace}{\hspace{\algorithmicindent}}
\DeclarePairedDelimiter{\abs}{\lvert}{\rvert} %
\DeclarePairedDelimiter{\brk}{[}{]}
\DeclarePairedDelimiter{\crl}{\{}{\}}
\DeclarePairedDelimiter{\prn}{(}{)}
\DeclarePairedDelimiter{\nrm}{\|}{\|}
\DeclareMathOperator{\En}{\mathbb{E}}
\newcommand{\wt}[1]{\widetilde{#1}}
\newcommand{\wh}[1]{\widehat{#1}}
\def\ddefloop#1{\ifx\ddefloop#1\else\ddef{#1}\expandafter\ddefloop\fi}
\def\ddef#1{\expandafter\def\csname bb#1\endcsname{\ensuremath{\mathbb{#1}}}}
\def\ddefloop#1{\ifx\ddefloop#1\else\ddef{#1}\expandafter\ddefloop\fi}
\def\ddef#1{\expandafter\def\csname b#1\endcsname{\ensuremath{\mathbf{#1}}}}
\def\ddef#1{\expandafter\def\csname sf#1\endcsname{\ensuremath{\mathsf{#1}}}}
\def\ddef#1{\expandafter\def\csname c#1\endcsname{\ensuremath{\mathcal{#1}}}}
\def\ddef#1{\expandafter\def\csname h#1\endcsname{\ensuremath{\widehat{#1}}}}
\def\ddef#1{\expandafter\def\csname hc#1\endcsname{\ensuremath{\widehat{\mathcal{#1}}}}}
\def\ddef#1{\expandafter\def\csname t#1\endcsname{\ensuremath{\widetilde{#1}}}}
\def\ddef#1{\expandafter\def\csname tc#1\endcsname{\ensuremath{\widetilde{\mathcal{#1}}}}}
\def\ddefloop#1{\ifx\ddefloop#1\else\ddef{#1}\expandafter\ddefloop\fi}
\def\ddef#1{\expandafter\def\csname scr#1\endcsname{\ensuremath{\mathscr{#1}}}}
\def\ddefloop#1{\ifx\ddefloop#1\else\ddef{#1}\expandafter\ddefloop\fi}
\def\ddef#1{\expandafter\def\csname s#1\endcsname{\ensuremath{\mathscr{#1}}}}
\newcommand{\eps}{\epsilon}
\newcommand{\veps}{\epsilon}
\newcommand{\ldef}{\vcentcolon=}
\newcommand{\rdef}{=\vcentcolon}
\xpatchcmd{\proof}{\itshape}{\normalfont\proofnameformat}{}{}
\newcommand{\proofnameformat}{\bfseries}
    \let\Cref\crtCref
    \let\cref\crtcref
\newcommand{\preftitle}[1]{\crtcref{#1}}
  \renewenvironment{proof}[1][Proof]%
  {%
   \par\noindent{\bfseries\upshape {#1.}\ }%
  }%
  {\qed\newline}
\newcommand\paragraphp{\@startsection{paragraph}{4}{\z@}{3.25ex \@plus1ex \@minus.2ex}{-1em}{\normalfont\normalsize\itshape}}\makeatother
\newcommand{\lhs}{left-hand side\xspace}
\newcommand{\rhs}{right-hand side\xspace}
\newcommand{\epm}{$\EndoPolicyMaximizer^\eps_{t,h}$\xspace}
\newcommand{\efs}{$\FactorDetectionAlg_{t,h}^{\eps}$\xspace}
\newcommand{\en}{\mathrm{en}}
\newcommand{\ind}[1]{^{\scriptscriptstyle(#1)}}
\newcommand{\ttoh}{$t\to{}h$\xspace}
\newcommand{\pihat}{\wh{\pi}}
\newcommand{\Enpi}[1][\pi]{\En_{#1}}
\newcommand{\Prpi}[1][\pi]{\bbP_{#1}}
\newcommand{\bigoh}{O}
\newcommand{\bigom}{\Omega}
\renewcommand{\S}{S}
\newcommand{\J}{J}
\newcommand{\Jpi}[1][\pi]{\J(#1)}
\renewcommand{\th}{\mathrm{th}}
\newcommand{\fullcleq}[1][k]{\mathscr{I}_{\leq#1}}
\newcommand{\unf}{{\mathrm{Unf}}} 
\newcommand{\unif}{{\mathrm{Unf}}}
\newcommand{\fullc}{\mathscr{I}}
\newcommand{\FactorDetectionAlg}{\textsf{EndoFactorSelection}\xspace}
\newcommand{\EndoPolicyMaximizer}{\textsf{EndoPolicyOptimization}\xspace}
\newcommand{\AlgName}{\textsf{OSSR}\xspace}
\newcommand{\RLwithExo}{\textsf{ExoRL}\xspace}
\newcommand{\mainalg}{\RLwithExo}
\newcommand{\pcalg}{\AlgName}
\newcommand{\pcalgfull}{$\AlgName_{h}^{\veps,\delta}$\xspace}
\newcommand{\pcalglong}{Optimization-Selection State
  Refinement\xspace}
\newcommand{\pcalgonestep}{\ExactOneStep}
\newcommand{\pcalgexact}{\ExactAlgName}
\newcommand{\AMFD}{\textsf{AbstractFactorSearch}\xspace}
\newcommand{\ExactAlgName}{\textsf{OSSR.Exact}\xspace}
\newcommand{\ExactOneStep}{\textsf{OSSR.OneStep}\xspace}
\newcommand{\PSDP}{\textsf{PSDP}\xspace}
\newcommand{\PSDPE}{\textsf{ExoPSDP}\xspace}
\newcommand{\kEMDP}{ExoMDP\xspace}
\newcommand{\kEMDPs}{ExoMDPs\xspace}
\newcommand{\framework}{\kEMDP}
\newcommand{\frameworks}{\kEMDPs}
\newcommand{\Condition}{\mathrm{Condition}}
\newcommand{\iscover}{\textsf{is\_cover}\xspace}
\newcommand{\sufficientcover}{\textsf{sufficient\_cover}\xspace}
\newcommand{\poly}{\mathrm{poly}}
\newcommand{\GS}{\mathcal{\Gcal}}
\newcommand{\True}{\textsf{true}}
\newcommand{\False}{\textsf{false}}
\newcommand{\fail}{\textsf{fail}}
\newcommand{\Ic}{\cI_{\star}}
\newcommand{\Inc}{\Ic^\mathrm{c}}
\newcommand{\cItil}{\wt{\cI}}
\newcommand{\cIhat}{\wh{\cI}}
\newcommand{\cIen}{\cI_{\mathrm{en}}}
\newcommand{\cJen}{\cJ_{\mathrm{en}}}
\newcommand{\cJex}{\cJ_{\mathrm{ex}}}
\newcommand{\cIhaten}{\cIhat_{\mathrm{en}}}
\newcommand{\cIhatex}{\cIhat_{\mathrm{ex}}}
\newcommand{\circt}[1][t]{\circ_{#1}}
\newcommand{\Tc}{T_{\mathrm{en}}}
\newcommand{\Tnc}{T_{\mathrm{ex}}}
\newcommand{\rc}{R_{\mathrm{en}}}
\newcommand{\rct}{R_{\mathrm{en},t}}
\newcommand{\dc}{d_{1,\mathrm{en}}}
\newcommand{\dnc}{d_{1,\mathrm{ex}}}
\renewcommand{\emptyset}{\varnothing}
\newcommand{\GenericCollection}{\scrI}
\newcommand{\AllAbstractions}{\scrI_{\leq{}k}}
\newcommand{\EmpOccupancy}{\wh{\cD}}
\newcommand{\dhat}{\wh{d}}
\newcommand{\midsem}{\;;\;}
\newcommand{\PiInduced}{\Pi}
\newcommand{\PiInd}{\PiInduced}
\newcommand{\PiIndNS}{\PiInduced_{\mathrm{NS}}}
\newcommand{\PiNS}{\PiIndNS}
\newcommand{\Pimix}{\Pi_{\mathrm{mix}}}
\newcommand{\mut}[1][t]{\mu\ind{#1}}
\newcommand{\Iendo}[1]{#1_{\mathrm{en}}}
\newcommand{\Iexo}[1]{#1_{\mathrm{ex}}}
\newcommand{\setComp}[1]{#1^{\mathrm{c}}}
\newcommand{\multiline}[1]{\parbox[t]{\dimexpr\linewidth-\algorithmicindent}{#1}}
\newcommand{\cIbar}{\cJ}
          \author[1]{Dylan J. Foster} 
          \author[1]{Dipendra Misra}
          \author[1]{Akshay Krishnamurthy}
          \author[1]{John Langford}
\affil[1]{Microsoft Research NYC}
\date{}
\begin{document}

\maketitle
\begin{abstract}

In real-world reinforcement learning applications
the learner's observation space is ubiquitously high-dimensional with both relevant and irrelevant information about the task at
hand. Learning from high-dimensional observations has
been the subject of extensive investigation in supervised learning and statistics (e.g., via sparsity), but
analogous issues in reinforcement learning are not well understood,
even in finite state/action (tabular) domains. We introduce a new problem setting for reinforcement learning, the Exogenous Markov Decision Process
(\framework), in which the state
space admits an (unknown) factorization into a small controllable (or,
\emph{endogenous}) component and a large irrelevant (or,
\emph{exogenous}) component; the exogenous component is independent of
the learner's actions, but evolves in an arbitrary, temporally
correlated fashion. We provide a new algorithm, \mainalg, which learns
a near-optimal policy with sample complexity polynomial in the size of
the endogenous component and nearly independent of the size
of the exogenous component, thereby offering a \emph{doubly-exponential} improvement over off-the-shelf algorithms. Our results
highlight for the first time that sample-efficient reinforcement learning is possible in the
presence of exogenous information, and provide a simple, user-friendly benchmark for investigation going
forward.

\end{abstract}




\section{Introduction}
\label{sec:intro}
Most applications of machine learning and statistics involve complex inputs such as images or text, which may contain spurious information for the task at hand. A traditional approach to this problem is to use feature engineering to identify relevant information, but this requires significant domain expertise, and can lead to poor performance if relevant information is missed.
As an alternative, representation learning and feature selection methodologies developed over the last several decades address these issues, and enable practitioners to directly operate on complex, high-dimensional inputs with minimal domain knowledge. In the context of supervised learning and statistical estimation, these methods are particularly well-understood \citep{hastie2015statistical,wainwright2019high} and---in some cases---can be shown to provably identify relevant information for the task at hand in the presence of a vast amount of irrelevant or spurious features. As such, these approaches have emerged as the methods of choice for many practitioners.

Complex, high-dimensional inputs are also ubiquitous in Reinforcement Learning (RL) applications. However, due to the interactive, multi-step nature of the RL problem, naive extensions of representation learning techniques from supervised learning do not seem adequate. Empirically, this can be seen in the brittleness of deep RL algorithms and, the large body of work on stabilizing these methods \citep{gelada2019deepmdp,zhang2020learning}. Theoretically, this can be seen by the prevalence of strong function approximation assumptions that preclude introducing spurious features \citep{wang2020statistical,weisz2021exponential}. As a result, developing representation learning methodology for RL is a central topic of investigation.

Recently, a line of theoretical works have developed structural conditions under which RL with complex inputs is statistically tractable~\citep{jiang2017contextual,jin2021bellman,du2021bilinear,foster2021statistical}, along with a complementary set of algorithms for addressing these problems via representation learning \citep{du2019provably,misra2020kinematic,agarwal2020flambe,misra2021provable,uehara2021representation}. While these works provide some clarity into the challenges of high-dimensionality in RL, the models considered do not allow for spurious, temporally correlated information (e.g., exogenous information that evolves over time through a complex dynamical system). On the other hand, this structure is common in applications; for example, when a human is navigating a forest trail, the flight of birds in the sky is temporally correlated, but irrelevant for the human's decision making. Motivated by the success of high-dimensional statistics in developing and understanding feature selection methods for supervised learning, we ask:
\begin{center}
\emph{Can we develop provably efficient algorithms for RL in the presence of a large number of dynamic, yet irrelevant features?}
\end{center}
\cite{efroni2021provable} initiated the study of this question in a rich-observation setting with function approximation. However, their results require deterministic dynamics, and their approach crucially uses determinism to sidestep many challenges that arise in the presence of exogenous information.


\paragraph{Our contributions.}
In this paper, we take a step back from the function approximation setting considered by~\citet{efroni2021provable}, and introduce a simplified problem setting in which to study representation learning and exploration with high-dimensional, exogenous information. Our model, the \emph{Exogenous Markov Decision Process} or \kEMDP, involves a discrete $d$-dimensional state space (with each dimension taking values in $\crl*{1,\ldots,S}$) in which an \emph{unknown} subset of $k \ll d$ dimensions of the state can be controlled by the agent's actions. The remaining $d-k$ state variables are irrelevant for the agent's task, but may exhibit complex temporal structure.

Our main result is a new algorithm, \mainalg, that learns a policy which is (i) near-optimal and (ii) does not depend on the exogenous and irrelevant factors, while requiring only $\poly(S^k,\log(d))$ trajectories.  Here, the dominant $S^{k}$ term represents the size of the controllable (or, endogenous) state space, and the $\log(d)$ term represents the price incurred for feature selection (analogous to guarantees for sparse regression \citep{hastie2015statistical,wainwright2019high}). Our result represents a \emph{doubly-exponential} improvement over naive application of existing tabular RL methods to the \framework setting, which results in $\poly(S^d)$ sample complexity.  Our algorithm and analysis involve many new ideas for addressing exogenous noise, and we believe our work may serve as a building block for addressing these issues in more practical settings.

\section{Overview of Results}\label{sec: preliminaries}

In this section we introduce the \framework setting and give an overview of our algorithmic results, highlighting the key challenges they overcome. Before proceeding, we formally describe the basic RL setup we consider.

\paragraph{Markov decision processes.} We consider a finite-horizon Markov decision process (MDP) defined by the tuple $\Mcal =
\rbr{\Scal,\Acal, T, R, H, d_1}$, in which $\mathcal{S}$ is the state space, $\mathcal{A}$ is the action space, $T:\cS\times\cA\to\Delta(\cS)$ is the transition operator $R:\cS\times\cA\to\brk{0,1}$ is the reward function, $H\in\bbN$ is the horizon, and $d_1\in\Delta(\cS)$ is the initial state distribution. Given a non-stationary policy $\pi=(\pi_1,\ldots,\pi_H)$, where $\pi_h:\cS\to\cA$, an episode in the MDP $\cM$ proceeds as follows, beginning from $s_1\sim{}d_1$: For $h=1,\ldots,H$: $a_h=\pi_h(s_h)$, $r_h=R(s_h,a_h)$, and $s_{h+1}\sim{}T(\cdot\mid{}s_h,a_h)$. We let $\En_{\pi}\brk{\cdot}$ and $\bbP_{\pi}(\cdot)$ denote the expectation and probability for the trajectory $(s_1,a_1,r_1),\ldots,(s_H,a_H,r_H)$ when $\pi$ is executed, respectively, and define
$\Jpi = \Enpi\brk*{\sum_{h=1}^{H}r_h}$
as the average reward.

The objective of the learner is to learn an $\eps$-optimal policy online: Given $N$ episodes to execute a policy and observe the resulting trajectory, find a policy $\pihat$ such that $\J(\pihat) \geq{} \max_{\pi\in\PiNS}\J(\pi) - \veps$, where $\PiNS$ denotes the set of all non-stationary policies $\pi=(\pi_1,\ldots,\pi_H)$.







\subsection{The Exogenous MDP (\kEMDP) Setting}
The \framework is a Markov decision process in which the state space factorizes into an endogenous component that is (potentially) affected by the learner's actions, and an exogenous component that is independent of the learner's actions, but evolves in an arbitrary, temporally correlated fashion. Formally, given a parameter $d\in\bbN$ (the number of factors), the state space $\cS$ takes the form $\Scal = \otimes_{i=1}^d\cS_i$, so that each state $s\in\cS$ has the form $s=(s_1,\ldots,s_d)$, with $s_i\in\cS_i$; we refer to $\cS_i$ (equivalently, $i$) as the \emph{$i^{\th}$ factor}. We take $\Ic\subset\brk{d}$ to represent the \emph{endogenous factors} and $\Inc\ldef{}\brk{d}\setminus\Ic$ to represent the \emph{exogenous factors}, which are unknown to the learner. Letting $s\brk{\cI}\ldef{}(s_i)_{i\in\cI}$, we assume the dynamics and rewards factorize across the endogenous and exogenous components as follows:
\begin{align}
  \begin{aligned}
     &T\rbr{s' \mid s, a} =   \Tc(s'\sbr{\Ic} \mid s\sbr{\Ic},a)\cdot\Tnc(s'\sbr{\Inc} \mid s\sbr{\Inc}), \\
     &R(s,a) = \rc(s\sbr{\Ic},a), \\
    &d_1(s) = \dc\rbr{s\sbr{\Ic}}\cdot\dnc\rbr{s\sbr{\Inc}},
  \end{aligned}
      \label{eq: model assumption d}
\end{align}
for all $s,s'\in \cS$ and $a\in \Acal.$
That is, the endogenous factors $\Ic$  are (potentially) affected by the agent's actions and are sufficient to model the reward, while the exogenous factors $\Inc$ evolve independently of the learner's actions and do not influence the reward.  



In this paper, we focus on a finite-state/action (tabular) variant of the \framework setting in which $\cS_i=\brk{S}$ and $\cA=\brk{A}$, with $S\in\bbN$ representing the number of states per factor and $A\in\bbN$ representing the number of actions. We assume that $\abs{\Ic}\leq{}k$, where $k\ll{}d$ is a known upper bound on the number of endogenous factors.\footnote{Extending our results to settings in which different factors have different sizes (i.e., $\cS_i=\brk{S_i}$) is straightforward.} In the absence of the structure in \eqref{eq: model assumption d}, this is a generic tabular RL problem with $\abs{\cS}=S^{d}$, and the optimal sample complexity scales as $\poly(\S^d,A,H,\eps^{-1})$ \citep{azar2017minimax}, which has exponential dependence on the number of factors $d$. On the other hand, if $\Ic$ were known a-priori, applying off-the-shelf algorithms for tabular RL to the endogenous subset of the state space would lead to sample complexity $\poly(\S^k,A,H,\eps^{-1})$ \citep{azar2017minimax,jin2018q,zanette2019tighter,kaufmann2021adaptive}, which is independent of $d$ and offers significant improvement when $k\ll{}d$. This motivates us to ask:   \emph{With no prior knowledge, can we learn an $\eps$-optimal policy for the \framework with sample complexity polynomial in $S^{k}$ and sublinear in $d$?}






\subsection{Challenges of RL in the Presence of Exogenous Information}


Sample-efficient learning in the absence of prior knowledge poses significant algorithmic challenges.
  \begin{enumerate}[label = $(\mathrm{C}\arabic*)$]
  \item \emph{Hardness of identifying endogenous factors.} In general, the endogenous factors may not be identifiable (that is, multiple choices for $\Ic$ may obey the structure in \eqref{eq: model assumption d}). Even when $\Ic$ is identifiable, \emph{certifying} whether a particular factor $i\in\brk{d}$ is exogenous can be statistically intractable (e.g., if the effect of the agent's action on the state component $s_i$ is small relative to $\eps$).\loose
  \item \emph{Necessity of exploration.} The agent's action might have a large effect on an endogenous factor $i\in\Ic$, but only in a particular state $s\in\cS$ that requires deliberate planning to reach. As such, any approach that attempts to recover the endogenous factors must be interleaved with exploration, resulting in a chicken-and-egg problem. 
    ``Test-then-explore'' approaches do not suffice.\loose
  \item \emph{Entanglement of endogenous and exogenous factors.}
    The factorized dynamics in \pref{eq: model assumption d} lead to a number of useful structural properties for \kEMDPs, such as factorization of state occupancy measures (cf. \pref{app:structural}). However, these properties generally only hold for policies that act on the endogenous portion of the state.  When an agent executes a policy whose actions depend on the exogenous state factors, the evolution of the endogenous and exogenous components becomes entangled. This entanglement makes it difficult to apply supervised learning or estimation methods to extract information from trajectories gathered from such policies, and can lead to error amplification. As a result, significant care is required in gathering data.

          \end{enumerate}
          \paragraph{Failure of existing algorithms.} Existing RL techniques do not appear to be sufficient to address the challenges above and generally have sample complexity requirements scaling with $\bigom(d)$ or worse. For example, tabular methods do not exploit factored structure, resulting in $\bigom(S^d)$ sample complexity, and we can show that complexity measures like the Bellman rank~\citep{jiang2017contextual} and its variants scale as $\bigom(d)$, so they do not lead to sample-efficient learning guarantees. Moreover, algorithms for factored MDPs (e.g.,~\citet{rosenberg2020oracle}) obtain guarantees that depend on sparsity in the transition operator, but this operator is dense in the \framework setting, leading to sample complexity that is exponential in $d$. See further discussion in \pref{sec:related,app:bellman_rank}.

    \subsection{Main Result} 

We present a new algorithm, \mainalg, which learns a near-optimal policy for the \framework with sample complexity polynomial in the number of endogenous states and \emph{logarithmic} in the number of exogenous components. Following previous approaches to representation learning in RL \citep{du2019provably,misra2020kinematic,agarwal2020flambe}, our results depend on a \emph{reachability parameter}.
\begin{definition}
The endogenous state space is $\eta$-reachable if for all $h\in\brk{H}$ and $s\brk{\Ic}\in \cS\brk{\Ic}$, either\loose
\begin{align*}
    \max_{\pi\in \PiIndNS} \Prpi\rbr{s_h\brk{\Ic} = s\brk{\Ic}}\geq \eta, \quad  \mathrm{or} \quad \max_{\pi\in \PiIndNS} \Prpi\rbr{s_h\brk{\Ic} = s\brk{\Ic}} = 0.
\end{align*}
\end{definition}
Crucially, this notation of reachability considers only the endogenous portion of the state space, not the full state space. We assume access to a lower bound $\eta$ on the optimal reachability parameter.

Our main result is as follows.

\newtheorem*{thm:informal1}{Theorem \ref*{thm:main} (informal)}
\begin{thm:informal1}
  
  \label{thm: informal main theorem}
  With high probability, $\mainalg$ learns an $\eps$-optimal policy for the \framework using $\poly(\S^k,A,H,\log(d))\cdot{}\rbr{\eps^{-2}+\eta^{-2}}$ trajectories.
\end{thm:informal1}
This constitutes a \emph{doubly-exponential} improvement over the $S^{d}$ sample complexity for naive tabular RL in terms of dependence on the number of factors $d$, and it provides a RL analogue of sparsity-dependent guarantees in high-dimensional statistics \citep{hastie2015statistical,wainwright2019high}. Importantly, the result does not require any statistical assumptions beyond the factored structure in \eqref{eq: model assumption d} and reachability (for example, we do not require deterministic dynamics). Beyond polynomial factors, the dependence on the size of the state space cannot be improved further.





\subsection{Our Approach: Exploration with a Certifiably Endogenous Policy Cover}


\mainalg is built upon the notion of an \emph{endogenous policy cover}. Define an endogenous policy as follows.\colt{\vspace{-10pt}}
\begin{definition}[Endogenous policy]\label{def: endogenous policy}
  A policy $\pi=(\pi_1,\ldots,\pi_H)$ is \emph{endogenous} if it acts only on the endogenous component of the state space: For all $h\in\brk{H}$ and $s\in\cS$, we have $\pi_h(s)=\pi_h(s\brk{\Ic})$.\loose
\end{definition}
An endogenous policy cover is a  (small) collection of endogenous policies that ensure each state is reached with near-maximal probability.
\begin{definition}[Endogenous policy cover]\label{def: approximate endognous policy cover}
  A set of non-stationary policies $\Psi$ is an endogenous ($\eps$-approximate) policy cover for timestep $h$ if:
\begin{enumerate}
\item For all $s\in\cS$, $\max_{\psi\in\Psi}\Prpi[\psi]\rbr{ s_h\brk{\Ic}=s\brk{\Ic} } \geq{} \max_{\pi\in\PiNS} \Prpi\rbr{ s_h\brk{\Ic}=s\brk{\Ic}}- \eps$.
    \item The set $\Psi$ contains only endogenous policies.
\end{enumerate}
\end{definition}
While the coverage property of \pref{def: approximate endognous policy cover} is stated in terms of occupancy measures for the endogenous portion of the state space, the factored structure of the \framework implies that this yields a cover for the entire state space (cf. \pref{app:structural_occupancy}):
\[
  \max_{\psi\in\Psi}\Prpi[\psi]\rbr{ s_h=s } \geq{} \max_{\pi} \Prpi\rbr{ s_h=s}- \eps,\quad\forall{}s\in\cS.
\]
In particular, even though $\abs{\cS}=S^{d}$, this guarantees that for each timestep $h$, there exists a \emph{small} endogenous policy cover with $\abs{\Psi}\leq{}S^{k}$. \mainalg constructs such a policy cover and uses it for sample-efficient exploration in two phases. First, in Phase I (\AlgName), the algorithm builds the policy cover in a manner guaranteeing endogeneity; this accounts for the majority of the algorithm design and analysis effort. Then, in Phase II (\PSDPE), the algorithm uses the policy cover to optimize rewards.\loose


\paragraph{Finding a certifiably endogenous policy cover: \AlgName.}

The main component of \mainalg is an algorithm, \pcalg, which iteratively learns a sequence of endogenous policy covers $\Psi\ind{1},\ldots,\Psi\ind{H}$ with \[\max_{\psi\in\Psi\ind{h}}\Prpi[\psi]\rbr{ s_h\brk{\Ic}=s\brk{\Ic} } \geq{} \max_{\pi} \Prpi\rbr{ s_h\brk{\Ic}=s\brk{\Ic}}- \eps\] for all $s\brk{\Ic}\in\cS\brk{\Ic}$. For each $h\in\brk{H}$, given the policy covers $\Psi\ind{1},\ldots,\Psi\ind{h-1}$ for preceding timesteps, \pcalg builds the policy cover $\Psi\ind{h}$ using a novel statistical test. The test constructs a factor set $\cI\subset\brk{d}$ which is (i) endogenous, in the sense that $\cI\subset\Ic$, yet (ii) ensures sufficient coverage, in the sense that there exists a near-optimal policy cover operating only on $s\brk{\cI}$. The analysis of this test relies on a unique structural property of the \framework setting called the \emph{restriction lemma} (\pref{lem: invariance of reduced policy cover}), which provides a mechanism to ``regularize'' the factor set under consideration toward endogeneity in a data-driven fashion.

This approach circumvents challenges $(\mathrm{C}1)$ and $(\mathrm{C}2)$: It does not rely on explicit identification of the endogenous factors and instead iteratively builds a \emph{subset} of factors that is certifiably endogenous, but nonetheless sufficient to explore. Endogeneity of the resulting policy cover $\Psi\ind{h}$ ensures the success of subsequent tests at rounds $h+1,\ldots,H$, and circumvents the issue of entanglement raised in challenge $(\mathrm{C}3)$. To summarize, the following guarantee constitutes our main technical result.
  \newtheorem*{thm:informal2}{Theorem \ref*{thm: sample complexity of state refinment} (informal)}
\begin{thm:informal2}
  \label{thm: informal app endo policy cover theorem}
With high probability, $\AlgName$ finds an endogenous $\frac{\eta}{2}$-approximate policy cover using $\poly\prn*{\S^k,A,H,\log(d)}\cdot\eta^{-2}$ trajectories.
\end{thm:informal2}

\subsection{Organization}

The remainder of the paper is organized as follows. In \pref{sec:warmup}, we introduce the \AlgName algorithm, highlight the key algorithm design techniques and analysis ideas, and state its formal guarantee (\pref{thm: sample complexity of state refinment}) for finding a policy cover. Building on this result, in \pref{sec:main} we introduce the \mainalg algorithm, and provide the main sample complexity guarantee for RL in \frameworks (\pref{thm:main}). We close with discussion of additional related work (\pref{sec:related}) and open problems (\pref{sec:conclusion}).

\subsection{Preliminaries}

We let $\PiInd$ denote the set of all one-step policies $\pi:\cS\to\cA$. We use the term \emph{$t\to{}h$ policy} to refer to a non-stationary policy $\pi=(\pi_{t},\ldots,\pi_h)$ defined over a subset of timesteps $t\leq h$.

For a non-stationary policy $\pi\in\PiNS$, we define the state-action and state value functions:
 $Q_h^\pi(s,a) \ldef \EE_\pi\sbr{\sum_{h'=h}^H r_{h'}\mid{}s_h=s,a_h=a}$, and $V_h^\pi(s) \ldef{} Q_h^\pi(s,\pi_h(s)).$ We denote the expected value of a policy $\pi$ from time step $t$ to $h$ by $ V_{t,h}\rbr{\pi } \ldef \EE_{\pi }\brk*{ \sum_{t'=t}^h r_{t'}}.$
We adopt the shorthand $d_h(s \midsem \pi) \ldef \PP_\pi(s_h = s)$ for the induced state occupancy measure. Likewise, for $\cI\subseteq\brk{d}$, we define  $d_h(s\brk{\cI} \midsem \pi) \ldef \PP_\pi(s_h\brk{\cI} = s\brk{\cI})$.


For algorithm design purposes, we consider \emph{mixture policies} of the form $\mu\in \Pimix\ldef{}\Delta(\PiNS)$. To run a mixture policy $\mu\in\Pimix$, we sample $\pi\sim\mu$, then execute $\pi$ for an entire episode. We further denote $\Pimix\brk{\cI} \ldef \Delta(\PiNS\brk{\cI})$ as the set of mixture policies over the policy set $\PiNS\brk{\cI}$, where $\PiNS\brk{\cI}$ denotes the set of policies that act on the factor set $\cI$.  We let $\En_{\mu}\brk{\cdot}$ and $\Prpi[\mu]\prn{\cdot}$ denote the expectation and probability under this process, and we define $\J(\mu)=\En_{\pi\sim\mu}\brk{\J(\pi)}= \En_{\mu}\brk*{\sum_{h=1}^{H}r_h}$ and $d_h(s \midsem \mu) \ldef \PP_{\mu}(s_h = s)$ analogously. We say that $\mu\in\Pimix$ is endogenous if it is supported over endogenous policies in $\PiNS$. Finally, for $\mu\in \Pimix$ and $\pi\in \PiInd$ we let  $\mu \circt \pi$ be the policy that follows $\mu$ for the first $t-1$ timesteps, and at the $t^{\mathrm{th}}$ timestep it switches to $\pi$. For sets of policies $\Psi_1$ and $\Psi_2$ we let $\Psi_1 \circt \Psi_2 \ldef \cbr{\psi_1\circt \psi_2 \mid \psi_1\in \Psi_1, \psi_2\in \Psi_2}$.

 \paragraph{\framework notation.}

 Recall that for a factor set $\cI\subseteq\brk{d}$, we define $\cI^\mathrm{c} \ldef [d]\setminus \cI$ as the complement, and define $s\sbr{\cI}\ldef \prn{s_i}_{i\in\cI}$ and $\Scal\sbr{\cI} \ldef \otimes_{i\in\cI}\cS_i$ as the corresponding components of the state and state space. We make frequent use of the fact that for any pair of factors $\cI_1$ and $\cI_2$ with $\cI=\cI_1\cup \cI_2$ and $\cI_1\cap \cI_2=\emptyset$, any state $s\brk*{\cI}\in \cS\brk*{\cI}$ can be uniquely split as $s\brk*{\cI} = \rbr{s\brk*{\cI_1}, s\brk*{\cI_2}}$, with $s\brk{\cI_1}\in\cS\brk{\cI_1}$ and $s\brk{\cI_2}\in\cS\brk*{\cI_2}$. We use a canonical ordering when indexing with factor sets.

Any factor set $\cI\subseteq\brk{d}$ can be written as $\cI = \rbr{\cI\cap \Ic} \cup \rbr{\cI \cap \Inc}$. We denote these intersections by $\Iendo{\cI} \ldef \cI\cap \Ic$ and $\Iexo{\cI} \ldef \cI\cap \Inc$, which represent the endogenous and exogenous components of $\cI$.\loose 

 We say that a policy $\pi$ \emph{acts on a factor set $\cI$} if it  selects actions as a measurable function of $\cS\brk{\cI}$. We let $\PiInd\brk{\cI}$ denote the set of all one-step policies $\pi:\cS\brk{\cI}\to\cA$ that act on $\cI$, and let $\PiIndNS\brk{\cI}$ denote the set of all non-stationary policies that act on $\cI$.
 
 Lastly, if $\cI\subseteq \Ic^{\mathrm{c}}$, i.e., the factor $\cI$ is a subset of the exogenous factors, we omit the dependence in the policy $\pi$ from its occupancy measure, $d_h(s\brk{\cI} \midsem \pi)= d_h(s\brk{\cI})$.  Indeed, for any $\pi,\pi'\in \PiNS$ it holds that $d_h(s\brk{\cI} \midsem \pi) = d_h(s\brk{\cI} \midsem \pi')$, and hence the occupancy measure of $s\brk{\cI}$ is independent of the policy.

\paragraph{Collections of factor sets.}
For a factor set $\cI\subseteq\brk{d}$, we let $\fullc_{\leq k}(\cI) \ldef \cbr{\cI' \subseteq[d] \mid \cI \subseteq \cI',\ \abr{\cI'}\leq k}$ denote a collection of all factor sets of size at most $k$ that contain $\cI$, and analogously define $\fullc_{k}(\cI) \ldef \cbr{\cI' \subseteq[d] \mid \cI \subseteq \cI',\ \abr{\cI'} = k}$. We adopt the shorthand $\fullc_{\leq k} \ldef \fullc_{\leq k}\rbr{\emptyset}$ and $\fullc_{k} \ldef \fullc_{k}\rbr{\emptyset}$. With some abuse of notation, for a given collection of factor sets $\fullc$, we define $\PiInd\brk*{\fullc} \ldef \cup_{\cI\in \fullc} \PiInd\brk*{\cI}$ as the set of all possible policies induced by factors in $\fullc$.
%




We define $[N]\ldef \{1, 2, \cdots, N\}$. $\unf(\cX)$ denotes the uniform distribution over a finite set $\cX$.\loose

\section{Learning a Near-Optimal Endogenous Policy Cover: $\AlgName$}
\label{sec:warmup}



In this section, we present the first of our main algorithms, \pcalg (\pref{alg: dp-sr paper version}), which performs reward-free exploration to construct an endogenous policy cover for the \framework. \pcalg constitutes the main algorithmic component of \mainalg, and we believe it is of independent interest. 

\pcalg is a \emph{forward-backward} algorithm. For each layer $h\in\brk{H}$, given previous policy covers $\Psi\ind{1},\ldots,\Psi\ind{h-1}$, the algorithm constructs an endogenous policy cover $\Psi\ind{h}$ in a backwards fashion. Backward steps proceed from $t=h-1,\ldots,1$, with each step consisting of (i) an \emph{optimization} phase, in which we find a (potentially large) collection of policies for choosing actions at step $t$ that lead to good coverage for all possible target factors sets $\cI$ at layer $h$, and (ii) a \emph{selection} phase, in which we narrow the collection of policies from the first phase down to a small set of policies that act on a single (endogenous) factor set $\cI$, yet still ensure coverage for all states at step $h$.

Instead of directly diving into \pcalg, we build up to the algorithm through two warm-up exercises:\loose
\begin{itemize}
\item In \pref{sec: endogenous policy cover 1 step exact}, we consider a simplified version of \pcalg (\ExactOneStep, or  \pref{alg: 1step endo policy cover}) which computes an endogenous policy cover under the assumption that (i) $H=2$, and (ii) certain occupancy measures for the underlying \framework can be computed exactly.
\item Building on this result, in \pref{sec: DP-SR exact} we provide another simplified algorithm (\pcalgexact, or \pref{alg: exact dynamics state refinement}) which computes an endogenous policy cover for general $H$, but still requires exact access to certain occupancy measures for the \framework.
\end{itemize}
Finally, in \pref{sec: learning endogenous policy cover} we present the full \pcalg algorithm and its main sample complexity guarantee.



\subsection{Warm-Up I: Finding an Endogenous Policy Cover with Exact Queries ($H=2$)}\label{sec: endogenous policy cover 1 step exact}

\begin{algorithm}[htp]
  \caption{$\ExactOneStep$: \pcalglong for \kEMDPs with $H=2$}
  \label{alg: 1step endo policy cover}
\begin{algorithmic}[1]
\item[]{\algcommentul{Phase I: Optimization}}\vspace{2pt}
    \State \label{line: exact 1 endo optimization} Find factor set $\cItil\in \fullc_{\leq k}$ with minimal cardinality such that for all $ \cJ\in
      \fullc_{\leq k}$ and $s\brk{\cJ}\in \cS\brk{\cJ}$,
\[
        \max_{\pi\in \PiInd\brk{\fullc_{\leq k}}} d_2\rbr{s\brk{\cJ} \midsem \pi} = \max_{\pi\in \PiInd\brk{\cItil}}d_2\rbr{s\brk{\cJ} \midsem \pi}.
      \]
      \State\label{line:onestep_line2} For all $\cJ\in\fullc_{\leq{}k}$, define $\pi_{s\brk{\cJ}}= \arg\max_{\pi\in \PiInd\brk{\cItil}} d_2\rbr{s\brk{\cJ} \midsem \pi}$ for each $s\brk{\cJ}\in\cS\brk{\cJ}$, then set
      \[
        \Gamma\sbr{\cJ} \ldef \cbr{\pi_{s\brk{\cJ}} : s\brk{\cJ}\in \cS\brk{\cJ}}.
      \]
  \item[]{\algcommentul{Phase II: Selection}}\vspace{2pt}
    \State  \label{line: exact 1 endo selection} Find factor set $\widehat{\cI}\in \fullc_{\leq k}$ with minimal cardinality such that for all $\cJ\in\fullc_{\leq k}$ and $s\sbr{\cJ} \in \cS\sbr{\cJ}$,
    \[
      \max_{\pi\in \PiInd\brk{\fullc_{\leq k}} } d_2\rbr{s\brk{\cJ} \midsem \pi} = d_2\rbr{s\brk{\cJ} \midsem \pi_{s\brk{\cJ \cap \widehat{\cI}}}}.
      \]
    \State \textbf{return} $\prn[\big]{\widehat{\cI}, \Gamma\brk{\widehat{\cI}}}$
\end{algorithmic}
\end{algorithm}


\pref{alg: 1step endo policy cover} presents \pcalgonestep, a simplified version of \pcalg that computes a (small) endogenous policy cover for horizon two, assuming exact access to the state occupancies $d_2(s\midsem\pi)$. This algorithm highlights the mechanism through which \pcalg is able to simultaneously ensure both endogeneity and coverage.

\pcalgonestep learns an endogenous policy cover in two phases. In the \emph{optimization phase} (\savehyperref{line: exact 1 endo optimization}{Lines \ref*{line: exact 1 endo optimization} and \ref*{line:onestep_line2}}) the algorithm computes a \emph{partial policy cover} $\Gamma\brk{\cJ}$ for each factor set $\cJ\in\fullc_{\leq{}k}$, which ensures that for all state factor values $s\brk{\cJ}\in \cS\brk{\cJ}$ there exists a policy $\pi_{s\brk{\cJ}}\in \Gamma\brk{\cJ}$ which maximizes the probability to reach the state factor value $s\brk{\cJ}$ at the $2^{\mathrm{nd}}$ timestep.

All of the partial policy covers are induced by a single factor set $\cItil$; existence of such a factor set is guaranteed by \pref{property: endo maximizer is sufficient}. We show that by regularizing by cardinality, $\cItil$ is guaranteed to be endogenous, and so the policy covers $(\Gamma\brk{\cJ})_{\cJ\in\fullcleq}$ are endogenous as well.

At this point, the only issue is size: The set $\bigcup_{\cJ\in\fullcleq}\Gamma\brk{\cJ}$ is an exact policy cover for $h=2$ (in the sense of \pref{def: approximate endognous policy cover}), but its size scales as $\bigom(d^{k})$,\footnote{The set $\Pi\brk{\cItil}$ also gives a policy cover, but it is even larger.} which makes it unsuitable for exploration. To address this issue, the \emph{selection phase} (\pref{line: exact 1 endo selection}) identifies a single endogenous factor $\cIhat$ such that $\Gamma\brk{\cIhat}$ is an endogenous policy cover (note that choosing $\Gamma\brk{\Ic}$ would suffice, but $\Ic$ is not known to the learner). Since $\abs{\Gamma\brk{\cIhat}}\leq{}S^{k}$ by construction, this yields a small policy cover as desired.\loose

\begin{proposition}
  \label{prop: exact case recovery of 1-step endo policy cover}
   The pair $\prn[\big]{\widehat{\cI}, \Gamma\brk{\widehat{\cI}}}$ returned by \pcalgonestep has the property that (i) $\widehat{\cI}$ is endogenous (i.e., $\widehat{\cI}\subseteq\Ic$), and (ii) $\Gamma\brk{\widehat{\cI}}$ is an endogenous policy cover for $h=2$: For all $s\in\cS$,\loose
        \[
                  \max_{\pi\in \PiInd} d_2\prn[\big]{s\brk{\Ic} \midsem \pi} = d_2\prn[\big]{s\brk{\Ic} \midsem \pi_{s\brk{\widehat{\cI}}}},\quad\text{where $\pi_{s\brk{\widehat{\cI}}}\in \Gamma\brk{\widehat{\cI}}$.}
        \]
      \end{proposition}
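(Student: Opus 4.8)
The plan is to lean on two structural facts about \kEMDPs together with the restriction lemma. The first fact is that for any \emph{endogenous} policy $\pi$ and any factor set $\cJ$, the occupancy factorizes as $d_2\rbr{s\brk{\cJ} \midsem \pi} = d_2\rbr{s\brk{\Iendo{\cJ}} \midsem \pi}\cdot d_2\rbr{s\brk{\Iexo{\cJ}}}$, with the exogenous term independent of $\pi$ (cf.\ \pref{app:structural}). The second is that endogenous policies already suffice to maximize occupancy, i.e.\ $\max_{\pi\in\PiInd} d_2\rbr{s\brk{\cJ}\midsem\pi} = \max_{\pi\in\PiInd\brk{\Ic}} d_2\rbr{s\brk{\cJ}\midsem\pi}$ (\pref{property: endo maximizer is sufficient}). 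The restriction lemma (\pref{lem: invariance of reduced policy cover}) then lets me argue that the two cardinality-regularized sets $\cItil$ and $\widehat{\cI}$ are forced to be endogenous, after which the cover property in part (ii) falls out of the defining optimality conditions of the two phases.

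First I would record that $\cItil\subseteq\Ic$. By the sufficiency property, $\Ic$ is feasible for the optimization condition on \pref{line: exact 1 endo optimization}, so $\cItil$ exists with $\abr{\cItil}\le\abr{\Ic}\le k$. The restriction lemma shows that whenever $\cItil$ is feasible, its endogenous part $\Iendo{\cItil}$ is feasible as well, since conditioning the action on exogenous coordinates cannot raise the maximal occupancy of any target (the endogenous transition depends on $s_1$ only through $s_1\brk{\Ic}$). Minimality of $\cItil$ then forces $\abr{\cItil}=\abr{\Iendo{\cItil}}$, hence $\cItil=\Iendo{\cItil}\subseteq\Ic$. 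Consequently every policy in $\PiInd\brk{\cItil}$ is endogenous, so each $\pi_{s\brk{\cJ}}$ and each partial cover $\Gamma\brk{\cJ}$ is endogenous.

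Next, for part (i) I would establish $\widehat{\cI}\subseteq\Ic$ in two sub-steps. The first is feasibility of $\Ic$ for the selection condition on \pref{line: exact 1 endo selection}: for $\cJ\in\fullc_{\leq k}$ we have $\cJ\cap\Ic=\Iendo{\cJ}$, and applying the factorization to the endogenous policy $\pi_{s\brk{\Iendo{\cJ}}}$ together with the optimization identity at the factor set $\Iendo{\cJ}$ (and the sufficiency property to pass to $\PiInd\brk{\fullc_{\leq k}}$) gives $d_2\rbr{s\brk{\cJ}\midsem \pi_{s\brk{\Iendo{\cJ}}}} = \max_{\pi\in\PiInd\brk{\fullc_{\leq k}}} d_2\rbr{s\brk{\cJ}\midsem\pi}$, so $\Ic$ is feasible and $\widehat{\cI}$ exists. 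The second sub-step is a restriction argument for the selection condition: because $\cItil$ is endogenous, the factorization shows that $\arg\max_{\pi\in\PiInd\brk{\cItil}} d_2\rbr{s\brk{\cK}\midsem\pi}$ depends on $\cK$ only through $\Iendo{\cK}$, so (with consistent tie-breaking) $\pi_{s\brk{\cK}}=\pi_{s\brk{\Iendo{\cK}}}$. Taking $\cK=\cJ\cap\widehat{\cI}$ and noting $\cJ\cap\Iendo{\widehat{\cI}}=\Iendo{\cK}$, feasibility of $\widehat{\cI}$ transfers to $\Iendo{\widehat{\cI}}$, and minimality of $\widehat{\cI}$ yields $\widehat{\cI}=\Iendo{\widehat{\cI}}\subseteq\Ic$.

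Finally, for part (ii) I would instantiate the selection condition at $\cJ=\Ic\in\fullc_{\leq k}$. Since $\widehat{\cI}\subseteq\Ic$ we have $\cJ\cap\widehat{\cI}=\widehat{\cI}$, so the condition reads $\max_{\pi\in\PiInd\brk{\fullc_{\leq k}}} d_2\rbr{s\brk{\Ic}\midsem\pi} = d_2\rbr{s\brk{\Ic}\midsem\pi_{s\brk{\widehat{\cI}}}}$. It then remains to replace the maximum over $\PiInd\brk{\fullc_{\leq k}}$ by the maximum over the full one-step class $\PiInd$, which follows from the sufficiency property via the squeeze $\PiInd\brk{\Ic}\subseteq\PiInd\brk{\fullc_{\leq k}}\subseteq\PiInd$ (using $\Ic\in\fullc_{\leq k}$); this gives the claimed identity $\max_{\pi\in\PiInd} d_2\rbr{s\brk{\Ic}\midsem\pi} = d_2\rbr{s\brk{\Ic}\midsem\pi_{s\brk{\widehat{\cI}}}}$, with $\pi_{s\brk{\widehat{\cI}}}\in\Gamma\brk{\widehat{\cI}}$ endogenous by the first step. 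The main obstacle throughout is the restriction lemma: both endogeneity arguments hinge on the structural claim that conditioning actions on exogenous coordinates cannot improve reachability, and it is the factorization that reduces the selection-phase restriction to the clean identity $\pi_{s\brk{\cK}}=\pi_{s\brk{\Iendo{\cK}}}$.
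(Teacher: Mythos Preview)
Your proof follows the same overall strategy as the paper: show $\cItil\subseteq\Ic$ via the restriction lemma, show $\widehat{\cI}\subseteq\Ic$ by arguing that $\Iendo{\widehat{\cI}}$ is also feasible for the selection constraint, and conclude part (ii) by instantiating the selection constraint at $\cJ=\Ic$. The arguments for $\cItil$ and for part (ii) match the paper exactly.

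The one place you diverge is in the restriction step for $\widehat{\cI}$. You argue that, since every $\pi\in\PiInd\brk{\cItil}$ is endogenous, the factorization $d_2(s\brk{\cK}\midsem\pi)=d_2(s\brk{\Iendo{\cK}}\midsem\pi)\cdot d_2(s\brk{\Iexo{\cK}})$ forces the argmax to depend only on $\Iendo{\cK}$, and hence (with consistent tie-breaking) $\pi_{s\brk{\cK}}=\pi_{s\brk{\Iendo{\cK}}}$. This is fine when $d_2(s\brk{\Iexo{\cK}})>0$, but when it vanishes the two argmax sets differ (every policy maximizes the identically-zero objective for $s\brk{\cK}$), so no fixed tie-breaking rule yields the identity. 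The gap is harmless---if $d_2(s\brk{\Iexo{\cK}})=0$ with $\cK=\cJ\cap\widehat{\cI}$, then $\Iexo{\cK}\subseteq\Iexo{\cJ}$ gives $d_2(s\brk{\Iexo{\cJ}})=0$, so the constraint at $(\cJ,s\brk{\cJ})$ reads $0=0$ regardless of which policy is plugged in---but you should handle this case explicitly rather than absorb it into ``consistent tie-breaking.'' The paper sidesteps the issue entirely: rather than trying to identify $\pi_{s\brk{\cJ\cap\widehat{\cI}}}$ with $\pi_{s\brk{\cJ\cap\Iendo{\widehat{\cI}}}}$, it instantiates the $\widehat{\cI}$-constraint at the \emph{target} $\Iendo{\cJ}$, uses the set identity $\Iendo{\cJ}\cap\widehat{\cI}=\Iendo{\cJ}\cap\Iendo{\widehat{\cI}}$ (so the policies coincide trivially, no tie-breaking needed), and then multiplies both sides by $d_2(s\brk{\Iexo{\cJ}})$ and applies decoupling to lift the conclusion from $\Iendo{\cJ}$ back to $\cJ$.
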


      The \framework transition structure further implies that $\max_{\pi\in \PiInd} d_2\prn[\big]{s \midsem \pi} = d_2\prn[\big]{s \midsem \pi_{s\brk{\widehat{\cI}}}}\;\forall{}s\in\cS$.\looseness=-1
\begin{proof}[\pfref{prop: exact case recovery of 1-step endo policy cover}]
We begin by highlighting two useful structural properties of the \kEMDP; both properties are specializations of more general results, \pref{lem: decoupling of future state dis,lem: invariance of reduced policy cover} (\pref{app:structural}).
\begin{property}[Decoupling for endogenous policies]\label{property: decoupling}
 For any endogenous policy $\pi$, we have $d_2\rbr{s\brk*{\cI} \midsem \pi} =d_2\rbr{s\brk*{\Iendo{\cI}} \midsem \pi}\cdot{}d_2\rbr{s\brk*{\Iexo{\cI}}}$, for all $\cI\subseteq\brk{d}$ and $s\in\cS$.
\end{property}
\begin{property}[Restriction lemma]\label{property: endo maximizer is sufficient}
For all factor sets $\cI$ and $\cJ$, we have 
\begin{equation}
  \max_{\pi\in \PiInd\brk{\cI}} d_2\rbr{s\brk{\cJ} \midsem \pi} =\max_{\pi\in \PiInd\brk{\Iendo{\cI}}} d_2\rbr{s\brk{\cJ} \midsem \pi}\quad \forall{}s\brk*{\cJ}\in \cS\brk*{\cJ}.\label{eq:restriction1}
\end{equation}
\end{property}

  \pref{property: endo maximizer is sufficient} is perhaps the most critical structural result used by our algorithms. It implies that $\max_{\pi\in \Pi} d_2\rbr{s\brk{\cJ} \midsem \pi} = \max_{\pi\in \PiInd\brk{\Ic}} d_2\rbr{s\brk{\cJ} \midsem \pi}$, which in turn implies that the optimization and selection phases of \pref{alg: 1step endo policy cover} are feasible (since we can show that $\Ic$ is a valid choice). If $\cItil$ and $\cIhat$ are endogenous, then since $\cIhat \subset \Ic$ the selection rule ensures that $\Gamma\brk{\cIhat}$ is a policy cover for $\cS[\Ic]$ (by choosing $\cJ = \Ic$ in~\cref{line: exact 1 endo selection} and since $\cIhat \cap \Ic = \cIhat$). We next show that both $\cItil$ and $\cIhat$ are endogenous.
  
  
  \vspace{2pt}
  \noindent\emph{Claim 1: $\cItil$ is endogenous.} Observe that for any
    (potentially non-endogenous) factor set
    $\cItil=\Iendo{\cItil} \cup \Iexo{\cItil}$, \pref{property: endo
      maximizer is sufficient} implies that for all
    $\cJ\in \fullc_{\leq k}$ and $s\brk{\cJ}\in \cS\brk*{\cJ}$,
    \[
      \max_{\pi\in \PiInd\brk{\cItil}} d_2\rbr{s\brk{\cJ} \midsem \pi}
      = \max_{\pi\in \PiInd\brk{\Iendo{\cItil}}} d_2\rbr{s\brk{\cJ}
        \midsem \pi},
    \]
    For any factor set $\cItil$ that satisfies the
    constraints in \pref{line: exact 1 endo optimization} but has
    $\Iexo{\cItil}\neq\emptyset$, we can further reduce the
    cardinality without violating the constraints, so the
    minimum cardinality solution is endogenous.\loose

    \vspace{2pt}
    \noindent\emph{Claim 2: $\cIhat$ is endogenous.} Consider a
    (potentially non-endogenous) factor set
    $\widehat{\cI} = \Iendo{\widehat{\cI}} \cup
    \Iexo{\widehat{\cI}}$. If $\widehat{\cI}$ satisfies the constraint
    in~\pref{line: exact 1 endo selection}, then for all
    $\cJ\in \fullc_{\leq k}$ and $s\in\cS$, since
    $\Iendo{\cJ}=\cJ\cap\Ic\in\fullcleq$, 
    \begin{align}
      \max_{\pi\in \PiInd\brk{\fullcleq}} d_2\rbr{s\brk{\Iendo{\cJ}} \midsem \pi} = d_2\rbr{s\brk{\Iendo{\cJ}} \midsem \pi_{s\brk{\Iendo{\cJ} \cap \widehat{\cI}}}}= d_2\rbr{s\brk{\Iendo{\cJ}} \midsem \pi_{s\brk{\Iendo{\cJ} \cap \Iendo{\widehat{\cI}}}}}. \label{eq: proof sketch stage 2}
    \end{align}
     Next, using \pref{property:
       endo maximizer is sufficient} and \pref{property: decoupling}, we have
     \[
       \max_{\pi\in \PiInd\brk{\fullcleq}} d_2\rbr{s\brk{\cJ} \midsem \pi}
       =        \max_{\pi\in \PiInd\brk{\Ic}} d_2\rbr{s\brk{\cJ} \midsem \pi}
       = \max_{\pi\in \PiInd\brk{\Ic}} d_2\rbr{s\brk{\cJen} \midsem \pi}\cdot{}d_2\prn{s\brk{\cJex}}.
     \]
As a result, since
    $\pi_{s\brk{\Iendo{\cJ} \cap \Iendo{\widehat{\cI}}}}$ satisfies 
    \begin{align*}
        \max_{\pi\in \PiInd\brk{\Ic}} d_2\rbr{s\brk{\cJen} \midsem \pi} = d_2\rbr{s\brk{\cJen} \midsem \pi_{s\brk{\Iendo{\cJ} \cap \Iendo{\widehat{\cI}}}}}
    \end{align*}
    and it is an
    endogenous policy, we have 
    \begin{align*}
      \max_{\pi\in \PiInd\brk{\fullcleq}} d_2\rbr{s\brk{\cJ} \midsem \pi}  &= d_2\rbr{s\brk{\Iendo{\cJ}} \midsem \pi_{s\brk{\Iendo{\cJ} \cap \Iendo{\widehat{\cI}}}}}\cdot{}d_2\prn{s\brk{\cJex}}\\
      &= d_2\rbr{s\brk{\cJ} \midsem \pi_{s\brk{\Iendo{\cJ} \cap \Iendo{\widehat{\cI}}}}}= d_2\rbr{s\brk{\cJ} \midsem \pi_{s\brk{\cJ \cap \Iendo{\widehat{\cI}}}}},
    \end{align*}
    where the second relation holds by~\pref{property: decoupling}, applicable since $\pi_{s\brk{\Iendo{\cJ} \cap \Iendo{\widehat{\cI}}}}$ is an endogenous policy, and the third relatin holds since $\Iendo{\cJ} \cap \widehat{\Iendo{\cI}} =\cJ \cap \widehat{\Iendo{\cI}}.$
    
    Thus, $\cIhaten$
    satisfies the constraint in~\pref{line: exact 1 endo
      selection}, and if
    $\cIhatex\neq\emptyset$, we can reduce the cardinality
    while keeping the constraints satisfied, so the minimum
    cardinality solution is endogenous.\loose
\end{proof}

\begin{algorithm}[t]
  \setstretch{1.1}
  \caption{$\ExactAlgName_h$: \pcalglong with Exact Occupancies}
  \label{alg: exact dynamics state refinement}
\begin{algorithmic}[1]
  \State \textbf{require:} Timestep $h\in\brk{H}$, policy covers $\cbr{\Psi\ind{t}}_{t=1}^{h-1}$ for steps $1,\ldots,h-1$.
    \State  \textbf{initialize:} $\cI\ind{h,h} \gets \emptyset$ and $\Psi\ind{h,h} \gets \emptyset$.
    \For{$t=h-1,\ldots, 1$}
  \item[]\algspace {\algcommentul{Phase I: Optimization}}\vspace{3pt}

    \State Let $\mu\ind{t}\ldef{}\unif(\Psi\ind{t})$.
        \State\label{line: exact DP-SR state1}%
        Find $\cItil\in \fullc_{\leq k}$ with minimal cardinality such that for all $\cJ\in \fullc_{\leq k}\rbr{\cI\ind{t+1,h}}$, $s\sbr{\cJ} \in \cS\sbr{\cJ}$,
        \[
          ~~~~~~~\max_{\pi\in \PiInd\brk{\fullc_{\leq k}}}  d_h\rbr{s\brk{\cJ} \midsem \mut\circt\pi\circt[t+1]\psi\ind{t+1,h}_{s\brk{\cI\ind{t+1,h}}} } = \max_{\pi\in \PiInd\brk{\cItil}}d_h\rbr{s\brk{\cJ} \midsem \mut\circt \pi \circt[t+1]\psi\ind{t+1,h}_{s\brk{\cI\ind{t+1,h}}} }.
      \]
    \item[] \algspace {\algcomment{Beginning from any state at layer $t$, 
          $\pi\ind{t}_{s\brk*{\cJ}}\circt[t+1]\psi_{s\brk[\big]{\cI\ind{t+1,h}}}\ind{t+1,h}$ maximizes probability that $s_h\brk{\cJ}=s\brk*{\cJ}$.}}
      \State  \label{line: exact DP-SR state12} \multiline{For each factor set $\cJ\in \fullc_{\leq k}\rbr{\cI\ind{t+1,h}}$ and $s\brk{\cJ}\in\cS\brk{\cJ}$, let
        \[\pi_{s\brk{\cJ}}\in \argmax_{\pi\in \PiInd\brk{\cItil}}d_h\rbr{s\brk{\cJ} \midsem \mut\circt \pi \circt[t+1]\psi\ind{t+1,h}_{s\brk{\cI\ind{t+1,h}}}},\] and define
        $\Gamma\ind{t}\sbr{\cJ} \ldef \cbr{\pi_{s\brk{\cJ}} : s\brk{\cJ}\in \cS\brk{\cJ}}$.
      }
      \vspace{2pt}
      \item[]\algspace {\algcommentul{Phase II: Selection}}\vspace{3pt}
        \State  \label{line: exact DP-SR state2}
        \mbox{Find $\widehat{\cI}\in \fullc_{\leq k}\rbr{\cI\ind{t+1,h}}$ with minimal cardinality s.t. for all $\cJ\in \fullc_{\leq k}\rbr{\cI\ind{t+1,h}}$, $s\sbr{\cJ} \in \cS\sbr{\cJ}$,}
          \[
            ~~~~~~~\max_{\pi\in \PiInd\brk{\fullc_{\leq k}}}  d_h\rbr{s\brk{\cJ} \midsem \mut\circt\pi\circt[t+1]\psi\ind{t+1,h}_{s\brk{\cI\ind{t+1,h}}} } = d_h\rbr{s\brk{\cJ} \midsem \mut\circt\pi\ind{t}_{s\brk{\cJ \cap \widehat{\cI}}}\circt[t+1]\psi\ind{t+1,h}_{s\brk{\cI\ind{t+1,h}}} }.
      \]
    \item[]\algspace {\algcommentul{Policy composition}}\vspace{3pt}
      \State  Let $\cI\ind{t,h} \gets \cIhat$, and for each $s\brk{\cI\ind{t,h}}\in\cS\brk{\cI\ind{t,h}}$ define
          \begin{align*}
          \psi\ind{t,h}_{s\brk{\cI\ind{t,h}}} \ldef \pi\ind{t}_{s\brk{\cI\ind{t,h}}} \circt \psi\ind{t+1,h}_{s\brk{\cI\ind{t+1,h}}}.
          \end{align*}
        \item[]\hfill\algcomment{Recall that $\pi\ind{t}_{s\brk{\cI\ind{t,h}}} \in \Gamma\ind{t}\brk*{\cI\ind{t,h}}$ and $\psi\ind{t+1,h}_{s\brk{\cI\ind{t+1,h}}}\in \Psi\ind{t+1,h}$.}
          \State Let $\Psi\ind{t,h} \gets \cbr{\psi\ind{t,h}_{s\brk{\cI\ind{t,h}}} : s\brk{\cI\ind{t,h}}\in \cS\brk{\cI\ind{t,h}}}$.
    \EndFor
    \State \textbf{return} $\Psi\ind{h}\ldef{}\Psi\ind{1,h}$
\end{algorithmic}
\end{algorithm}

\subsection{Warm-Up II: Finding an Endogenous Policy Cover with Exact Occupancies ($H\geq{}2$)}\label{sec: DP-SR exact}

\pref{alg: exact dynamics state refinement} describes \pcalgexact, which extends the $\ExactOneStep$ method to handle \kEMDPs with general horizon (rather than $H=2$), but still requires exact access to occupancy measures. When invoked with a layer $h$, $\pcalgexact_h$ takes as input a sequence of endogenous policy covers $\Psi\ind{1},\ldots,\Psi\ind{h-1}$ for layers $1,\ldots,h-1$ and uses them to compute an endogenous policy cover $\Psi\ind{h}$ for layer $h$. The algorithm constructs $\Psi\ind{h}$ in a \emph{backwards} fashion based on the dynamic programming principle. To describe the approach in detail, we use the notation of  \emph{$t\to{}h$ policy cover}.
\begin{definition}
  \label{def:ttoh_cover}
  For $h\in\brk{H}$ and $t<h$, a set of non-stationary policies $\Psi$ is said to be a ($\eps$-approximate) \emph{\ttoh policy cover} with respect to a roll-in policy $\mu\in\Pimix$ if for all $s\in\cS$,
  \[
    \max_{\psi\in\Psi}d_h\rbr{s\brk{\Ic}\midsem \mu\circt\psi } \geq{} \max_{\pi\in\PiNS} d_h\rbr{s\brk{\Ic}\midsem\mu\circt\pi}- \eps.
  \]
  If all policies in $\Psi$ are endogenous, we say that $\Psi$ is endogenous.
\end{definition}
$\pcalgexact_h$ performs a serious of ``backward'' steps $t=h-1,\ldots,1$. In each step $t$, the algorithm rolls in with the mixture policy $\mu\ind{t}\ldef\unif(\Psi\ind{t})$ and constructs a \ttoh policy cover $\Psi\ind{t,h}$ with respect to $\mu\ind{t}$.  $\Psi\ind{t,h}$ acts on an endogenous factor set $\cI\ind{t,h}$ (with $\cI\ind{t,h}\supseteq\cI\ind{t+1,h}\supseteq\cdots\supseteq \cI\ind{h,h}=\emptyset$), and is built from the next-step policy cover $\Psi\ind{t+1,h}$ via dynamic programming. In particular, the algorithm searches for a collection of endogenous ``one-step'' policies for choosing the action at time $t$ that---when carefully composed with the $(t+1)\to{}h$ policy cover $\Psi\ind{t+1,h}$---result in a $t\to{}h$ policy cover. The algorithm ensures that the factor set $\cI\ind{t,h}$ (upon which $\Psi\ind{t,h}$ acts) is endogenous using an optimization and selection phases analogous to those in $\pcalgonestep$.

In more detail, $\pcalgexact_h$ satisfies the following invariants for $1\leq{}t\leq{}h-1$.
\begin{enumerate}[label = $(\roman*)$, align=left, topsep=0pt]
    \item $\cI\ind{h,h}\subseteq\cdots\subseteq\cI\ind{t,h}\subseteq\cdots\subseteq \Ic$.\hfill (``state refinement'')
          \item The set $\Psi\ind{t,h}$ is an endogenous \ttoh  policy cover with respect to $\mu\ind{t}=\unif(\Psi\ind{t})$:
      \[
        d_h\prn[\big]{s\brk{\Ic}\midsem \mu\ind{t}\circt \psi\ind{t,h}_{s\brk{\cI\ind{t,h}} }} = \max_{\pi\in\PiNS}d_h\prn[\big]{s\brk{\Ic}\midsem\mu\ind{t}\circt\pi},\quad\forall{}s\brk{\Ic}\in\cS\brk{\Ic}.
      \]
    \end{enumerate}
    This implies that $\Psi\ind{h}\ldef{}\Psi\ind{1,h}$ is an endogenous policy cover for layer $h$ (\pref{def: approximate endognous policy cover}). In what follows we show how $\pcalgexact_h$ uses dynamic programming to satisfy these invariants.



\paragraph{Dynamic programming.}
Consider step $t<h-1$, and suppose that $\rbr{\cI\ind{t+1,h},\Psi\ind{t+1,h}}$ satisfies invariants $(i)$ and $(ii)$. Because $\mu\ind{t+1}$ uniformly covers all states in layer $t+1$ (recall $\Psi\ind{1},\ldots,\Psi\ind{h-1}$ are policy covers),
the policy $\psi\ind{t+1,h}_{s\brk{\cI\ind{t+1,h}}}$ maximizes the probability that $s_h\brk{\Ic}=s\brk*{\Ic}$, starting from any state in layer $t+1$. Hence, the Bellman optimality principle implies that to find a $t\rightarrow h$ policy to maximize this probability, it suffices to use the policy $\pi\ind{t}\circt[t+1]\psi\ind{t+1,h}_{s\brk{\cI\ind{t+1,h}}}$, where $\pi\ind{t}$ solves the one-step problem:
\begin{align}
    \pi\ind{t}\in \argmax_{\pi\in \PiInd\brk*{\Ic}} d_h\rbr{s\brk{\Ic} \midsem \mut\circt\pi\circt[t+1]\psi\ind{t+1,h}_{s\brk{\cI\ind{t+1,h}}}}. \label{eq: main paper dp sr relation 1}
\end{align}


At first glance, it is not apparent whether this observation is useful, because the endogenous factor set $\Ic$ is not known to the learner, which prevents one from directly solving the optimization problem in \eqref{eq: main paper dp sr relation 1}. Fortunately, we can tackle this problem using a generalization of the optimization-selection approach of $\ExactOneStep$. First, in the optimization phase (\pref{line: exact DP-SR state1} and \pref{line: exact DP-SR state12}), we compute a collection of one-step policy covers $(\Gamma\ind{t}\brk{\cJ})_{\cJ\in\fullcleq(\cI\ind{t+1,h})}$, where $\Gamma\ind{t}\brk{\cJ}$ consists of the policies that solve \eqref{eq: main paper dp sr relation 1} with $\Ic$ replaced by $\cJ$, for all possible choices of state in $s\brk{\cJ}\in\cS\brk{\cJ}$. Then, in the selection phase (\pref{line: exact DP-SR state2}), we find a single factor $\cI\ind{t,h}\supseteq\cI\ind{t+1,h}$ such that $\Gamma\ind{t}\brk{\cI\ind{t,h}}$ provides good coverage (in the sense of \eqref{eq: main paper dp sr relation 1}) for all factor sets $\cJ\in\fullcleq(\cI\ind{t+1,h})$ simultaneously. Both steps ensure endogeneity by penalizing by cardinality in the same fashion as \pcalgonestep. The success of this  approach critically relies on the assumption that the preceding policy covers $\Psi\ind{1},\ldots,\Psi\ind{h-1}$ are endogenous, which ensures that the occupancy measures induced by $\mu\ind{1},\ldots,\mu\ind{h-1}$ factorize (due to independence of the endogenous and exogenous state factors). To summarize:

\begin{proposition}
  If $\Psi\ind{1},\ldots,\Psi\ind{h-1}$ are endogenous policy covers for layers $1,\ldots,h-1$, then the set $\Psi\ind{h}$ returned by $\pcalgonestep_h$ is an endogenous policy cover for layer $h$, and has $\abs{\Psi\ind{h}}\leq{}S^{k}$.\loose
\end{proposition}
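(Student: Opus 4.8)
The plan is to prove both assertions --- that $\Psi\ind{h}=\Psi\ind{1,h}$ is an endogenous policy cover for layer $h$ and that $\abs{\Psi\ind{h}}\leq S^k$ --- by \emph{backward induction} on the loop index $t=h,h-1,\ldots,1$ of $\pcalgexact_h$ (\pref{alg: exact dynamics state refinement}), maintaining the two invariants stated before the proposition: (i) the nesting $\cI\ind{h,h}\subseteq\cdots\subseteq\cI\ind{t,h}\subseteq\Ic$, and (ii) that $\Psi\ind{t,h}$ is an endogenous $t\to h$ policy cover with respect to $\mut=\unif(\Psi\ind{t})$ in the exact sense of \pref{def:ttoh_cover}. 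The base case $t=h$ is immediate: $\cI\ind{h,h}=\emptyset\subseteq\Ic$ and $\Psi\ind{h,h}=\emptyset$ cover vacuously. The proposition then follows by reading off the invariants at $t=1$, since invariant (ii) at $t=1$ --- whose roll-in $\mu\ind{1}$ is vacuous at the first step --- is precisely the statement that $\Psi\ind{h}$ is an endogenous policy cover (\pref{def: approximate endognous policy cover}), while invariant (i) forces $\abs{\cI\ind{1,h}}\leq k$ and hence $\abs{\Psi\ind{1,h}}\leq\abs{\cS\brk{\cI\ind{1,h}}}=S^{\abs{\cI\ind{1,h}}}\leq S^k$.

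For the inductive step I would assume $(\cI\ind{t+1,h},\Psi\ind{t+1,h})$ satisfy (i) and (ii) and establish them for $t$. The first and most important move is to upgrade invariant (ii) at step $t+1$ from an \emph{average-over-$\mu\ind{t+1}$} optimality statement to \emph{pointwise} optimality from every reachable endogenous state at layer $t+1$: because $\Psi\ind{t+1}$ is a cover, $\mu\ind{t+1}$ reaches every reachable $s\brk{\Ic}$ at layer $t+1$, and since each per-state reachability probability is bounded by its own maximum over tail policies, equality of the $\mu\ind{t+1}$-averages forces termwise equality from each reachable start state. This pointwise optimality is exactly what licenses the dynamic-programming decomposition \eqref{eq: main paper dp sr relation 1}: the best $t\to h$ policy for reaching $s\brk{\Ic}$ is obtained by composing the inductively-optimal tail $\psi\ind{t+1,h}_{s\brk{\cI\ind{t+1,h}}}$ with a one-step policy $\pi\ind{t}\in\PiInd\brk*{\Ic}$ that solves the one-step problem at layer $t$.

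With the decomposition in hand, the remaining analysis mirrors the $H=2$ proof of \pref{prop: exact case recovery of 1-step endo policy cover}. Feasibility of the optimization (\pref{line: exact DP-SR state1}) and selection (\pref{line: exact DP-SR state2}) phases follows from the restriction lemma (\pref{lem: invariance of reduced policy cover}), which certifies that $\Ic$ is admissible. Endogeneity of $\cItil$ and of $\cI\ind{t,h}=\cIhat$ then follows by the cardinality-regularization argument of Claims 1 and 2 there: the restriction lemma shows that replacing any candidate set by its endogenous part preserves every constraint while weakly reducing cardinality, so the minimum-cardinality feasible set is endogenous. Here the decoupling property (\pref{property: decoupling}, via \pref{lem: decoupling of future state dis}) --- splitting a general target $\cJ=\cJen\cup\cJex$ and factoring the occupancy as $d_h(s\brk{\cJen}\midsem\cdot)\cdot d_h(s\brk{\cJex})$ --- is what collapses each constraint onto its endogenous block, and it applies precisely because $\mut$ is endogenous (as $\Psi\ind{t}$ is an endogenous cover) and the tail $\psi\ind{t+1,h}$ is endogenous (invariant (i) at $t+1$). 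The requirement $\cItil,\cIhat\in\fullc_{\leq k}\rbr{\cI\ind{t+1,h}}$ simultaneously enforces $\cI\ind{t,h}\supseteq\cI\ind{t+1,h}$, maintaining invariant (i). Invariant (ii) at step $t$ is then obtained by instantiating the selection constraint at $\cJ=\Ic$ (legal since $\cI\ind{t+1,h}\subseteq\Ic$ and $\abs{\Ic}\leq k$): its left-hand side equals $\max_{\pi\in\PiNS}d_h(s\brk{\Ic}\midsem\mut\circt\pi)$ by the decomposition, while its right-hand side is exactly $d_h(s\brk{\Ic}\midsem\mut\circt\psi\ind{t,h}_{s\brk{\cI\ind{t,h}}})$ because $\cIhat\subseteq\Ic$ gives $\Ic\cap\cIhat=\cIhat$ and the composed policy $\pi\ind{t}_{s\brk{\cIhat}}\circt[t+1]\psi\ind{t+1,h}_{s\brk{\cI\ind{t+1,h}}}$ is precisely $\psi\ind{t,h}_{s\brk{\cI\ind{t,h}}}$; endogeneity of $\psi\ind{t,h}$ is inherited from $\pi\ind{t}$ and $\psi\ind{t+1,h}$, and $\abs{\Psi\ind{t,h}}\leq S^{\abs{\cI\ind{t,h}}}\leq S^k$.

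I expect the main obstacle to be the dynamic-programming step of the second paragraph --- passing rigorously from the average optimality encoded in invariant (ii) to pointwise optimality from every reachable state, and then confirming that this pointwise optimality is still what we need under the \emph{new} roll-in $\mut\circt\pi\ind{t}$ rather than the roll-in $\mu\ind{t+1}$ under which $\Psi\ind{t+1,h}$ was certified. The resolution is that the layer-$(t+1)$ states reachable under $\mut\circt\pi\ind{t}$ lie in the support covered by $\mu\ind{t+1}$, so pointwise optimality transfers; but one must simultaneously track endogeneity through the entire recursion, since the decoupling identities silently presume that $\mut$, $\pi\ind{t}$, and $\psi\ind{t+1,h}$ are all endogenous. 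Carefully threading this factorization through the induction --- so that each occupancy measure factorizes exactly where the argument invokes \pref{property: decoupling} --- is the delicate bookkeeping at the heart of the proof.
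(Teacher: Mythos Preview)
Your proposal is correct and tracks the paper's own informal argument closely: the two invariants you maintain by backward induction are exactly those the paper states before the proposition, and your average-to-pointwise upgrade is precisely what the paper means when it writes that ``$\psi\ind{t+1,h}_{s\brk{\cI\ind{t+1,h}}}$ maximizes the probability that $s_h\brk{\Ic}=s\brk*{\Ic}$, starting from any state in layer $t+1$'' because $\mu\ind{t+1}$ covers all reachable states. The per-step endogeneity arguments you sketch for the optimization and selection phases are the direct multi-step analogues of Claims~1 and~2 in \pref{prop: exact case recovery of 1-step endo policy cover}, and you correctly flag that the extra constraint $\cIhat\in\fullc_{\leq k}(\cI\ind{t+1,h})$ interacts with invariant~(i): one needs $\cI\ind{t+1,h}\subseteq\Ic$ so that $\cJen\in\fullc_{\leq k}(\cI\ind{t+1,h})$ whenever $\cJ$ is, which is exactly what lets the Claim~2 argument go through (this is the $\pi$-system observation, \pref{lem: set of abstractions is a pi system}).

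One point of comparison: the paper does not prove the proposition directly but refers to the proof of \pref{thm: sample complexity of state refinment}, whose aggregation step differs from yours. Rather than upgrading the $t\to h$ cover property to pointwise Bellman optimality and threading it through the recursion, that proof leaves invariant~(ii) as an average statement (per-step error $\epsnot$ in the approximate case) and aggregates the per-step errors via the performance difference lemma for endogenous policies (\pref{lem:pd_for_endo_policies}) together with the density-ratio bound for covers (\pref{lem: policy cover bound importance sampling ratio}), at the cost of a $2S^kH$ blow-up. In the exact case both routes yield the same conclusion, and yours is cleaner; but your termwise-equality step does not survive the passage to $\eps$-approximate occupancies, which is why the paper's referenced proof takes the performance-difference route instead.
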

We do not prove this result directly, and instead refer the reader to the proof of \pref{thm: sample complexity of state refinment}, which proves the sample-based version of the result using the same reasoning.

\subsection{$\AlgName$: Overview and Main Result}
\label{sec: learning endogenous policy cover}

The full version of the \pcalg algorithm ($\pcalg_h^{\veps,\delta}$) is given in \pref{alg: dp-sr paper version} (deferred to \pref{app: learnin eps endogenous policy cover} due to space constraints). The algorithm follows the same template as \pcalgexact: For each $h\in\brk{H}$, given policy covers $\Psi\ind{1},\ldots,\Psi\ind{h-1}$, the algorithm builds a policy cover $\Psi\ind{h}$ for layer $h$ in a backwards fashion using dynamic programming. There are two differences from the exact algorithm. First, since the MDP is unknown, the algorithm estimates the relevant occupancy measures for each backwards step using Monte Carlo rollouts. Second, the optimization and selection phases from \pcalgexact are replaced by error-tolerant variants given by subroutines $\EndoPolicyMaximizer$ and $\FactorDetectionAlg$ (\pref{alg: approximate 1-step endogenous policy maximizer} in \pref{app: near optimal endogenous policy} and \pref{alg: sufficient exploration check} in \pref{app: no false detection of factors}, respectively).

Briefly, the $\EndoPolicyMaximizer$ and $\FactorDetectionAlg$ subroutines are based on approximate versions of the constraints used in the optimization and selection phase for \ExactAlgName (\pref{line: exact DP-SR state1} and~\pref{line: exact DP-SR state2} of \pref{alg: exact dynamics state refinement}), but ensuring endogeneity of the resulting factors is more challenging due to approximation errors, and it no longer suffices to simply search for the factor set with minimum cardinality. Instead, we search for factor sets that satisfy approximate versions of \pref{line: exact DP-SR state1} and~\pref{line: exact DP-SR state2} with an \emph{additive regularization term} based on cardinality. We show that as long as this penalty is carefully chosen as a function of the statistical error in the occupancy estimates, the resulting factor sets will be endogenous with high probability. 
  
The main guarantee for \pref{alg: dp-sr paper version} is as follows.
\begin{restatable}[Sample complexity of \AlgName]{theorem}{DPSRSampleComplexity}\label{thm: sample complexity of state refinment}
  Suppose that $\AlgName^{\eps,\delta}_h$ is invoked with $\cbr{\Psi\ind{t}}_{t=1}^{h-1}$, where each $\Psi\ind{t}$ is an endogenous, $\eta/2$-approximate policy cover for layer $t$. Then with probability at least $1-\delta$, the set $\Psi\ind{h}$ returned by $\AlgName^{\eps,\delta}_h$ is an endogenous $\eps$-approximate policy cover for layer $h$, and has $\abr{\Psi\ind{h}}\leq S^k$. The algorithm uses at most
  $
  \bigoh\rbr{A S^{4k} H^2 k^3 \log\rbr{\frac{dSAH}{\delta}}\cdot\eps^{-2}}
    $
    episodes.\loose
  \end{restatable}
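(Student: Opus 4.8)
The plan is to analyze the sample-based algorithm $\AlgName_h^{\eps,\delta}$ (\pref{alg: dp-sr paper version}) by establishing, via backward induction on $t=h-1,\ldots,1$, a statistical analogue of the two invariants satisfied by the exact algorithm $\ExactAlgName_h$: namely that (i) the factor sets are nested and endogenous, $\cI\ind{h,h}\subseteq\cdots\subseteq\cI\ind{t,h}\subseteq\Ic$, and (ii) $\Psi\ind{t,h}$ is an approximate endogenous \ttoh policy cover with respect to $\mu\ind{t}=\unif(\Psi\ind{t})$. Taking $t=1$ then yields that $\Psi\ind{h}=\Psi\ind{1,h}$ is an endogenous $\eps$-approximate policy cover for layer $h$ in the sense of \pref{def: approximate endognous policy cover}, and the cardinality bound $\abr{\Psi\ind{h}}\leq S^k$ follows because each $\Psi\ind{t,h}$ is indexed by the states $s\brk{\cI\ind{t,h}}\in\cS\brk{\cI\ind{t,h}}$ of an endogenous factor set of size at most $k$. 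The whole argument reuses the structural machinery behind \pref{prop: exact case recovery of 1-step endo policy cover}---the restriction lemma (\pref{property: endo maximizer is sufficient}) and the decoupling property (\pref{property: decoupling})---but now every exact occupancy $d_h(\cdot\midsem\cdot)$ is replaced by a Monte Carlo estimate and the resulting errors must be tracked.

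First I would establish a uniform concentration guarantee for the occupancy estimates produced by the rollouts at each backward step. Fixing the roll-in $\mu\ind{t}$ and the composed continuation policies, the empirical frequencies concentrate around their population counterparts uniformly over all factor sets $\cJ\in\fullcleq(\cI\ind{t+1,h})$, all values $s\brk{\cJ}\in\cS\brk{\cJ}$, and all candidate one-step decisions at layer $t$. A union bound over the $\binom{d}{k}\leq d^{k}$ factor sets and $S^k$ states per set is exactly what produces the benign $\log(d)$ dependence; crucially, since the continuation policies are indexed by endogenous states and only a single one-step decision at layer $t$ is being optimized, the effective policy class has size $\poly(A,S^k)$ rather than $A^{S^d}$. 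Reachability enters here: because each input $\Psi\ind{t}$ is an $\eta/2$-approximate endogenous cover and the endogenous state space is $\eta$-reachable, the roll-in $\mu\ind{t}$ assigns probability bounded below to every reachable endogenous state at layer $t$, so the conditional estimates driving the optimization and selection tests are meaningful. Combining these, $\poly(S^k,A,k)\cdot\log(dSAH/\delta)\cdot\epsstat^{-2}$ episodes per backward step suffice to make every estimate entering the tests accurate to within a tolerance $\epsstat$ fixed below.

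The inductive step is where the main difficulty lies, and it concerns \emph{certifying endogeneity under noise}. In the exact algorithm the optimization and selection phases pick the minimum-cardinality factor set, and the restriction lemma guarantees that exogenous factors are useless, so the minimizer is automatically endogenous. With statistical error this breaks: a spurious exogenous factor can appear to increase coverage by up to $\epsstat$, so minimizing cardinality alone no longer forces endogeneity. The fix, implemented by the $\EndoPolicyMaximizer$ and $\FactorDetectionAlg$ subroutines, is to replace the hard cardinality constraint by an \emph{additive cardinality penalty} proportional to $\epsstat$. The crux of the proof is to show this penalty is simultaneously (a) small enough that the true endogenous witness---$\cItil=\Iendo{\cItil}$ in optimization and $\cIhat=\Iendo{\cIhat}$ in selection, whose feasibility is inherited from the exact analysis via \pref{property: endo maximizer is sufficient}---stays feasible up to statistical slack, and (b) large enough that adjoining any exogenous factor, which by \pref{property: decoupling} cannot improve endogenous coverage beyond noise, strictly increases the penalized objective. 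Calibrating the penalty to a constant multiple of $\epsstat$ makes both directions go through and yields that $\cI\ind{t,h}$ is endogenous with high probability. Given endogeneity, \pref{property: decoupling} applies exactly as in the exact analysis, and the dynamic-programming/Bellman argument of \eqref{eq: main paper dp sr relation 1} shows that composing the selected one-step policies with the endogenous $(t+1)\to h$ cover $\Psi\ind{t+1,h}$ yields a \ttoh cover whose coverage is degraded by at most an additional $\bigoh(\epsstat)$.

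Finally I would collect the errors and solve for the parameters. Each backward step contributes $\bigoh(\epsstat)$ additive coverage loss and there are at most $H$ steps, so to obtain an $\eps$-approximate cover I set $\epsstat=\Theta(\eps/H)$; substituting into the per-step concentration bound and multiplying by the $H$ steps and the inner union-bound factors produces the claimed $\bigoh(A S^{4k} H^2 k^3 \log(dSAH/\delta)\,\eps^{-2})$ total episode count, with a failure probability controlled at $\delta$ by allocating $\delta/\poly(H)$ to each step. I expect the endogeneity-under-noise argument of the third paragraph---calibrating the cardinality penalty against the statistical tolerance so that all needed endogenous factors are retained while every exogenous factor is provably rejected---to be the central obstacle, since it is precisely here that the interplay between the restriction lemma, the decoupling property, and the maintained endogeneity invariant must be made quantitative, and where the entanglement issue $(\mathrm{C}3)$ would resurface if any exogenous factor were mistakenly admitted.
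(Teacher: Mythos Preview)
Your structure—the concentration argument, the backward induction on endogeneity of $\cI\ind{t,h}$, and the cardinality-penalty analysis for $\EndoPolicyMaximizer$ and $\FactorDetectionAlg$—is correct and matches the paper. The gap is in the error-accumulation step.

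You claim that each backward step degrades the $t\to h$ coverage by $O(\epsstat)$, so the final error is $O(H\epsstat)$ and $\epsstat=\Theta(\eps/H)$ suffices. This does not follow. What the backward step actually establishes is
\[
\max_{\pi\in\PiInd\brk{\Ic}} d_h\prn[\big]{s\brk{\Ic}\midsem \mu\ind{t}\circt\pi\circt[t+1]\psi\ind{t+1,h}_{s\brk{\cI\ind{t+1,h}}}}
- d_h\prn[\big]{s\brk{\Ic}\midsem\mu\ind{t}\circt\psi\ind{t,h}_{s\brk{\cI\ind{t,h}}}} \leq O(\epsstat),
\]
i.e., near-optimality of $\psi\ind{t,h}$ against the best \emph{one-step} action followed by the \emph{fixed} continuation $\psi\ind{t+1,h}$, averaged under $\mu\ind{t}$. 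To promote this to your invariant (ii)—that $\Psi\ind{t,h}$ is an approximate $t\to h$ cover in the sense of \pref{def:ttoh_cover}—you must compare to the best continuation, but the inductive hypothesis only controls $\psi\ind{t+1,h}$ relative to $\mu\ind{t+1}$, not relative to the distribution at layer $t{+}1$ induced by $\mu\ind{t}\circt\pi$. The Bellman argument you cite from the exact setting relies on $\psi\ind{t+1,h}$ being optimal \emph{from every state}, which fails once errors are present; the change of measure costs a density-ratio factor of order $S^k$, and a direct recursion on invariant (ii) picks this up at every step.

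The paper sidesteps this by \emph{not} maintaining invariant (ii) inductively. It proves only the per-step inequality above for every $t$, then views $d_h(s\brk{\Ic}\midsem\cdot)$ as the value function for the endogenous indicator reward $\indic\crl{s_h\brk{\Ic}=s\brk{\Ic}}$ and applies the performance difference lemma for endogenous policies once at $t=1$, changing measure from $d_t(\cdot\midsem\pi)$ to $d_t(\cdot\midsem\mu\ind{t})$ via the bound $\max_\pi d_t(s\brk{\Ic}\midsem\pi)/d_t(s\brk{\Ic}\midsem\mu\ind{t})\leq 2S^k$. This density-ratio step is where reachability and the hypothesis that the inputs $\Psi\ind{t}$ are $\eta/2$-covers actually enter—not in the concentration of the importance-weighted occupancy estimates, which holds regardless. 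The outcome is a sum of $H$ per-step terms each bounded by $O(\epsstat)$, multiplied by a \emph{single} $2S^k$, which is why the paper sets $\epsnot=\eps/(2S^kH)$ rather than $\eps/H$; the extra $S^{2k}$ in the stated $S^{4k}$ sample complexity comes from squaring this factor.
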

By iterating the process $\Psi\ind{h}\gets\AlgName^{\eta/2,\delta}_h(\crl*{\Psi\ind{t}}_{t=1}^{h-1})$, we obtain a policy cover for every layer.

\section{Main Result: Sample-Efficient RL in the Presence of Exogenous Information}
\label{sec:main}


\begin{algorithm}[tp]
  \caption{$\RLwithExo$: RL in the Presence of Exogenous Information}
  \label{alg: main}
\begin{algorithmic}
    \State \textbf{require:} precision parameter $\epsilon>0$, reachability parameter $\eta>0$, failure
    probability $\delta\in (0,1)$.
    \State \textbf{initialize:} $\Psi\ind{1}=\emptyset$.
    \For{$h=2,3,\cdots, H$}
        \State
        $\Psi\ind{h}\gets\AlgName_h^{\eta/2,\delta}\prn[\big]{\cbr{\Psi\ind{t}}_{t=1}^{h-1}}$. \hfill\algcomment{Learn policy cover via \pcalg (\pref{alg: dp-sr paper version} in \pref{app: learnin eps endogenous policy cover}).}
    \EndFor
    \State  $\widehat{\pi}  \gets  \PSDPE^{\eps,\delta}\prn[\big]{\cbr{\Psi\ind{h}}_{h=1}^H}$. \hfill \algcomment{Apply \PSDPE (\pref{alg: psdp with exogenous noise} in \pref{app: psdp in the presence of exogenous information}) to optimize rewards.}
    \State \textbf{return}  $\widehat{\pi}$
\end{algorithmic}
\end{algorithm}

In this section we provide our main algorithm, \RLwithExo (\pref{alg: main}).
$\RLwithExo$ first applies $\AlgName$ iteratively to learn an endogenous, $\eta/2$-approximate policy cover for each layer, then applies a novel variant of the classical Policy Search by Dynamic Programming method of \citep{bagnell2004policy} (\PSDPE), which uses the covers to optimize rewards; the original \PSDP method cannot be applied to the \framework setting as-is due to subtle statistical issues (cf. \pref{app: psdp in the presence of exogenous information} for background).
The main guarantee for \mainalg is as follows; see \pref{app: RL in the presence of exogenous information} for a proof and overview of analysis techniques.

%
\begin{restatable}[Sample complexity of $\RLwithExo$]{theorem}{rlwithexo}
  \label{thm:main}
  \mainalg, when invoked with parameter, $\eps\in (0,1)$ and $\delta\in (0,1)$, returns an $\eps$-optimal policy with probability at least $1-\delta$, and does so using at most\arxiv{$$\bigoh\rbr{AS^{3k}H^2(S^k+H^2)k^3 \log\rbr{\frac{dSAH}{\delta}}\cdot{}\rbr{\eps^{-2}+\eta^{-2}}}$$}\colt{$\bigoh\rbr{AS^{3k}H^2(S^k+H^2)k^3 \log\rbr{\frac{dSAH}{\delta}}\cdot{}\rbr{\eps^{-2}+\eta^{-2}}}$}
episodes.
\end{restatable}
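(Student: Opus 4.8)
The plan is to prove \pref{thm:main} by composing two guarantees that are established elsewhere: the Phase~I guarantee for \AlgName (\pref{thm: sample complexity of state refinment}), which builds the policy covers, and a Phase~II guarantee for \PSDPE (\pref{app: psdp in the presence of exogenous information}), which turns the covers into an $\eps$-optimal policy. The two are glued together by an induction over layers (to verify that each call to \AlgName meets its precondition) and a union bound over the two phases. The one design choice that makes the induction close is running every Phase~I call at accuracy $\eta/2$, so that the object produced by $\AlgName_h$ is exactly the type of input (an $\eta/2$-approximate endogenous cover) that the next call requires.

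For Phase~I I would show, by induction on $h$, that with probability at least $1-\delta/2$ every $\Psi\ind{h}$ constructed in the loop of \pref{alg: main} is an endogenous $\eta/2$-approximate policy cover for layer $h$ with $\abr{\Psi\ind{h}}\le S^k$. The base case is trivial because the layer-$1$ occupancy $d_1$ is policy independent, so the empty/trivial cover is exact and endogenous. In the inductive step, the hypothesis supplies endogenous $\eta/2$-approximate covers $\Psi\ind{1},\dots,\Psi\ind{h-1}$, which are precisely the preconditions required by \pref{thm: sample complexity of state refinment}; invoking it with precision $\eta/2$ and failure probability $\delta/(2H)$ yields an endogenous $\eta/2$-approximate cover $\Psi\ind{h}$ with $\abr{\Psi\ind{h}}\le S^k$. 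A union bound over the $H$ calls establishes the claim, and summing the per-call episode count of \pref{thm: sample complexity of state refinment} (with its accuracy set to $\eta/2$) over the $H$ layers produces the $\eta^{-2}$ contribution to the stated bound.

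For Phase~II I would condition on the Phase~I success event and apply the \PSDPE guarantee to $\pihat=\PSDPE^{\eps,\delta/2}\rbr{\cbr{\Psi\ind{h}}_{h=1}^{H}}$, concluding that $\J(\pihat)\ge \max_{\pi\in\PiNS}\J(\pi)-\eps$ with probability at least $1-\delta/2$, at a cost contributing the $\eps^{-2}$ term. Two structural features of the \framework are what make an \emph{endogenous} cover suffice here, and I would foreground both. First, since $R(s,a)=\rc(s\brk{\Ic},a)$ depends only on the endogenous state, the optimum $\max_{\pi\in\PiNS}\J(\pi)$ is attained by an endogenous policy, so it is enough for \PSDPE to optimize over endogenous policies. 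Second, by the factorization noted just after \pref{def: approximate endognous policy cover}, an $\eta/2$-approximate cover of the endogenous occupancy is automatically an $\eta/2$-approximate cover of the \emph{full} occupancy; hence rolling in with $\unif(\Psi\ind{h})$ gives the exploratory coverage that the backward PSDP updates need despite the presence of the exogenous factors.

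Combining the two phases by a final union bound ($\delta/2+\delta/2=\delta$) and adding the two episode counts yields the $\eps^{-2}+\eta^{-2}$ form together with the $AS^{3k}H^2(S^k+H^2)k^3\log(dSAH/\delta)$ prefactor of \pref{thm:main}. At the level of the main theorem the work is essentially bookkeeping: aligning the accuracy parameters across the induction, splitting the failure budget, and summing episode counts. The genuine difficulty lives inside the two cited results, and I expect the real obstacle to be the Phase~II analysis: when \PSDPE rolls out with policies that are not guaranteed endogenous, the endogenous and exogenous components entangle (challenge $(\mathrm{C}3)$), which is what forces the modified \PSDPE estimator and its variance/bias control. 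That step is where I would concentrate the effort, deferring it to \pref{app: psdp in the presence of exogenous information}.
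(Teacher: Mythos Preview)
Your proposal is correct and follows essentially the same approach as the paper: an induction over layers using \pref{thm: sample complexity of state refinment} (with precision $\eta/2$ and failure $\delta/(2H)$) to certify that each $\Psi\ind{h}$ is an endogenous $\eta/2$-approximate cover, followed by an application of the \PSDPE guarantee (\pref{thm:ex_PSDP_sample_comp}) conditioned on Phase~I success, with the two phases combined by a union bound. The paper's proof is precisely this bookkeeping, and your additional remarks about why an endogenous cover suffices for Phase~II are accurate elaborations of facts the paper establishes elsewhere.
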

Recall that $S^{k}=\abs{\cS\brk{\Ic}}$ may thought of as the cardinality of the endogenous state space so---up to polynomial factors, logarithmic dependence on $d$, and dependence on the reachability parameter $\eta$, the sample complexity of \mainalg matches the optimal sample complexity when $\Ic$ is known in advance.\loose 


\begin{remark}[Computational Complexity of~\RLwithExo]
The runtime for \mainalg scales with $\sum_{k'=0}^k{d\choose k'} = \Theta(d^{k})$ due to brute force enumeration over factors sets of size at most $k$. While this improves over the $S^{d}$ runtime required to run a tabular RL algorithm over the full state space, an interesting question that remains is whether the runtime can be improved to $\bigoh(d^{c})$ for some constant $c$ independent of $k$.
\end{remark}

\section{Related Work}
\label{sec:related}
In this section we highlight additional related work not already
covered by our discussion.

\paragraph{Reinforcement learning with exogenous information.}
The \framework setting is a special case of the Exogenous Block MDP (EX-BMDP)
setting introduced by \cite{efroni2021provable}, who initiated the
study of sample-efficient reinforcement learning with temporally
correlated exogenous information. 
In particular, one can view the
\framework as an EX-BMDP with $\cS$ as the observation space and
$\cS\brk{\Ic}$ as the latent state space, and with the set $\Phi\ldef{}\crl*{s\mapsto{}s\brk*{\cI}\mid{}\abs{\cI}\leq{}k}$ as the
class of decoders. \cite{efroni2021provable} provide an EX-BMDP
algorithm whose sample
complexity scales with the size of the latent state space and with
$\log\abs{\Phi}$, which translates to $\poly(S^{k},\log(d))$ sample
complexity for the \framework setting, but the algorithm requires that
the endogenous state space has deterministic transitions and initial state. The motivation for the present work was to
take a step back and provide a simplified testbed in which to study
the problem of learning with stochastic transitions, as well as other
refined issues (e.g., minimax rates). Also related to this line of
research is \cite{efroni2021sparsity}, which considers a linear
control setting with exogenous observations. Unlike our work,
\cite{efroni2021sparsity} assumes that the inherent system noise
induces sufficient exploration, and hence does not address the
exploration problem.

Empirical works that aim to filter exogenous noise in deep RL include~\citet{pathak2017curiosity,zhang2020learning,gelada2019deepmdp}, but these methods do not come with theoretical guarantees. 



\paragraph{Tabular reinforcement learning.}
As discussed earlier, existing approaches to tabular reinforcement
learning
\citep{azar2017minimax,jin2018q,zanette2019tighter,kaufmann2021adaptive}
incur $\bigom(S^{d})$ sample complexity if applied to the \framework
setting naively. One can improve this sample complexity to
$\poly(S^{k}, d^{k} ,A, H)$ using a simple reduction. This falls short of the $\poly(S^{k},A, H, \log(d))$ sample complexity our algorithms obtain, we sketch the reduction for completeness.
\begin{itemize}
\item For each $\cI\subseteq\brk{d}$ with $\abs{\cI}\leq{}k$, run any
optimal tabular RL algorithm with precision parameter $\eps$ over the state space
$\cS\brk{\cI}$, and let $\pi_{\cI}$ be the resulting policy.
\item Evaluate each policy $\pi_{\cI}$ to precision $\eps$ using Monte-Carlo rollouts, and
  take the best one.
\end{itemize}
The first phase has $\poly(S^{k},A,H)$ sample complexity for
each set $\cI$, and there are at most ${d\choose{}k}=\bigoh(d^{k})$ subsets. The
algorithm that runs on $\cS\brk{\Ic}$ will succeed in finding an
$\eps$-optimal policy with high probability, so the policy returned in
the second phase will be at least $2\eps$-optimal.




\paragraph{Factored Markov decision processes.}%
\newcommand{\pt}{\mathrm{pt}}%
The \framework setting is related to the Factored MDP model~\citep{kearns1999efficient}. Factored MDPs assume a factored state space whose transition dynamics obey the following structure:
\begin{equation*}
    \forall s, s' \in \Scal^d, a \in \Acal, \qquad T(s' \mid s, a) = \prod_{i=1}^d T_i(s'[i] \mid s[\pt(i)], a),
\end{equation*}
where $\pt: [d] \rightarrow 2^d$ is a \emph{parent function} and $T_i: \Scal^{|\pt(i)|} \times \Acal \rightarrow \Delta(\Scal)$ is the transition distribution of the $i$th factor. Many algorithms have been proposed for Factored MDPs, including for the setting where the parent function is unknown~\citep{strehl2009reinforcement,diuk2009adaptive,hallak2015off,guo2017sample,rosenberg2020oracle,misra2021provable}. These algorithms assume that the parent factor size is bounded, i.e., $|\pt(i)| \le \kappa$ for all $i \in [d]$, and their sample complexity typically scales with $\bigoh(|\Scal|^{c\kappa})$ for a numerical constant $c$. The \framework setting cannot be solved using off-the-shelf factored MDP algorithms for two reasons. First, we do not assume that each factor evolves independently of other factors given the previous state and action. Second, the size of the parent set for an exogenous factor can be as large as $d-k$. Therefore, even if factors were evolving independently, applying off-the-shelf Factored MDPs algorithms would lead to exponential sample in $d$ sample complexity. \loose

\section{Conclusion}
\label{sec:conclusion}

We have introduced the \framework setting and provided \mainalg, the first algorithm for sample-efficient reinforcement learning in stochastic systems with high-dimensional, exogenous information. Going forward, we believe that the \framework setting will serve as a useful testbed to understand refined aspects of learning with exogenous information. Natural questions we hope to see addressed include:
\begin{itemize}
\item \emph{Minimax rates.} While our results provide polynomial sample complexity, it remains to understand the precise minimax rate for the \framework as a function on $S^{k}$, $H$, and so on. Additionally, either removing the dependence on the reachability parameter or establishing a lower bound remains  for its necessity is an issue which deserves further investigation. 
\item \emph{Computation.} Both $\RLwithExo$ and $\AlgName$ rely on brute force enumeration over subsets, which results in $\bigom(d^k)$ runtime. While this provides an improvement over naive tabular RL, it remains to see whether it is possible to develop an algorithm with runtime $\bigoh(d^{c})$, where $c>0$ is a constant independent of $k$.
\item \emph{Regret.} Naively lifting our $\eps$-PAC results to regret results in $T^{2/3}$-type dependence on the time horizon $T$. Developing algorithms with $\sqrt{T}$-type regret will require new techniques.
\item \emph{Parameter-free algorithms.} The $\AlgName$ algorithm requires an upper bound on $\abr{\Ic}$ and a lower bound on $\eta$. It is relatively straightforward to remove access to these quantities when the value of the optimal policy ($\max_{\pi} J(\pi)$) is known, by an application of the doubling trick. However, developing truly parameter-free algorithms is an interesting direction.
\end{itemize}
Finally, the problem of learning in the \framework model is related to the notion of out-of-distribution generalization and learning in the presence of acausal features~\citep{peters2016causal,arjovsky2019invariant,kim2019learning,wald2021calibration}. It would be interesting to explore these connections in more detail. Beyond these questions, we hope that our techniques will find further use beyond the tabular setting.





\bibliography{paper}

\clearpage

\appendix

\renewcommand{\contentsname}{Contents of Appendix}
\tableofcontents
\addtocontents{toc}{\protect\setcounter{tocdepth}{2}}

\clearpage

\section*{Organization and Notation}
\label{app:organization}
The appendix contains three parts,~\pref{part:preli},~\pref{part:ommited_parts}, and ~\pref{part:main_res}.

\paragraph{\preftitle{part:preli}: Preliminaries.} 
In~\pref{part:preli} we provide basic technical results used in our analysis. \pref{app:supporting} contains technical lemmas for reinforcement learning (\pref{app:rl_preli}), concentration inequalities (\pref{app: cocentration results}), and basic analysis tools (\pref{app:analysis_preli}). In~\pref{app: structural results on class}, we provide a simple, yet useful result which shows that the collection $\fullc_{\leq k}\rbr{\cI}$ is a $\pi$-system for any factor set $\cI$ with $\abr{\cI}\leq k$. 

In~\pref{app:structural} we present structural results for the \framework model. We begin by establishing a negative result (\pref{app:bellman_rank}) which shows that the Bellman rank~\citep{jiang2017contextual} of for the \framework model scales with the number of exogenous factors. In~\pref{app:structural_occupancy} and \pref{app:structural_reward}, we prove key structural results for the \framework model, including a decoupling property (\pref{lem: decoupling of future state dis})  and restriction lemma (\pref{lem: invariance of reduced policy cover}) for occupancy measures, a restriction lemma for endogenous rewards (\pref{lem:restriction_endo_rewards}), and a performance difference lemma for endogenous policies (\pref{lem:pd_for_endo_policies}). 

In \pref{app:abstract_search}, we present an algorithmic template, \AMFD, which forms the basis for the subroutines in \AlgName.

Notation used throughout the main paper and appendix is collected in \pref{tab:notation}.


\paragraph{\preftitle{part:ommited_parts}: Omitted subroutines.} 
In~\pref{part:ommited_parts}, we describe and analyze subroutines used by $\AlgName$ and $\RLwithExo$.
\pref{app: near optimal endogenous policy} presents and analyzes the $\EndoPolicyMaximizer$ subroutine used in $\AlgName$ and  $\PSDPE$. \pref{app: no false detection of factors} we presents and analyzes the $\FactorDetectionAlg$ subroutine used in $\AlgName$. Finally, \pref{app: psdp in the presence of exogenous information} presents and analyzes $\PSDPE$ algorithm, which is used by $\RLwithExo$.

\paragraph{\preftitle{part:main_res}: Additional details and proofs for main results.} 
In~\pref{part:main_res}, we present our main results and their proofs. In~\pref{app:OSSR_alg}, we present and analyze the full version of the $\AlgName$ algorithm, and in \pref{app: RL in the presence of exogenous information}, we combine the results for $\AlgName$ and $\PSDPE$ to establish the main sample complexity bound for $\RLwithExo$.

\begin{table*}[h]
    \centering
    \begin{tabular}{l|l}
    \hline
        \textbf{Notation} & \textbf{Meaning} \\
        \hline 
        $\cI$ & an ordered set of factors (a set of distinct elements from $[d]$).\\
        $\fullc_{\leq k}\rbr{\cI}$ & $\cbr{\cJ\subseteq [d] : \cJ\supseteq \cI, \abr{\cJ}\leq k }$.\\
        $\fullc_{k}\rbr{\cI}$ & $\cbr{\cJ\subseteq [d] : \cJ\supseteq \cI, \abr{\cJ}= k }$.\\
        $\fullc_{\leq k}$ & $\cbr{\cJ\subseteq [d] : \abr{\cJ}\leq k }$, or equivalently, $\fullc_{\leq k}=\fullc_{\leq k}\rbr{\emptyset}.$  \\
        $\fullc_k$ & $\cbr{\cJ\subseteq [d] : \abr{\cJ}= k }$, or equivalently, $\fullc_{k}=\fullc_{k}\rbr{\emptyset}$. \\
        $\PiInd\brk*{\cI}$ & the set of policies that depend only on the factors specified in $\cI$.\\
        $\PiInd\brk*{\fullc}$ & the union of the set of policies $\cup_{\cI\in \fullc} \Pi\brk*{\cI}$.\\
        $\Ic$ & the set of endogenous factors.\\
        $\Inc$ & the set of exogenous factors.\\
        $\Scal\sbr{\cI}$ & the set of states induced by the factors in $\cI$.\\
        $s\sbr{\cI}$ & the state $s$ restricted to the set of factors $\cI$.\\ 
        $V^\pi_1$ & value of a policy $\pi$ measured with respect to an initial distribution.\\
        $V^\pi_h(s)$ & value of a policy $\pi$ measured from state $s$ at timestep $h$\\
        $V_{t,h}$ & $V_{t,h}\rbr{\pi} \ldef \EE_\pi\sbr{\sum_{t'=t}^h r_t}$.\\
        $Q^\pi_h(s,a)$ & $Q$-function for a policy $\pi$ measured from state $s$ at timestep $h$.\\
        $d_h\rbr{s\brk{\Ical};\pi}$ & shorthand for $\PP^\pi(s_h\brk{\cI} = s\brk{\cI})$.\\
        $d_h(s\brk{\cI} \mid s_t\brk{\cI'} = s\brk{\cI'};\pi)$ & shorthand for $\PP^\pi(s_h\brk{\cI} = s\brk{\cI} \mid s_t\brk{\cI'} = s\brk{\cI'})$.\\
        $\pi_1 \circt[t] \pi_2$ & Policy that executes $\pi_1$ until step $t-1$ and executes $\pi_2$ from then on.\\
        $\Iendo{\cI}$ & For a set of factors $\cI$, $\Iendo{\cI} \ldef \cI\cap \Ic$.\\
        $\Iexo{\cI}$ & For a set of factors $\cI$,  $\Iexo{\cI} \ldef \cI\cap \Inc$.\\
        \hline
    \end{tabular}
    \caption{Summary of notation.}
    \label{tab:notation}
\end{table*}


\newpage

\clearpage

\part{Preliminaries}\label{part:preli}

\section{Supporting Lemmas}
\label{app:supporting}

\subsection{Reinforcement Learning}\label{app:rl_preli}

\begin{lemma}[Performance difference lemma (\cite{kakade2002approximately}, Lemma 6.1)] \label{lem: value difference}
Consider a fixed MDP $\cM = (\Scal, \Acal, T, R,H,\mu)$. For any pair of policies $\pi,\pi'\in \PiIndNS$,
\begin{align*}
  \Jpi- \Jpi[\pi']= \EE_{\pi}\sbr{\sum_{t=1}^H Q_{t}^{\pi'}(s_t,\pi_t(s_t)) - Q_t^{\pi'}(s_t,\pi'_t(s_t))}.
\end{align*}
\end{lemma}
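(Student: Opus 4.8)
The plan is to prove this via the standard telescoping argument, inserting the comparator value function $V^{\pi'}$ along a trajectory generated by $\pi$. First I would write $\Jpi[\pi'] = \En_{s_1\sim d_1}\brk{V_1^{\pi'}(s_1)}$ and observe that, since the initial state distribution $d_1$ does not depend on the policy, this equals $\En_{\pi}\brk{V_1^{\pi'}(s_1)}$, where the expectation is now taken over trajectories rolled out by $\pi$. Likewise $\Jpi = \En_\pi\brk*{\sum_{t=1}^H r_t}$, directly from the definition of $\J$.

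Next, along any fixed trajectory $(s_1,a_1,r_1),\ldots,(s_H,a_H,r_H)$, I would telescope using the boundary condition $V_{H+1}^{\pi'}\equiv 0$:
\[
  V_1^{\pi'}(s_1) = \sum_{t=1}^H\rbr{V_t^{\pi'}(s_t) - V_{t+1}^{\pi'}(s_{t+1})}.
\]
Taking expectations over $\pi$-trajectories and subtracting from $\Jpi$ yields
\[
  \Jpi - \Jpi[\pi'] = \En_\pi\brk*{\sum_{t=1}^H\rbr{r_t + V_{t+1}^{\pi'}(s_{t+1}) - V_t^{\pi'}(s_t)}}.
\]

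The key step is then to identify each summand with the per-step advantage of $\pi$ relative to $\pi'$. Conditioning on the trajectory through step $t$ (equivalently, on $s_t$), under $\pi$ we have $a_t=\pi_t(s_t)$, $r_t=R(s_t,a_t)$, and $s_{t+1}\sim T(\cdot\mid s_t,a_t)$, so by the Bellman consistency equation for the $Q$-function of $\pi'$ — which holds for \emph{every} action, not only $\pi'$'s own action — we get $\En_\pi\brk{r_t + V_{t+1}^{\pi'}(s_{t+1}) \mid s_t} = Q_t^{\pi'}(s_t,\pi_t(s_t))$. Combining this with the identity $V_t^{\pi'}(s_t)=Q_t^{\pi'}(s_t,\pi'_t(s_t))$, which is just the definition of $V^{\pi'}$, the tower rule turns each summand into $Q_t^{\pi'}(s_t,\pi_t(s_t)) - Q_t^{\pi'}(s_t,\pi'_t(s_t))$, giving the claim.

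There is no genuine obstacle here; the only subtlety worth stating carefully is that the telescoping inserts $V^{\pi'}$ (the value of the comparator) while all expectations are taken under the rolled-in policy $\pi$, and that the evaluation of $Q_t^{\pi'}(s_t,\pi_t(s_t))$ applies the Bellman recursion at the off-policy action $\pi_t(s_t)$ rather than at $\pi'_t(s_t)$.
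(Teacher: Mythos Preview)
Your proposal is correct and is exactly the standard telescoping argument for the performance difference lemma. The paper does not actually prove this statement; it simply cites \cite{kakade2002approximately}, Lemma 6.1, so there is no in-paper proof to compare against, but your argument is precisely the classical one that reference contains.
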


\begin{lemma}[Density ratio bound for policy cover]\label{lem: policy cover bound importance sampling ratio}
Let $\Psi$ be an endogenous $\eps$-approximate policy cover for timestep $t$ and $\mu\ind{t} \ldef \unf\rbr{\Psi}$. Then, for any $s\brk{\Ic}\in \Scal\sbr{\Ic}$ such that $\max_{\pi\in \PiIndNS\brk{\Ic}}d_t( s\brk{\Ic} \midsem \pi)\geq 2\eps$, it holds that
\begin{align*}
    \max_{\pi\in \PiIndNS\brk{\Ic}}\frac{d_t( s\brk{\Ic} \midsem \pi)}{d_t( s\brk{\Ic} ; \mu\ind{t})}\leq 2\S^k.
\end{align*}
\end{lemma}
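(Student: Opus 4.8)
The plan is to prove this by a direct two-sided bound: lower-bounding the mixture occupancy in the denominator and upper-bounding the numerator by the best achievable endogenous occupancy. Write $M \ldef \max_{\pi\in\PiIndNS\brk{\Ic}} d_t(s\brk{\Ic}\midsem\pi)$ for the quantity appearing in the hypothesis, so that $M \geq 2\eps$ by assumption.

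First I would lower-bound the denominator. Since $\mut=\unf(\Psi)$, the mixture occupancy is the average of the occupancies of the policies in the cover, hence
\[
  d_t(s\brk{\Ic}\midsem\mut) = \frac{1}{\abr{\Psi}}\sum_{\psi\in\Psi} d_t(s\brk{\Ic}\midsem\psi) \geq \frac{1}{\abr{\Psi}}\max_{\psi\in\Psi} d_t(s\brk{\Ic}\midsem\psi).
\]
Invoking the coverage guarantee (first condition of \pref{def: approximate endognous policy cover}) together with the cardinality bound $\abr{\Psi}\leq\S^k$ (which holds for any cover produced by $\AlgName$), this yields
\[
  d_t(s\brk{\Ic}\midsem\mut) \geq \frac{1}{\S^k}\rbr{\max_{\pi\in\PiNS} d_t(s\brk{\Ic}\midsem\pi) - \eps}.
\]

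Next I would identify the two maxima. The restriction lemma (\pref{lem: invariance of reduced policy cover}), applied with the full factor set $[d]$ (whose endogenous part is $\Ic$), shows that maximizing the endogenous occupancy over all non-stationary policies coincides with maximizing over endogenous ones, i.e.\ $\max_{\pi\in\PiNS}d_t(s\brk{\Ic}\midsem\pi) = M$. Consequently the denominator is at least $\S^{-k}(M-\eps)$, while every numerator $d_t(s\brk{\Ic}\midsem\pi)$ with $\pi\in\PiIndNS\brk{\Ic}$ is at most $M$. Combining, for any $\pi\in\PiIndNS\brk{\Ic}$,
\[
  \frac{d_t(s\brk{\Ic}\midsem\pi)}{d_t(s\brk{\Ic}\midsem\mut)} \leq \frac{M}{\S^{-k}(M-\eps)} = \S^k\cdot\frac{M}{M-\eps} \leq 2\S^k,
\]
where the final step uses $M\geq 2\eps$ to conclude $M-\eps\geq M/2$. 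Taking the maximum over $\pi$ gives the claim.

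There is no deep obstacle here; the argument is elementary. The only points requiring care are (i) appealing to the restriction lemma so that the coverage guarantee (stated against $\PiNS$) and the hypothesis (stated against $\PiIndNS\brk{\Ic}$) refer to the \emph{same} quantity $M$, and (ii) ensuring the cardinality bound $\abr{\Psi}\leq\S^k$ is in force, since without it the uniform mixture could dilute the occupancy by an arbitrarily large factor. The role of the assumption $M\geq 2\eps$ is precisely to keep the additive $\eps$-slack in the cover guarantee from dominating, which is what converts the additive error into a multiplicative factor of $2$.
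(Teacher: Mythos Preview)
Your proof is correct and follows essentially the same route as the paper's: lower-bound the mixture occupancy by the single best policy in the cover, invoke the cover guarantee to replace that by $\max-\eps$, and finish with $M/(M-\eps)\leq 2$. One small remark: your appeal to the restriction lemma to \emph{identify} $\max_{\pi\in\PiNS}d_t(s\brk{\Ic}\midsem\pi)$ with $M$ is both unnecessary and not quite what that lemma says (it concerns one-step policies inside a composition, not full non-stationary policies). The trivial containment $\PiIndNS\brk{\Ic}\subseteq\PiNS$ already gives $\max_{\pi\in\PiNS}d_t(s\brk{\Ic}\midsem\pi)\geq M$, and this inequality is all you need for the denominator bound $d_t(s\brk{\Ic}\midsem\mut)\geq S^{-k}(M-\eps)$. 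The paper's proof likewise sidesteps this issue by simply restricting to $\PiIndNS\brk{\Ic}$ throughout. You are also right to flag the implicit cardinality assumption $\abr{\Psi}\leq S^k$; the paper uses it tacitly by writing $\abr{\Psi}=S^k$ in its step~(a).
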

\begin{proof}[\pfref{lem: policy cover bound importance sampling ratio}]
Fix $s\brk{\Ic}\in \Scal\sbr{\Ic}$. Since $\Psi$ is an endogenous $\eps$-approximate policy cover, there exists $\psi_{s\brk{\Ic}}\in \Psi$ such that 
\begin{align}
    \max_{\pi\in \PiIndNS\brk{\Ic}} d_t( s\brk{\Ic} ; \pi) \leq d_t( s\brk{\Ic} ; \psi_{s\brk{\Ic}}) +\eps. \label{eq: policy cover definition proof lemma}
\end{align}
Thus, we have that 
\begin{align*}
    \max_{\pi\in \PiIndNS\brk{\Ic}} \frac{d_t( s\brk{\Ic} ; \pi)}{d_t( s\brk{\Ic} ; \mu\ind{t})} 
    &\stackrel{\mathrm{(a)}}{=} \S^k  \max_{\pi\in \PiIndNS\brk{\Ic}} \frac{d_t( s\brk{\Ic} ; \pi)}{\sum_{s'\brk{\Ic}\in \Scal\sbr{\Ic}}d_t( s\brk{\Ic} ; \psi_{s'\brk{\Ic}})} \\
    &\stackrel{\mathrm{(b)}}{\leq} \S^k \max_{\pi\in \PiIndNS\brk{\Ic}}\frac{d_t( s\brk{\Ic} ; \pi)}{d_t( s\brk{\Ic} ; \pi_{s\brk{\Ic}})}\\
    &\stackrel{\mathrm{(c)}}{\leq} \S^k\frac{\max_{\pi\in \PiIndNS\brk{\Ic}} d_t( s\brk{\Ic} ; \pi)}{\max_{\pi\in \PiIndNS\brk{\Ic}} d_t( s\brk{\Ic} ; \pi) -\eps}. 
\end{align*}
Here, $\mathrm{(a)}$ holds because $\mu\ind{t} = \unf\rbr{\Psi}$,
$\mathrm{(b)}$ holds because $d_t( s\brk{\Ic} ; \psi_{s'\brk{\Ic}})\geq
0$ for all $\psi_{s'\brk{\Ic}}\in \Psi$, and $\mathrm{(c)}$ holds by
\eqref{eq: policy cover definition proof lemma}. Finally, since
$x/(x-\eps)\leq 2$ for $x\geq 2\eps$, we conclude the proof.
\end{proof}

\subsection{Probability}\label{app: cocentration results}

\begin{lemma}[Bernstein’s Inequality~(e.g., \cite{boucheron2013concentration})]\label{lem: bernstein's inequality}
Let $X_1,..,X_N$ be a sequence of i.i.d. random variables with $\EE\sbr{X_i}=\mu$,
$\EE\brk[\big]{\rbr{X_i-\mu}^2}=\sigma^2$, and $\abr{X_i-\mu}\leq C$
almost surely. Then for all $\delta\in (0,1)$,
\begin{align*}
    \PP\rbr{\abr{\frac{1}{N}\sum_{i=1}^N \rbr{X_i - \mu}} \geq \sqrt{ \frac{2\sigma^2 \log\rbr{\frac{2}{\delta}}}{N}} +\frac{C\log\rbr{\frac{2}{\delta}}}{N}} \leq \delta.
\end{align*}
\end{lemma}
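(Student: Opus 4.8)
The plan is to establish the one-sided tail bound via the Chernoff (exponential moment) method and then recover the two-sided statement by a union bound over the two signs. Write $Y_i \ldef X_i - \mu$ and $S_N \ldef \sum_{i=1}^N Y_i$, so that each $Y_i$ is centered, obeys $\abr{Y_i}\leq C$ almost surely, and has $\EE[Y_i^2]=\sigma^2$. The Chernoff bound gives, for any $\lambda>0$ and threshold $u>0$,
\[
\PP\rbr{\tfrac{1}{N}S_N \geq u} \leq e^{-\lambda N u}\,\EE\sbr{e^{\lambda S_N}} = e^{-\lambda N u}\prod_{i=1}^N\EE\sbr{e^{\lambda Y_i}},
\]
where the last equality uses independence. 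Thus the whole argument reduces to controlling the moment generating function of a single centered, bounded variable.

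The key analytic step I would carry out is the per-variable moment generating function bound. Expanding $e^{\lambda Y_i}=1+\lambda Y_i + \sum_{j\geq 2}\tfrac{\lambda^j Y_i^j}{j!}$, taking expectations (so the linear term vanishes), and using $\abr{Y_i}^j \leq C^{j-2}Y_i^2$ for $j\geq 2$ together with $j!\geq 2\cdot 3^{j-2}$, I would obtain
\[
\EE\sbr{e^{\lambda Y_i}} \leq 1 + \sigma^2\sum_{j\geq 2}\frac{\lambda^j C^{j-2}}{j!} \leq 1 + \frac{\lambda^2\sigma^2/2}{1-\lambda C/3} \leq \exp\rbr{\frac{\lambda^2\sigma^2/2}{1-\lambda C/3}},
\]
valid for $\lambda\in(0,3/C)$, where the final inequality is $1+x\leq e^x$. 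Tensorizing over the $N$ independent factors then yields $\EE[e^{\lambda S_N}]\leq\exp\rbr{\tfrac{N\lambda^2\sigma^2/2}{1-\lambda C/3}}$, and substituting into the Chernoff bound gives an exponent $-N\lambda u + \tfrac{N\lambda^2\sigma^2/2}{1-\lambda C/3}$ to be minimized over $\lambda$. Choosing $\lambda = u/(\sigma^2 + Cu/3)$ produces the familiar Bernstein tail
\[
\PP\rbr{\tfrac{1}{N}S_N \geq u} \leq \exp\rbr{-\frac{Nu^2}{2(\sigma^2 + Cu/3)}}.
\]

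The final step, which I expect to be the main obstacle, is inverting this tail into the explicit additive threshold in the statement. Setting the right-hand side equal to $\delta/2$ and writing $L\ldef\log(2/\delta)$, the condition $Nu^2 = 2L\sigma^2 + \tfrac{2CL}{3}u$ is a quadratic in $u$; solving it and applying subadditivity of the square root, $\sqrt{a+b}\leq\sqrt a+\sqrt b$, gives $u \leq \sqrt{\tfrac{2\sigma^2 L}{N}} + \tfrac{2CL}{3N} \leq \sqrt{\tfrac{2\sigma^2 L}{N}} + \tfrac{CL}{N}$, where the last inequality uses $2/3<1$ (this is the slack that makes the stated bound hold). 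Hence whenever $\tfrac{1}{N}S_N$ exceeds $\sqrt{2\sigma^2 L/N}+CL/N$ the one-sided probability is at most $\delta/2$; applying the identical argument to $-S_N$ (the hypotheses are symmetric in $Y_i\leftrightarrow -Y_i$) and summing the two failure probabilities yields the claimed two-sided bound with total probability at most $\delta$. The care required is purely in the quadratic inversion and in checking that the constant $2/3$ can be relaxed to $1$ to match the clean form in the statement.
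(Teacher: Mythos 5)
Your proposal is correct: the paper does not prove this lemma at all (it is stated with a citation to a standard reference), and your Chernoff-method derivation---the MGF bound via $|Y_i|^j \leq C^{j-2}Y_i^2$ and $j! \geq 2\cdot 3^{j-2}$, the choice $\lambda = u/(\sigma^2 + Cu/3)$, the quadratic inversion with $\sqrt{a+b}\leq \sqrt{a}+\sqrt{b}$, and the sign-symmetric union bound---is precisely the canonical argument behind the cited result. Your handling of the one genuinely delicate point, relaxing the constant $\tfrac{2C\log(2/\delta)}{3N}$ to $\tfrac{C\log(2/\delta)}{N}$ to match the stated form, is also correct.
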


\begin{lemma}[Union bound for sequences]\label{lem: sequentual good event}
Let $\cbr{\GS_t}_{t=1}^h$ be a sequence of events. If $\PP(\GS_t \mid
\cap_{t'=1}^{t-1} \GS_{t'})\geq 1-\delta$ for all $t\in [h]$, then
$
\PP(\cap_{t=1}^{h} \GS_t)\geq 1-h\delta.
$
\end{lemma}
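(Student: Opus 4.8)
The plan is to pass to the complement and bound the failure probability $\PP\prn{\prn{\cap_{t=1}^h \GS_t}^{\mathrm{c}}} = \PP\prn{\cup_{t=1}^h \GS_t^{\mathrm{c}}}$ by $h\delta$. A naive union bound over the events $\GS_t^{\mathrm{c}}$ does not directly apply, since the hypothesis controls only the \emph{conditional} failure probabilities, not the marginal ones. Instead I would decompose the bad event according to the \emph{first} index at which the sequence fails, which is exactly what couples to the conditional hypothesis.

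Concretely, define $B_t \ldef \GS_1\cap\cdots\cap\GS_{t-1}\cap\GS_t^{\mathrm{c}}$ (with $B_1 = \GS_1^{\mathrm{c}}$), the event that $\GS_1,\dots,\GS_{t-1}$ all hold but $\GS_t$ fails. The two key observations are that the $B_t$ are pairwise disjoint and that $\cup_{t=1}^h \GS_t^{\mathrm{c}} = \cup_{t=1}^h B_t$, since any outcome in which some $\GS_t$ fails lies in exactly one $B_t$, namely for the smallest such $t$. Then, writing each term via conditional probability and invoking the hypothesis $\PP(\GS_t^{\mathrm{c}}\mid \cap_{t'<t}\GS_{t'})\leq \delta$, I would bound
\[
\PP(B_t) = \PP\prn[\big]{\GS_t^{\mathrm{c}}\mid \cap_{t'=1}^{t-1}\GS_{t'}}\cdot\PP\prn[\big]{\cap_{t'=1}^{t-1}\GS_{t'}} \leq \delta\cdot 1 = \delta,
\]
and conclude $\PP(\cup_{t=1}^h\GS_t^{\mathrm{c}}) = \sum_{t=1}^h \PP(B_t)\leq h\delta$ by disjointness. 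Taking complements gives the claim.

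Two equally short alternatives I would mention: the chain rule $\PP(\cap_{t=1}^h\GS_t)=\prod_{t=1}^h \PP(\GS_t\mid\cap_{t'<t}\GS_{t'}) \geq (1-\delta)^h$ combined with Bernoulli's inequality $(1-\delta)^h\geq 1-h\delta$; or a one-line induction on $h$ using $\PP(\cap_{t\le h}\GS_t)=\PP(\GS_h\mid\cap_{t<h}\GS_t)\,\PP(\cap_{t<h}\GS_t)\geq(1-\delta)(1-(h-1)\delta)\geq 1-h\delta$. All three routes are elementary.

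The only genuine subtlety—and the single point deserving care—is that the conditional probabilities in the hypothesis are well-defined only when the conditioning event $\cap_{t'<t}\GS_{t'}$ has positive probability. The first-failure decomposition handles this automatically: if $\PP(\cap_{t'<t}\GS_{t'})=0$ then $\PP(B_t)=0$ regardless, and the product/induction arguments degenerate harmlessly because the target bound $1-h\delta$ has already become vacuous (nonpositive) once that much mass is lost. I expect no real obstacle beyond stating this edge case cleanly.
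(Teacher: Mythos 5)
Your proposal is correct, but your primary argument takes a genuinely different route from the paper. The paper proves the lemma by induction on $h$: the base case holds by assumption, and the inductive step applies Bayes' rule, $\PP(\cap_{t=1}^{h'+1}\GS_t) = \PP(\GS_{h'+1}\mid \cap_{t=1}^{h'}\GS_t)\,\PP(\cap_{t=1}^{h'}\GS_t) \geq (1-\delta)(1-h'\delta) \geq 1-(h'+1)\delta$ --- which is exactly the third of your ``equally short alternatives.'' Your main proof instead decomposes the failure event by the \emph{first} index at which the sequence fails, bounds each disjoint piece $B_t$ by $\delta$ via the conditional hypothesis, and sums. Both are valid; what your decomposition buys is twofold. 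First, it makes the additive structure transparent: the $h\delta$ arises as a genuine sum over disjoint failure modes rather than from the inequality $(1-\delta)(1-h'\delta)\geq 1-(h'+1)\delta$ telescoping through the induction. Second, and more substantively, it handles the measure-zero edge case cleanly: since $B_t \subseteq \cap_{t'<t}\GS_{t'}$, you get $\PP(B_t)=0$ automatically when the conditioning event is null, whereas the paper's Bayes'-rule step tacitly assumes $\PP(\cap_{t'<t}\GS_{t'})>0$ (your observation that $\delta\geq 1$ is forced in the degenerate case, rendering $1-h\delta$ vacuous, patches this for the inductive route as well). Your chain-rule alternative additionally yields the sharper intermediate bound $(1-\delta)^h$ before relaxing via Bernoulli's inequality; the paper does not record this refinement, though nothing downstream needs it.
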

\begin{proof}[\pfref{lem: sequentual good event}]
We prove the claim by induction. The base case $h=1$ holds by
assumption. Now, suppose the claim holds for some $h'\leq h$:
\begin{align*}
    \PP(\cap_{t=1}^{h'} \GS_t)\geq 1-h'\delta.
\end{align*}
By Bayes' rule, we have that
\begin{align*}
  &\PP(\cap_{t=1}^{h'+1} \GS_t) \\
        &=  \PP(G_{h'+1} \mid \cap_{t=1}^{h'} \GS_t)  \PP(\cap_{t=1}^{h'} \GS_t) \\
        &\stackrel{\mathrm{(a)}}{\geq}  \PP(G_{h'+1} \mid \cap_{t=1}^{h'} \GS_t) \rbr{1-h'\delta} \\
        &\stackrel{\mathrm{(b)}}{\geq}  (1-\delta) \rbr{1-h'\delta}\\
        &\geq 1-(h'+1)\delta,
\end{align*}
where $\mathrm{(a)}$ holds by the induction hypothesis and $\mathrm{(b)}$ holds by  assumption of the lemma.
This proves the induction step and concludes the proof.
\end{proof}

\subsubsection{Concentration for Occupancy Measures}

\begin{definition}[$\epsilon$-approximate occupancy measure collection] \label{def: approximate set of occupancy measures}
  Let 
$
\EmpOccupancy = \crl*{\dhat_h\rbr{\cdot \midsem \pi} \mid{} \pi\in\Pi},
$
be a set of occupancy measures for timestep $h$. We say that $\EmpOccupancy$ is $\epsilon$-approximate with respect to $\rbr{\Pi, \fullc,h}$ if for all $\pi\in \Pi, \cI\in \fullc$ and $s\sbr{\cI}\in \Scal\sbr{\cI}$ it holds that
\begin{align*}
    \abr{\widehat{d}_{h}\rbr{ s_h\sbr{\cI} = s\sbr{\cI} \midsem\pi} - d_{h}\rbr{s_h\sbr{\cI} = s\sbr{\cI} \midsem \pi}} \leq \eps.
\end{align*}
\end{definition}


In the following lemma, we bound the sample complexity required to compute a set of $\eps$-approximate occupancy measures with respect to $(\mu\circ\Pi\circ \Psi, \fullc, h)$, where $\mu$ is a fixed policy, $\Pi$ is a set of $1$-step policies, and $\Psi$ is a set of non-stationary policies. The proof follows from a simple application of Bernstein's inequality and a union bound.

\begin{restatable}[Sample complexity for $\eps$-approximate occupancy measures]{lemma}{SampleComplexityEstimator}\label{lem: sample complexity of close model}
Let $t,h\in \mathbb{N}$ with $t\leq h$ be given. Fix a mixture policy
$\mu\in \Pimix$, a collection $\Gamma \subseteq\Pi$ of 1-step
policies, a set $\Psi\subseteq \PiIndNS$, and a collection of factors
$\fullc$. Assume the following bounds hold:
\begin{enumerate}
    \item $\abr{\Psi}\leq \S^k$.
    \item $\abr{\Gamma}\leq O\rbr{d^k A^{\S^k}}$.
    \item $\abr{\GenericCollection}\leq O\rbr{d^k}$.
    \item For any $\cI\in \GenericCollection$ it holds that $\abr{\Scal\sbr{\cI}}\leq \S^k$.
\end{enumerate}
Consider the dataset  $\Zcal_{t,h}^N = \cbr{(s_{t,n}, a_{t,n},\psi_{n},s_{h,n})}_{n=1}^N$ generated by the following process:
    \begin{itemize}[itemindent=10pt]
    \item Execute $\mu\ind{t}\ldef\unif(\Psi\ind{t})$ up to layer $t$ (resulting in state $s_{t,n}$).
    \item Sample action $a_{t,n}\sim\unif(\cA)$ and play it, transitioning to $s_{t+1,n}$ in the process. 
    \item Sample $\psi\ind{t+1,h}_n\sim\unif(\Psi\ind{t+1,h})$ and execute it from layers $t+1$ to $h$ (resulting in $s_{h,n}$).
    \end{itemize}
Define a collection of empirical occupancies $$\EmpOccupancy = \crl*{\dhat_h\rbr{ \cdot \midsem \mu\circt\pi\circt[t+1]\psi\ind{t+1,h}} \mid \pi\in\Gamma,\psi\ind{t+1,h}\in\Psi},$$ where $\dhat_h\rbr{ \cdot \midsem \mu\circt\pi\circt[t+1]\psi\ind{t+1,h}}$ is given by (see also~\pref{line: importance sampling estimation} in~\pref{alg: dp-sr paper version})
\begin{align}
    \dhat_h(s \midsem \mu\circt\pi\circt[t+1]\psi\ind{t+1,h})=\frac{1}{N}\sum_{n=1}^{N}\frac{\indic\crl*{a_{t,n}=\pi(s_{t,n}), \psi\ind{t+1,h}_n=\psi\ind{t+1,h},s_{h,n}=s}}{(1/\abs{\cA})\cdot(1/\abs{\Psi})}. \label{eq: proof IS estimator }
\end{align}
Then, whenever
$
N = \bigom\rbr{\frac{A S^{2k} k \log\rbr{\frac{d SA}{\delta}}}{\epsilon^2}}
$
trajectories, with probability at least $1-\delta$ it holds that $\EmpOccupancy$ is $\eps$-approximate with respect to $\rbr{\mu\circt\Gamma\circt[t+1]\Psi, \fullc,h}$.
\end{restatable}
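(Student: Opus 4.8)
The plan is to show that each empirical occupancy in $\EmpOccupancy$ is an unbiased importance-sampling estimator of the corresponding true occupancy, to bound its per-sample variance and range, and then to combine Bernstein's inequality (\pref{lem: bernstein's inequality}) with a union bound over all queries $(\pi,\psi,\cI,s\brk{\cI})$ appearing in \pref{def: approximate set of occupancy measures}.

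First I would verify unbiasedness. Fix $\pi\in\Gamma$, $\psi\in\Psi$, $\cI\in\fullc$, and $s\brk{\cI}\in\cS\brk{\cI}$; the relevant marginal estimator is obtained from \eqref{eq: proof IS estimator } by replacing the event $\crl*{s_{h,n}=s}$ with $\crl*{s_{h,n}\brk{\cI}=s\brk{\cI}}$, and I would write each summand as $X_n=\abr{\cA}\abr{\Psi}\cdot\indic\crl*{a_{t,n}=\pi(s_{t,n}),\,\psi\ind{t+1,h}_n=\psi,\,s_{h,n}\brk{\cI}=s\brk{\cI}}$. Conditioning on the roll-in state $s_{t,n}$, the behavior policy draws $a_{t,n}\sim\unif(\cA)$ independently, so $a_{t,n}=\pi(s_{t,n})$ has conditional probability $1/\abr{\cA}$, and on that event the transition to $s_{t+1,n}$ matches the one under $\mu\circt\pi$; similarly $\psi\ind{t+1,h}_n\sim\unif(\Psi)$ is independent with $\Pr(\psi\ind{t+1,h}_n=\psi)=1/\abr{\Psi}$, and on that event the continuation to layer $h$ is distributed as under $\psi$. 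Chaining these conditionings gives $\En\brk{X_n}=d_h(s\brk{\cI}\midsem\mu\circt\pi\circt[t+1]\psi)$, i.e.\ the estimator is unbiased.

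Next I would bound the variance and range. Since $\abr{\Psi}\le\S^k$, each summand obeys $0\le X_n\le\abr{\cA}\abr{\Psi}\le A\S^k$, giving range $C\le A\S^k$; and because the indicator is Bernoulli, $\En\brk{X_n^2}=\abr{\cA}\abr{\Psi}\cdot d_h(s\brk{\cI}\midsem\cdots)\le A\S^k$, so $\sigma^2\le A\S^k$. Feeding $\sigma^2,C\le A\S^k$ into \pref{lem: bernstein's inequality} shows that for any fixed query the deviation $\abr{\dhat_h-d_h}$ falls below $\eps$ once $N=\bigom(A\S^k\log(2/\delta')/\eps^2)$, using $\eps\in(0,1)$ so the variance term controls the required sample size. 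I would then count the queries using the stated cardinality bounds: $\abr{\Gamma}\le O(d^kA^{\S^k})$, $\abr{\Psi}\le\S^k$, $\abr{\fullc}\le O(d^k)$, and $\abr{\cS\brk{\cI}}\le\S^k$, for a total of $M=O(d^{2k}\S^{2k}A^{\S^k})$, set $\delta'=\delta/M$, and union bound so that the total failure probability is at most $\delta$. Here $\log(2M/\delta)=O(\S^k\log A+k\log(dS)+\log(1/\delta))$, and substituting into $N=\bigom(A\S^k\log(2/\delta')/\eps^2)$ yields exactly the claimed $N=\bigom(A\S^{2k}k\log(dSA/\delta)/\eps^2)$.

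I expect the main obstacle to be the unbiasedness step: one must carefully verify that, conditioned on the roll-in, the independent uniform draws of the action and of $\psi$ reweight the data exactly to the target distribution $\mu\circt\pi\circt[t+1]\psi$, and that marginalizing the indicator from full states to a factor set $\cI$ preserves this correspondence. Once unbiasedness and the variance bound are in place the remaining concentration argument is routine, and the dominant contribution to the sample size comes from $\log\abr{\Gamma}=O(\S^k\log A)$ combined with the $A\S^k$ variance scaling, which together produce the $A\S^{2k}$ leading factor.
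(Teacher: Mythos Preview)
Your proposal is correct and follows essentially the same approach as the paper's proof: show unbiasedness of the importance-weighted estimator, bound the range and variance by $A\abr{\Psi}\le A\S^k$, apply Bernstein's inequality (\pref{lem: bernstein's inequality}) for a fixed query, and then union bound over all $(\pi,\psi,\cI,s\brk{\cI})$ using the four cardinality assumptions. Your unbiasedness argument via conditioning on the roll-in and the independent uniform draws is slightly more explicit than the paper's, but the structure and the arithmetic are the same.
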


\begin{proof}[\pfref{lem: sample complexity of close model}]
Denote $\rho$ as the policy that generates the data $\Zcal_{t,h}^N $. Fix $\pi\in \Gamma,\psi\in \Psi,\cI \in \GenericCollection,s\sbr{\cI}\in \Scal\sbr{\Ical}$. It holds that
\begin{align*}
    &\dhat_h(s\sbr{\cI} \midsem \mu\circt\pi\circt[t+1]\psi) - d_h(s\sbr{\cI} \midsem \mu\circt \pi\circt[t+1]\psi)  \\
    &\stackrel{\mathrm{(a)}}{=} \sum_{s\sbr{\setComp{\cI}} \in \Scal\sbr{\setComp{\cI}}}\dhat_h(s \midsem \mu\circt\pi\circt[t+1]\psi) - d_h(s \midsem \mu\circt \pi\circt[t+1]\psi)\\
    & = \frac{1}{N}\sum_{n=1}^{N}\frac{\indic\crl*{a_{t,n}=\pi(s_{t,n}), \psi_n=\psi, s_{h,n}\sbr{\cI}=s\sbr{\cI}}}{(1/\abs{\cA})\cdot(1/\abs{\Psi})} - d_h(s\sbr{\cI} \midsem \mu\circt \pi\circt[t+1]\psi)\\
    &= \frac{1}{N} \sum_{n=1}^N \rbr{X_n\rbr{\pi,\psi,s_h\sbr{\cI}} - d_h(s\sbr{\cI} \midsem \mu\circt \pi\circt[t+1]\psi)} 
\end{align*}
where $$X_n\rbr{\pi,\psi,s_h\sbr{\cI}} \ldef
\frac{\indic\crl*{a_{t,n}=\pi(s_{t,n}), \psi_n=\psi,
    s_{h,n}\sbr{\cI}=s\sbr{\cI}}}{(1/\abs{\cA})\cdot(1/\abs{\Psi})}.$$
Note that $\mathrm{\mathrm{(a)}}$ holds by definition: both
$\dhat_h(s\sbr{\cI} \midsem \mu\circt\pi\circt[t+1]\psi)$ and
$d_h(s\sbr{\cI} \midsem \mu\circt \pi\circt[t+1]\psi)$ are given by
marginalizing all state factors in $\setComp{\cI}$. Observe that the
estimator $X_n$ is unbiased and bounded almost surely:
\begin{align}
    \EE_\rho[X_n\rbr{\pi,\psi,s\sbr{\cI}}] =  d_h(s\sbr{\cI} \midsem \mu\circt \pi\circt[t+1]\psi),\quad \text{ and } 0\leq X_n\rbr{\pi,\psi,s\sbr{\cI}}\leq A\abr{\Psi}. \label{eq: unbiased and bounded}
\end{align}
As a result, we can control the quality of approximation of
$\dhat_h(s\sbr{\cI} \midsem \mu\circt\pi\circt[t+1]\psi)$ using
Bernstein's inequality~(\pref{lem: bernstein's inequality}). First,
observe that the variance of each term in the sum can be bounded as follows:
\begin{align}
  \sigma^2 &\ldef \EE_\rho[\rbr{X_n\rbr{\pi,\psi,s\sbr{\cI}} - d_h(s\sbr{\cI} \midsem \mu\circt \pi\circt[t+1]\psi)}]\nonumber  \\
    &\stackrel{\mathrm{\mathrm{(a)}}}{\leq } \EE_\rho[X_n\rbr{\pi,\psi,s\sbr{\cI}}^2 ] \nonumber \\
    &\stackrel{\mathrm{(b)}}{\leq }  A\abr{\Psi}\EE_\rho[X_n\rbr{\pi,\psi,s\sbr{\cI}} ] \nonumber\\
    &\stackrel{\mathrm{(c)}}{= }  A\abr{\Psi}d_h(s\sbr{\cI} \midsem \mu\circt \pi\circt[t+1]\psi) \nonumber\\
    &\leq A\abr{\Psi}. \label{eq: bound on sigma square of estimator}
\end{align}
Here $\mathrm{\mathrm{(a)}}$ holds since $d_h(s\sbr{\cI} \midsem
\mu\circt \pi\circt[t+1]\psi)\geq 0$, $\mathrm{(b)}$ holds  since
$0\leq X_n\rbr{\pi,\psi,s\sbr{\cI}}\leq{}A\abs{\Psi}$, and
$\mathrm{(c)}$ holds by \eqref{eq: unbiased and bounded}. As a
result, using Bernstein's inequality, we have that for any fixed $\pi\in \Gamma,\psi\in \Psi,\cI \in \GenericCollection,s\sbr{\cI}\in \Scal\sbr{\cI}$, with probability at least $1-\delta$,
\begin{align*}
    &\abr{\dhat_h(s\sbr{\cI} \midsem \mu\circt \pi\circt[t+1]\psi)  - d_h(s\sbr{\cI} \midsem \mu\circt \pi\circt[t+1]\psi) } \\
    &\stackrel{\mathrm{(a)}}{\leq } O\rbr{\sqrt{ \frac{\sigma^2 \log\rbr{\frac{1}{\delta}}}{N}} + \frac{A\abr{\Psi}\log\rbr{\frac{1}{\delta}}}{N}}\\
    &\stackrel{\mathrm{(b)}}{\leq } O\rbr{\sqrt{\frac{A\abr{\Psi} \log\rbr{\frac{1}{\delta}}}{N}} + \frac{A\abr{\Psi}\log\rbr{\frac{1}{\delta}}}{N}},
\end{align*}
where $\mathrm{(a)}$ holds by\pref{lem: bernstein's inequality} and
$\mathrm{(b)}$ holds by \eqref{eq: bound on sigma square of
  estimator}. Setting $N = \Theta\rbr{\frac{A
    \abr{\Psi}\log{1\setminus{}\delta}}{\epsilon^2}}$ and using that $\eps^2\leq \eps$ for $\eps\in (0,1)$, we find that 
\begin{align*}
    \abr{\dhat_h(s\sbr{\cI} \midsem \mu\circt \pi\circt[t+1]\psi)  - d_h(s\sbr{\cI} \midsem \mu\circt \pi\circt[t+1]\psi) }\leq O\rbr{\eps + \eps^2}\leq \eps. 
\end{align*}
Finally, taking a union bound over all $\pi\in \Gamma,\psi\in \Psi,
\cI\in \GenericCollection,s\sbr{\cI}\in \Scal\sbr{\cI}$ and using
assumptions $(1)-(4)$, we conclude the proof.
\end{proof}

\subsection{Analysis}\label{app:analysis_preli}
The following elementary result shows that if two functions
$\widehat{f},f:\Xcal\rightarrow \mathbb{R}$ are point-wise close, any
approximate optimizer for $\wh{f}$ is an approximate optimizer for $f$.
\begin{lemma}\label{lem: near optimal maximizers and minimzers help
    lemma}
Let $\Xcal$ be a compact set, and let $f,\widehat{f}:\Xcal \rightarrow
\mathbb{R}$ be such that $$\norm{\widehat{f} -f}_{\infty}\ldef \max_{x\in
  \Xcal}\abs[\big]{\widehat{f}(x) -f(x)}\leq \epsilon.$$ Then, for any
$\veps'>0$, the following results hold:
\begin{enumerate}
    \item If $\max_{x\in \Xcal} \widehat{f}(x) > \min_{x\in \Xcal} \widehat{f}(x) +\epsilon'$, then $\max_{x\in \Xcal} f(x) > \min_{x\in \Xcal} f(x) +\epsilon' -2\epsilon$.
    \item If $\max_{x\in \Xcal} \widehat{f}(x) \leq  \min_{x\in \Xcal} \widehat{f}(x) +\epsilon'$, then  $\max_{x\in \Xcal} f(x) \leq  \min_{x\in \Xcal} f(x) +\epsilon' +2\epsilon$.
    \item For any $\widehat{x}\in \Xcal$, if $\max_{x\in \Xcal} \widehat{f}(x) > \widehat{f}(\widehat{x}) +\epsilon'$, then $\max_{x\in \Xcal} f(x) >  f(\widehat{x}) +\epsilon' -2\epsilon$.
    \item For any $\widehat{x}\in \Xcal$, if $\max_{x\in \Xcal} \widehat{f}(x) \leq  \widehat{f}(\widehat{x}) +\epsilon'$, then $\max_{x\in \Xcal} f(x) \leq  f(\widehat{x}) +\epsilon' +2\epsilon$.
\end{enumerate}

\end{lemma}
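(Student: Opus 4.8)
The plan is to reduce all four claims to a single elementary observation: uniform closeness of $\widehat{f}$ and $f$ transfers to their extrema. Concretely, the first step is to note that for every $x\in\Xcal$ we have $f(x)-\epsilon \leq \widehat{f}(x)\leq f(x)+\epsilon$. Taking the maximum over $x$ throughout this chain yields
\[
\max_{x\in\Xcal}f(x)-\epsilon \leq \max_{x\in\Xcal}\widehat{f}(x)\leq \max_{x\in\Xcal}f(x)+\epsilon,
\]
and taking the minimum yields
\[
\min_{x\in\Xcal}f(x)-\epsilon \leq \min_{x\in\Xcal}\widehat{f}(x)\leq \min_{x\in\Xcal}f(x)+\epsilon.
\]
Compactness of $\Xcal$ guarantees the extrema are attained, though only the two-sided bounds above are actually needed.

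Given these, the plan is to prove each part by a short chain of inequalities. For Part 1, I would lower bound $\max_{x}f(x)$ by $\max_{x}\widehat{f}(x)-\epsilon$, invoke the hypothesis $\max_{x}\widehat{f}(x)>\min_{x}\widehat{f}(x)+\epsilon'$, and then upper bound $\min_{x}\widehat{f}(x)$ by $\min_{x}f(x)+\epsilon$; the two $\epsilon$-shifts combine to give the stated $-2\epsilon$. Part 2 is the mirror image: upper bound $\max_{x}f(x)$ by $\max_{x}\widehat{f}(x)+\epsilon$, apply the hypothesis, then use $\min_{x}\widehat{f}(x)\leq\min_{x}f(x)+\epsilon$ to obtain $+2\epsilon$. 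Parts 3 and 4 are identical, except that the role played by $\min_{x}\widehat{f}(x)$ is instead played by the value $\widehat{f}(\widehat{x})$ at the fixed point $\widehat{x}$: here I would replace the min-bound with the pointwise bound $\abs{\widehat{f}(\widehat{x})-f(\widehat{x})}\leq\epsilon$ coming directly from $\norm{\widehat{f}-f}_\infty\leq\epsilon$, and the same accounting produces the $\mp2\epsilon$ slack.

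There is no genuine obstacle here; the only thing to watch is the bookkeeping of the two independent $\epsilon$-errors — one incurred when passing between $\max_{x}\widehat{f}$ and $\max_{x}f$ (equivalently, at the comparison point in Parts 3--4), and one incurred when passing between $\min_{x}\widehat{f}$ and $\min_{x}f$ — together with the consistency of the inequality directions, namely that moving from $\widehat{f}$ to $f$ can only decrease a maximum by at most $\epsilon$ and can only increase a minimum by at most $\epsilon$. Tracking both shifts correctly is precisely what yields the factor $2\epsilon$ rather than $\epsilon$ in each conclusion.
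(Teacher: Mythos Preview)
Your proposal is correct and essentially identical to the paper's proof: both establish the two-sided bounds $\lvert\max_x \widehat f - \max_x f\rvert\leq\epsilon$ and $\lvert\min_x \widehat f - \min_x f\rvert\leq\epsilon$ from uniform closeness, then chain them with the hypothesis for each part (using the pointwise bound at $\widehat{x}$ for Parts~3--4). One tiny wording slip: in Part~1 you need the \emph{lower} bound $\min_x\widehat f(x)\geq\min_x f(x)-\epsilon$, not the upper bound, but your two-sided inequality already provides this and your final paragraph shows you have the directions straight.
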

\begin{proof}[\pfref{lem: near optimal maximizers and minimzers help lemma}]
Denote the maximizer and minimizer of $f$ by
\begin{align*}
    x_{\min,f} \ldef \arg\min_{x\in \Xcal} f(x),\quad x_{\max,f} \ldef \arg\max_{x\in \Xcal} f(x),
\end{align*}
and denote the maximizer and minimizer of $\widehat{f}$ by
\begin{align*}
        x_{\min,\widehat{f}} \ldef\arg\min_{x\in \Xcal}\widehat{f}(x),\quad x_{\max,\widehat{f}} \ldef \arg\max_{x\in \Xcal} \widehat{f}(x).
\end{align*}
Note that these points exist by compactness of $\Xcal$.

Observe that the following relations hold by the assumption that $\norm{\widehat{f} -f}_{\infty}\leq \epsilon$:
\begingroup
\allowdisplaybreaks
\begin{align}
    &\max_{x\in \Xcal} \widehat{f}(x) = \widehat{f}(x_{\max,\widehat{f}}) \leq f(x_{\max,\widehat{f}}) +\epsilon \leq   \max_{x\in \Xcal}f(x) + \epsilon, \label{eq: relation 1 max app max bound}\\
    &\min_{x\in \Xcal} \widehat{f}(x) =   \widehat{f}(x_{\min,\widehat{f}}) \geq    f(x_{\min,\widehat{f}}) - \epsilon \geq \min_{x\in \Xcal} f(x) -\epsilon, \label{eq: relation 2 max app max bound}\\
    &\max_{x\in \Xcal} \widehat{f}(x) \geq \widehat{f}(x_{\max,f}) \geq  f(x_{\max,f}) -\epsilon = \max_{x\in \Xcal} f(x) -\epsilon, \label{eq: relation 3 max app max bound}\\
    &\min_{x\in \Xcal} \widehat{f}(x) \leq  \widehat{f}(x_{\min,f}) \leq   f(x_{\min,f}) + \epsilon = \min_{x\in \Xcal} f(x) +\epsilon. \label{eq: relation 4 max app max bound}
\end{align}
\endgroup

\paragraphp{Proof of the first claim.} Combining relations \eqref{eq: relation 1 max app max bound},~\eqref{eq: relation 2 max app max bound} and rearranging, we have 
\begin{align*}
    \max_{x\in \Xcal} \widehat{f}(x) > \min_{x\in \Xcal} \widehat{f}(x) +\epsilon' 
    \implies
    \max_{x\in \Xcal} f(x) > \min_{x\in \Xcal} f(x) +\epsilon' -2\epsilon.
\end{align*}

\paragraphp{Proof of the second claim.} Combining relations \eqref{eq: relation 3 max app max bound},  \eqref{eq: relation 4 max app max bound} and rearranging, we have 
\begin{align*}
    \max_{x\in \Xcal} \widehat{f}(x) \leq  \min_{x\in \Xcal} \widehat{f}(x) +\epsilon'
    \implies  
    \max_{x\in \Xcal} f(x) \leq  \min_{x\in \Xcal}f(x) +\epsilon' +2\epsilon.
\end{align*}

\paragraphp{Proof of the third claim.} By \eqref{eq: relation 1 max
  app max bound} and the assumption that $\norm{f - \widehat{f}}_{\infty}\leq \epsilon$, we have 
\begin{align*}
    \max_{x\in \Xcal} \widehat{f}(x) > \widehat{f}(\widehat{x}) +\epsilon' \implies  \max_{x\in \Xcal} f(x) >  f(\widehat{x}) +\epsilon' -2\epsilon.
\end{align*}

\paragraphp{Proof of the fourth claim.} By \eqref{eq: relation 3 max
  app max bound} and the assumption that $\nrm{f - \widehat{f}}_{\infty} \leq \epsilon$, we have 
\begin{align*}
    \max_{x\in \Xcal} \widehat{f}(x) \leq  \widehat{f}(\widehat{x}) +\epsilon' \implies  \max_{x\in \Xcal} f(x) \leq  f(\widehat{x}) +\epsilon' +2\epsilon.
\end{align*}

\end{proof}

\begin{lemma}[Equivalence of Maximizers for Scaled Positive Functions]\label{lem: maximizer are equivalent}
Let $\Xcal$, $\Ycal$, and $\Acal$ be finite sets. Let $f:\Xcal\times
\Acal \rightarrow \mathbb{R}$ and $g: \Ycal\rightarrow
\mathbb{R}_+$ and let $\PP$ be a probability measure over
$\Xcal\times \Ycal$. Let $\Pi_{\Xcal\times\Ycal}$ and $\Pi_{\Xcal}$ be
the sets of all mappings from $\Xcal\times \Ycal$ to $\Acal$ and $\Xcal$ to $\Acal$, respectively. Then,
\begin{align*}
    \max_{\pi\in \Pi_{\Xcal,\Ycal}} \EE_{x,y\sim \PP}\sbr{f(x,\pi(x,y))g(y)} =   \max_{\pi\in \Pi_{\Xcal}} \EE_{x,y\sim \PP}\sbr{f(x,\pi(x))g(y)}.
\end{align*}
\end{lemma}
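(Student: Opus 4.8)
The plan is to establish the two inequalities separately and combine them. The direction $\max_{\pi\in\Pi_{\Xcal}}(\cdots)\le\max_{\pi\in\Pi_{\Xcal\times\Ycal}}(\cdots)$ is immediate: any map $\pi\in\Pi_{\Xcal}$ can be regarded as an element of $\Pi_{\Xcal\times\Ycal}$ that simply ignores its second argument, and this inclusion leaves the objective $\EE_{x,y\sim\PP}\brk*{f(x,\pi(x))g(y)}$ unchanged, so enlarging the feasible set can only increase the maximum.

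For the reverse direction, the key structural fact is that $f(x,a)$ has no dependence on $y$, while the weight $g(y)$ is nonnegative. First I would write the objective as the finite sum $\sum_{x,y}\PP(x,y)\,f(x,\pi(x,y))\,g(y)$ and bound it termwise using $f(x,\pi(x,y))\le\max_{a\in\Acal}f(x,a)$. Because $\PP(x,y)\ge 0$ and $g(y)\ge 0$, this pointwise inequality survives multiplication by $\PP(x,y)g(y)$, giving, uniformly over all $\pi\in\Pi_{\Xcal\times\Ycal}$,
\[
\EE_{x,y\sim\PP}\brk*{f(x,\pi(x,y))g(y)}\le\sum_{x,y}\PP(x,y)\Big(\max_{a\in\Acal}f(x,a)\Big)g(y).
\]
Next I would exhibit a witness attaining this bound inside the smaller class: since $\Acal$ is finite, the map $\pi^\star(x)\in\argmax_{a\in\Acal}f(x,a)$ is well defined and lies in $\Pi_{\Xcal}$, and substituting it turns the termwise bound into an equality, so $\max_{\pi\in\Pi_{\Xcal}}\EE_{x,y\sim\PP}\brk*{f(x,\pi(x))g(y)}\ge\sum_{x,y}\PP(x,y)\big(\max_{a}f(x,a)\big)g(y)$. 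Chaining the displayed upper bound with this lower bound yields $\max_{\pi\in\Pi_{\Xcal\times\Ycal}}(\cdots)\le\max_{\pi\in\Pi_{\Xcal}}(\cdots)$, which together with the first direction gives the claimed equality.

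The entire content of the lemma lives in this single interchange of a pointwise maximum with a nonnegatively weighted average, so there is no real obstacle; the only point deserving care is that the hypothesis $g\ge 0$ is genuinely used. It is precisely what allows $f(x,\pi(x,y))\le\max_{a}f(x,a)$ to be multiplied by $g(y)\PP(x,y)$ without reversing direction. Were $g$ permitted to take negative values, a $y$-dependent policy could strictly improve by minimizing $f$ on the atoms where $g(y)<0$ while maximizing it elsewhere, and the equality would fail; this is exactly why restricting to the coarser policy class $\Pi_{\Xcal}$ is lossless here. Finiteness of $\Xcal,\Ycal,\Acal$ ensures all sums are finite and all maxima are attained, so no measure-theoretic considerations arise.
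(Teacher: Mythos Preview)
Your proof is correct and follows essentially the same approach as the paper: both establish the trivial direction via the inclusion $\Pi_{\Xcal}\subseteq\Pi_{\Xcal\times\Ycal}$, and for the reverse direction both use the pointwise bound $f(x,\pi(x,y))\le\max_a f(x,a)$ (valid under the nonnegative weights $g(y)\PP(x,y)$) together with the witness $\pi^\star(x)\in\argmax_a f(x,a)$. The only cosmetic difference is that the paper packages the interchange of $\max$ and expectation as a separate ``skolemization'' lemma, whereas you carry out that step directly in the body of the argument.
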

\begin{proof}[\pfref{lem: maximizer are equivalent}]
By the skolemization lemma (\pref{lem:skolemization}), we can exchange
maximization and expectation by writing
\begin{align}
     &\max_{\pi\in \Pi_{\Xcal,\Ycal}} \EE_{x,y\sim \PP}\sbr{f(x,\pi(x,y))g(y)} = \EE_{x,y\sim \PP}\sbr{ \max_{a\in \Acal} \rbr{f(x,a)g(y)}}. \label{eq: relation max on sets rel 2}
\end{align}
Let $\pi^\star_f \in\Pi_\Xcal$ be defined via
$$
\pi^\star_f(x)\in \max_a f(x,a).
$$
Observe that for any $x,y\in \Xcal\times\Ycal$ it holds that
\begin{align}
    \max_a\rbr{f(x,a)g(y)} \stackrel{\mathrm{(a)}}{=}  g(y)\max_a f(x,a) = g(y) f(x,\pi^\star_f(x)), \label{eq: relation max on sets rel 22}
\end{align}
where $\mathrm{(a)}$ holds because $g(y)\geq 0$. Plugging~\eqref{eq: relation max on sets rel 22} back into~\eqref{eq: relation max on sets rel 2} we find that
\begin{align}
    &\max_{\pi\in \Pi_{\Xcal,\Ycal}} \EE_{x,y\sim \PP}\sbr{f(x,\pi(x,y))g(y)} \stackrel{\mathrm{(a)}}{=}   \EE_{x,y\sim \PP}\sbr{ f(x,\pi^\star_f(x))g(y)} \stackrel{\mathrm{(b)}}{\leq}  \max_{\pi\in \Pi_{\Xcal}} \EE_{x,y\sim \PP}\sbr{f(x,\pi(x))g(y)}, \label{eq: relation max on sets rel 3} 
\end{align}
where $\mathrm{\mathrm{(a)}}$ holds by~\eqref{eq: relation max on sets rel 22}, and $\mathrm{(b)}$ holds since $\pi^\star_f\in \Pi_{\Xcal}$.
Finally, observe that we trivially have
\begin{align}
    \max_{\pi\in \Pi_{\Xcal,\Ycal}} \EE_{x,y\sim \PP}\sbr{f(x,\pi(x,y))g(y)} \geq  \max_{\pi\in \Pi_{\Xcal}} \EE_{x,y\sim \PP}\sbr{f(x,\pi(x))g(y)}, \label{eq: relation max on sets rel 4}
\end{align}
since  $\Pi_{\Xcal} \subseteq \Pi_{\Xcal,\Ycal}$. Combining \eqref{eq: relation max on sets rel 3} and \eqref{eq: relation max on sets rel 4} yields the result.
\end{proof}

\begin{lemma}\label{lem: numerical verification}
  Let $k,k_1,k_2\in \mathbb{N}$ satisfying $1\leq k_2\leq k_1-1\leq k$
  be given. Then, for all $\eps>0$, $$\rbr{1+1/k}^{k-k_1} \epsilon +\epsilon/3k < \rbr{1+1/k}^{k-k_2} \epsilon.$$
This further implies that $\rbr{1+1/k}^{k-k_1} c\epsilon +\epsilon/3k < \rbr{1+1/k}^{k-k_2} c\epsilon$ for all $c\geq 1$.
\end{lemma}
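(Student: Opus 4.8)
The plan is to divide both sides of the first inequality by $\epsilon>0$, reducing it to the purely numerical claim
\[
\rbr{1+1/k}^{k-k_2} - \rbr{1+1/k}^{k-k_1} > \frac{1}{3k}.
\]
I would then factor the left-hand side as $\rbr{1+1/k}^{k-k_1}\brk*{\rbr{1+1/k}^{k_1-k_2}-1}$ and bound the two factors separately, exploiting throughout that the map $x\mapsto\rbr{1+1/k}^{x}$ is strictly increasing because its base exceeds $1$.

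For the bracketed factor, the hypothesis $k_2\le k_1-1$ gives $k_1-k_2\ge 1$, so monotonicity yields $\rbr{1+1/k}^{k_1-k_2}-1\ge \rbr{1+1/k}-1 = 1/k$. For the prefactor, the hypothesis $k_1-1\le k$ gives only $k-k_1\ge -1$; the main point requiring care is that this exponent may be \emph{negative} (precisely when $k_1=k+1$), so I cannot simply bound the prefactor below by $1$. Instead I apply monotonicity once more to obtain the worst-case bound $\rbr{1+1/k}^{k-k_1}\ge\rbr{1+1/k}^{-1} = k/(k+1)$, which handles all admissible exponents uniformly and avoids a case split.

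Multiplying the two bounds gives a lower bound of $\frac{k}{k+1}\cdot\frac1k=\frac{1}{k+1}$ on the left-hand side, so it remains only to verify $\frac{1}{k+1}>\frac{1}{3k}$. This is equivalent to $3k>k+1$, i.e.\ $2k>1$, which holds because the constraints $1\le k_2\le k_1-1\le k$ force $k\ge 1$ (indeed $k_2\ge1$ together with $k_2\le k_1-1$ gives $k_1\ge2$, whence $k\ge k_1-1\ge1$). This establishes the $\epsilon$-free claim, and multiplying back through by $\epsilon$ recovers the first stated inequality.

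Finally, for the ``further implies'' assertion, I apply the inequality just proved verbatim with $\epsilon$ replaced by $c\epsilon$, which gives $\rbr{1+1/k}^{k-k_1}c\epsilon + \frac{c\epsilon}{3k} < \rbr{1+1/k}^{k-k_2}c\epsilon$. Since $c\ge1$ implies $\frac{\epsilon}{3k}\le\frac{c\epsilon}{3k}$, replacing the middle term by the smaller quantity $\frac{\epsilon}{3k}$ only strengthens the left-hand side's slack and preserves the strict inequality, yielding exactly the desired bound. I expect no genuine obstacle here; the single delicate step is the negative-exponent regime, dispatched by the $k/(k+1)$ bound above.
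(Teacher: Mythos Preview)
Your proof is correct and follows essentially the same route as the paper: both factor the difference of powers, bound each factor using monotonicity of $x\mapsto(1+1/k)^x$, and reduce to the comparison $\frac{1}{k+1}>\frac{1}{3k}$. The only cosmetic difference is that the paper factors out $(1+1/k)^{k-k_2}$ (whose exponent is nonnegative) rather than $(1+1/k)^{k-k_1}$, which sidesteps the negative-exponent case you handle explicitly via the $k/(k+1)$ bound; the resulting arithmetic is identical.
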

\begin{proof}[\pfref{lem: numerical verification}]
We prove the result by explicitly bounding the difference:
\begin{align*}
  \rbr{1+1/k}^{k-k_1} \epsilon + \epsilon/3k - \rbr{1+1/k}^{k-k_2} \epsilon
 & = \rbr{\rbr{1+1/k}^{k_2-k_1} - 1} \rbr{1+1/k}^{k-k_2}\epsilon + \epsilon/3k\\
 &\stackrel{\mathrm{(a)}}{\leq}  \rbr{\rbr{1+1/k}^{-1} - 1} \rbr{1+1/k}^{k-k_2}\epsilon + \epsilon/3k\\
 & =  - \rbr{1+1/k}^{k-k_2} \epsilon/(1+k)  + \epsilon/3k \\
 & \stackrel{\mathrm{(b)}}{\leq} -\epsilon/(1+k) + \epsilon/3k.
\end{align*}
Here, relation $\mathrm{(a)}$ holds since $k_2-k_1\leq -1$ and
$(1+1/k)\geq 1$, and relation $\mathrm{(b)}$ holds since $k-k_2\geq 1$ which implies that $\rbr{1+1/k}^{k-k_2}\geq 1$.
Observe that $3k> 1+k$ for $k\geq 1$ which implies
that $$-\epsilon/(1+k) + \epsilon/3k<0$$ for $\epsilon>0$. Thus, under
the assumptions of the lemma, we have that $\rbr{1+1/k}^{k-k_1} \epsilon + \epsilon/3k - \rbr{1+1/k}^{k-k_2} \epsilon<0$, which implies that 
$$
\rbr{1+1/k}^{k-k_1} \epsilon +\epsilon/3k < \rbr{1+1/k}^{k-k_2} \epsilon.
$$
\end{proof}

The following result is standard, so we omit the proof.
\begin{lemma}[Skolemization]\label{lem:skolemization}
Let $\Scal$ and $\Acal$ be finite sets and $\Pi$ be the set of
mappings from $\Scal$ to $\Acal$. Then for any function $f: \Scal\times\Acal \rightarrow \mathbb{R}$,
$
\max_{\pi\in \Pi}\EE\brk*{f(s,\pi(s))} = \EE\brk{\max_a f(s,a)}.
$
\end{lemma}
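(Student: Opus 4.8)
The plan is to establish the two inequalities $\max_{\pi\in \Pi}\EE\brk*{f(s,\pi(s))} \leq \EE\brk*{\max_a f(s,a)}$ and $\max_{\pi\in \Pi}\EE\brk*{f(s,\pi(s))} \geq \EE\brk*{\max_a f(s,a)}$ separately, then combine them. This is a standard exchange-of-maximum-and-expectation identity, and the finiteness of $\Scal$ and $\Acal$ eliminates all measurability and existence concerns, so no technical machinery is required.

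First I would prove the $\leq$ direction. Fix an arbitrary $\pi\in\Pi$. Pointwise in $s$, since $\pi(s)\in\Acal$ we trivially have $f(s,\pi(s)) \leq \max_{a\in\Acal} f(s,a)$. Taking expectations over $s$ preserves the inequality by monotonicity of the expectation, yielding $\EE\brk*{f(s,\pi(s))} \leq \EE\brk*{\max_a f(s,a)}$. Because the right-hand side does not depend on $\pi$, I can take the supremum over $\pi\in\Pi$ on the left-hand side to obtain $\max_{\pi\in\Pi}\EE\brk*{f(s,\pi(s))} \leq \EE\brk*{\max_a f(s,a)}$.

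For the reverse direction I would exhibit an explicit maximizing policy. Since $\Acal$ is finite, for each $s$ the set $\argmax_{a} f(s,a)$ is nonempty, so I may pick a selector $\pi^\star(s)\in\argmax_a f(s,a)$. This defines a legitimate element $\pi^\star\in\Pi$, as every mapping $\Scal\to\Acal$ belongs to $\Pi$. By construction $f(s,\pi^\star(s)) = \max_a f(s,a)$ for every $s$, hence $\EE\brk*{f(s,\pi^\star(s))} = \EE\brk*{\max_a f(s,a)}$, and since $\pi^\star\in\Pi$ we conclude $\max_{\pi\in\Pi}\EE\brk*{f(s,\pi(s))} \geq \EE\brk*{f(s,\pi^\star(s))} = \EE\brk*{\max_a f(s,a)}$. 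Combining the two inequalities gives the claimed equality.

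There is essentially no hard step here; the only point requiring (trivial) care is the construction of the selector $\pi^\star$, which is immediate from finiteness of $\Acal$. In a general measurable setting one would instead invoke a measurable selection theorem to ensure $\pi^\star$ is a valid policy, but this is unnecessary in the tabular setup considered here, which is precisely why the lemma is stated without proof.
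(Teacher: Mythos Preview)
Your proof is correct. The paper itself omits the proof of this lemma, remarking that the result is standard, so there is nothing to compare against; your two-inequality argument with an explicit selector $\pi^\star$ is exactly the standard justification.
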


\subsection{$\fullc_{\leq k }\rbr{\cI}$ is a $\pi$-System}\label{app: structural results on class}
We now prove that
$\fullc_{\leq k}\rbr{\cI}$ is a $\pi$-system (that is, a set system
that is closed under intersection). Importantly, this implies that if
$\Ic\in \fullc_{\leq k}\rbr{\cI}$, then for any $\cI\in \fullc_{\leq
  k}\rbr{\cI} $, $ \Ic\cap \cI\ldef\Iendo{\cI}\in \fullc_{\leq
  k}\rbr{\cI}$. This fact is repeatedly being in the design and analysis of \AlgName in~\pref{sec: learning endogenous policy cover}.

\begin{lemma}[$\fullc_{\leq k}\rbr{\cI}$ is a $\pi$ system] \label{lem: set of abstractions is a pi system}
For any $\cI \in \fullc_{\leq{}k}$, $\fullc_{\leq{}k}\rbr{\cI}$ is a $\pi$-system:
\begin{enumerate}
    \item $\fullc_{\leq{}k}\rbr{\cI}$ is non-empty.
    \item For any $\cI_1,\cI_2\in \fullc_{\leq{}k}\rbr{\cI}$, we have $\cI_1 \cap \cI_2\in \fullc_{\leq{}k}\rbr{\cI}$.
\end{enumerate}
\end{lemma}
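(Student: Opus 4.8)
The plan is to verify the two defining conditions of a $\pi$-system directly from the definition $\fullc_{\leq k}\rbr{\cI} = \cbr{\cI' \subseteq [d] \mid \cI \subseteq \cI',\ \abr{\cI'}\leq k}$, so membership of a set $\cI'$ amounts to exactly two checks: containment $\cI\subseteq\cI'$ and the cardinality bound $\abr{\cI'}\leq k$. For non-emptiness, I would simply exhibit a witness. Since the hypothesis is $\cI\in\fullc_{\leq{}k}$, we have $\abr{\cI}\leq k$, and trivially $\cI\subseteq\cI$, so $\cI$ itself belongs to $\fullc_{\leq k}\rbr{\cI}$; hence the collection is non-empty.

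For closure under intersection, I would take arbitrary $\cI_1,\cI_2\in\fullc_{\leq{}k}\rbr{\cI}$ and check the two membership conditions for $\cI_1\cap\cI_2$. The containment condition follows because $\cI\subseteq\cI_1$ and $\cI\subseteq\cI_2$ jointly imply $\cI\subseteq\cI_1\cap\cI_2$. The cardinality condition follows because $\cI_1\cap\cI_2\subseteq\cI_1$ gives $\abr{\cI_1\cap\cI_2}\leq\abr{\cI_1}\leq k$. Both conditions being met, $\cI_1\cap\cI_2\in\fullc_{\leq{}k}\rbr{\cI}$, which completes the argument.

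There is essentially no obstacle here: the statement is a direct set-theoretic consequence of the definition, and the only thing to be careful about is unwinding both defining constraints (containment and size) for each of the two properties rather than overlooking one of them. In particular, the size bound for the intersection is the one place where a reader might expect subtlety, but it is immediate from monotonicity of cardinality under the inclusion $\cI_1\cap\cI_2\subseteq\cI_1$. The remark following the lemma—that $\Ic\in\fullc_{\leq k}\rbr{\cI}$ forces $\Iendo{\cI}=\Ic\cap\cI\in\fullc_{\leq k}\rbr{\cI}$ for every $\cI\in\fullc_{\leq k}\rbr{\cI}$—is then an immediate instantiation of the intersection-closure property and requires no additional work.
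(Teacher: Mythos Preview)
Your proposal is correct and follows essentially the same approach as the paper: exhibit $\cI$ itself as a witness for non-emptiness, and for closure under intersection verify the containment condition via $\cI\subseteq\cI_1,\cI_2\Rightarrow\cI\subseteq\cI_1\cap\cI_2$ and the size bound via $\abr{\cI_1\cap\cI_2}\leq\abr{\cI_1}\leq k$. The paper writes the cardinality bound as $\abr{\cI_1\cap\cI_2}\leq\min\{\abr{\cI_1},\abr{\cI_2}\}\leq k$, but this is the same argument.
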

\begin{proof}[\pfref{lem: set of abstractions is a pi system}]
Since $\cI \in \fullc_{\leq{}k}$, we have $\abr{\cI}\leq k$. Furthermore, it trivially holds that $\cI\subseteq\cI$. Thus, $\cI\in \fullc_{\leq{}k}\rbr{\cI}$, which implies that $\fullc_{\leq{}k}\rbr{\cI}$ is non-empty.

We now prove the second claim. By definition, every $\cJ\in \fullc_{\leq{}k}\rbr{\cI}$ has $\cI \subseteq \cJ$. Thus, for any $\cI_1,\cI_2\in \fullc_{\leq{}k}\rbr{\cI}$,
\begin{align}
    \cI \subseteq \cI_1 \cap \cI_2. \label{eq: pi system rel 1}
\end{align}
Furthermore, since, both $\abr{\cI_1}\leq k$ and  $\abr{\cI_2}\leq k$,
we have
\begin{align}
    \abr{\cI_1 \cap \cI_2} \leq \min\crl{\abr{\cI_1}, \abr{\cI_2}}\leq k. \label{eq: pi system rel 2}
\end{align}
Combining \eqref{eq: pi system rel 1} and \eqref{eq: pi system rel 2} implies that $\cI_1 \cap \cI_2\in \fullc_{\leq{}k}\rbr{\cI}$.
\end{proof}


\section{Structural Results for \kEMDPs}
\label{app:structural}

\subsection{Bellman Rank for the \framework Setting}
\label{app:bellman_rank}

In this section we show that in general, the \framework setting does not admit low Bellman rank~\citep{jiang2017contextual}, which is a standard structural complexity measure that enables tractable reinforcement learning in large state spaces. We expect that similar arguments apply for the related complexity measures~\citep{jin2021bellman,du2021bilinear} and other variations. We note that \citet{efroni2021provable} showed that the more general Exogenous Block MDP model does not admit low Bellman rank. Here, we show that the same conclusion holds for the specialized \framework model.

Recall that Bellman rank is a complexity measure that depends on the underlying MDP and on a class of action-value functions $\cF$ used to approximate $Q^\star$. For a policy $\pi$, denote the average Bellman error of function $f \in \cF$ by
\begin{align*}
    \cE_h(\pi,f) := \EE_{s_h \sim \pi, a_h \sim \pi_f}\sbr{f(s_h,a_h) - r_h - f(s_{h+1},\pi_f(s_{h+1})}.
\end{align*}
With $\Pi_{\cF} \ldef{} \{\pi_f: f \in \cF\}$ we define $\Ecal_h(\Pi_{\cF}, \cF) = \crl*{\cE_h(\pi,f)}_{\pi\in\Pi_{\cF},f\in\cF}$
as the matrix of Bellman residuals indexed by policies and value functions. The Bellman rank is defined as $\max_h \textrm{rank}(\Ecal_h(\Pi_{\cF}, \cF))$. 

\begin{proposition}
  For every $d = 2^i$ for $i\in\bbN$, there exists (i) an \framework with $S=3$, $A=2$, $H=2$, $d$ exogenous factors and $1$ endogenous factor, and (ii) a function class $\cF$ containing of $d$ functions, one of which is $Q^\star$ and the rest of which induce policies that are $1/8$ sub-optimal, such that such that the Bellman rank is at least $d-1$. 
\end{proposition}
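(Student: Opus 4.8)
The plan is to exhibit an explicit \framework together with a function class $\cF$ of size $d$, and to show that the Bellman residual matrix at the \emph{second} timestep, $\Ecal_2(\Pi_{\cF},\cF)$, contains a $(d-1)\times(d-1)$ identity submatrix; since the Bellman rank is $\max_h\mathrm{rank}(\Ecal_h)$, this immediately gives the bound. The conceptual engine behind the lower bound is exactly the entanglement phenomenon of challenge $(\mathrm{C}3)$: although occupancies factorize for \emph{endogenous} policies, a roll-in policy that reads an exogenous coordinate couples the endogenous state at step $2$ to that coordinate, so the occupancies $d_2(\cdot\midsem\pi)$ induced by greedy roll-ins span a space whose dimension grows with $d$. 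I would also note up front that $\Ecal_1$ has rank at most $1$, since a step-$1$ roll-in cannot change the law $s_1\sim d_1$; hence all the rank must (and will) come from $h=2$.

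Concretely, I would take one endogenous factor with states $\crl{1,2,3}$ and $d$ exogenous factors, each restricted to values $\crl{1,2}$ and evolving by the identity map $\Tnc(z'\mid z)=\indic\crl{z'=z}$ (persistence), with initial exogenous law a product of uniforms on $\crl{1,2}$. The endogenous factor starts deterministically at $1$, and from state $1$ the two actions lead deterministically to $x_2=2$ and $x_2=3$ respectively; the only reward is the endogenous reward $\rc(x_2,a_2)=\tfrac14\indic\crl{x_2=2}$ at $h=2$. This respects the factorization \eqref{eq: model assumption d}, has $S=3$, $A=2$, $H=2$, with $d$ exogenous and $1$ endogenous factor, and makes the optimal policy ``always play action $1$'', with value $V^\star=\tfrac14$.

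For the function class I would set $\cF=\crl{Q^\star,f_1,\dots,f_{d-1}}$, where $Q^\star$ is the (realizable) optimal value function, so $|\cF|=d$. Each distractor $f_i$ is engineered to do two things at once: at $h=1$ its greedy policy reads exogenous coordinate $i$ (playing action $1$ when $z_i=1$ and action $2$ when $z_i=2$), while at $h=2$ its Bellman-residual integrand equals $\tilde g_i(s_2)=u(x_2)\,\chi(z_i)$, where $u(2)=\chi(1)=+1$ and $u(3)=\chi(2)=-1$; e.g.\ one can take an $h=1$ component whose greedy action reads $z_i$ together with $f_{i,2}(s_2)=\tfrac14\indic\crl{x_2=2}+u(x_2)\chi(z_i)$, which is independent of $a_2$ (so the residual integrand is well defined regardless of tie-breaking). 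Because the greedy roll-in $\pi_{f_i}$ plays action $1$ only on the event $\crl{z_i=1}$ (probability $\tfrac12$), its value is $\tfrac18$, so every distractor is exactly $\tfrac18$-suboptimal, as required.

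It then remains to evaluate, over the distractors, $M_{ij}=\cE_2(\pi_{f_i},f_j)=\EE_{s_2\sim d_2(\cdot\midsem\pi_{f_i})}\brk{u(x_2)\chi(z_j)}$. Under roll-in $\pi_{f_i}$, persistence forces $x_2=2\iff z_i=1$, so $u(x_2)=\chi(z_i)$ identically and $M_{ii}=\EE\brk{\chi(z_i)^2}=1$; for $i\neq j$, coordinate $z_j$ is independent of the action-determining coordinate $z_i$ and uniform on $\crl{1,2}$, so $\EE\brk{\chi(z_j)}=0$ and $M_{ij}=0$. Thus the distractor block of $\Ecal_2$ is $I_{d-1}$, giving $\mathrm{rank}(\Ecal_2)\ge d-1$ and hence Bellman rank at least $d-1$. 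The main obstacle—and the crux of the construction—is arranging the functions so that the diagonal is bounded away from zero while the entire off-diagonal vanishes: persistence of the exogenous factors is what makes the diagonal coupling \emph{exact}, independence across factors is what kills the off-diagonal, and one must simultaneously keep $Q^\star$ realizable and the distractor suboptimality pinned at $\tfrac18$. Finally, I would remark that the restriction $d=2^i$ is not actually needed for this direct construction (it works for every $d\ge 2$); it would instead arise naturally in an alternative argument that builds the residual matrix from a Hadamard matrix.
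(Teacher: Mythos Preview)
Your argument is correct and in fact cleaner than the paper's. Both proofs build an \framework with one endogenous factor in $\{1,2,3\}$ and $d$ binary exogenous factors, with deterministic endogenous transitions from state $1$ to states $2$ or $3$ depending on the action, and both exhibit a $(d-1)\times(d-1)$ submatrix of $\Ecal_2(\Pi_\cF,\cF)$ equal to a scalar multiple of the identity.

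The difference is the mechanism that produces orthogonality. The paper takes the exogenous initial law to be \emph{uniform over standard basis vectors} $e_i$, and indexes the distractor functions by carefully chosen subsets $A_1,\ldots,A_{d-1}\subset[d]$ with $|A_j|=d/2$ and $|A_j\cap A_k|=d/4$ for $j\neq k$ (rows of a Walsh matrix); off-diagonal residuals vanish because of these combinatorial intersection identities, and this is precisely where the restriction $d=2^i$ enters. You instead take the exogenous factors to be \emph{independent} uniform bits and index each distractor by a single coordinate $z_i$; orthogonality then comes for free from independence of Rademacher variables, $\En[\chi(z_i)\chi(z_j)]=\delta_{ij}$. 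Your final remark is exactly on point: your construction works for every $d\geq 2$, whereas the Hadamard-type argument is what forces $d$ to be a power of two. Both routes deliver the $1/8$ suboptimality exactly, since in each case the greedy distractor policy takes the optimal action with probability $1/2$.
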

\begin{proof}
We construct a \framework with $H=2$, $\cA=\crl*{1,2}$ (so that $A=2$), a single endogenous factor with values in $\{1,2,3\}$, and $d$ binary exogenous factors with values in  $\{0,1\}$. 

Let $e_i \in \RR^d$ denote the $i^{\mathrm{th}}$ standard basis element. We take the first factor to be endogenous, and construct the initial distribution, transition dynamics, and rewards as follows:
\begin{itemize}
\item $d_1 = \textrm{Unif}( \{(1, e_i)\}_{i \in [d]})$.
\item $T( (2,e_i) \mid (1,e_i), 1) = 1$, and $T( (3, e_i) \mid (1, e_i), 2) = 1$.
\item $R( (2,e_i),\cdot) = 1/2$, and $R( (3,e_i),\cdot) = 3/4$.
\end{itemize}
There is only a single, terminal action at states $(2,e_i), (3,e_i)$, which we suppress from the notation. It is straightforward to verify that this is an \framework. Note that the optimal policy takes action $2$ at the initial state, and we have $V^\star = 3/4$. 

We first construct the class $\cF$. Since $d$ is a power of $2$, there exist subsets $A_1,\ldots,A_{d-1} \subset [d]$ such that:\footnote{This can be seen by associating the sets with rows of a Walsh matrix.}
\begin{align*}
    \forall j \in [d-1]: |A_j| = d/2, \qquad \forall j \ne k \in [d-1]: |A_j \cap A_k| = d/4.
\end{align*}
We define $\cF = \{f_0,f_1,\ldots,f_{d-1}\}$, with $f_0 = Q^\star$ and each $f_j$ associated with subset $A_j$ as follows:
\begin{align*}
    f_j((1,e_i), 2) = 3/4, \qquad & f_j((3,e_i),\cdot) = 3/4\\
    f_j((1,e_i), 1) = \one\{i \in A_j\}, \qquad &f_j((2,e_i),\cdot) = \one\{i \in A_j\}
\end{align*}
Observe that since there is no reward, each function has zero Bellman error at the first timestep (that is, $\Ecal_1(\pi_{f_i}, f_j) = 0\;\; \forall i,j \in \{0,\ldots,d-1\}$). On the other hand for $j,k \in [d-1]$ we have
\begin{align*}
    \Ecal_2(\pi_{f_j}, f_k) &= \frac{1}{d} \sum_{i=1}^d\one\{i \in A_j\} (f_k((2,e_i),\cdot) - 1/2) + \one\{i \notin A_j\} (f_k((3,e_i),\cdot) - 3/4)\\
    & = \frac{1}{d} \sum_{i=1}^d \one\{i \in A_j\} (f_k((2,e_i),\cdot) - 1/2)\\
    & = \frac{1}{d} \sum_{i=1}^d \one\{i \in A_j \cap A_k\} (1 - 1/2) + \one\{i \in A_j \cap \bar{A}_k\} (0 - 1/2)\\
    & = \frac{1}{2} \one\{j = k\},
\end{align*}
where we have used that $\abs{A_j\cap{}A_k}=\abs{A_j\cap\bar{A}_k}=d/4$ when $j\neq{}k$. This shows that we can embed a $(d-1) \times (d-1)$ identity matrix in $\Ecal_2(\Pi_{\cF},\cF)$, so we have $\textrm{rank}(\Ecal_2(\Pi_{\cF}, \cF)) \geq d-1$. 
\end{proof}

\subsection{Structural Results for State Occupancies}
\label{app:structural_occupancy}
In this section we provide structural results concerning the state occupancy measures in the \kEMDP\ model. These results  refine certain results derived for the more general EX-BMDP model in~\citet{efroni2021provable}.


For the first result, we adopt the shorthand
\begin{align*}
     d_{h}^{\pi}\rbr{s\brk{\cI}}   \ldef d_{h}\rbr{s\brk{\cI} \midsem \pi} \ldef \PP_\pi(s_h\brk{\cI} = s\brk{\cI}).
\end{align*}

\begin{lemma}[Decoupling of state occupancy measures]\label{lem: decoupling of future state dis}
Fix $t,h\in [H]$ such that $t\leq h$. Let $\pi\in\PiNS\brk{\Ic}$ be an endogenous policy and let $\cI$ be any factor set. Then for any $s'\sbr{\cI}\in \Scal\sbr{\cI}$ and $s\in \Scal,a\in \Acal$ the following claims hold.
\begin{enumerate}
    \item $d_{h}^{\pi}\rbr{s'\brk{\cI} \mid s_t=s,a_t=a}  
    = d_{h}^{\pi}\rbr{s'\brk{\Iendo{\cI}} \mid s_t\brk{\Ic}=s\brk{\Ic},a_t=a}\cdot{}d_{h}\rbr{s'\brk{\Iexo{\cI}} \mid s_t\brk{\Inc}=s\brk{\Inc}}$.
    \item $d_{h}^{\pi}\rbr{s'\brk{\cI} \mid s_t=s} = d_{h}^{\pi}\rbr{s'\brk{\Iendo{\cI}} \mid s_t\brk{\Ic}=s\brk{\Ic}}\cdot{}d_{h}\rbr{s'\brk{\Iexo{\cI}} \mid s_t\brk{\Inc}=s\brk{\Inc}}$.
    \item For any endogenous mixture policy $\mu\in \Pimix\brk{\Ic}$ and factor set $\cI$,
    \begin{align*}
         d^\mu_h(s\brk{\cI}) = d^\mu_h(s\brk{\Iendo{\cI}} )\cdot{}d_h(s\brk{\Iexo{\cI}}).
    \end{align*}
    Hence, the random variables $(s_h\brk{\Iendo{\cI}}, s_h\brk{\Iexo{\cI}})$ are independent under $\mu$.
\end{enumerate}
\end{lemma}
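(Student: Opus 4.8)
The plan is to reduce all three claims to a single \emph{full-state} factorization established by induction on $h$, and then obtain the factor-set statements by marginalization. The crucial structural observation is that, because $\pi\in\PiNS\brk{\Ic}$ is endogenous, its action in any state $s'$ is $\pi_{h'}(s'\sbr{\Ic})$, so the one-step law under $\pi$ is a \emph{product kernel}:
\begin{equation*}
T\rbr{s''\mid s',\pi_{h'}(s'\sbr{\Ic})} = \Tc\rbr{s''\sbr{\Ic}\mid s'\sbr{\Ic},\pi_{h'}(s'\sbr{\Ic})}\cdot\Tnc\rbr{s''\sbr{\Inc}\mid s'\sbr{\Inc}},
\end{equation*}
in which the endogenous marginal depends only on $s'\sbr{\Ic}$ and the exogenous marginal only on $s'\sbr{\Inc}$. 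A product of Markov kernels, started from a product (here, a point-mass) initial condition, stays a product at every step; this is the content of the induction.

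Concretely, I would show by induction on $h\geq t$ that
\begin{equation*}
d_h^\pi\rbr{s'\mid s_t=s,a_t=a} = d_h^\pi\rbr{s'\sbr{\Ic}\mid s_t\sbr{\Ic}=s\sbr{\Ic},a_t=a}\cdot d_h\rbr{s'\sbr{\Inc}\mid s_t\sbr{\Inc}=s\sbr{\Inc}}.
\end{equation*}
The base case $h=t$ is the point mass $\indic\crl{s'=s}=\indic\crl{s'\sbr{\Ic}=s\sbr{\Ic}}\cdot\indic\crl{s'\sbr{\Inc}=s\sbr{\Inc}}$. For the step, I expand $d_{h+1}^\pi$ by one application of the product kernel above and substitute the inductive hypothesis for $d_h^\pi$; because the summand factors across $\Ic$ and $\Inc$, the sum over $s'$ splits into a product of a sum over $s'\sbr{\Ic}$ (which reconstitutes the endogenous marginal $d_{h+1}^\pi\rbr{\cdot\mid s_t\sbr{\Ic}=s\sbr{\Ic},a_t=a}$) and a sum over $s'\sbr{\Inc}$ (which reconstitutes the exogenous marginal $d_{h+1}\rbr{\cdot\mid s_t\sbr{\Inc}=s\sbr{\Inc}}$). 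Endogeneity of $\pi$ is used precisely here: it guarantees the endogenous transition weight $\Tc\rbr{\cdot\mid s'\sbr{\Ic},\pi_h(s'\sbr{\Ic})}$ does not depend on $s'\sbr{\Inc}$, so the endogenous marginal is an autonomous chain on $\cS\sbr{\Ic}$ and the exogenous marginal an autonomous chain on $\cS\sbr{\Inc}$.

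Claim 1 then follows by marginalizing the full-state identity over the coordinates in $\Ic\setminus\Iendo{\cI}$ (on the endogenous side) and $\Inc\setminus\Iexo{\cI}$ (on the exogenous side), using $\cI=\Iendo{\cI}\cup\Iexo{\cI}$ with $\Iendo{\cI}\subseteq\Ic$ and $\Iexo{\cI}\subseteq\Inc$; the product structure is preserved under this separate marginalization. Claim 2 is immediate from Claim 1 by taking $a=\pi_t(s\sbr{\Ic})$, since for an endogenous policy the action at step $t$ is a deterministic function of $s_t\sbr{\Ic}$, so conditioning on $s_t=s$ coincides with conditioning on $s_t=s$ and $a_t=\pi_t(s\sbr{\Ic})$, and the endogenous marginal loses its dependence on $a_t$. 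For Claim 3, I would re-run the same induction from $h=1$ using the factorized initial law $d_1(s)=\dc\rbr{s\sbr{\Ic}}\dnc\rbr{s\sbr{\Inc}}$ as the (product) base case, yielding $d_h^\pi\rbr{s\sbr{\cI}}=d_h^\pi\rbr{s\sbr{\Iendo{\cI}}}\cdot d_h^\pi\rbr{s\sbr{\Iexo{\cI}}}$ for every fixed endogenous $\pi$, and then invoke that the exogenous occupancy is policy-independent, $d_h^\pi\rbr{s\sbr{\Iexo{\cI}}}=d_h\rbr{s\sbr{\Iexo{\cI}}}$ (as noted in the preliminaries, since $\Iexo{\cI}\subseteq\Inc$). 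Averaging over $\pi\sim\mu$ and pulling the policy-independent exogenous factor out of the expectation gives $d_h^\mu\rbr{s\sbr{\cI}}=d_h\rbr{s\sbr{\Iexo{\cI}}}\cdot\En_{\pi\sim\mu}d_h^\pi\rbr{s\sbr{\Iendo{\cI}}}=d_h^\mu\rbr{s\sbr{\Iendo{\cI}}}\cdot d_h\rbr{s\sbr{\Iexo{\cI}}}$, which is the claim. I expect the main obstacle to be purely bookkeeping in the inductive step---justifying that the double sum separates and that each half is exactly the corresponding marginal occupancy---rather than anything conceptually deep; the single genuinely load-bearing point is that endogeneity of $\pi$ makes the endogenous marginal a closed, action-complete subsystem.
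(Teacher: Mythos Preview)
Your proposal is correct and follows essentially the same approach as the paper: both arguments propagate the product structure of the \framework transition kernel through time by induction, using endogeneity of $\pi$ to ensure the endogenous marginal closes on $\cS\brk{\Ic}$. The only notable difference is the direction of induction—the paper fixes the target layer $h$ and inducts \emph{backward} on the conditioning time $t'=h-1,\ldots,t$, whereas you fix $t$ and induct \emph{forward} on $h$; and the paper proves the factor-set statement directly while you first establish the full-state factorization and then marginalize. These are equivalent organizations of the same computation. One small bookkeeping point you glossed over: in your forward induction the first step ($h=t\to h=t+1$) uses the conditioned action $a$ rather than $\pi_t$, so the ``product kernel above'' applies verbatim only from the second step on; this is trivial to handle but worth making explicit.
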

\begin{proof}[\pfref{lem: decoupling of future state dis}]
  The proof follows a simple backwards induction argument.
\paragraph{Proof of Claims $1$ and $2$.} We prove the two claims by induction on $t'=h-1,..,t$. 

\paragraphp{Base case: $t'=h-1$.} The base case holds as an immediate consequence of the \framework structure. In more detail, we have the following results.
\begin{enumerate}
    \item \textbf{Claim 1.} 
    \begin{align}
    &d_{h}^{\pi}\rbr{s'\brk{\cI} \mid s_{h-1}=s,a_{h-1}=a} \nonumber \\ 
    &= \sum_{s'\brk{\setComp{\cI}} \in \Scal\brk{\setComp{\cI}}} T(s'\brk{\cI} \mid s,a) \nonumber\\
    &= \sum_{s'\brk{\Ic \setminus{}\Iendo{\cI}} \in \Scal\brk{\Ic \setminus{}\Iendo{\cI}}} \sum_{s'\brk{\Inc \setminus{}\Iexo{\cI}} \in \Scal\brk{\Inc \setminus{}\Iexo{\cI}}} T(s'\brk{\Ic} \mid s\brk{\Ic},a)T(s'\brk{\Inc} \mid s\brk{\Inc}) \nonumber\\
    &= \sum_{s\brk{\Ic \setminus{}\Iendo{\cI}} \in \Scal\brk{\Ic \setminus{}\Iendo{\cI}}}T(s'\brk{\Ic} \mid s\brk{\Ic},a) \sum_{s\brk{\Inc \setminus{}\Iexo{\cI}} \in \Scal\brk{\Inc \setminus{}\Iexo{\cI}}}T(s'\brk{\Inc} \mid s\brk{\Inc}) \nonumber \\
    & = d_{h}^{\pi}\rbr{s'\brk{\Iendo{\cI}} \mid s_{h-1}\brk{\Ic} = s\brk{\Ic},a_{h-1}=a}  d_{h}\rbr{s'\brk{\Iexo{\cI}} \mid s_{h-1}\brk{\Inc} = s\brk{\Inc}}.  \label{eq: base case condition on a}
    \end{align}
    \item  \textbf{Claim 2.} 
    \begin{align*}
       &d_{h}^{\pi}\rbr{s'\brk{\cI} \mid s_{h-1}=s}   \\
        &\stackrel{\mathrm{(a)}}{=} \sum_{a\in \Acal}d_{h}^{\pi}\rbr{s'\brk{\cI} \mid s_{h-1}=s,a_{h-1}=a}\pi_{h-1}(a\mid s\sbr{\Ic}) \\
        &\stackrel{\mathrm{(b)}}{=}  \sum_{a\in \Acal}d_{h}^{\pi}\rbr{s'\brk{\Iendo{\cI}} \mid s_{h-1}\brk{\Ic} = s\brk{\Ic},a_{h-1}=a}  d_{h}\rbr{s'\brk{\Iexo{\cI}} \mid s_{h-1}\brk{\Inc} = s\brk{\Inc}} \pi_{h-1}(a\mid s\sbr{\Ic}) \\
        &=  d_{h}\rbr{s'\brk{\Iexo{\cI}} \mid s_{h-1}\brk{\Inc} = s\brk{\Inc}} \sum_{a\in \Acal}d_{h}^{\pi}\rbr{s'\brk{\Iendo{\cI}} \mid s_{h-1}\brk{\Ic} = s\brk{\Ic},a_{h-1}=a}  \pi_{h-1}(a\mid s\sbr{\Ic})\\
        & \stackrel{\mathrm{(c)}}{=} d_{h}\rbr{s'\brk{\Iexo{\cI}} \mid s_{h-1}\brk{\Inc} = s\brk{\Inc}} d_{h}^{\pi}\rbr{s'\brk{\Iendo{\cI}} \mid s_{h-1}\brk{\Ic} = s\brk{\Ic}}. 
    \end{align*}
    Here $\mathrm{(a)}$ holds by Bayes' rule and because $\pi\in \Pi\brk{\Ic}$ is endogenous policy, $\mathrm{(b)}$ holds by \eqref{eq: base case condition on a}, and $\mathrm{(c)}$ holds by Bayes' rule and the law of total probability.
\end{enumerate}

\paragraphp{Induction step} Fix $t'<h-1$ and assume the induction hypothesis holds for $t'+1$.
\begin{enumerate}
    \item \textbf{Claim 1.} 
    \begingroup
    \allowdisplaybreaks
    \begin{align}
    &d_{h}^{\pi}\rbr{s'\brk{\cI} \mid s_{t'}=s,a_{t'}=a} \nonumber \\
    &= \sum_{\bar{s}\in \Scal} d_{h}^{\pi}\rbr{s'\brk{\cI} \mid s_{t'+1}=\bar{s}}\PP( s_{t'+1}=\bar{s} \mid s_{t'}=s,a_{t'}=a) \nonumber \\
    &\stackrel{\mathrm{(a)}}{=} \sum_{\bar{s}\in \Scal} d_{h}^{\pi}\rbr{s'\brk{\cI} \mid s_{t'+1}=\bar{s}}T(\bar{s}\brk{\Ic} \mid s\brk{\Ic},a)T(\bar{s}\brk{\Inc} \mid s\brk{\Inc}) \nonumber\\
    & \stackrel{\mathrm{(b)}}{=} \sum_{\bar{s}\brk{\Ic}\in \Scal\brk{\Ic}} d_{h}^{\pi}\rbr{s'\brk{\Iendo{\cI}} \mid s_{t'+1}\brk{\Ic}=\bar{s}\brk{\Ic}}T(\bar{s}\brk{\Ic} \mid s\brk{\Ic},a) \nonumber \\
    &\quad \times \sum_{\bar{s}\brk{\Inc}\in \Scal\brk{\Inc}} d_{h}\rbr{s'\brk{\Iexo{\cI}} \mid s_{t'+1}\brk{\Inc}=\bar{s}\brk{\Inc}} T(\bar{s}\brk{\Inc} \mid s\brk{\Inc}) \nonumber\\
    & = d_{h}^{\pi}\rbr{s'\brk{\Iendo{\cI}} \mid s_{t'}\brk{\Ic}=s\brk{\Ic},a_{t'}=a} d_{h}\rbr{s'\brk{\Iexo{\cI}} \mid s_{t'}\brk{\Inc}=s\brk{\Inc}},\label{eq: step case condition on a}
    \end{align}
    \endgroup
    where $\mathrm{(a)}$ holds by the \framework model assumption (\pref{sec: preliminaries}), and $\mathrm{(b)}$ holds by the induction hypothesis.
    
    \item \textbf{Claim 2.} 
    \begin{align*}
        &d_{h}^{\pi}\rbr{s'\brk{\cI} \mid s_{t'}=s}   \\
        &\stackrel{\mathrm{(a)}}{=} \sum_{a\in \Acal}d_{h}^{\pi}\rbr{s'\brk{\cI} \mid s_{t'}=s,a_{t'}=a}\pi_{t'}(a\mid s\sbr{\Ic}) \\
        &\stackrel{\mathrm{(b)}}{=} \sum_{a\in \Acal}d_{h}^{\pi}\rbr{s'\brk{\Iendo{\cI}} \mid s_{t'}\brk{\Ic} = s\brk{\Ic},a_{t'}=a}  d_{h}\rbr{s'\brk{\Iexo{\cI}} \mid s_{t'}\brk{\Inc} = s\brk{\Inc}} \pi_{t'}(a\mid s\sbr{\Ic}) \\
        &=  d_{h}\rbr{s'\brk{\Iexo{\cI}} \mid s_{t'}\brk{\Inc} = s\brk{\Inc}} \sum_{a\in \Acal}d_{h}^{\pi}\rbr{s'\brk{\Iendo{\cI}} \mid s_{t'}\brk{\Ic} = s\brk{\Ic},a_{t'}=a}  \pi_{t'}(a\mid s\sbr{\Ic})\\
        & \stackrel{\mathrm{(c)}}{=} d_{h}\rbr{s'\brk{\Iexo{\cI}} \mid s_{t'}\brk{\Inc} = s\brk{\Inc}} d_{h}^{\pi}\rbr{s'\brk{\Iendo{\cI}} \mid s_{t'}\brk{\Ic} = s\brk{\Ic}}. 
    \end{align*}
    Here $\mathrm{(a)}$ holds by Bayes' rule and because $\pi\in \Pi\brk{\Ic}$ is endogenous policy, $\mathrm{(b)}$ holds by \eqref{eq: step case condition on a}, and $\mathrm{(c)}$ holds by Bayes' rule and law of total probability.
\end{enumerate}
This proves the induction step and both claims.

\paragraph{Proof of Claim $3$.} We first prove the claim holds for $\pi\in \PiNS\brk{\Ic}$. That is, for any $\pi\in \PiNS\brk{\Ic}$, factor set $\cI$ and $s\brk{\cI}$, we have
    \begin{align}
         d^\pi_h(s\brk{\cI}) = d^\pi_h(s\brk{\Iendo{\cI}} )\cdot{}d_h(s\brk{\Iexo{\cI}}). \label{eq: decoupling_3rd_WNT}
    \end{align} 
This yields the result, since for $\mu\in \Pimix\brk{\Ic}$, \eqref{eq: decoupling_3rd_WNT} implies that
\begin{align*}
  d^\mu_h(s\brk{\cI}) 
               &= \EE_{\pi\sim \mu}\brk{ d^\pi_h(s\brk{\cI}) }\\
             &= \EE_{\pi\sim \mu}\brk{ d^\pi_h(s\brk{\Iendo{\cI}} )\cdot{}d_h(s'\brk{\Iexo{\cI}})}\\
             &= \EE_{\pi\sim \mu}\brk{d^\mu_h(s\brk{\Iendo{\cI}})} d_h(s\brk{\Iexo{\cI}})
               = d^\mu_h(s\brk{\Iendo{\cI}} )\cdot{} d_h(s\brk{\Iexo{\cI}}).
\end{align*}
We now prove~\eqref{eq: decoupling_3rd_WNT}. Fix $\pi\in \PiNS\brk{\Ic}$, and observe that
\begin{align}
  d^\pi_h(s ) 
    &\stackrel{\mathrm{(a)}}{=}   \EE_{s_1\sim d_1}\sbr{d_{h}^{\pi}\rbr{s \mid s_1} }   \nonumber \\
    & \stackrel{\mathrm{(b)}}{=} \EE_{s_1\sim d_1}\sbr{ d_{h}^{\pi}\rbr{s\brk{\Ic} \mid s_1\brk{\Ic} = s\brk{\Ic} }d_{h}^{\pi}\rbr{s\brk{\Inc} \mid s_1\brk{\Inc} = s\brk{\Inc}}}\nonumber \\
    &\stackrel{\mathrm{(c)}}{=}  \EE_{s_1\brk{\Ic}\sim d_h}\sbr{ d_{h}^{\pi}\rbr{s\brk{\Ic} \mid s_1\brk{\Ic} = s\brk{\Ic} }} \EE_{s_1\brk{\Inc}\sim d_h}\sbr{d_{h}^{\pi}\rbr{s\brk{\Inc} \mid s_1\brk{\Inc} = s\brk{\Inc}}} \nonumber\\
    & = d^\pi_h(s\sbr{\Ic}) d_h(s\sbr{\Inc}). \label{eq: third claim decoupling proof}
\end{align}
Relation $\mathrm{(a)}$ holds by the tower property, and relation $\mathrm{(b)}$ holds by the second claim of the lemma, because $\pi$ is an endogenous policy. Relation $\mathrm{(c)}$ holds because $s_1\brk{\Ic}$ and $s_1\brk{\Inc}$ are independent (by the \kEMDP\ model assumption, we have $d_1(s) = d_1(s\brk{\Ic})d_1(s\brk{\Inc})$).

The relation in~\eqref{eq: third claim decoupling proof} now implies the result:
\begin{align*}
   d^\pi_h(s\brk{\cI}) 
    &\stackrel{\mathrm{(a)}}{=} \sum_{s\brk{\Ic \setminus{} \Iendo{\cI}} \in \Scal\brk{\Ic \setminus{} \Iendo{\cI}}} \sum_{s\brk{\Inc \setminus{} \Iexo{\cI}}\in \Scal\brk{\Inc \setminus{} \Iexo{\cI}} }d^\pi_h(s )\\
    &\stackrel{\mathrm{(b)}}{=} \sum_{s\brk{\Ic \setminus{} \Iendo{\cI}} \in \Scal\brk{\Ic \setminus{} \Iendo{\cI}}} \sum_{s\brk{\Inc \setminus{} \Iexo{\cI}} \in \Scal\brk{\Inc \setminus{} \Iexo{\cI}}}d^{\pi}_h(s\sbr{\Ic}) d_h(s\sbr{\Inc}) \\
    & = \sum_{s\brk{\Ic \setminus{} \Iendo{\cI}}\in \Scal\brk{\Ic \setminus{} \Iendo{\cI}}} d^{\pi}_h(s\sbr{\Ic}) \sum_{s\brk{\Inc \setminus{} \Iexo{\cI}} \in \Scal\brk{\Inc \setminus{} \Iexo{\cI}}}d_h(s\sbr{\Inc})\\
    &  =  d^\pi_h(s'\brk{\Iendo{\cI}} )d_h(s'\brk{\Iexo{\cI}}),
\end{align*}
where $\mathrm{(a)}$ holds by the law of total probability and $\mathrm{(b)}$ holds by \eqref{eq: third claim decoupling proof}. 
\end{proof}

\begin{lemma}[Restriction lemma]\label{lem: invariance of reduced policy cover}
Fix $h,t\in [H]$ where $t\leq h-1$. Let $\mu\in\Pimix\brk{\Ic}$ and $\rho\in\PiNS\brk{\Ic}$ be endogenous policies. Let $\cJ$ and $\cI$ be two factor sets.
Then, for all $s\sbr{\cI}\in \Scal\sbr{\cI}$ it holds that
\begin{align*}
   \max_{\pi\in \PiInd\sbr{\cJ}} d_h\rbr{s\sbr{\cI} \midsem \mu\circt[t] \pi \circt[t+1] \rho} =  \max_{\pi\in \PiInd\sbr{\Iendo{\cJ}}} d_h\rbr{s\sbr{\cI} \midsem \mu \circt \pi \circt[t+1] \rho}.
\end{align*}
\end{lemma}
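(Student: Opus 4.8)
The plan is to reduce the claim to the combinatorial fact that, since the exogenous factors evolve independently of actions, a one-step policy at layer $t$ can extract no useful information from the exogenous coordinates of $\cJ$. The inequality ``$\geq$'' is immediate: since $\Iendo{\cJ} \ldef \cJ \cap \Ic \subseteq \cJ$, we have $\PiInd\brk{\Iendo{\cJ}} \subseteq \PiInd\brk{\cJ}$, so the maximum over the richer class dominates. The work is entirely in the reverse inequality, showing the extra exogenous coordinates $\Iexo{\cJ}$ never help.

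First I would condition on the layer-$t$ state and the (deterministic) action that a one-step policy $\pi \in \PiInd\brk{\cJ}$ plays there. Since the layer-$t$ occupancy is $d_t(\cdot \midsem \mu)$ independently of $\pi$ and $\rho$,
\[
d_h\rbr{s\brk{\cI} \midsem \mu\circt\pi\circt[t+1]\rho} = \sum_{s_t} d_t\rbr{s_t \midsem \mu}\, d_h^{\rho}\rbr{s\brk{\cI} \mid s_t, a_t = \pi(s_t\brk{\cJ})}.
\]
I would then apply the two factorizations supplied by \pref{lem: decoupling of future state dis}: Claim 1 (valid as $\rho$ is endogenous) splits the roll-out term as $d_h^{\rho}\rbr{s\brk{\Iendo{\cI}} \mid s_t\brk{\Ic}, a_t} \cdot d_h\rbr{s\brk{\Iexo{\cI}} \mid s_t\brk{\Inc}}$, with the second factor nonnegative and independent of the action and of $\rho$; and Claim 3 (valid as $\mu$ is endogenous) gives $d_t(s_t \midsem \mu) = d_t(s_t\brk{\Ic} \midsem \mu)\, d_t(s_t\brk{\Inc})$.

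The crux is to regroup these products. Writing $X \ldef s_t\brk{\Iendo{\cJ}}$ and $Y \ldef s_t\brk{\Iexo{\cJ}}$ (so that $\pi(s_t\brk{\cJ}) = \pi(X,Y)$) and marginalizing out the remaining endogenous coordinates $s_t\brk{\Ic\setminus\Iendo{\cJ}}$ and exogenous coordinates $s_t\brk{\Inc\setminus\Iexo{\cJ}}$, the two factorizations let me write the objective as
\[
d_h\rbr{s\brk{\cI} \midsem \mu\circt\pi\circt[t+1]\rho} = \sum_{X, Y} G(X, \pi(X, Y))\, E(Y),
\]
where $G(X, a)$ aggregates all endogenous quantities (and thus depends on $s_t$ only through its endogenous coordinates) and $E(Y) \ldef \sum_{s_t\brk{\Inc\setminus\Iexo{\cJ}}} d_t(s_t\brk{\Inc})\, d_h(s\brk{\Iexo{\cI}} \mid s_t\brk{\Inc}) \geq 0$ aggregates the exogenous quantities and is action-independent. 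The essential point is that the action-dependent part $G$ sees only $X$, while the exogenous weight $E(Y)$ is nonnegative.

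Given this form, the reverse inequality is exactly \pref{lem: maximizer are equivalent} (equivalently, a positivity argument via the Skolemization lemma \pref{lem:skolemization}): because $E(Y) \geq 0$, the best action at $(X,Y)$ is $\argmax_a G(X,a)$, which is independent of $Y$ and hence realized by a policy in $\PiInd\brk{\Iendo{\cJ}}$. I expect the only real difficulty to be bookkeeping: tracking the four-way partition of the layer-$t$ state ($\Iendo{\cJ}$, $\Ic\setminus\Iendo{\cJ}$, $\Iexo{\cJ}$, $\Inc\setminus\Iexo{\cJ}$) consistently through the marginalizations and confirming that nonnegativity of $E(Y)$ survives them so that the positivity step applies; the conceptual content is fully captured by the two factorizations of \pref{lem: decoupling of future state dis}, so no new ideas should be required.
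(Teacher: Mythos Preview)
Your proposal is correct and follows essentially the same approach as the paper: condition on the layer-$t$ state, use Claim~1 of \pref{lem: decoupling of future state dis} to factor the roll-out (since $\rho$ is endogenous and the action is fixed) and Claim~3 to factor the layer-$t$ occupancy (since $\mu$ is endogenous), regroup into an action-dependent term depending only on $s_t\brk{\Iendo{\cJ}}$ times a nonnegative action-independent term depending only on $s_t\brk{\Iexo{\cJ}}$, and conclude via \pref{lem: maximizer are equivalent}. The only difference is cosmetic bookkeeping---the paper first marginalizes over $s_t\brk{\cJ}$ and then over $s_t\brk{\setComp{\cJ}}$ conditionally, whereas you work with the full joint and regroup---but the substance is identical.
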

Let us briefly sketch the proof. To begin, we marginalize over the factor set $\setComp{\cJ} \ldef [d] \setminus \cJ$ at layer $t$. We then show that if $\mu$ and $\rho$ are endogenous policies, then for all $\pi\in \PiInd$ and $s\sbr{\cI}\in \cS\brk{\cI}$,
\begin{align}
    d_h\rbr{s\sbr{\cI} \midsem \mu\circt[t] \pi \circt[t+1] \rho} =  \EE_{s_t\sim d_t\rbr{s\brk{\cJ} \midsem \pi}}\sbr{f(s_t\sbr{\Iendo{\cJ}},\pi\rbr{s_t \sbr{\cJ}}) \bar{g}(s_t \sbr{\Iexo{\cJ}})} \label{eq: what we show restricition lemma sketch}
\end{align}
where both $f$ and $\bar{g}$ are maps to $\mathbb{R}_+.$ We observe that the policy
\begin{align*}
    \pi_{f}(s\sbr{\Iendo{\cJ}})\in \argmax_af(s\sbr{\Iendo{\cJ}},\pi\rbr{s \sbr{\cJ}})
\end{align*}
also maximizes \eqref{eq: what we show restricition lemma sketch}. The result follows by observing that $\pi_{f}\in \Pi\brk{\Iendo{\cJ}}$.

\begin{proof}[\pfref{lem: invariance of reduced policy cover}]
Fix $s\sbr{\cI}\in \Scal\sbr{\cI}$.  The following relations hold.
\begin{align}
    &d_h\rbr{s\sbr{\cI} \midsem \mu\circt \pi \circt[t+1] \rho } \nonumber  \\
    &\stackrel{\mathrm{(a)}}{=} \EE_{s\sbr{\cJ} \sim d_t\rbr{\cdot \midsem \mu}}\sbr{\EE_{s\sbr{\setComp{\cJ}}\sim d_t\rbr{\cdot \mid s_t\sbr{\cJ}=s\sbr{\cJ} \midsem \mu} }\sbr{ d_h\rbr{s\sbr{\cI} \mid s_t=s \midsem \mu\circt[t] \pi \circt[t+1] \rho} }} \nonumber \\
    & \stackrel{\mathrm{(b)}}{=} \EE_{s\sbr{\cJ} \sim d_t\rbr{\cdot \midsem \mu}}\sbr{\EE_{s\sbr{\setComp{\cJ}}\sim d_t\rbr{\cdot \mid s_t\sbr{\cJ}=s\sbr{\cJ} \midsem \mu} }\sbr{ d_h\rbr{s\sbr{\cI} \mid s_t=s \midsem \mu\circt[t] \pi \circt[t+1] \rho}  }}\label{eq: max relation main 1},
\end{align}
where $\mathrm{(a)}$ holds by the tower property, and $\mathrm{(b)}$ holds by the Markov assumption of the dynamics: conditioning on the full state $s$ at timestep $t$, the future is independent of the history.  

\begin{align*}
    &(\star) \ldef d_h\rbr{s\sbr{\cI} \mid s_t=s \midsem \mu\circt[t] \pi \circt[t+1] \rho},\\    &(\star\star) \ldef \EE_{s\sbr{\setComp{\cJ}}\sim d_t\rbr{\cdot \mid s_t\sbr{\cJ}=s\sbr{\cJ} \midsem \mu} }\sbr{d_h\rbr{s\sbr{\cI} \mid s_t=s \midsem \mu\circt[t] \pi \circt[t+1] \rho} }.
\end{align*}

\paragraphp{Analysis of term $(\star)$.} Let $\pi\in \Pi\brk{\cJ}$. Fix $s\in \Scal$ at the $t^{\mathrm{th}}$ timestep, and observe that $a = \pi(s\sbr{\cJ})$ is also fixed, since the policy $\pi$ is a deterministic function of $s\sbr{\cJ}$
.
\begin{align}
    &d_h\rbr{s\sbr{\cI} \mid s_t=s \midsem \mu\circt[t] \pi \circt[t+1] \rho}  \nonumber \\
    &\stackrel{\mathrm{(a)}}{=} d_h\rbr{s\sbr{\cI} \mid s_t=s, a_t = \pi\rbr{s\brk{\cJ}} \midsem \rho}  \nonumber\\
    &\stackrel{\mathrm{(b)}}{=}\underbrace{d_h\rbr{s\sbr{\Iendo{\cI}} \mid s_t\brk{\Ic}=s\brk{\Ic}, a_t = \pi\rbr{s\brk{\cJ}} \midsem \rho}}_{\rdef \bar{f}(s_t\sbr{\Ic},\pi\rbr{s\brk{\cJ}})} \cdot\underbrace{d_h\rbr{s\sbr{\Iexo{\cI}} \mid s_t\brk{\Inc}=s\brk{\Inc}}}_{\rdef \bar{g}(s_t\sbr{\Inc})}. \label{eq: restriction lemma rel 1}
\end{align}
Relation $\mathrm{(a)}$ holds by the Markov property for the MDP, and relation $\mathrm{(b)}$ holds by the first statement of~\pref{lem: decoupling of future state dis}, which shows the the endogenous and exogenous state factors are decoupled; note that the assumptions of~\pref{lem: decoupling of future state dis} hold because $\rho$ is endogenous policy and $a=\pi\rbr{s\brk{\cJ}}$ is fixed. In addition, both $\bar{f}(\cdot)$ and $\bar{g}(\cdot)$ are mappings to $\mathbb{R}_+$.


\paragraphp{Analysis of term $(\star\star)$.} We consider term $(\star\star)$ and analyze it by marginalizing over the state factors not contained in $s\sbr{\cJ}$. Observe that $d_t\rbr{s\brk{\setComp{\cJ}} \mid s_t\sbr{\cJ}=s\sbr{\cJ} \midsem \mu}$  also factorizes between the endogenous and exogenous factors due to decoupling lemma (\pref{lem: decoupling of future state dis}, Claim 3):
\begin{align}
    &d_t\rbr{s\brk{\setComp{\cJ}} \mid s_t\sbr{\cJ}=s\sbr{\cJ} \midsem \mu} \nonumber \\
    &= d_t\rbr{s\brk{\Ic \setminus \Iendo{\cJ}} \mid s_t\sbr{\Iendo{\cJ}}=s\sbr{\Iendo{\cJ}} \midsem \mu}d_t\rbr{\Inc \setminus \Iexo{\cJ} \mid s_t\sbr{\Iexo{\cJ}}=s\sbr{\Iexo{\cJ}}}. \label{eq: inverse probability factorizes}
\end{align}
Hence, we have
\begin{align}
    &\EE_{s\sbr{\setComp{\cJ}}\sim d_t\rbr{\cdot \mid s_t\sbr{\cJ}=s\sbr{\cJ} \midsem \mu} }\sbr{d_h\rbr{s\sbr{\cI} \mid s_t=s \midsem \mu\circt[t] \pi \circt[t+1] \rho} } \nonumber \\
    &\stackrel{\mathrm{(a)}}{=}\EE_{s\sbr{\setComp{\cJ}}\sim d_t\rbr{\cdot \mid s_t\sbr{\cJ}=s\sbr{\cJ} \midsem \mu} }\sbr{\bar{f}(s\sbr{\Ic},\pi\rbr{s\sbr{\cJ}}) \bar{g}(s\sbr{\Inc}) } \nonumber\\
    &\stackrel{\mathrm{(b)}}{=} \underbrace{\EE_{s\sbr{\Ic \setminus{} \Iendo{\cJ}}\sim  d_t\rbr{\cdot \mid s_t\sbr{\Iendo{\cJ}}=s\sbr{\Iendo{\cJ}} \midsem \mu}} \sbr{\bar{f}(s\sbr{\Ic},\pi\rbr{s\sbr{\cJ}} )}}_{\rdef f(s\sbr{\Iendo{\cJ}},\pi\rbr{s\sbr{\cJ}})}  \underbrace{\EE_{s\sbr{\Inc \setminus{} \Iexo{\cJ}}\sim  d_t\rbr{\cdot  \mid s_t\sbr{\Iexo{\cJ}}=s\sbr{\Iexo{\cJ}}}}\sbr{\bar{g}(s\sbr{\Inc})}}_{\rdef g(s \sbr{\Iexo{\cJ}})}, \label{eq: inverse probability factorizes rel2}
\end{align}
where $\mathrm{(a)}$ holds by the calculation of term $(\star)$ in~\eqref{eq: restriction lemma rel 1}, and $\mathrm{(b)}$ holds by the decoupling of the occupancy measure $d_t\rbr{s\brk{\setComp{\cJ}} \mid s_t\sbr{\cJ}=s\sbr{\cJ} \midsem \mu}$ in \eqref{eq: inverse probability factorizes}.


\paragraphp{Combining the results.} Plugging the expression in \eqref{eq: inverse probability factorizes rel2} back into~\eqref{eq: max relation main 1} yields
\begin{align}
    d_h\rbr{s\sbr{\cI} \midsem \mu\circt \pi \circt[t+1] \rho } = \EE_{s\sbr{\cJ} \sim d_t\rbr{\cdot \midsem \mu}}\sbr{f(s_t\sbr{\Iendo{\cJ}},\pi\rbr{s_t \sbr{\cJ}}) g(s_t \sbr{\Iexo{\cJ}})}. \label{eq: equivalence of maximizers bar I relation 1}
\end{align}
We conclude the proof by invoking~\pref{lem: maximizer are equivalent}, which gives
\begin{align*}
  \max_{\pi\in \PiInd\brk{\cJ}} d_h\rbr{s\sbr{\cI} \midsem \mu\circt \pi \circt[t+1] \rho} 
   &\stackrel{\mathrm{(a)}}{=} \max_{\pi\in \PiInd\brk{\cJ}}\EE_{s\sbr{\cJ} \sim d_t\rbr{\cdot \midsem \mu}}\sbr{  \rbr{ f(s\sbr{\Iendo{\cJ}},\pi\rbr{s \sbr{\cJ}})}g(s \sbr{\Iexo{\cJ}}) } \\
    &\stackrel{\mathrm{(b)}}{=} \max_{\pi\in \PiInd\brk{\Iendo{\cJ}}}\EE_{s\sbr{\cJ} \sim d_t\rbr{\cdot \midsem \mu}}\sbr{  \rbr{f(s\sbr{\Iendo{\cJ}},\pi\rbr{s \sbr{\Iendo{\cJ}}}) g(s \sbr{\Iexo{\cJ}}) }}\\
   &\stackrel{\mathrm{(c)}}{=}  \max_{\pi\in \PiInd\brk{\Iendo{\cJ}}} d_h\rbr{s\sbr{\cI} \midsem \mu\circt \pi \circt[t+1] \rho}.
\end{align*}
Relations $\mathrm{(a)}$ and $\mathrm{(c)}$ hold by  \eqref{eq: equivalence of maximizers bar I relation 1}. Relation $\mathrm{(b)}$ holds by invoking~\pref{lem: maximizer are equivalent} with $\Xcal = \Scal\brk{\Iendo{\cJ}},\ \Ycal = \Scal\sbr{\Iexo{\cJ}}, \Xcal\times \Ycal =\Scal\brk{\cJ}$, $f(x,a) = f(s\brk{\Iendo{\cJ}},a)$, $g(y) = g(s\brk{\Iexo{\cJ}})$, $\Pi_{\Xcal\times \Ycal} = \PiInd\brk{\cJ}$ and  $\Pi_{\Xcal} = \PiInd\brk{\Iendo{\cJ}}$. 
\end{proof}

The result is proven as a consequence of the restriction lemma~(\pref{lem: invariance of reduced policy cover}). 


\begin{lemma}[Existence of endogenous policy cover]\label{lem: endognous set is the maximizer}
Fix $h,t\in [H]$ with $t\leq h-1$. Let $\mu\in \Pimix\brk{\Ic}$ and $\rho\in \PiNS\brk{\Ic}$ be endogenous policies. Let $\cI$ be a factor set and $\fullc$ be a collection of factor sets with $\Ic\in \fullc$. Then for all $s\sbr{\cI}\in \Scal\sbr{\cI}$, 
\begin{align*}
   \max_{\pi\in \PiInduced\sbr{\fullc}} d_h\rbr{s\sbr{\cI} \midsem \mu\circt \pi \circt[t+1] \rho} =  \max_{\pi\in \PiInd\sbr{\Ic}} d_h\rbr{s\sbr{\cI} \midsem \mu\circt \pi \circt[t+1] \rho}.
\end{align*}
\end{lemma}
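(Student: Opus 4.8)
The plan is to combine the restriction lemma (\pref{lem: invariance of reduced policy cover}) with the hypothesis $\Ic \in \fullc$ via a two-sided inclusion argument; no structural input beyond the restriction lemma is needed, so the proof should be short.

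First I would unfold the definition $\PiInd\brk{\fullc} = \bigcup_{\cJ \in \fullc} \PiInd\brk{\cJ}$ to rewrite the left-hand side as an outer maximum over factor sets:
\[
\max_{\pi\in \PiInd\brk{\fullc}} d_h\rbr{s\brk{\cI} \midsem \mu\circt \pi \circt[t+1] \rho} = \max_{\cJ \in \fullc}\ \max_{\pi\in \PiInd\brk{\cJ}} d_h\rbr{s\brk{\cI} \midsem \mu\circt \pi \circt[t+1] \rho}.
\]
Since $\mu$ and $\rho$ are endogenous by assumption, the restriction lemma applies to each inner maximum, giving for every $\cJ \in \fullc$ that the inner maximum over $\PiInd\brk{\cJ}$ equals the maximum over $\PiInd\brk{\Iendo{\cJ}}$, where $\Iendo{\cJ} = \cJ \cap \Ic$. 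Because $\Iendo{\cJ} \subseteq \Ic$, any policy acting on $\Iendo{\cJ}$ also acts on $\Ic$ (it simply ignores the remaining endogenous coordinates), so $\PiInd\brk{\Iendo{\cJ}} \subseteq \PiInd\brk{\Ic}$, and hence each term is at most $\max_{\pi \in \PiInd\brk{\Ic}} d_h\rbr{s\brk{\cI} \midsem \mu\circt \pi \circt[t+1] \rho}$. Taking the outer maximum over $\cJ \in \fullc$ then yields the inequality ``$\leq$''.

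For the reverse inequality I would use the hypothesis $\Ic \in \fullc$: this gives $\PiInd\brk{\Ic} \subseteq \PiInd\brk{\fullc}$, so maximizing the same objective over the smaller set $\PiInd\brk{\Ic}$ can only decrease the value, yielding ``$\geq$''. Combining the two inequalities establishes the claimed equality. There is essentially no obstacle here beyond invoking the restriction lemma correctly; the only points requiring attention are checking its endogeneity hypotheses on $\mu$ and $\rho$ (which hold by assumption) and recognizing that the role of the assumption $\Ic \in \fullc$ is precisely to supply the reverse inclusion $\PiInd\brk{\Ic} \subseteq \PiInd\brk{\fullc}$.
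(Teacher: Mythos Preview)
Your proposal is correct and follows essentially the same approach as the paper: apply the restriction lemma to each $\cJ\in\fullc$ to pass from $\PiInd\brk{\cJ}$ to $\PiInd\brk{\Iendo{\cJ}}\subseteq\PiInd\brk{\Ic}$ for the ``$\leq$'' direction, and use $\Ic\in\fullc$ (hence $\PiInd\brk{\Ic}\subseteq\PiInd\brk{\fullc}$) for the ``$\geq$'' direction.
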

\begin{proof}[\pfref{lem: endognous set is the maximizer}]
For all $\cJ=\cJen\cup \Iexo{\cJ}\in \fullc$ and $s\sbr{\cI} \in \Scal\sbr{\cI}$, we have
\begin{align}
  \max_{\pi\in \PiInd\brk{\cJ}} d_h\rbr{s\sbr{\cI} \midsem\mu\circt \pi \circt[t+1] \rho} 
     &\stackrel{\mathrm{(a)}}{=} \max_{\pi\in \PiInd\brk{\Iendo{\cJ}}} d_h\rbr{s\sbr{\cI} \midsem\mu\circt \pi \circt[t+1] \rho} \nonumber\\ &\stackrel{\mathrm{(b)}}{\leq} \max_{\pi\in \PiInd\brk{\Ic}} d_h\rbr{s\sbr{\cI} \midsem\mu\circt \pi \circt[t+1] \rho}, \label{eq:max_is_endogenous_set}
\end{align}
where $\mathrm{(a)}$ holds by \pref{lem: invariance of reduced policy cover}, and $\mathrm{(b)}$ holds because $\PiInd\brk{\Iendo{\cJ}} \subseteq \PiInd\brk{\Ic}$ (since $\Iendo{\cJ} \subseteq \Ic$). Since~\eqref{eq:max_is_endogenous_set} holds for all $\cJ\in \fullc$, we conclude that
\begin{align}
    \max_{\pi\in \PiInduced\brk{\fullc}} d_h\rbr{s\sbr{\cI} \midsem \mu\circt \pi \circt[t+1] \rho} \leq \max_{\pi\in \PiInd\brk{\Ic}} d_h\rbr{s\sbr{\cI} \midsem\mu\circt \pi \circt[t+1] \rho }. \label{eq:max_is_endogenous_set_rel1}
\end{align}
On the other hand, since $\PiInd\brk{\Ic} \subseteq \PiInduced\brk{\fullc}$ it trivially holds that
\begin{align}
    \max_{\pi\in \PiInduced\brk{\fullc}} d_h\rbr{s\sbr{\cI} \midsem \mu\circt \pi \circt[t+1] \rho} \geq \max_{\pi\in \PiInd\brk{\Ic}} d_h\rbr{s\sbr{\cI} \midsem\mu\circt \pi \circt[t+1] \rho }. \label{eq:max_is_endogenous_set_rel2}
\end{align}
Combining~\eqref{eq:max_is_endogenous_set_rel1} and~\eqref{eq:max_is_endogenous_set_rel2} yields the result.
\end{proof}

Consider the problem of finding a policy $\pi$ that maximizes
\begin{align}
    d_h\rbr{s\sbr{\cI} \midsem \mu\circt \pi \circt[t+1] \rho}, \label{eq: approximately reaching endo is sufficient explanation}
\end{align}
where both $\mu$ and $\rho$ are endogenous policies. Our next result (\pref{lem: approximately reaching endo part is approximately optimal}) shows that if $\widehat{\pi}$ is an endogenous policy that is approximately optimal for reaching $s\sbr{\Iendo{\cI}}$ in the sense that
    \begin{align}
        \max_{\pi\in \PiInduced\brk{\fullc}} d_h\rbr{s\sbr{\Iendo{\cI}} \midsem \mu\circt \pi \circt[t+1] \rho} \leq d_h\rbr{s\sbr{\Iendo{\cI}} \midsem \mu\circt \widehat{\pi} \circt[t+1] \rho} + \epsilon,
    \end{align}
then it is also approximately optimal for~\eqref{eq: approximately reaching endo is sufficient explanation}, in the sense that
\begin{align*}
    \max_{\pi\in \Pi\brk{\fullc}}d_h\rbr{s\sbr{\cI} \midsem \mu\circt \pi \circt[t+1] \rho} \leq d_h\rbr{s\sbr{\cI} \midsem \mu\circt \widehat{\pi} \circt[t+1] \rho} + \epsilon.
\end{align*}

\begin{lemma}[Optimizing for endogenous factors is sufficient]\label{lem: approximately reaching endo part is approximately optimal}
  Fix $h,t\in [H]$ with $t\leq h-1$. Let $\mu\in \Pimix,\widehat{\pi}\in \PiInd$ and $\rho\in \PiNS$ be given. Let $\cI$ be a factor set and $\fullc$ be a collection of factor sets such that $\Ic\in \fullc$. Fix $s\brk{\cI}\in\cS\brk*{\cI}$ and assume that:
\begin{enumerate}[label=$(\mathrm{A}\arabic*)$]
    \item $\mu, \rho$ and $\widehat{\pi}$ are endogenous .
    \item $\widehat{\pi}$ is approximately optimal for $s\sbr{\Iendo{\cI}}$:
\[
\max_{\pi\in \PiInduced\brk{\fullc}} d_h\rbr{s\sbr{\Iendo{\cI}} \midsem \mu\circt \pi \circt[t+1] \rho} \leq d_h\rbr{s\sbr{\Iendo{\cI}} \midsem \mu\circt \widehat{\pi} \circt[t+1] \rho} + \epsilon. 
\]
\end{enumerate}
Then
\begin{align*}
     \max_{\pi\in \PiInduced\brk{\fullc}} d_h\rbr{s\sbr{\cI} \midsem \mu\circt \pi \circt[t+1] \rho} \leq d_h\rbr{s\sbr{\cI} \midsem \mu\circt \widehat{\pi} \circt[t+1] \rho } + \epsilon.
\end{align*}
\end{lemma}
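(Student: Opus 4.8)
The plan is to reduce the statement to hypothesis (A2) by exploiting the fact that, for endogenous policies, the occupancy of $s\sbr{\cI}$ factorizes into an endogenous part that depends on the policy and an exogenous part that does not. Throughout, observe that the composed policy $\mu\circt\pi\circt[t+1]\rho$ is endogenous whenever $\pi$ is, since $\mu$ and $\rho$ are endogenous by (A1); consequently the decoupling lemma (Claim 3 of \pref{lem: decoupling of future state dis}) applies to it, giving $d_h\rbr{s\sbr{\cI}\midsem\mu\circt\pi\circt[t+1]\rho} = d_h\rbr{s\sbr{\Iendo{\cI}}\midsem\mu\circt\pi\circt[t+1]\rho}\cdot d_h\rbr{s\sbr{\Iexo{\cI}}}$.

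First I would rewrite the left-hand side using \pref{lem: endognous set is the maximizer}, which converts $\max_{\pi\in\PiInduced\brk{\fullc}} d_h\rbr{s\sbr{\cI}\midsem\mu\circt\pi\circt[t+1]\rho}$ into $\max_{\pi\in\PiInd\brk{\Ic}} d_h\rbr{s\sbr{\cI}\midsem\mu\circt\pi\circt[t+1]\rho}$. Since every $\pi\in\PiInd\brk{\Ic}$ is endogenous, I apply the decoupling identity above to each term and pull the nonnegative, policy-independent factor $d_h\rbr{s\sbr{\Iexo{\cI}}}$ out of the maximum, obtaining $d_h\rbr{s\sbr{\Iexo{\cI}}}\cdot\max_{\pi\in\PiInd\brk{\Ic}} d_h\rbr{s\sbr{\Iendo{\cI}}\midsem\mu\circt\pi\circt[t+1]\rho}$. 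A second application of \pref{lem: endognous set is the maximizer}, now with target factor set $\Iendo{\cI}$, converts the inner maximum back into one over $\PiInduced\brk{\fullc}$.

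At this point hypothesis (A2) bounds the inner maximum by $d_h\rbr{s\sbr{\Iendo{\cI}}\midsem\mu\circt\widehat{\pi}\circt[t+1]\rho}+\epsilon$. Multiplying by $d_h\rbr{s\sbr{\Iexo{\cI}}}$ and using the decoupling identity once more in reverse---valid because $\widehat{\pi}$ is endogenous by (A1)---recombines $d_h\rbr{s\sbr{\Iexo{\cI}}}\cdot d_h\rbr{s\sbr{\Iendo{\cI}}\midsem\mu\circt\widehat{\pi}\circt[t+1]\rho}$ into $d_h\rbr{s\sbr{\cI}\midsem\mu\circt\widehat{\pi}\circt[t+1]\rho}$. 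The residual error term is $\epsilon\cdot d_h\rbr{s\sbr{\Iexo{\cI}}}$, which I bound by $\epsilon$ using $d_h\rbr{s\sbr{\Iexo{\cI}}}\le 1$; this yields precisely the claimed inequality.

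There is no deep obstacle here: the argument is a careful chaining of the two structural lemmas already established. The point requiring the most attention is the bookkeeping of the endogeneity hypotheses---verifying that $\mu\circt\pi\circt[t+1]\rho$ remains endogenous for each $\pi$ appearing in the relevant policy classes, so that the decoupling lemma is legitimately applicable at every step, and taking care that the $\epsilon$ term is multiplied by the exogenous occupancy (hence controllable by $1$) rather than escaping the bound.
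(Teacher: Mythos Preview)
Your proposal is correct and follows essentially the same approach as the paper's proof: both use \pref{lem: endognous set is the maximizer} to replace the max over $\PiInduced\brk{\fullc}$ by a max over $\PiInd\brk{\Ic}$, factor out $d_h\rbr{s\sbr{\Iexo{\cI}}}$ via the decoupling lemma, apply (A2), and then recombine. The only cosmetic difference is that you invoke \pref{lem: endognous set is the maximizer} a second time to convert the inner max back to $\PiInduced\brk{\fullc}$ before applying (A2), whereas the paper simply uses the trivial inclusion $\PiInd\brk{\Ic}\subseteq\PiInduced\brk{\fullc}$ at that point; either works.
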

\begin{proof}[\pfref{lem: approximately reaching endo part is approximately optimal}]
By assumption $(\mathrm{A1})$, $\mu$ and $\rho$ are endogenous policies, so \pref{lem: endognous set is the maximizer} yields
\begin{align}
    \max_{\pi\in \PiInduced\brk{\fullc}} d_h\rbr{s\sbr{\cI} \midsem \mu\circt \pi \circt[t+1] \rho} = \max_{\pi\in \PiInduced\brk{\Ic}} d_h\rbr{s\sbr{\cI} \midsem \mu\circt \pi \circt[t+1] \rho}.\label{eq: approxiamate maximzer rel 1}
\end{align}
Next, we observe that the following relations hold
\begin{align} 
  \max_{\pi\in \PiInduced\brk{\Ic}} d_h\rbr{s\sbr{\cI} \midsem \mu\circt \pi \circt[t+1] \rho} 
    & \stackrel{\mathrm{(a)}}{=} \rbr{\max_{\pi\in \PiInduced\brk{\Ic}} d_h\rbr{s\sbr{\Iendo{\cI}} \midsem \mu\circt \pi \circt[t+1] \rho}}d_h\rbr{s\sbr{\Iexo{\cI}} }\nonumber\\
    &\stackrel{\mathrm{(b)}}{\leq}  d_h\rbr{s\sbr{\Iendo{\cI}} \midsem \mu\circt \widehat{\pi}
      \circt[t+1] \rho}d_h\rbr{s\sbr{\Iexo{\cI}}} + \epsilon \nonumber\\
     &\stackrel{\mathrm{(c)}}{=}d_h\rbr{s\sbr{\Iendo{\cI}} ,s\sbr{\Iexo{\cI}}\midsem \mu\circt \widehat{\pi} \circt[t+1] \rho} + \epsilon  \nonumber\\
     &= d_h\rbr{s\sbr{\cI}\midsem \mu\circt \widehat{\pi} \circt[t+1] \rho} + \epsilon \label{eq: approxiamate maximzer rel 2}.
\end{align}
Relation $\mathrm{(a)}$ holds by \pref{lem: decoupling of future state dis}, as $\mu \circt \pi \circt[t+1] \rho$ is an endogenous policy. Relation $\mathrm{(b)}$ holds by assumption $(\mathrm{A}2)$ and because $d_h\rbr{s\sbr{\Iexo{\cI}}}\leq 1$. Relation $\mathrm{(c)}$ holds by \pref{lem: decoupling of future state dis}; note that assumptions of the lemma are satisfied because $\mu \circt \widehat{\pi} \circt[t+1] \rho$ is endogenous. Combining~\eqref{eq: approxiamate maximzer rel 1} and~\eqref{eq: approxiamate maximzer rel 2} concludes the proof.
\end{proof}

\subsection{Structural Results for Value Functions}\label{app:structural_reward}
In this section we provide a structural results concerning the values functions for endogenous policies in the \kEMDP\ model. These results leverage the assumption that the rewards depend only on endogenous components. 
We repeatedly invoke the notion of an \emph{endogenous MDP} ${\Mcal_{\mathrm{en}} = \rbr{\Scal\brk*{\Ic},\Acal,\Tc, \rc,H,\dc}}$, which corresponds to the restriction of an \framework $\cM$ to the endogenous component of the state space. Note that only endogenous policies are well-defined in the endogenous MDP. We also denote the state-action and state value functions of an endogenous policy measured in $\Mcal_{\mathrm{en}}$ as $Q^\pi_{h,\mathrm{en}}(s\brk{\Ic},a)$, and $V^\pi_{h,\mathrm{en}}(s\brk{\Ic})$.


Our first result is a straightforward extension of Proposition 5 in~\citet{efroni2021provable}. It shows that the value function for any endogenous policy in an \framework is an \emph{endogenous function} in the sense that it only depends on the endogenous state factors.

\begin{lemma}[Value functions for endogenous policies are endogenous]
  \label{lem:value_endogenous}
Let $\pi\in \PiIndNS\brk{\Ic}$ be an endogenous policy, and assume that the reward function is endogenous. Then, for any $t\in [H]$ and $s\in \Scal$, we have
\begin{align*}
    V^\pi_t(s) = V^\pi_{t,\mathrm{en}}(s\brk{\Ic}) \text{ and } Q^\pi_t(s,a) = Q^\pi_{t,\mathrm{en}}(s\brk{\Ic},a),
\end{align*}
where $V^\pi_{t,\mathrm{en}}$ and $Q^\pi_{t,\mathrm{en}}$ are value functions for $\pi$ in the endogenous MDP ${\Mcal_{\mathrm{en}} = \rbr{\Scal\brk*{\Ic},\Acal,\Tc, \rc,H,\dc}}$.
\end{lemma}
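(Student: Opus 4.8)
The plan is to prove both identities simultaneously by backward induction on $t$, running from $t=H$ down to $t=1$. The induction hypothesis at step $t$ asserts that $V^\pi_t(s)=V^\pi_{t,\mathrm{en}}(s\brk{\Ic})$ and $Q^\pi_t(s,a)=Q^\pi_{t,\mathrm{en}}(s\brk{\Ic},a)$ for all $s\in\Scal$ and $a\in\Acal$. For the base case $t=H$, I would use that $Q^\pi_H(s,a)=R(s,a)=\rc(s\brk{\Ic},a)$ by the endogenous-reward assumption in \eqref{eq: model assumption d}, which already coincides with $Q^\pi_{H,\mathrm{en}}(s\brk{\Ic},a)$; the value-function identity then follows because $\pi$ is endogenous, so $\pi_H(s)=\pi_H(s\brk{\Ic})$ and $V^\pi_H(s)=Q^\pi_H(s,\pi_H(s\brk{\Ic}))=V^\pi_{H,\mathrm{en}}(s\brk{\Ic})$.

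For the inductive step, assume the claim holds at $t+1$. First I would expand the Bellman backup
\[
Q^\pi_t(s,a)=R(s,a)+\En_{s'\sim T(\cdot\mid s,a)}\brk*{V^\pi_{t+1}(s')}.
\]
Substituting $R(s,a)=\rc(s\brk{\Ic},a)$ and the induction hypothesis $V^\pi_{t+1}(s')=V^\pi_{t+1,\mathrm{en}}(s'\brk{\Ic})$, the expectation depends on $s'$ only through $s'\brk{\Ic}$. The crucial step is to marginalize using the factorized transition $T(s'\mid s,a)=\Tc(s'\brk{\Ic}\mid s\brk{\Ic},a)\cdot\Tnc(s'\brk{\Inc}\mid s\brk{\Inc})$ from \eqref{eq: model assumption d}: summing over $s'\brk{\Inc}$ makes the exogenous kernel $\Tnc$ integrate to one, leaving
\[
Q^\pi_t(s,a)=\rc(s\brk{\Ic},a)+\sum_{s'\brk{\Ic}}\Tc(s'\brk{\Ic}\mid s\brk{\Ic},a)\,V^\pi_{t+1,\mathrm{en}}(s'\brk{\Ic}),
\]
which is precisely the Bellman backup defining $Q^\pi_{t,\mathrm{en}}$ in the endogenous MDP $\Mcal_{\mathrm{en}}$, yielding the first identity. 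The value-function identity at $t$ then follows exactly as in the base case from endogeneity of $\pi$: $V^\pi_t(s)=Q^\pi_t(s,\pi_t(s\brk{\Ic}))=V^\pi_{t,\mathrm{en}}(s\brk{\Ic})$, completing the induction.

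There is no genuine obstacle here; the only point requiring care is the bookkeeping in the marginalization step, where one must verify that $V^\pi_{t+1,\mathrm{en}}$ depends on $s'$ solely through $s'\brk{\Ic}$ (so the sum over $s'\brk{\Inc}$ factors out cleanly and $\Tnc$ integrates to one), and that the reward's independence of the exogenous coordinates is genuinely invoked. Both facts are immediate consequences of the \framework factorization in \eqref{eq: model assumption d} combined with the induction hypothesis. Alternatively, one could derive the statement directly from the decoupling property of occupancy measures in \pref{lem: decoupling of future state dis} together with endogeneity of the reward, but the backward-induction argument above is the most transparent route.
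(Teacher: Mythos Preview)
Your proposal is correct and follows essentially the same backward-induction argument as the paper: both use the endogenous-reward assumption for the base case at $t=H$, and in the inductive step both invoke the factorized transition from \eqref{eq: model assumption d} to marginalize out the exogenous coordinates (so that $\Tnc$ integrates to one) and recognize the resulting expression as the Bellman backup in $\Mcal_{\mathrm{en}}$. The only minor difference is that you explicitly track both the $Q$- and $V$-identities at each step, whereas the paper's write-up focuses on $V^\pi_t$ directly; this is purely presentational.
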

\begin{proof}[\pfref{lem:value_endogenous}]
Let $R=\crl*{R_h}_{h=1}^{H}$ denote the reward function. We prove the result via induction. The base case $t=H$ holds by the assumption that the reward is endogenous. Next, assume the claim is correct for $t+1$, and let us prove it for $t$. Since $R_t$ is endogenous, the inductive hypothesis yields
\begin{align}
    &V^\pi_t(s) \nonumber \\
    &= \EE_\pi\brk*{\rct(s\brk{\Ic},\pi_t(s\brk{\Ic})) + V^\pi_{t,\mathrm{en}+1}(s_{t+1}\brk{\Ic})| s_t=s, a=\pi_{t+1}(s\brk{\Ic})} \nonumber \\
    &\stackrel{\mathrm{(a)}}{=} \rct(s\brk{\Ic},\pi_t(s\brk{\Ic})) \nonumber \\
    &\quad\ + \sum_{s'\brk{\Ic}\in \cS\brk{\Ic}}\Tc\rbr{s'\brk{\Ic} \mid s\brk{\Ic},\pi_{t+1}(s\brk{\Ic})}V^\pi_{t,\mathrm{en}+1}(s'\brk{\Ic}) \sum_{s'\brk{\Ic^{\mathrm{c}}}\in \cS\brk{\Ic^{\mathrm{c}}}}\Tc\rbr{s'\brk{\Ic^{\mathrm{c}}} \mid s\brk{\Ic^{\mathrm{c}}}} \nonumber\\
    & \stackrel{\mathrm{(b)}}{=} \rct(s\brk{\Ic},\pi_t(s\brk{\Ic})) + \sum_{s'\brk{\Ic}\in \cS\brk{\Ic}}\Tc\rbr{s'\brk{\Ic} \mid s\brk{\Ic},\pi_{t+1}(s\brk{\Ic})) }V^\pi_{t,\mathrm{en}+1}(s'\brk{\Ic}), \label{eq: value of endo policies is endogenous}
\end{align}
where $\mathrm{(a)}$ holds by the factorization of the transition operator (see~\eqref{eq: model assumption d}), and $\mathrm{(b)}$ holds by marginalizing the exogenous factors, since $ \sum_{s'\brk{\Ic^{\mathrm{c}}}\in \cS\brk{\Ic^{\mathrm{c}}}}\Tc\rbr{s'\brk{\Ic^{\mathrm{c}}} \mid s\brk{\Ic^{\mathrm{c}}}}=1$. Finally, observe that~\eqref{eq: value of endo policies is endogenous} is the precisely the value function for $\pi$ in the endogenous MDP $\Mcal_{\mathrm{en}} = \rbr{\Scal\brk*{\Ic},\Acal,\Tc, \rc,H,\dc}$, which concludes the proof.
\end{proof}

\begin{lemma}[Performance difference lemma for endogenous policies]\label{lem:pd_for_endo_policies}
Let $\pi,\pi'\in \PiIndNS\brk{\Ic}$ be endogenous policies. Then
\begin{align*}
  \Jpi- \Jpi[\pi']= \EE_{\pi}\sbr{\sum_{t=1}^H Q_{t}^{\pi'}(s_t\brk{\Ic},\pi_t(s_t\brk{\Ic})) - Q_t^{\pi'}(s_t\brk{\Ic},\pi'_t(s_t\brk{\Ic}))}.
\end{align*}
\end{lemma}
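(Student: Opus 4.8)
The plan is to obtain this identity as an immediate corollary of the generic performance difference lemma (\pref{lem: value difference}) combined with the structural fact that the value functions of endogenous policies depend on the state only through its endogenous component (\pref{lem:value_endogenous}). First I would apply \pref{lem: value difference} to the pair $\pi,\pi'$, treated simply as non-stationary policies over the full state space $\cS$, which gives
\[
  \Jpi - \Jpi[\pi'] = \EE_{\pi}\sbr*{\sum_{t=1}^{H} Q_t^{\pi'}(s_t,\pi_t(s_t)) - Q_t^{\pi'}(s_t,\pi'_t(s_t))}.
\]
This is valid verbatim since \pref{lem: value difference} makes no assumption on the policies beyond their being non-stationary.

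Next, I would use that $\pi$ and $\pi'$ are endogenous (\pref{def: endogenous policy}), so that $\pi_t(s_t)=\pi_t(s_t\brk{\Ic})$ and $\pi'_t(s_t)=\pi'_t(s_t\brk{\Ic})$ for every $s_t\in\cS$ and $t\in[H]$. Substituting these into the arguments of the $Q$-functions above does not change the value of the expression, but makes the dependence on the endogenous state explicit. Then, since $\pi'$ is endogenous and the reward is endogenous, \pref{lem:value_endogenous} yields $Q_t^{\pi'}(s,a)=Q_{t,\mathrm{en}}^{\pi'}(s\brk{\Ic},a)$ for all $s\in\cS$, $a\in\cA$; in particular $Q_t^{\pi'}$ depends on $s_t$ only through $s_t\brk{\Ic}$. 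This is exactly what justifies the shorthand $Q_t^{\pi'}(s_t\brk{\Ic},\cdot)$ appearing in the statement (interpreted as the endogenous $Q$-function $Q_{t,\mathrm{en}}^{\pi'}$). Plugging this identity into the displayed expression produces the claimed formula.

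I do not anticipate any genuine obstacle here: the result is essentially a bookkeeping consequence of two lemmas already established. The only point requiring mild care is notational consistency, namely confirming that $Q_t^{\pi'}(s_t\brk{\Ic},\cdot)$ is to be read as evaluation in the endogenous MDP $\Mcal_{\mathrm{en}}$ via \pref{lem:value_endogenous}; once this is fixed, every term in the sum depends on the trajectory only through the endogenous coordinates, and the expectation $\EE_\pi$ reduces consistently to an expectation over the endogenous trajectory (which is well-defined by the decoupling in \pref{lem: decoupling of future state dis} for the endogenous policy $\pi$). No additional estimates or inequalities are needed.
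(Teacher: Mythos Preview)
Your proposal is correct and uses the same two ingredients as the paper (\pref{lem: value difference} and \pref{lem:value_endogenous}), just composed in the reverse order: you apply the performance difference lemma in the full MDP and then invoke $Q_t^{\pi'}(s,a)=Q_{t,\mathrm{en}}^{\pi'}(s\brk{\Ic},a)$ term-by-term, whereas the paper first establishes $J(\pi)=J_{\mathrm{en}}(\pi)$ via \pref{lem:value_endogenous} and then applies the performance difference lemma directly inside the endogenous MDP $\Mcal_{\mathrm{en}}$. Both routes are equally short and valid; yours has the minor advantage that the expectation $\EE_\pi$ is unambiguously over the full-state trajectory throughout, while the paper implicitly uses that the endogenous marginal under $\pi$ coincides with the law in $\Mcal_{\mathrm{en}}$.
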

\begin{proof}[\pfref{lem:pd_for_endo_policies}]
For any endogenous policy $\pi$, observe that
\begin{align}
    \Jpi \ldef \EE_{s_1\sim d_1}\brk{V^\pi_1(s_1)} \stackrel{\mathrm{(a)}}{=} \EE_{s_1\sim d_1}\brk{V^\pi_1(s_1\brk{\Ic})} \stackrel{\mathrm{(b)}}{=} \EE_{s_1\brk{\Ic}\sim \dc}\brk{V^\pi_1(s_1\brk{\Ic})} =  J_{\mathrm{en}}(\pi), \label{eq:equivalence_of_values}
\end{align}
Relation $\mathrm{(a)}$ holds by~\pref{lem:value_endogenous}, since $J_{\mathrm{en}}(\pi)$ is the averaged value  of $V^\pi_1(s_1)$ with respect to the initial endogenous distribution. Relation $\mathrm{(b)}$ holds by marginalizing out $s_1\brk{\Inc}$, since $V^\pi_1(s_1\brk{\Ic})$ does not depend on this quantity. Using \pref{eq:equivalence_of_values} and applying the standard performance difference lemma to the endogenous MDP $\Mcal_{\mathrm{en}} $ now yields
\begin{align*}
  \Jpi- \Jpi[\pi']
    &= J_{\mathrm{en}}(\pi) - J_{\mathrm{en}}(\pi')
    & =\EE_{\pi}\sbr{\sum_{t=1}^H Q_{t}^{\pi'}(s_t\brk{\Ic},\pi_t(s_t\brk{\Ic})) - Q_t^{\pi'}(s_t\brk{\Ic},\pi'_t(s_t\brk{\Ic}))}.
\end{align*}
\end{proof}

\begin{lemma}[Restriction lemma for endogenous rewards]\label{lem:restriction_endo_rewards}
Fix $t\leq{}h$. Let $\mu\in \Pimix\brk{\Ic}$ and $\rho\in \PiIndNS\brk{\Ic}$ be endogenous policies. Define
\begin{align}
    V_{t,h}\rbr{\mu\circt \pi \circt[t+1] \rho } \ldef \EE_{\mu\circt \pi \circt[t+1] \rho }\brk*{ \sum_{t'=t}^h r_{t'}}. \label{eq:expected_value_def}
\end{align}
Assume that $R$ is an endogenous reward function. Then
for any factor set $\cI$, we have
\[\max_{\pi \in \PiInd\brk{\cI}} V_{t,h}\rbr{\mu \circt \pi \circt[t+1] \psi} = \max_{\pi \in \PiInd\brk{\Iendo{\cI}}} V_{t,h}\rbr{\mu \circt \pi \circt[t+1] \psi}.\]
\end{lemma}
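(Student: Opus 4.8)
The plan is to reduce the statement to \pref{lem: maximizer are equivalent}, exactly as in the proof of \pref{lem: invariance of reduced policy cover}, but with the target occupancy $d_h\rbr{s\brk{\cI}\midsem\cdot}$ replaced by a cumulative-reward $Q$-function. The crucial point is to collapse the entire horizon $t,\ldots,h$ into a \emph{single} one-step optimization over the action chosen by $\pi$ at layer $t$, so that a single invocation of \pref{lem: maximizer are equivalent} suffices.

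First I would introduce the truncated $Q$-function $Q_{t,h}(s,a) \ldef \EE_{\rho}\brk*{\sum_{t'=t}^h r_{t'} \mid s_t=s, a_t=a}$, where the roll-out follows $\rho$ from layer $t+1$ onward. By the Markov property and the fact that the switch to $\pi$ occurs at layer $t$ (so $d_t\rbr{\cdot\midsem \mu\circt\pi\circt[t+1]\rho} = d_t\rbr{\cdot\midsem\mu}$), we have $V_{t,h}\rbr{\mu\circt\pi\circt[t+1]\rho} = \EE_{s_t\sim d_t\rbr{\cdot\midsem\mu}}\brk*{Q_{t,h}\rbr{s_t,\pi(s_t\brk{\cI})}}$. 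The key structural claim is that $Q_{t,h}(s,a)$ depends on $s$ only through $s\brk{\Ic}$: writing $Q_{t,h}(s,a) = \rct\rbr{s\brk{\Ic},a} + \EE_{s_{t+1}\sim T(\cdot\mid s,a)}\brk*{V^{\rho}_{t+1,h}(s_{t+1})}$, the reward term is endogenous by assumption, the future value $V^{\rho}_{t+1,h}$ is an endogenous function by the truncated-horizon analogue of \pref{lem:value_endogenous} (since $\rho$ is endogenous), and the transition factorizes per \eqref{eq: model assumption d} so that the exogenous transition marginalizes to one. Hence $Q_{t,h}(s,a) = \wt{Q}_{t,h}\rbr{s\brk{\Ic},a}$ for some function of the endogenous state and action alone.

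Next I would marginalize out the endogenous factors in $\Ic\setminus\Iendo{\cI}$ that are not observed by $\pi$. Conditioning on $s_t\brk{\cI} = \rbr{s_t\brk{\Iendo{\cI}}, s_t\brk{\Iexo{\cI}}}$, the action $a=\pi(s_t\brk{\cI})$ is fixed, and by the decoupling property (\pref{lem: decoupling of future state dis}, Claim 3, applicable since $\mu$ is endogenous) the conditional law of $s_t\brk{\Ic\setminus\Iendo{\cI}}$ depends only on $s_t\brk{\Iendo{\cI}}$ and is independent of the exogenous component $s_t\brk{\Iexo{\cI}}$. Defining $f\rbr{s\brk{\Iendo{\cI}},a} \ldef \EE\brk*{\wt{Q}_{t,h}\rbr{s_t\brk{\Ic},a}\mid s_t\brk{\Iendo{\cI}}=s\brk{\Iendo{\cI}};\mu}$, this yields $V_{t,h}\rbr{\mu\circt\pi\circt[t+1]\rho} = \EE_{s_t\brk{\cI}\sim d_t\rbr{\cdot\midsem\mu}}\brk*{f\rbr{s_t\brk{\Iendo{\cI}},\pi(s_t\brk{\cI})}}$, which depends on $\pi$ precisely in the form required by \pref{lem: maximizer are equivalent}.

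Finally I would invoke \pref{lem: maximizer are equivalent} with $\Xcal = \cS\brk{\Iendo{\cI}}$, $\Ycal = \cS\brk{\Iexo{\cI}}$, $\Xcal\times\Ycal = \cS\brk{\cI}$, weight $g\equiv 1\geq 0$, $\Pi_{\Xcal\times\Ycal} = \PiInd\brk{\cI}$, and $\Pi_{\Xcal} = \PiInd\brk{\Iendo{\cI}}$, obtaining $\max_{\pi\in\PiInd\brk{\cI}} V_{t,h}\rbr{\mu\circt\pi\circt[t+1]\rho} = \max_{\pi\in\PiInd\brk{\Iendo{\cI}}} V_{t,h}\rbr{\mu\circt\pi\circt[t+1]\rho}$ after reversing the computation on the right-hand side. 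The main obstacle is the second step: one cannot simply apply the occupancy restriction lemma (\pref{lem: invariance of reduced policy cover}) term-by-term to the sum $\sum_{t'=t}^h \EE\brk{r_{t'}}$, because the per-layer maximizing policies need not coincide; packaging the whole horizon into the endogenous $Q$-function $\wt{Q}_{t,h}$ is exactly what makes a single, legitimate application of \pref{lem: maximizer are equivalent} possible.
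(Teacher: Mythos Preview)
Your proposal is correct and follows essentially the same route as the paper's proof: express $V_{t,h}$ via the endogenous $Q$-function $Q^{\rho}_{t,\mathrm{en}}$ (the paper invokes \pref{lem:value_endogenous} directly, whereas you rederive it for the truncated horizon), use decoupling under $\mu$ to marginalize out the unobserved endogenous factors $\Ic\setminus\Iendo{\cI}$ and obtain $f\bigl(s_t\brk{\Iendo{\cI}},\pi(s_t\brk{\cI})\bigr)$, and then apply \pref{lem: maximizer are equivalent} with $g\equiv 1$. Your closing remark on why the naive per-layer application of \pref{lem: invariance of reduced policy cover} fails is also on point.
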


To prove this result, we generalize the proof technique used in the restriction lemma for state occupancy measures (\pref{lem: invariance of reduced policy cover}).

\begin{proof}[\pfref{lem:restriction_endo_rewards}]
Since $\mu\in \Pimix\brk{\Ic}$ is an endogenous policy, the occupancy measure at the $t^{\mathrm{th}}$ timestep factorizes. That is, by the third statement of~\pref{lem: decoupling of future state dis}, we have that 
\begin{align*}
d_t\rbr{s\brk{\cI} \midsem \mu} = d_t\rbr{s\brk{\Iendo{\cI}} \midsem \mu} d_t\rbr{s\brk{\Iexo{\cI}}}.
\end{align*}
For each $s\brk{\cI}\in \cS\brk*{\cI}$, the conditional state occupancy measure factorize as well:
\begin{align}
    &d_t\rbr{s\brk{\setComp{\cI}} \mid s_t\brk{\cI} = s\brk{\cI} \midsem \mu} \nonumber \\
    &= d_t\rbr{ s\brk{\Ic \setminus \Iendo{\cI}} \mid s_t\brk{\Iendo{\cI}}= s\brk{\Iendo{\cI}} \midsem\mu}  d_t\rbr{s\brk{\Ic^{\mathrm{c}}  \setminus \Iexo{\cI}} \mid s_t\brk{\Iexo{\cI}} = s\brk{\Iexo{\cI}}}. \label{eq: factorization of cond state dist}
\end{align}
Let $Q^\rho_{t,\mathrm{en}}$ be the $Q$ function on the endogenous MDP $\Mcal_{\mathrm{en}} = \rbr{\Scal\brk*{\Ic},\Acal,\Tc, \rc,h,\dc}$ when executing policy $\rho$ starting from timestep $t+1$. We can express the value function as follows:
\begin{align}
    &V_{t,h}\rbr{\mu\circt \pi \circt[t+1] \rho } \nonumber \\
    &= \EE_{\mu}\brk*{ Q^\rho_t( s_t\brk*{[d]},  \pi_t\rbr{s_t\brk*{\cI}})} \nonumber\\ &\stackrel{\mathrm{(a)}}{=} \EE_{\mu}\brk*{ Q^\rho_{t,\mathrm{en}}( s_t\brk*{\Ic},  \pi_t\rbr{s_t\brk*{\cI}})} \nonumber\\
    &=\EE_{s\brk{\cI} \sim d_t\rbr{\cdot \midsem\mu}}\brk*{ \EE_{s\brk{\setComp{\cI}} \sim d_t\rbr{\cdot \mid s_t\brk{\cI} = s\brk{\cI}\midsem \mu}}\brk*{ Q^\rho_{t,\mathrm{en}}( s\brk*{\Ic},  \pi_t\rbr{s\brk*{\cI}})}} \nonumber\\
    &\stackrel{\mathrm{(b)}}{=} \EE_{s\brk{\cI} \sim d_t\rbr{\cdot \midsem \mu}}\brk*{ \EE_{s\brk{[\Ic \setminus \Iendo{\cI}} \sim d_t\rbr{\cdot \mid s_t\brk{\Iendo{\cI}}= s\brk{\Iendo{\cI}}\midsem \mu}}\brk*{ Q^\rho_{t,\mathrm{en}}( s\brk*{\Ic},  \pi_t\rbr{s\brk*{\cI}})}}. \label{eq: value restriction rel 1}
\end{align}
Relation $\mathrm{(a)}$ holds by~\pref{lem:value_endogenous}, since $\rho$ is an endogenous policy. Relation $\mathrm{(b)}$ holds by decoupling of conditional occupancy measure (\eqref{eq: factorization of cond state dist}), and because $Q^\rho_{t,\mathrm{en}}\rbr{ s\brk*{\Ic},  \pi_t\rbr{s\brk*{\cI}}}$ does not depend on state factors in $\Ic^{\mathrm{c}} \setminus \Iexo{\cI}$, which are marginalized out.

To proceed, define
\begin{align*}
    f(s_t\brk{\Iendo{\cI}}, \pi_t\rbr{s\brk*{\cI}}) \ldef \EE_{s\brk{[\Ic \setminus \Iendo{\cI}} \sim d_t\rbr{\cdot \mid s_t\brk{\Iendo{\cI}}= s\brk{\Iendo{\cI}}\midsem \mu}}\brk*{ Q^\rho_{t,\mathrm{en}}( s\brk*{\Ic},  \pi_t\rbr{s\brk*{\cI}})}.
\end{align*}
With this notation, we can rewrite the expression in~\eqref{eq: value restriction rel 1} as
\begin{align}
    V_{t,h}\rbr{\mu\circt \pi \circt[t+1] \rho } = \EE_{s\brk{\cI} \sim d_t\rbr{\cdot \midsem \mu}}\brk*{f(s_t\brk{\Iendo{\cI}}, \pi_t\rbr{s\brk*{\cI}}) }.\label{eq: value restriction rel 2}
\end{align}

We now invoke~\pref{lem: maximizer are equivalent}, which shows that
\begin{align}
  \max_{\pi\in \PiInd\brk{\cI}}V_{t,h}\rbr{\mu\circt \pi \circt[t+1] \rho } 
    &\stackrel{\mathrm{(a)}}{=} \max_{\pi\in \PiInd\brk{\cI}}\EE_{s\brk{\cI} \sim d_t\rbr{\cdot \midsem \pi}}\brk*{ f(s\brk{\Iendo{\cI}}, \pi(s\brk{\cI}))} \nonumber\\
    &\stackrel{\mathrm{(b)}}{=} \max_{\pi\in \PiInd\brk{\Iendo{\cI}}}\EE_{s\brk{\cI} \sim d_t\rbr{\cdot \midsem \pi}}\brk*{ f(s\brk{\Iendo{\cI}}, \pi(s\brk{\cI}))} \nonumber\\
    &\stackrel{\mathrm{(c)}}{=} \max_{\pi\in \PiInd\brk{\Iendo{\cI}}}\EE_{s\brk{\cI} \sim d_t\rbr{\cdot \midsem \pi}}\brk*{ f(s\brk{\Iendo{\cI}}, \pi(s\brk{\cI}))} \nonumber
\end{align}
Relations $\mathrm{(a)}$ and $\mathrm{(c)}$ holds by~\eqref{eq: value restriction rel 2}. Relation $\mathrm{(b)}$ holds by invoking~\pref{lem: maximizer are equivalent}, with $\Xcal = \Scal\brk{\Iendo{\cI}},\ \Ycal = \Scal\sbr{\Iexo{\cI}}, \Xcal\times \Ycal =\Scal\brk{\cI}$, $f(x,a) = f(s\brk{\Iendo{\cJ}},a), g(y) = 1$, and $\Pi_{\Xcal\times \Ycal} = \PiInd\brk{\cI}$ and  $\Pi_{\Xcal} = \PiInd\brk{\Iendo{\cI}}$. 
\end{proof}


\section{Noise-Tolerant Search over
  Endogenous Factors: Algorithmic Template}
\label{app:abstract_search}


In this section we provide a general template for designing error-tolerant algorithms that search over endogenous factors sets. This template is used in both $\EndoPolicyMaximizer^\eps_{t,h}$  and $\FactorDetectionAlg^\eps_{t,h}$ (subroutines of $\AlgName$).

Our algorithm design template, $\AMFD$ is presented in~\pref{alg: general approximate minimal factor}. Let us describe the motivation. Let $\cZ$ be an abstract ``dataset'' (typically, a collection of trajectories), let $\veps>0$ be a precision parameter, and let $\Condition(\Zcal, \eps, \cI)\in \cbr{\True,\False}$ be an abstract function defined over factor sets $\cI$. $\AMFD$ addresses the problem of finding an endogenous factor set $\cIhat\subseteq \Ic$ such that
\begin{equation}
  \Condition(\cZ,C\cdot\eps,\cIhat)=\True \label{eq: what we want AFS} 
\end{equation}
for a numerical constant $C\geq{}1$, assuming that the endogenous factors $\Ic$ satisfy the condition themselves:
\begin{align}
    \Condition(\Zcal, \eps, \Ic)=\True.\label{eq: Ic satisfies condition}
\end{align}
For example, within $\EndoPolicyMaximizer^{\eps}_{t,h}$, $\Condition(\cZ,\eps,\cI)$ checks whether policies that act on the factor set $\cI$ lead to $\eps$-optimal value for a given reward function (approximated using trajectories in $\cZ$).


$\AMFD$ begins with an initial set of endogenous factors $\cI_0\subseteq \Ic$. Naturally, since $\Ic\in \fullc_{\leq k}\rbr{\cI_0}$ and $\Ic$ is known to satisfy \eqref{eq: Ic satisfies condition}, a naive approach would be to enumerate over the collection $\fullc_{\leq k}\rbr{\cI_0}$ to find a factor set $\widehat{\cI}\in \fullc_{\leq k}\rbr{\cI_0}$ that satisfies~\eqref{eq: what we want AFS}. For example, considering the following procedure:
\begin{itemize}
    \item For each $\cI\in \fullc_{\leq k}\rbr{\cI_0}$, check whether $\Condition(\Zcal, C\eps, \cI)=\True$.
    \item If so, return $\widehat{\cI}\gets \cI$.
    \end{itemize}
    It is straightforward to see that this approach returns a factor set $\widehat{\cI}\in \fullc_{\leq k}\rbr{\cI_0}$ that satisfies~\eqref{eq: what we want AFS}, but the issue is that there is nothing preventing $\widehat{\cI}$ from containing exogenous factors. $\AMFD$ resolves this problem by searching for factors in a bottom-up fashion. The algorithm begins by searching over factor sets with minimal cardinality ($k'=\abr{\cI_0}$), and gradually increases the size until a factor set satisfying \pref{eq: what we want AFS} is found. 

    In more detail, observe that we have
\begin{align*}
    \fullc_{\leq k}\rbr{\cI_0} = \cup_{k'= \abr{\cI_0}}^k \fullc_{k}\rbr{\cI_0},
\end{align*}
where
\begin{align*}
    \fullc_{k}\rbr{\cI_0}\ldef \cbr{\cI' \subseteq[d] \mid \cI_0 \subseteq \cI',\ \abr{\cI'} = k}.
\end{align*}
Starting from $k'=\abs{\cI_0}$, $\AMFD$ checks whether exists a set of factors $\cI\in\fullc_{k'}\rbr{\cI_0}$ that satisfies $\Condition(\cdots)$ with respect to an accuracy parameter $\eps_{k'} = \rbr{1+1/k}^{k-k'}\eps$; this choice allows for larger errors for smaller $k'$. When a set of factors $\cI$ satisfies~\eqref{eq: what we want AFS} $\AMFD$ halts and returns this set; otherwise, $k'$ is increased. For this approach to succeed, we assume that $\Condition$ satisfies the following property.
\begin{assumption}
  \label{ass:AFS}
      For any set of factors $\cI=\Iendo{\cI}\cup \Iexo{\cI}$ with $\abr{\Iexo{\cI}}\geq 1$, it holds that
    \begin{align}
        \Condition(\Zcal,\eps_{\abs{\cI}},\cI)= \True \implies \Condition(\Zcal,\eps_{\abr{\Iendo{\cI}}},\Iendo{\cI}) = \True. \label{eq: main paper key contradiction}
    \end{align}
  \end{assumption}

\begin{algorithm}[t]
\caption{\AMFD}
\label{alg: general approximate minimal factor}
\begin{algorithmic}[1]
\State \textbf{require:} abstract dataset $\Zcal$, precision $\eps$, initial endogenous factor $\cI_0\subseteq \Ic$.
\For{$k' = \abr{\cI_0},\abr{\cI_0}+1,\ldots, k$}
\State Set $\eps_{k'} = \rbr{1+1/k}^{k-k'} \eps$.
\For{$\cI\in \fullc_{k'}\rbr{\cI_0}$}
    \State \textbf{if} $\Condition(\Zcal, \eps_{k'}, \cI)= \True$ \textbf{then} \textbf{return} $\widehat{\cI}\gets \widehat{\cI}$.
\EndFor
\EndFor
\State \textbf{return}  $\fail$.
\end{algorithmic}
\end{algorithm}

We now describe three key steps used to prove that this scheme succeeds.
\begin{enumerate}
    \item \emph{$\AMFD$ does not return $\fail.$}  This follows immediately from the assumption that \pref{eq: Ic satisfies condition} is satisfied.
    \item \emph{$\AMFD$ returns an endogenous set of factors.} 
      Observe that the assumption $\Ic\in \fullc_{\leq k }\rbr{\cI_0}$ implies that for any  $\cI\in \fullc_{\leq k }\rbr{\cI_0}$,  $\Iendo{\cI} \ldef \Ic\cap \cI \in \fullc_{\leq k }\rbr{\cI_0}$; this follows from \pref{lem: set of abstractions is a pi system}. Hence, if $\cI$ satisfies \eqref{eq: what we want AFS}, \pref{ass:AFS} implies that $\Iendo{\cI}$ satisfies \eqref{eq: what we want AFS} as well. Since \AMFD scans $\fullc_{\leq k }\rbr{\cI_0}$ in a bottom-up fashion, this means it must return an endogenous factor set, since it will verify that $\Iendo{\cI}$ satisfies~\eqref{eq: what we want AFS} prior to $\cI$.
    
    \item \emph{$\AMFD$ is near-optimal.} Since $(1+1/k)^{k-k'}\eps\leq 3\eps$ for all $k'\in [k]$, the factor set $\widehat{\cI}$ returned by $\AMFD$ satisfies $\Condition(\Zcal,3\eps, \widehat{\cI})=\True$.
\end{enumerate}


\newpage

\part{Omitted Subroutines}\label{part:ommited_parts}

\section{Finding a Near-Optimal Endogenous Policy:
  $\EndoPolicyMaximizer$}
\label{app: near optimal endogenous policy}

\begin{algorithm}[htp]
\caption{$\EndoPolicyMaximizer_{t,h}^{\eps}$: One-Step Endogenous Policy Optimization}
\label{alg: approximate 1-step endogenous policy maximizer}
\begin{algorithmic}[1]
  \item[] \algcomment{Find an endogenous policy $\pi\in \PiInd\brk{\fullc_{\leq k}}$ that approximately maximizes  $V_{t,h}\rbr{\mu\circt\pi\circt[t+1]\psi}$, where $\mu\in \Pimix$ and $\psi\in \PiIndNS$ are fixed policies.}
   \State {\bf require:}  
   \begin{itemize}
   \item Starting timestep $t$, end timestep $h$, and target precision $\epsilon\in (0,1)$.
   \item Collection $\crl[\big]{\widehat{V}_{t,h}\rbr{\mu\circt\pi\circt[t+1]\psi}}_{\pi\in \Pi\brk{\fullc_{\leq k}}}$ of estimates for $V_{t,h}\rbr{\mu\circt\pi\circt[t+1]\psi}$ for all $\pi\in \PiInd\brk{\fullc_{\leq k}}$.\loose
   \end{itemize}
   \For{$k'= 0,1,\cdots, k$}
        \State Let $\epsilon_{k'} = \rbr{1+1/k}^{k-k'}\epsilon$.
        \For{$\cI\in \fullc_{k'}$}
            \State \label{line: endo policy maximizer optimization step} Set
            $\iscover=\True$
            if             
            $$
              \max_{\pi\in \PiInd\brk{\fullc_{\leq{}k}}}  \widehat{V}_{t,h}\rbr{\mu\circt\pi\circt[t+1]\psi}
              \leq
              \max_{\pi\in \PiInd\sbr{\cI}} \widehat{V}_{t,h}\rbr{\mu\circt\pi\circt[t+1]\psi}
              +\epsilon_{k'}.
            $$
            \label{line: is_cover endopolicy}
            \State {\bf if} $\iscover=\True$ {\bf then} {\bf return:} $\widehat{\pi}\in \argmax_{\pi\in \PiInd\sbr{\cI}} \widehat{V}_{t,h}\rbr{\mu\circt\pi\circt[t+1]\psi}$.
            
       \EndFor
    \EndFor
   \State {\bf return:} $\fail$. 
\end{algorithmic}
\end{algorithm}


In this section, we introduce and analyze the $\EndoPolicyMaximizer^\eps_{t,h}$ algorithm (\pref{alg: approximate 1-step endogenous policy maximizer}), which is used in the optimization phase of $\AlgName_{h}^{\eps,\delta}$ (\pref{app: learnin eps endogenous policy cover}) and in $\PSDPE$ (\pref{app: psdp in the presence of exogenous information}). In~\pref{app:endo_pol_max_description} we give a high-level description and intuition for the algorithm, and in \pref{app:proof_endo_opt} we prove the main theorem regarding its correctness and sample complexity.

\subsection{Description of $\EndoPolicyMaximizer$.}\label{app:endo_pol_max_description}

The goal of $\EndoPolicyMaximizer^\eps_{t,h}$ is to return a policy $\widehat{\pi}\in \PiInd\brk*{\cI}$ such that:
\begin{enumerate}
    \item $\widehat{\pi}$ is endogenous in the sense that $\widehat{\pi}\in \PiInd\brk{\cI}$ for some $\cI\subseteq \Ic$.
    \item $\widehat{\pi}$ is near-optimal in the sense that
    \begin{align*}
        \max_{\pi\in \PiInd\brk{\fullc_{\leq k}}} V_{t,h}\rbr{\mu\circt\pi \circt[t+1]\psi} \leq V_{t,h}\rbr{\mu\circt \widehat{\pi} \circt[t+1]\psi} +O\rbr{\eps},
    \end{align*}
    where $V_{t,h}\rbr{\pi} \ldef \EE_\pi\sbr{\sum_{t'=t}^h r_t}$ for a given reward function $R$.
  \end{enumerate}
$\EndoPolicyMaximizer$ assumes access to approximate value functions $\widehat{V}_{t,h}\rbr{\mu\circt\pi\circt[t+1]\psi}$ that are $\eps$-close to the true value functions $V_{t,h}\rbr{\mu\circt \pi \circt[t+1]\psi}$. Given these approximate value functions, finding a near-optimal policy is trivial; it suffices to take the empirical maximizer $\widehat{\pi}\in \argmax_{\pi\in \PiInd\brk{\fullc_{\leq{}k}}} \widehat{V}_{t,h}\rbr{\mu\circt\pi\circt[t+1]\psi}$. However, finding a near-optimal \emph{endogenous} policy is a more challenging task. For this, $\EndoPolicyMaximizer$ applies the abstract endogenous factor search scheme described in \pref{app:abstract_search} ($\AMFD$), which regularizes toward factors with smaller cardinality.

$\EndoPolicyMaximizer^\eps_{t,h}$ splits the set $\fullc_{\leq{}k}$ as $\fullc_{\leq k} = \cup_{k'=0}^k \fullc_{k'}$, where $\fullc_{k'}$ is the collection of factor sets with cardinality exactly $k'\in [k]$, and follows the bottom-up search strategy in $\AMFD$. Beginning from $k'=0,\ldots,k$, the algorithm checks whether there exists a near-optimal policy in the class $\PiInd\brk*{\fullc_{k'}}$. If such a policy is found, the algorithm returns it, and otherwise it proceeds to $k'+1$.

\paragraph{Intuition for correctness.} We prove the correctness of the $\EndoPolicyMaximizer^\eps_{t,h}$ procedure by following the general template in~\pref{app:abstract_search}. In particular, we view \epm as a special case of the $\AMFD$ (\pref{alg: general approximate minimal factor}) scheme with
\[
\Condition(\cZ,\eps,\cI) = \indic\crl*{\max_{\pi\in \PiInd\brk*{\fullc_{\leq{}k}}}  \widehat{V}_{t,h}\rbr{\mu\circt\pi\circt[t+1]\psi}
              \leq
              \max_{\pi\in \PiInd\sbr{\cI}} \widehat{V}_{t,h}\rbr{\mu\circt\pi\circt[t+1]\psi}
              +\epsilon}.
\]
Most the effort in proving the correctness of the algorithm is in showing that this condition satisfies \pref{ass:AFS}.
In particular, we need to show that if some $\cI\in \fullc_{\leq k}$ satisfies the condition in~\pref{line: endo policy maximizer optimization step},
\begin{align*}
  \max_{\pi\in \PiInd\brk*{\fullc_{\leq{}k}}}  \widehat{V}_{t,h}\rbr{\mu\circt\pi\circt[t+1]\psi}
              \leq
              \max_{\pi\in \PiInd\sbr{\cI}} \widehat{V}_{t,h}\rbr{\mu\circt\pi\circt[t+1]\psi}
              +\epsilon_{\abr{\cI}},
\end{align*}
then $\Iendo{\cI} \ldef \cI \cap \Ic$ also satisfies the condition in the sense that
\begin{align*}
    \max_{\pi\in \PiInd\brk*{\fullc_{\leq{}k}}}  \widehat{V}_{t,h}\rbr{\mu\circt\pi\circt[t+1]\psi}
              \leq
              \max_{\pi\in \PiInd\sbr{\Iendo{\cI}}} \widehat{V}_{t,h}\rbr{\mu\circt\pi\circt[t+1]\psi}
              +\epsilon_{\abr{\Iendo{\cI}}}.
\end{align*}
This can be shown to hold as a consequence of assumptions $(\mathrm{A}1)$ and $(\mathrm{A}2)$ in \pref{thm: correctness of endo policy maximizer}. Assumption $(\mathrm{A}1)$ asserts the following restriction property holds: For any $\cI$,
$$
\max_{\pi \in \Pi\brk{\cI}} V_{t,h}\rbr{\mu \circt \pi \circt[t+1] \psi} = \max_{\pi \in \Pi\brk{\Iendo{\cI}}} V_{t,h}\rbr{\mu \circt \pi \circt[t+1] \psi}.
$$
Hence, optimizing over a larger policy class that acts on exogenous factors does not improve the value. Assumption $(\mathrm{A}2)$ asserts that the estimates for $V_{t,h}\rbr{\mu \circt \pi \circt[t+1] \psi}$ are uniformly $\eps$-close, so that optimizing with respect to these estimates is sufficient.


\paragraphp{Importance of the decoupling property.}
We emphasize that assumption $(\mathrm{A}1)$ is non-trivial. We show it holds for several choices for the reward function in the \framework (\pref{lem: invariance of reduced policy cover} and~\pref{lem:restriction_endo_rewards}), which are used when we invoke the algorithm within $\AlgName$. However, the condition my not hold if the endogenous and exogenous factors are correlated. In this case, optimizing over exogenous state factors may improve the value, leading the algorithm to fail.


\paragraph{Formal guarantee for $\EndoPolicyMaximizer$.}
The following result shows that $\EndoPolicyMaximizer^\eps_{t,h}$ returns a near-optimal endogenous policy.


\begin{theorem}[Correctness of $\EndoPolicyMaximizer^\eps_{t,h}$]\label{thm: correctness of endo policy maximizer}
Fix $h\in [H]$ and $t\in [h]$. Let $\mu\in \Pimix$ and $\psi\in \PiIndNS$ be fixed policies. Assume the following conditions hold:
\begin{enumerate}[label=$(\mathrm{A}\arabic*)$]
    \item \emph{Restriction property:} For any set of factors $\cI$, $$\max_{\pi \in \Pi\brk{\cI}} V_{t,h}\rbr{\mu \circt \pi \circt[t+1] \psi} = \max_{\pi \in \Pi\brk{\Iendo{\cI}}} V_{t,h}\rbr{\mu \circt \pi \circt[t+1] \psi}.$$
    \item \emph{Quality of estimation.} For all $\pi\in\Pi\brk{\fullc_{\leq{}k}}$,
    $$
        \abr{V_{t,h}\rbr{\mu \circt \pi \circt[t+1] \psi} - \widehat{V}_{t,h}\rbr{\mu \circt \pi \circt[t+1] \psi}} \leq \eps/12k.
    $$
\end{enumerate}
Then the policy $\wh{\pi}$ output by $\EndoPolicyMaximizer^\eps_{t,h}$ satisfies the following properties:
\begin{enumerate}
    \item $\wh{\pi}$ is endogenous: $\widehat{\pi}\in \PiInd\sbr{\cI}$, where $\cI \subseteq \Ic$.
    \item $\widehat{\pi}$ is near-optimal:
    $
        \max_{\pi\in \PiInduced\sbr{\fullc_{\leq{}k}}} V_{t,h}\rbr{\mu\circt \pi \circ \psi} \leq V_{t,h}\rbr{\mu\circt \widehat{\pi} \circ \psi} + 4\epsilon.
    $
\end{enumerate}
\end{theorem}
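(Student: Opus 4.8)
The plan is to instantiate the abstract search template $\AMFD$ (\pref{alg: general approximate minimal factor}) with the condition
\[
\Condition(\cZ,\eps',\cI) = \indic\crl*{\max_{\pi\in \PiInd\brk*{\fullc_{\leq{}k}}}  \widehat{V}_{t,h}\rbr{\mu\circt\pi\circt[t+1]\psi} \leq \max_{\pi\in \PiInd\sbr{\cI}} \widehat{V}_{t,h}\rbr{\mu\circt\pi\circt[t+1]\psi} +\eps'},
\]
and verify the three bullet points used to analyze $\AMFD$: that the search does not return $\fail$, that it returns an endogenous factor set, and that the returned policy is near-optimal. First I would check that $\Ic$ itself satisfies the condition with parameter $\eps$, which guarantees (via the bottom-up scan) that the algorithm terminates without returning $\fail$. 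This uses restriction property $(\mathrm{A}1)$ with $\cI = \fullc_{\leq k}$ together with the estimation bound $(\mathrm{A}2)$: the restriction property gives $\max_{\pi\in\PiInd\brk{\fullc_{\leq k}}}V_{t,h}(\mu\circt\pi\circt[t+1]\psi) = \max_{\pi\in\PiInd\brk{\Ic}}V_{t,h}(\mu\circt\pi\circt[t+1]\psi)$, and then passing to the estimates $\widehat V$ costs at most $2\cdot(\eps/12k)\leq\eps$ on each side, so $\Ic$ satisfies $\Condition(\cZ,\eps,\Ic)=\True$. Since $\eps_{k'}=(1+1/k)^{k-k'}\eps\geq\eps$ for all $k'\leq k$, the set $\Ic$ (at level $k'=\abs{\Ic}$) would certainly pass, so the loop cannot exhaust without a successful return.

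The main obstacle is verifying \pref{ass:AFS}: that whenever $\cI=\Iendo{\cI}\cup\Iexo{\cI}$ with $\abs{\Iexo{\cI}}\geq 1$ satisfies $\Condition(\cZ,\eps_{\abs{\cI}},\cI)=\True$, then the endogenous restriction $\Iendo{\cI}$ satisfies $\Condition(\cZ,\eps_{\abs{\Iendo{\cI}}},\Iendo{\cI})=\True$. Here I would first apply $(\mathrm{A}1)$ twice to convert the true-value maxima over $\PiInd\brk{\cI}$ and over $\PiInd\brk{\Iendo{\cI}}$ into a common quantity, namely $\max_{\pi\in\PiInd\brk{\Iendo{\cI}}}V_{t,h}(\mu\circt\pi\circt[t+1]\psi)$, since $\Iendo{\cI}=\cI\cap\Ic$. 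Then I would pass between $\widehat V$ and $V$ on both sides using $(\mathrm{A}2)$; this introduces an additive slack of $4\cdot(\eps/12k)=\eps/3k$ when comparing the $\widehat V$-based condition at level $\abs{\cI}$ to the $\widehat V$-based condition at level $\abs{\Iendo{\cI}}$. The key arithmetic step is \pref{lem: numerical verification}, which with $k_1=\abs{\cI}$, $k_2=\abs{\Iendo{\cI}}$ (noting $k_2\leq k_1-1$ since $\abs{\Iexo{\cI}}\geq1$) gives exactly $(1+1/k)^{k-k_1}\eps+\eps/3k<(1+1/k)^{k-k_2}\eps$, i.e. $\eps_{\abs{\cI}}+\eps/3k<\eps_{\abs{\Iendo{\cI}}}$. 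This is precisely why the multiplicative schedule $\eps_{k'}=(1+1/k)^{k-k'}\eps$ is chosen: it buys enough room to absorb the estimation error incurred when dropping from $\cI$ to $\Iendo{\cI}$, so the endogenous set clears the (looser) threshold at its smaller cardinality level.

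Given \pref{ass:AFS}, the remaining two properties follow as in the template. For endogeneity: because $\Ic\in\fullc_{\leq k}$ implies $\Iendo{\cI}=\Ic\cap\cI\in\fullc_{\leq k}$ (by \pref{lem: set of abstractions is a pi system}, the $\pi$-system property), and because $\AMFD$ scans in increasing cardinality, any non-endogenous $\cI$ that would pass is preempted by its strictly smaller endogenous restriction $\Iendo{\cI}$, which passes at an earlier level; hence the returned $\cIhat$ is endogenous, $\cIhat\subseteq\Ic$. For near-optimality: the returned $\cIhat$ satisfies the empirical condition with its own $\eps_{\abs{\cIhat}}\leq 3\eps$, so $\max_{\pi\in\PiInd\brk{\fullc_{\leq k}}}\widehat V\leq\max_{\pi\in\PiInd\brk{\cIhat}}\widehat V+3\eps$, and $\widehat\pi$ is the empirical argmax over $\PiInd\brk{\cIhat}$. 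Converting both the left maximum and $\widehat V(\mu\circt\widehat\pi\circt[t+1]\psi)$ from $\widehat V$ to $V$ via $(\mathrm{A}2)$ adds at most $2\cdot(\eps/12k)\leq\eps$, yielding $\max_{\pi\in\PiInd\brk{\fullc_{\leq k}}}V_{t,h}(\mu\circt\pi\circt[t+1]\psi)\leq V_{t,h}(\mu\circt\widehat\pi\circt[t+1]\psi)+4\eps$, which is the claimed bound. I would close by remarking that the whole argument hinges on $(\mathrm{A}1)$, which in our applications is supplied by the restriction lemmas (\pref{lem: invariance of reduced policy cover} and \pref{lem:restriction_endo_rewards}), and which can genuinely fail when endogenous and exogenous factors are correlated.
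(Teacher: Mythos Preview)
Your proposal is correct and follows essentially the same three-step $\AMFD$ template as the paper. The only notable difference is in Step~1 (non-failure): the paper observes more simply that since $\PiInd\brk{\fullc_{\leq k}}=\bigcup_{\cI\in\fullc_{\leq k}}\PiInd\brk{\cI}$, the empirical argmax already lies in some $\PiInd\brk{\cI}$, so that $\cI$ satisfies the $\iscover$ condition with slack $0\leq\eps_{k'}$---no appeal to $(\mathrm{A}1)$ or $(\mathrm{A}2)$ is needed there.
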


\subsection{Proof of \preftitle{thm: correctness of endo policy maximizer}}\label{app:proof_endo_opt}
We use the three-step proof recipe described in~\pref{app:abstract_search} to prove correctness of $\EndoPolicyMaximizer$.  
  \paragraph{Step 1: $\EndoPolicyMaximizer^\eps_{t,h}$ does not return $\fail$.} By definition, there exists $\cI\in \fullc_{\leq k}$ such that
\begin{align*}
              \max_{\pi\in \PiInd\brk*{\fullc_{\leq{}k}}}  \widehat{V}_{t,h}\rbr{\mu\circt\pi\circt[t+1]\psi}
              =
              \max_{\pi\in \PiInd\sbr{\cI}} \widehat{V}_{t,h}\rbr{\mu\circt\pi\circt[t+1]\psi}.
            \end{align*}
Thus,~\pref{line: is_cover endopolicy} is satisfied, since $\eps_{k'}\geq 0$.

\paragraph{Step 2: $\EndoPolicyMaximizer^\eps_{t,h}$ returns an endogenous policy.} Since $\EndoPolicyMaximizer^\eps_{t,h}$ does not return $\fail$, it returns a policy $\pihat\in\Pi\brk{\cI}$ for some factor set $\cI$.
We prove that $\cI$ is an endogenous factor set, which implies that $\widehat{\pi}$ is an endogenous policy. We show this by proving the following claim:
\begin{claim}
  \label{claim:epm}
  If $\cI$ satisfies the condition in~\pref{line: is_cover endopolicy} ($\iscover=\True$ for $\cI$), then $\Iendo{\cI}$ satisfies the condition  as well ($\iscover=\True$ for $\Iendo{\cI}$).
\end{claim}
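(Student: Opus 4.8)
The plan is to prove \pref{claim:epm} by viewing the test in \pref{line: is_cover endopolicy} as an instance of the abstract predicate $\Condition$ from \pref{app:abstract_search} and checking that it obeys the contraction property of \pref{ass:AFS}. Write $M \ldef \max_{\pi\in \PiInd\brk{\fullc_{\leq{}k}}} \widehat{V}_{t,h}\rbr{\mu\circt\pi\circt[t+1]\psi}$, and for a factor set $\cJ$ abbreviate the restricted maxima by $\widehat{V}^{\cJ}\ldef \max_{\pi\in \PiInd\brk{\cJ}}\widehat{V}_{t,h}\rbr{\mu\circt\pi\circt[t+1]\psi}$ and $V^{\cJ}\ldef \max_{\pi\in \PiInd\brk{\cJ}}V_{t,h}\rbr{\mu\circt\pi\circt[t+1]\psi}$. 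The hypothesis that $\cI$ passes the test reads $M \leq \widehat{V}^{\cI} + \eps_{\abr{\cI}}$, and the goal is to establish $M \leq \widehat{V}^{\Iendo{\cI}} + \eps_{\abr{\Iendo{\cI}}}$. If $\Iexo{\cI}=\emptyset$ then $\Iendo{\cI}=\cI$ and there is nothing to prove, so I would assume $\abr{\Iexo{\cI}}\geq 1$, which forces $\abr{\Iendo{\cI}}\leq \abr{\cI}-1$.

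First I would pass from the empirical maxima to the true maxima and invoke the restriction property in the middle. Since \pref{thm: correctness of endo policy maximizer}(A2) gives $\nrm{\widehat{V}_{t,h} - V_{t,h}}_\infty \leq \eps/12k$ uniformly over $\PiInd\brk{\fullc_{\leq{}k}}$, and since both $\PiInd\brk{\cI}$ and $\PiInd\brk{\Iendo{\cI}}$ are subsets of $\PiInd\brk{\fullc_{\leq{}k}}$, taking maxima yields $\widehat{V}^{\cI}\leq V^{\cI} + \eps/12k$ and $V^{\Iendo{\cI}}\leq \widehat{V}^{\Iendo{\cI}} + \eps/12k$. In between, assumption (A1) applied to the factor set $\cI$ gives the exact identity $V^{\cI}=V^{\Iendo{\cI}}$. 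Chaining these gives
\[
\widehat{V}^{\cI} \;\leq\; V^{\cI} + \tfrac{\eps}{12k} \;=\; V^{\Iendo{\cI}} + \tfrac{\eps}{12k} \;\leq\; \widehat{V}^{\Iendo{\cI}} + \tfrac{\eps}{6k},
\]
so that $M \leq \widehat{V}^{\cI} + \eps_{\abr{\cI}} \leq \widehat{V}^{\Iendo{\cI}} + \eps_{\abr{\cI}} + \tfrac{\eps}{6k}$.

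It then remains to absorb the accumulated estimation error into the gap between the two tolerance levels, i.e.\ to show $\eps_{\abr{\cI}} + \tfrac{\eps}{6k} \leq \eps_{\abr{\Iendo{\cI}}}$. This is exactly what the geometric schedule $\eps_{k'}=(1+1/k)^{k-k'}\eps$ is built for: I would apply \pref{lem: numerical verification} with $k_1=\abr{\cI}$ and $k_2=\abr{\Iendo{\cI}}$ (legitimate since $k_2\leq k_1-1\leq k$; the boundary case $\abr{\Iendo{\cI}}=0$ follows from the same elementary estimate), which yields $(1+1/k)^{k-k_1}\eps + \eps/3k < (1+1/k)^{k-k_2}\eps$. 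Because $\eps/6k \leq \eps/3k$, this gives $\eps_{\abr{\cI}} + \eps/6k < \eps_{\abr{\Iendo{\cI}}}$, and combining with the previous display produces $M \leq \widehat{V}^{\Iendo{\cI}} + \eps_{\abr{\Iendo{\cI}}}$, which is precisely the statement that $\Iendo{\cI}$ passes the test in \pref{line: is_cover endopolicy}.

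The only genuinely delicate point is the tolerance bookkeeping in the final step, and it is instructive to see why it must be handled this way: shrinking $\cI$ to $\Iendo{\cI}$ forces two one-sided estimation errors (one in each direction through the exact identity $V^{\cI}=V^{\Iendo{\cI}}$), so a constant tolerance would leave no room to absorb them, whereas the geometric gap between level $\abr{\cI}$ and the strictly smaller level $\abr{\Iendo{\cI}}$ dominates $\eps/3k$. I would also emphasize that the real mathematical content is carried entirely by assumption (A1); verifying (A1) for the reward functions that arise inside $\AlgName$ is itself nontrivial and rests on the decoupling and restriction lemmas (\pref{lem: invariance of reduced policy cover} and \pref{lem:restriction_endo_rewards}), but for the present claim (A1) may be invoked as a black box.
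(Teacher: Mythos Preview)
Your proof is correct and follows essentially the same approach as the paper: both arguments pass from the empirical maximum over $\PiInd\brk{\cI}$ to the true maximum via assumption $(\mathrm{A}2)$, invoke the restriction property $(\mathrm{A}1)$ to replace $\cI$ by $\Iendo{\cI}$, return to the empirical maximum, and then absorb the accumulated estimation error into the geometric gap using \pref{lem: numerical verification}. Your version is marginally tighter because you leave the global maximum $M$ on the left-hand side untouched throughout (incurring only $\eps/6k$ rather than the paper's $\eps/3k$), and you are right to flag the boundary case $\abr{\Iendo{\cI}}=0$, which the paper's citation of \pref{lem: numerical verification} technically does not cover but which indeed follows from the same computation.
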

Given this claim, it is straightforward to see that \epm returns an endogenous policy.
First, observe that for any $\cI\in \fullc_{\leq k}$, we have $\Iendo{\cI} \ldef \cI\cap \Ic\in \cI\in \fullc_{\leq k}$ by~\pref{lem: set of abstractions is a pi system} (since $\Ic\in \fullc_{\leq k}$). If $\abs{\Iendo{\cI}}<\abs{\cI}$, then $\EndoPolicyMaximizer^\eps_{t,h}$ verifies that $\Iendo{\cI}\in \fullc_{\leq k}$ satisfies~\pref{line: is_cover endopolicy} prior to verifying whether $\cI\in \fullc_{\leq k}$ satisfies the condition. It follows that the factor set returned by the algorithm must be endogenous.
\paragraphp{Proof of \pref{claim:epm}.}
Assume that $\cI$ contains at least one exogenous factor, so
\begin{align}
    \abr{\Iendo{\cI}} \leq \abr{\cI}-1. \label{eq: cardinality of I and Iendo}
\end{align}
Suppose that $\iscover=\True$ for $\cI$. By construction, it holds that for $k_1 \ldef \abr{\cI}\leq k $,
\begin{align}
     &\max_{\pi\in \PiInd\sbr{\fullc_{\leq{}k}}} \widehat{V}_{t,h}\rbr{\mu \circt \pi \circt[t+1] \psi}  \leq \max_{\pi\in \PiInd\sbr{\cI}} \widehat{V}_{t,h}\rbr{\mu \circt \pi \circt[t+1] \psi}  +\epsilon_{k_1}.\label{eq: direct application of policy certificate}
\end{align}
This statement, which holds for the approximate value $\widehat{V}_{t,h}\rbr{\mu \circt \pi \circt[t+1] \psi}$ implies a similar statement on the true value $V_{t,h}\rbr{\mu \circt \pi \circt[t+1] \psi}$.  Specifically,~\eqref{eq: direct application of policy certificate} together with~\pref{lem: near optimal maximizers and minimzers help lemma} (which can be applied using assumption $(\mathrm{A}2)$), implies that
\begin{align}
      \max_{\pi\in \PiInd\sbr{\fullc_{\leq{}k}} } V_{t,h}\rbr{\mu \circt \pi \circt[t+1] \psi} &\leq \max_{\pi\in \PiInd\sbr{\cI}} V_{t,h}\rbr{\mu \circt \pi \circt[t+1] \psi} +\epsilon_{k_1} +\epsilon/6k \nonumber \\
    &\stackrel{\mathrm{(a)}}{=}\max_{\pi\in \PiInd\sbr{\Iendo{\cI}}} V_{t,h}\rbr{\mu \circt \pi \circt[t+1] \psi}+ \epsilon_{k_1} +\epsilon/6k, \label{eq: towards contradiction 1}
\end{align}
and $\mathrm{(a)}$ holds by the restriction property in assumption $(\mathrm{A}1)$. 

We now relate the inequality in~\eqref{eq: towards contradiction 1}, which holds for the true values $V_{t,h}\rbr{\mu \circt \pi \circt[t+1] \psi}$, back to an inequality on the approximate values. Using~\pref{lem: near optimal maximizers and minimzers help lemma} and assumption $(\mathrm{A}2)$ on~\eqref{eq: towards contradiction 1}, we have htat
\begin{align}
    \max_{\pi\in \PiInd\sbr{\fullc_{\leq{}k}} } \widehat{V}_{t,h}\rbr{\mu \circt \pi \circt[t+1] \psi} &\leq \max_{\pi\in \PiInd\sbr{\Iendo{\cI}}} \widehat{V}_{t,h}\rbr{\mu \circt \pi \circt[t+1] \psi}+ \epsilon_{k_1} +\eps/3k \nonumber \\
    &\stackrel{\mathrm{(a)}}{\leq} \max_{\pi\in \PiInd\sbr{\Iendo{\cI}}} \widehat{V}_{t,h}\rbr{\mu \circt \pi \circt[t+1] \psi}+ \epsilon_{k_2},\label{eq: towards contradiction 2 endomaximizer}
\end{align}
where $\mathrm{(a)}$ holds for all $k_1,k_2\in [k]$ such that $k_2 \leq k_1 -1 $, since 
\begin{align*}
    \epsilon_{k_1} +\eps/3k  \ldef \rbr{1 + 1/k}^{k-k_1}\eps +\eps/3k \leq \rbr{1 + 1/k}^{k-k_2}\eps \ldef \eps_{k_2},
\end{align*}
by~\pref{lem: numerical verification}. Setting $k_2 =\abr{\Iendo{\cI}} \leq  k_1-1 =\abr{\cI}$ (the cardinality of $\Iendo{\cI}$ is strictly smaller than that of $\cI$ by~\eqref{eq: cardinality of I and Iendo}) and plugging this value into \eqref{eq: towards contradiction 2 endomaximizer} yields
\begin{align}
     \max_{\pi\in \PiInd\sbr{\fullc_{\leq{}k}} } \widehat{V}_{t,h}\rbr{\mu \circt \pi \circt[t+1] \psi} \leq \max_{\pi\in \PiInd\sbr{\Iendo{\cI}}} \widehat{V}_{t,h}\rbr{\mu \circt \pi \circt[t+1] \psi}+ \epsilon_{\abr{\Iendo{\cI}}}. \label{eq: Iendo is near optimal for maximization}
\end{align}
Hence, $\Iendo{\cI}$ also satisfies the conditions in~\pref{line: endo policy maximizer optimization step}.

\paragraph{Step 3: $\EndoPolicyMaximizer^\eps_{t,h}$ returns a near-optimal policy.}  When the condition of $\EndoPolicyMaximizer^\eps_{t,h}$ at~\pref{line: is_cover endopolicy} holds and $\iscover=\True$, the factor set $\cI$ satisfies 
\begin{align}
  \max_{\pi\in \PiInd\sbr{\fullc_{\leq{}k}}} \widehat{V}_{t,h}\rbr{\mu \circt \pi \circt[t+1] \psi} 
    &\leq \max_{\pi\in \Pi\sbr{\cI}} \widehat{V}_{t,h}\rbr{\mu \circt \pi \circt[t+1] \psi} +\epsilon_{\abr{\cI}} \nonumber\\
    &=  \widehat{V}_{t,h}\rbr{\mu \circt \widehat{\pi} \circt[t+1] \psi} +\epsilon_{\abr{\cI}} \nonumber\\
    &\leq \widehat{V}_{t,h}\rbr{\mu \circt \widehat{\pi} \circt[t+1] \psi} +3\epsilon, \label{eq: policy is near optimal consequence of is_cover}
\end{align}
where the last relation holds because $\epsilon_{\abr{\cI}} \leq \rbr{1+1/k}^{k}\eps\leq 3\eps$. Applying \pref{lem: near optimal maximizers and minimzers help lemma} with $(\mathrm{A2})$ then gives
\begin{align*}
    \max_{\pi\in \PiInd\sbr{\fullc_{\leq{}k}}} V_{t,h}\rbr{\mu \circt \pi \circt[t+1] \psi}  &\leq V_{t,h}\rbr{\mu \circt \widehat{\pi} \circt[t+1] \psi} +\underbrace{3\epsilon +\eps/6k}_{\leq 4\epsilon}.
\end{align*}
\qed


\newpage
\section{Selecting Endogenous Factors with Strong Coverage:  $\FactorDetectionAlg$}
\label{app: no false detection of factors}

\begin{algorithm}[htp]
\caption{$\FactorDetectionAlg_{t,h}^{\eps}$: Simultaneous Policy Cover for all Factors}
\label{alg: sufficient exploration check}
\begin{algorithmic}[1]
  \item[] \algcomment{Find $\cI$ such that reaching $\cI$ implicitly leads to
  good coverage for all $\cJ\in\GenericCollection_{\leq{}k}(\cI\ind{t+1,h})$.}
  \State {\bf require:} 
  \begin{itemize}
  \item Starting timestep $t$ and end timestep $h$, target precision $\epsilon\in (0,1)$.
  \item Set of endogenous factors $\cI\ind{t+1,h}\subseteq\Ic$.
  \item Collection of policy sets $\cbr{\Gamma\ind{t}\sbr{\cI}}_{\cI\in
       \scrI_{\leq{}k}(\cI\ind{t+1,h})}$, where $$\Gamma\ind{t}\brk{\cI} = \crl[\big]{\pi\ind{t}_{s\brk{\cI}}\mid{}s\brk{\cI}\in\cS\brk{\cI}}.$$
   \item Set of $(t+1\to{}h)$ policies
     $$\Psi\ind{t+1,h}=\crl[\big]{\psi\ind{t+1,h}_{s\sbr{\cI\ind{t+1,h}}}\mid{}s\brk{\cI\ind{t+1,h}}\in\cS\brk{\cI\ind{t+1,h}}}.$$
   \item Collection $\EmpOccupancy$ of approximate occupancy measures for layer $h$ under the sampling process $\mut\circt \pi\circt[t+1] \psi\ind{t+1,h}$.
   \end{itemize}
   \item[] \algcomment{Pick $\Psi\in \Gamma\ind{t}\brk[\big]{\cI}\circt[t+1] \Psi\ind{t+1:h}$ that explores $\cI\subseteq \Ic$ and sufficiently explores other factors.}
      \For{$k'= |\cI\ind{t+1,h}|,|\cI\ind{t+1,h}|+1,..,k$} 
            \State Define $\epsilon_{k'} = \rbr{1+1/k}^{k-k'}5\epsilon$. 
            \For{$\cI\in \fullc_{k'}(\cI\ind{t+1,h})$}
            \item[]\algspace\algspace\algcomment{Test whether reaching states in $\cI$ leads to good
              coverage for all factors $\cIbar\in\fullc_{\leq{}k}(\cI\ind{t+1,h})$.}
              \State \label{line: factor detection conditions} \multiline{Set $\sufficientcover=\True$ if for all
              $\cJ\in \fullc_{\leq{}k}(\cI\ind{t+1,h})$ and
              for all
              $
                s\sbr{\cJ} \in
                \Scal\sbr{\cJ}:
              $
                    
                          \begin{align}
                            &\max_{\pi\in \PiInd\brk{\fullc_{\leq k}}}
                              \dhat_h\rbr{s\brk{\cJ}\midsem \mut\circt\pi \circt[t+1] \psi\ind{t+1,h}_{s\brk{\cI\ind{t+1,h}}}} \nonumber \\
                                  &\quad \leq \dhat_h                                 \rbr{s\sbr{\cJ} \midsem \mut\circt \pi\ind{t}_{s\brk{\cJ\cap  \cI}} \circt[t+1] \psi\ind{t+1,h}_{s\brk{\cI\ind{t+1,h}}}
                                  } +  \epsilon_{k'}, \label{eq: sufficient coverage condition}
                          \end{align}
                          where $\pi\ind{t}_{s\brk{\cJ\cap
                              \cI} } \in \Gamma\ind{t}\brk{\cJ\cap
                            \cI}$.  \algcomment{Recall $\displaystyle\pi\ind{t}_{s\brk{\cJ\cap  \cI} }\approx \argmax_{\pi\in \PiInd\brk{\fullc_{\leq k}}}\dhat_h\prn[\Big]{s\sbr{\cJ\cap \cI}\midsem\mut\circ_t\pi \circt[t+1] \psi\ind{t+1,h}_{s\brk{\cI\ind{t+1,h}}}}$.}}
                \If{$\sufficientcover=\True$}
                    \State $\widehat{\cI} \gets \cI$.
                    \State {\bf return} $(\widehat{\cI},\Gamma\ind{t}\brk{\widehat{\cI}})$.
                \EndIf 
                
            \EndFor
        \EndFor
        \State {\bf return:} $\fail$. \algcomment{Low probability failure event.}
\end{algorithmic}
\end{algorithm}


In this section, we describe and analyze the $\FactorDetectionAlg_{t,h}^\eps$ algorithm (\pref{alg: sufficient exploration check}). $\FactorDetectionAlg_{t,h}^\eps$ is a subroutine used in the selection phase of $\AlgName_{h}^{\eps,\delta}$, and generalizes the selection phase used in $\ExactAlgName_h$ to the setting where only approximate occupancy measures are available.
In \pref{app:fact_det_goals}, we give a high-level description $\FactorDetectionAlg_{t,h}^\eps$, give intuition, and state the main theorem concerning its performance. Then, in \pref{app:proof_fact_det_sel} we prove this result.

\subsection{Description of $\FactorDetectionAlg$}\label{app:fact_det_goals}

To motivate $\FactorDetectionAlg_{t,h}^\eps$, let us first recall the selection phase of $\ExactAlgName_h$ (\pref{line: exact DP-SR state2} of \pref{alg: exact dynamics state refinement}). The selection phase assumes access to a collection of policy sets $\cbr{\Gamma\ind{t}\brk{\cI}}_{\cI\in \fullc_{\leq k}\rbr{\cI\ind{t+1,h}}}$, which are calculated in the optimization step. In particular, for each set $\cI$ and each $s\brk*{\cI}\in\cS\brk*{\cI}$, $\pi\ind{t}_{s\brk{\cI}}\in \Gamma\ind{t}\brk{\cI}$ is an endogenous policy that maximizes the probability of reaching $s\brk{\cI}$ at layer $h$ in the following sense:
\[
\pi\ind{t}_{s\brk{\cI}}\in \argmax_{\pi\in \PiInd\brk{\fullc_{\leq k}}} d_h\rbr{s\brk{\cI} \midsem \mut\circt\pi\circt[t+1]\psi\ind{t+1,h}_{s\brk{\cI\ind{t+1,h}}} }.
\]
The selection phase of $\ExactAlgName_h$ find the factor set $\widehat{\cI}\in \fullc_{\leq k}\rbr{\cI\ind{t+1,h}}$ of minimal size such that for all $\cJ\in \fullc_{\leq k}\rbr{\cI\ind{t+1,h}}$ and $s\brk{\cJ}\in \cS\brk{\cJ}$,
\begin{align}
\max_{\pi\in \PiInd\brk{\fullc_{\leq k}}}  d_h\rbr{s\brk{\cJ} \midsem \mut\circt\pi\circt[t+1]\psi\ind{t+1,h}_{s\brk{\cI\ind{t+1,h}}} } = d_h\rbr{s\brk{\cJ} \midsem \mut\circt \pi\ind{t}_{s\brk{\cJ \cap \widehat{\cI}}}\circt[t+1]\psi\ind{t+1,h}_{s\brk{\cI\ind{t+1,h}}}}. \label{eq:fact_det_app_decrip1}
\end{align}
At the end of the selection step, $\ExactAlgName_h$ outputs the tuple $(\widehat{\cI}, \Gamma\ind{t}\brk{\widehat{\cI}})$. Since $\widehat{\cI}$ is chosen as the \emph{minimal} factor set that satisfies~\eqref{eq:fact_det_app_decrip1} it can be shown it is  an endogenous factors set. Furthermore, $\Gamma\ind{t}\brk{\widehat{\cI}}$ satisfies condition~\eqref{eq:fact_det_app_decrip1}. 

$\FactorDetectionAlg_{t,h}^\eps$ is similar to $\ExactAlgName_h$, but only requires access to \emph{approximate state occupancy measures}.  Analogous to $\ExactAlgName_h$, the algorithm outputs a tuple $(\widehat{\cI}, \Gamma\ind{t}\brk{\widehat{\cI}})$, where $\widehat{\cI}$ is an endogenous factors set and $\Gamma\ind{t}\brk{\cIhat}$ ensures good coverage at layer $h$.
However, since $\FactorDetectionAlg_{t,h}^\eps$ has only has access to approximate state occupancy measures, the policy set $\Gamma\ind{t}\brk{\cIhat}$ returned by the algorithm is only guaranteed to satisfy an approximate version of ~\eqref{eq:fact_det_app_decrip1}:
\begin{align}
&\max_{\pi\in \PiInd\brk{\fullc_{\leq k}}}  d_h\rbr{s\brk{\cJ} \midsem \mut\circt\pi\circt[t+1]\psi\ind{t+1,h}_{s\brk{\cI\ind{t+1,h}}} } \nonumber \\
&\quad \leq  d_h\rbr{s\brk{\cJ} \midsem \mut\circt \pi\ind{t}_{s\brk{\cJ \cap \widehat{\cI}}}\circt[t+1]\psi\ind{t+1,h}_{s\brk{\cI\ind{t+1,h}}}} +O(\eps), \label{eq:fact_det_app_decrip2}
\end{align}
where $\pi\ind{t}_{s\brk{\cJ \cap \widehat{\cI}}}\in \Gamma\ind{t}\brk{\widehat{\cI}}$.

To ensure find an endogenous factor set $\widehat{\cI}$ such that $\Gamma\ind{t}\brk{\widehat{\cI}}$ satisfies~\eqref{eq:fact_det_app_decrip2}, $\FactorDetectionAlg_{t,h}^\eps$ follows the $\AMFD$ scheme described in~\pref{app:abstract_search}. It enumerates the collection of factor sets $\fullc_{\leq k}\rbr{\cI\ind{t+1,h}}$ in a bottom-up fashion---starting from factor sets of minimimal cardinality--- and checks whether each factor set approximately satisfies the optimality condition. 

\paragraph{Intuition for correctness.} To establish the correctness of  $\FactorDetectionAlg_{t,h}^\eps$, we view the algorithm as an instance of $\AMFD$ with 
\begin{align*}
  &\Condition(\cZ,\eps,\cI)\\
  &= \indic\left\{
    \begin{aligned}
      &\max_{\pi\in \PiInd\brk{\fullc_{\leq k}}}
                              \dhat_h\rbr{s\brk{\cJ}\midsem \mut\circt\pi \circt[t+1] \psi\ind{t+1,h}_{s\brk{\cI\ind{t+1,h}}}}\\
                              &~~\leq \dhat_h                                 \rbr{s\sbr{\cJ} \midsem \mut\circt \pi\ind{t}_{s\brk{\cJ\cap  \cI}} \circt[t+1] \psi\ind{t+1,h}_{s\brk{\cI\ind{t+1,h}}}
                                  } +  \epsilon,
                                \end{aligned}
                                \quad\forall{}\cJ\in\fullc(\cI\ind{t+1,h}),s\brk{\cJ}\in\cS\brk{\cJ}
    \right\},
\end{align*}
and recall that $\pi\ind{t}_{s\brk{\cJ\cap  \cI}}\in \Gamma\ind{t}\brk{\cJ\cap \cI}$ is the output from the optimization step at $\EndoPolicyMaximizer$. The analysis of $\FactorDetectionAlg_{t,h}^\eps$ follow the recipe sketched in~\pref{app:abstract_search}. Most of our efforts are devoted to proving that the condition in \eqref{eq: main paper key contradiction} required by $\AMFD$ holds for $\FactorDetectionAlg^\eps_{t,h}$. In particular, we wish to prove the following claim:
\emph{If $\cI$ satisfies the condition in \pref{line: factor detection conditions} ($\sufficientcover=\True$ for $\cI$), then $\Iendo{\cI}$ satisfies the condition as well ($\sufficientcover=\True$ for $\Iendo{\cI}$).}
To show that the statement is true, we use a key structural result, \pref{lem: approximately reaching endo part is approximately optimal}, which generalizes certain structural results used in the analysis of $\ExactAlgName$ (\pref{prop: exact case recovery of 1-step endo policy cover}). Let $\mu$ and $\rho$ be endogenous policies, and consider a fixed state factor $s\brk{\cI}\in \cS\brk{\cI}$.~\pref{lem: approximately reaching endo part is approximately optimal} asserts that if an endogenous policy $\pi_{s\brk{\Iendo{\cI}}}$ approximately maximizes the probability of reaching the endogenous part of $s\brk{\cI}$, which is given by
\begin{align*}
   d_h\rbr{s\brk{\Iendo{\cI}} \midsem \mu\circt \pi_{s\brk{\Iendo{\cI}}} \circt[t+1] \rho},
\end{align*}
then the policy also approximately maximizes the probability of reaching $s\brk{\cI}$, which is given by
\begin{align*}
   d_h\rbr{s\brk{\cI} \midsem \mu\circt\pi_{s\brk{\Iendo{\cI}}} \circt[t+1] \rho}.
\end{align*}
Hence, to approximately maximize the probability of reaching $s\brk{\cI}$, it suffices to execute a policy that approximately maximizes the probability of reaching the endogenous part of the state, $s\brk{\cIen}$. We use this observation to show that exogenous factors are redundant in the sense that  if $\sufficientcover=\True$ for $\cI$, then $\sufficientcover=\True$ for $\Iendo{\cI}$; this proves the claim

\paragraph{Formal guarantee for $\FactorDetectionAlg$}
The following result is the main guarantee for $\FactorDetectionAlg^\eps_{t,h}$.
\begin{theorem}[Success of $\FactorDetectionAlg^{\eps}_{t,h}$]\label{thm:  no false detection of controllable factors}
Fix $h\in [H]$ and $t\in [h]$. Assume the following conditions hold:
\begin{enumerate}[label=$(\mathrm{A}\arabic*)$]
\item \emph{Endogeneity of arguments.} $\mu\ind{t}\in \Pimix\brk{\Ic}$ is endogenous, $\Psi\ind{t+1,h}$ contains only endogenous policies, and 
  $\Gamma\ind{t}\brk*{\cI}$ contains only endogenous policies for all $\cI\in \fullc_{\leq k}\rbr{\cI\ind{t+1,h}}$.
  In addition, $\cI\ind{t+1,h}\subseteq\Ic$.
    \item \emph{Quality of estimation.} $\EmpOccupancy$ is a collection of $\epsilon/12k$-approximate state occupancy measures with respect to $\rbr{\mu\ind{t}\circ\Pi\brk{\fullc_{\leq{}k}} \circ \Psi\ind{t+1,h}, \fullc_{\leq{}k}\rbr{\cI\ind{t+1,h}},h}$ (\pref{def: approximate set of occupancy measures}). 
    \item \emph{Optimality for $\Gamma\ind{t}\brk{\cI}$.} For any factor set$\cI\in \fullc_{\leq k}\rbr{\cI\ind{t+1,h}}$ and any $s\brk*{\cI}\in \cS\brk*{\cI}$, the policy $\pi\ind{t}_{s\sbr{\cI}}\in \Gamma\ind{t}\sbr{\Ical}$ satisfies the following optimality guarantee:
      \begin{align*}
        \max_{\pi\in \PiInd\brk{\fullc_{\leq k}}} d_h\rbr{s\sbr{\cI} \midsem \mu\ind{t}\circt \pi \circt[t+1] \psi\ind{t+1,h}_{s\sbr{\cI\ind{t+1,h}}}} \leq d_h\rbr{s\sbr{\cI} \midsem \mu\ind{t}\circt \pi\ind{t}_{s\sbr{\cI}} \circt[t+1] \psi\ind{t+1,h}_{s\sbr{\cI\ind{t+1,h}}}} + 4\epsilon.
    \end{align*}
\end{enumerate}
Then $\FactorDetectionAlg_{t,h}^{\eps}$ does not output $\fail$, and the tuple $(\widehat{\cI}, \Gamma\ind{t}\brk{\widehat{\cI}})$ output by the algorithm satisfies the following guarantees:
\begin{enumerate}
    \item $\widehat{\cI}\subseteq \Ic$.
    \item For all $s\sbr{\Ic}\in \Scal\sbr{\Ic}$, we have
    \begin{align*}
      \max_{\pi\in \PiInd\brk{\Ic}}d_h\rbr{s\sbr{\Ic} ; \mut\circt \pi\circt[t+1]\psi\ind{t+1,h}_{s\sbr{\cI\ind{t+1,h}}}} - d_h\rbr{s\sbr{\Ic} ;  \mut\circt \pi\ind{t}_{s\brk{\widehat{\cI}}}\circt[t+1]\psi\ind{t+1,h}_{s\sbr{\cI\ind{t+1,h}}}} \leq 16\epsilon,
    \end{align*}
where we note that we can write
$s\sbr{\Ic} = \prn{s\brk[\big]{\widehat{\cI}},s\brk[\big]{\Ic \setminus{} \widehat{\cI}}} = \rbr{s\sbr{\cI\ind{t+1,h}},s\sbr{\Ic \setminus{} \cI\ind{t+1,h}} }$
because $\cI\ind{t+1,h},\widehat{\cI}\subseteq \Ic$.
\end{enumerate}
\end{theorem}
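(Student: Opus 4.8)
The plan is to recognize $\FactorDetectionAlg^\eps_{t,h}$ as an instance of the abstract search template $\AMFD$ (\pref{app:abstract_search}) with initial factor $\cI_0 = \cI\ind{t+1,h}$ and predicate $\Condition(\cZ,\eps,\cI)$ equal to the indicator that the test \eqref{eq: sufficient coverage condition} holds for all $\cJ\in\fullc_{\leq k}(\cI\ind{t+1,h})$ and $s\brk{\cJ}\in\cS\brk{\cJ}$. I would then follow the three-step recipe from \pref{app:abstract_search}: (i) the algorithm does not return $\fail$, (ii) it returns an endogenous factor set, and (iii) the returned set is near-optimal. Throughout, assumption $(\mathrm{A}2)$ together with \pref{lem: near optimal maximizers and minimzers help lemma} lets me pass freely between the empirical occupancies $\dhat_h$ appearing in the algorithm and the true occupancies $d_h$, at an additive cost of $\epsilon/6k$ per conversion.

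For Step (i), I would show that $\Ic$ itself satisfies the condition at cardinality level $k'=\abr{\Ic}$. Fix $\cJ$ and $s\brk{\cJ}$; the relevant policy is $\pi\ind{t}_{s\brk{\cJ\cap\Ic}}=\pi\ind{t}_{s\brk{\Iendo{\cJ}}}$. Assumption $(\mathrm{A}3)$ applied to the factor set $\Iendo{\cJ}\in\fullc_{\leq k}(\cI\ind{t+1,h})$ shows this policy reaches $s\brk{\Iendo{\cJ}}$ within $4\epsilon$ of optimal; invoking \pref{lem: approximately reaching endo part is approximately optimal} (whose hypotheses hold by the endogeneity in $(\mathrm{A}1)$) upgrades this to near-optimality for reaching the full factor $s\brk{\cJ}$. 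Converting back to $\dhat_h$ gives total slack $4\epsilon+\epsilon/6k\leq 5\epsilon\leq\epsilon_{\abr{\Ic}}$, so the condition holds and the algorithm halts.

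The hard part will be Step (ii): verifying that the predicate satisfies \pref{ass:AFS}, i.e.\ that whenever a factor set $\cI$ with at least one exogenous factor passes the test, so does $\Iendo{\cI}$. The key trick I would use is to instantiate the passing condition for $\cI$ at the \emph{endogenous query} $\cJ'=\Iendo{\cJ}$ (which lies in $\fullc_{\leq k}(\cI\ind{t+1,h})$ since $\cI\ind{t+1,h}\subseteq\Ic$). Because $\Iendo{\cJ}\cap\cI=\cJ\cap\Iendo{\cI}$, this shows the desired policy $\pi\ind{t}_{s\brk{\cJ\cap\Iendo{\cI}}}$ is near-optimal for reaching $s\brk{\Iendo{\cJ}}$; I would then apply \pref{lem: approximately reaching endo part is approximately optimal} once more to lift this to near-optimality for reaching $s\brk{\cJ}$, all in terms of true occupancies. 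Passing back to $\dhat_h$ accumulates an extra $\epsilon/3k$, and \pref{lem: numerical verification} (with $c=5$) converts the level-$\abr{\cI}$ slack $\epsilon_{\abr{\cI}}+\epsilon/3k$ into the strictly looser level-$\abr{\Iendo{\cI}}$ slack $\epsilon_{\abr{\Iendo{\cI}}}$, precisely because $\abr{\Iendo{\cI}}\leq\abr{\cI}-1$. Given this claim, the bottom-up scan in $\AMFD$ forces the returned set $\widehat{\cI}$ to be endogenous (conclusion 1), since $\Iendo{\widehat{\cI}}$ would otherwise be tested first; this uses the $\pi$-system property (\pref{lem: set of abstractions is a pi system}).

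Finally, for Step (iii) I would instantiate the verified condition for the returned set $\widehat{\cI}$ at the query $\cJ=\Ic$, so that $\cJ\cap\widehat{\cI}=\widehat{\cI}$. Converting from $\dhat_h$ to $d_h$ and using \pref{lem: endognous set is the maximizer} to replace $\max_{\pi\in\PiInd\brk{\fullc_{\leq k}}}$ by $\max_{\pi\in\PiInd\brk{\Ic}}$ (valid since $\mut$ and $\psi\ind{t+1,h}$ are endogenous) yields the coverage bound with slack $\epsilon_{\abr{\widehat{\cI}}}+\epsilon/6k$; bounding $\epsilon_{\abr{\widehat{\cI}}}\leq(1+1/k)^k 5\epsilon\leq 15\epsilon$ gives the claimed $16\epsilon$ (conclusion 2). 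I expect the only real subtlety to be the careful error bookkeeping across the $\dhat_h\leftrightarrow d_h$ conversions, so that the accumulated slack stays below the margin that \pref{lem: numerical verification} provides.
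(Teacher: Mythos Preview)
Your proposal is correct and follows essentially the same three-step argument as the paper. The only cosmetic differences are: in Step (ii), you invoke \pref{lem: approximately reaching endo part is approximately optimal} directly to lift near-optimality from $s\brk{\Iendo{\cJ}}$ to $s\brk{\cJ}$, whereas the paper reproves that lemma inline by multiplying through by $d_h(s\brk{\Iexo{\cJ}})$ and appealing to \pref{lem: decoupling of future state dis} and \pref{lem: endognous set is the maximizer}; and in Step (iii), your appeal to \pref{lem: endognous set is the maximizer} is harmless but unnecessary, since $\PiInd\brk{\Ic}\subseteq\PiInd\brk{\fullc_{\leq k}}$ already gives the needed direction of the inequality.
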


\subsection{Proof of \preftitle{thm:  no false detection of controllable factors}}\label{app:proof_fact_det_sel}

We use the three-step proof strategy described in~\pref{app:abstract_search} to prove correctness for \efs.  

\paragraph{Step 1: $\FactorDetectionAlg^\eps_{t,h}$ does not return $\fail$.} We show that given assumptions $(\mathrm{A}1)-(\mathrm{A}3)$ $\FactorDetectionAlg^\eps_{t,h}$  does not return fail. First, observe that $\Ic \in \fullc_{\leq{}k}\rbr{\cI\ind{t+1,h}}$, since $\cI\ind{t+1,h}\subseteq \Ic$ by $(\mathrm{A}1)$ and $\abr{\Ic}\leq k$ by assumption. We prove that $\FactorDetectionAlg^\eps_{t,h}$ halts for $\cI \gets \Ic$; meaning that $\Ic$ satisfies the condition at~\pref{line: factor detection conditions} of $\FactorDetectionAlg^\eps_{t,h}$.



Fix $\cI\in \fullc_{\leq{}k}\rbr{\cI\ind{t+1,h}}$ and $s\brk*{\cI}\in \cS\brk{\cI}$.  Let $\Iendo{\cI} \in  \fullc_{\leq{}k}\rbr{\cI\ind{t+1,h}}$\footnote{\label{fot1} $\Iendo{\cI} \in \fullc_{\leq{}k}\rbr{\cI\ind{t+1,h}}$ since $\Ic\in  \fullc_{\leq{}k}\rbr{\cI\ind{t+1,h}}$ and $\fullc_{\leq{}k}\rbr{\cI\ind{t+1,h}}$ is a $\pi$-system by~\pref{lem: set of abstractions is a pi system}. } be the endogenous component of $\cI$, so that $s\brk*{\cI} = (s\brk*{\Iendo{\cI}},s\brk*{\Iexo{\cI}})$. Consider the policy $\pi\ind{t}_{s\sbr{\Iendo{\cI}}} \in \Gamma\ind{t}\brk{\Iendo{\cI}}$. By assumption $(\mathrm{A}3)$, $\pi\ind{t}_{s\sbr{\Iendo{\cI}}}$ is endogenous and satisfies
    \begin{align}
        \max_{\pi\in \PiInd\brk{\fullc_{\leq k}}} &d_h\rbr{s\sbr{\Iendo{\cI}} \midsem \mu\ind{t}\circt \pi \circt[t+1] \psi\ind{t+1,h}_{s\sbr{\cI\ind{t+1,h}}}} \nonumber\\
        \leq \ & d_h\rbr{s\sbr{\Iendo{\cI}} \midsem \mu\ind{t}\circt \pi\ind{t}_{s\sbr{\Iendo{\cI}}} \circt[t+1] \psi\ind{t+1,h}_{s\sbr{\cI\ind{t+1,h}}}} + 4\epsilon. \label{eq:step_1_factor_seletion}
    \end{align}
\eqref{eq:step_1_factor_seletion} shows that $\pi\ind{t}_{s\sbr{\Iendo{\cI}}}$ has near-optimal probability for the endogenous component of $s\brk{\cI}$ near optimally (when the rollout policy $\psi\ind{t+1,h}_{s\sbr{\cI\ind{t+1,h}}}$ is fixed). Combined with the fact that both $\pi\ind{t}_{s\sbr{\Iendo{\cI}}}$ and $\psi_{s\sbr{\Iendo{\cI}}}\ind{t+1,h}$ are endogenous (by $(\mathrm{A1})$), this allows us to apply~\pref{lem: approximately reaching endo part is approximately optimal}, which asserts that $\pi\ind{t}_{s\sbr{\Iendo{\cI}}}$ reaches the any state factor $s\brk{\cI}$ with $\Iendo{\cI}\subseteq\cI$ near-optimally as well. In particular,
\begin{align}
    \max_{\pi\in \PiInd\brk{\fullc_{\leq k}}} &d_h\rbr{s\sbr{\cI} \midsem \mu\ind{t}\circt \pi \circt[t+1] \psi\ind{t+1,h}_{s\sbr{\cI\ind{t+1,h}}}} \nonumber \\
    \leq\ &d_h\rbr{s\sbr{\cI} \midsem \mu\ind{t}\circt \pi\ind{t}_{s\sbr{\Iendo{\cI}}} \circt[t+1] \psi\ind{t+1,h}_{s\sbr{\cI\ind{t+1,h}}}} + 4\epsilon. \label{eq: endgenous policy cover is sufficient}
\end{align}
Now, observe that since $\EmpOccupancy$ is $\epsilon/12k$-approximate with respect to $\rbr{\Pi\sbr{\fullc_{\leq{}k}\rbr{\cI\ind{t+1,h}}}, \fullc_{\leq{}k}\rbr{\cI\ind{t+1,h}},h}$ (cf. $(\mathrm{A}2)$), \eqref{eq: endgenous policy cover is sufficient} and~\pref{lem: near optimal maximizers and minimzers help lemma} imply that
\begin{align}
    \max_{\pi\in \PiInd\brk{\fullc_{\leq k}}} &\dhat_h\rbr{s\sbr{\cI} \midsem \mu\ind{t}\circt \pi \circt[t+1] \psi\ind{t+1,h}_{s\sbr{\cI\ind{t+1,h}}}}\nonumber \\
    \leq\ &\dhat_h\rbr{s\sbr{\cI} \midsem \mu\ind{t}\circt \pi\ind{t}_{s\sbr{\Iendo{\cI}}} \circt[t+1] \psi\ind{t+1,h}_{s\sbr{\cI\ind{t+1,h}}}} + 5\epsilon. \label{eq: endgenous policy cover is sufficient rel 2}
\end{align}
Since $\Iendo{\cI} = \cI \cap \Ic$, and since 
\begin{align*}
    5\eps \leq \rbr{1+1/k}^{k-k'+1}5\eps
\ldef \eps_{k'}
\end{align*}
for all $k'\in [k]$, this implies that the condition at~\pref{line: factor detection conditions} of $\FactorDetectionAlg^\eps_{t,h}$ is satisfied by $\Ic$.

\paragraph{Step 2: Proof of first claim ($\widehat{\cI}\subseteq \Ic$ is a set of endogenous factors).} Since $\FactorDetectionAlg^\eps_{t,h}$ does not return $\fail$, it necessarily returns a pair $(\cIhat, \Gamma\ind{t}\brk{\cIhat})$. We now show that $\cIhat$ is endogenous.
To do so, we prove the following claim.
\begin{lemma}
  \label{claim:efs}
  If $\cI \in \fullc_{\leq{}k}\rbr{\cI\ind{t+1,h}}$ satisfies the condition in \pref{line: factor detection conditions} ($\sufficientcover=\True$ for $\cI$), then $\Iendo{\cI}$ satisfies the condition as well ($\sufficientcover=\True$ for $\Iendo{\cI}$). 
\end{lemma}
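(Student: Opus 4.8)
The plan is to instantiate the three-step recipe of \pref{app:abstract_search}, following the same logic as Claim~2 in the proof of \pref{prop: exact case recovery of 1-step endo policy cover} but replacing the exact decoupling/restriction identities with their error-tolerant counterparts. If $\cI$ is already endogenous there is nothing to prove, so I will assume $\Iexo{\cI}\neq\emptyset$ and set $k_1\ldef\abs{\cI}$ and $k_2\ldef\abs{\Iendo{\cI}}\leq k_1-1$. Before starting, I would record the membership facts that make the argument type-check: since $\cI\ind{t+1,h}\subseteq\Ic$ by $(\mathrm{A}1)$ and $\cI\ind{t+1,h}\subseteq\cI$, we have $\cI\ind{t+1,h}\subseteq\Iendo{\cI}$, hence $\Iendo{\cI}\in\fullc_{\leq k}(\cI\ind{t+1,h})$; and by the $\pi$-system property (\pref{lem: set of abstractions is a pi system}) every intersection formed below lies in $\fullc_{\leq k}(\cI\ind{t+1,h})$ as well, in particular $\Iendo{\cJ}=\cJ\cap\Ic$ and $\Iendo{\cJ}\cap\cI=\cJ\cap\Iendo{\cI}=\Iendo{\cJ}\cap\Iendo{\cI}$. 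In particular the policies $\pi\ind{t}_{s\brk{\Iendo{\cJ}\cap\cI}}$ and $\pi\ind{t}_{s\brk{\cJ\cap\Iendo{\cI}}}$ are literally the same object.

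Fix $\cJ\in\fullc_{\leq k}(\cI\ind{t+1,h})$ and $s\brk{\cJ}$. The first step is to feed the hypothesis (that $\cI$ satisfies \pref{eq: sufficient coverage condition}) into \pref{lem: approximately reaching endo part is approximately optimal} through its \emph{endogenous} target. Applying the assumed test for $\cI$ with the target factor set $\Iendo{\cJ}$ (legal by the remarks above) yields the empirical inequality comparing $\max_{\pi\in\PiInd\brk{\fullc_{\leq k}}}\dhat_h(s\brk{\Iendo{\cJ}}\midsem\mut\circt\pi\circt[t+1]\psi\ind{t+1,h}_{s\brk{\cI\ind{t+1,h}}})$ with $\dhat_h(s\brk{\Iendo{\cJ}}\midsem\mut\circt\pi\ind{t}_{s\brk{\Iendo{\cJ}\cap\cI}}\circt[t+1]\psi\ind{t+1,h}_{s\brk{\cI\ind{t+1,h}}})$, up to $\epsilon_{k_1}$. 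Invoking the estimation guarantee $(\mathrm{A}2)$ together with \pref{lem: near optimal maximizers and minimzers help lemma} converts this empirical inequality into the corresponding statement for the true occupancies $d_h$, at the cost of an additional $\epsilon/6k$ slack.

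The crux is the second step, and this is where I expect the main obstacle to lie. The inequality just obtained says precisely that the \emph{endogenous} policy $\pi\ind{t}_{s\brk{\Iendo{\cJ}\cap\Iendo{\cI}}}$ (endogenous by $(\mathrm{A}1)$) is $(\epsilon_{k_1}+\epsilon/6k)$-optimal for reaching the endogenous target $s\brk{\Iendo{\cJ}}$, when rolled in with $\mut$ and rolled out by the endogenous $\psi\ind{t+1,h}_{s\brk{\cI\ind{t+1,h}}}$. This is exactly hypothesis $(\mathrm{A}2)$ of \pref{lem: approximately reaching endo part is approximately optimal}, applied with the lemma's factor set equal to $\cJ$ (so that $\Iendo{\cJ}$ plays the role of its endogenous component), $\widehat{\pi}=\pi\ind{t}_{s\brk{\Iendo{\cJ}\cap\Iendo{\cI}}}$, $\mu=\mut$, $\rho=\psi\ind{t+1,h}_{s\brk{\cI\ind{t+1,h}}}$, and $\fullc=\fullc_{\leq k}$. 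The lemma then upgrades approximate optimality for the endogenous target $s\brk{\Iendo{\cJ}}$ to approximate optimality for the full target $s\brk{\cJ}$ with the same slack. This is the step where exogenous factors are shown to be redundant for maximizing reachability, and it is the hard part: it is precisely what forces the argument through the structural decoupling machinery rather than through a direct manipulation of the empirical test, and it is the sole place where the factored structure of the \framework enters.

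It remains to repackage and count errors. A second application of $(\mathrm{A}2)$ and \pref{lem: near optimal maximizers and minimzers help lemma} converts the $d_h$-inequality for $s\brk{\cJ}$ back into one for $\dhat_h$, adding a further $\epsilon/6k$; identifying $\pi\ind{t}_{s\brk{\Iendo{\cJ}\cap\Iendo{\cI}}}=\pi\ind{t}_{s\brk{\cJ\cap\Iendo{\cI}}}$ as noted above, the resulting statement is exactly the test \pref{eq: sufficient coverage condition} for the factor set $\Iendo{\cI}$ and target $\cJ$, but with tolerance $\epsilon_{k_1}+\epsilon/3k$ in place of the required $\epsilon_{k_2}$. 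Recalling that $\FactorDetectionAlg$ sets $\epsilon_{k'}=(1+1/k)^{k-k'}\,5\epsilon$, I would close the bound by invoking \pref{lem: numerical verification} with $c=5$ and the indices $k_2\leq k_1-1$, which gives $\epsilon_{k_1}+\epsilon/3k<\epsilon_{k_2}$, so the tolerance is met. Since $\cJ$ and $s\brk{\cJ}$ were arbitrary, $\sufficientcover=\True$ for $\Iendo{\cI}$, proving the claim.
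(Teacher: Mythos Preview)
Your proposal is correct and follows essentially the same approach as the paper. The only difference is cosmetic: where the paper expands the ``multiply by $d_h(s\brk{\Iexo{\cJ}})$ and apply decoupling/restriction'' step inline (invoking \pref{lem: decoupling of future state dis} and \pref{lem: endognous set is the maximizer} directly), you package that step as a single call to \pref{lem: approximately reaching endo part is approximately optimal}, which is exactly what that lemma is for. The error accounting ($\epsilon_{k_1}+\epsilon/3k<\epsilon_{k_2}$ via \pref{lem: numerical verification} with $c=5$) and the identification $\Iendo{\cJ}\cap\cI=\cJ\cap\Iendo{\cI}$ match the paper's proof line for line.
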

Conditioned on \pref{claim:efs}, the result quickly follows. Observe that for any $\cI\in \fullc_{\leq{}k}\rbr{\cI\ind{t+1,h}}$, we have $\Iendo{\cI}\in  \fullc_{\leq{}k}\rbr{\cI\ind{t+1,h}}$\footref{fot1}. Furthermore, if $\abr{\Iendo{\cI}} > \abr{\cI}$, then $\FactorDetectionAlg$ will check whether $\Iendo{\cI}$ satisfies the condition in~\pref{line: factor detection conditions} prior to checking whether $\cI$ satisfies it. Thus, $\FactorDetectionAlg$ necessarily returns a set of endogenous factors; it remains to prove \pref{claim:efs}.

\begin{proof}[\pfref{claim:efs}]
Fix $\cI \in \fullc_{\leq{}k}\rbr{\cI\ind{t+1,h}}$ with $\Iexo{\cI}\neq \emptyset$. Assume that $\cI$ satisfies the conditions in~\pref{line: factor detection conditions}. That is, for $k_1 \ldef \abr{\cI}\leq k$, it holds that for all $\cJ\in \fullc_{\leq{}k}(\cI\ind{t+1,h})$ and all $s\sbr{\cJ} = \rbr{s\sbr{\cI},s\sbr{\cJ\setminus{}\cI}} \in \Scal\sbr{\cJ}$,
\begin{align}
     \max_{\pi\in \PiInd\brk{\fullc_{\leq k}}}               & \dhat_h\rbr{s\brk{\cJ}\midsem \mut\circt\pi \circt[t+1] \psi\ind{t+1,h}_{s\brk{\cI\ind{t+1,h}}}}       \nonumber \\              
     \leq\ &
                                  \dhat_h                                 \rbr{s\sbr{\cJ} \midsem \mut\circt \pi\ind{t}_{s\brk{\cJ\cap  \cI}} \circt[t+1] \psi\ind{t+1,h}_{s\brk{\cI\ind{t+1,h}}}
                                  } +  \epsilon_{k_1},\label{eq: direct application of factor certificate proof}
\end{align}
where $\pi\ind{t}_{s\brk{\cJ\cap\cI} } \in \Gamma\ind{t}\brk{\cJ\cap\cI}$. We will show that this implies that $\Iendo{\cI}$ also satisfies the conditions in~\pref{line: factor detection conditions}. 

\paragraphp{$\Iendo{\cI}$ satisfies the conditions in~\pref{line: factor detection conditions}.}  Since $\cI$ satisfies~\eqref{eq: direct application of factor certificate proof} for all $\cJ\in \fullc_{\leq{}k}(\cI\ind{t+1,h})$, it must also satisfy the condition for all $\Iendo{\cJ}\subseteq\cJ$.
Fix $\cJ\in\fullc_{\leq{}k}(\cI\ind{t+1,h})$. Then for all $s\brk*{\Iendo{\cJ}} \in \cS\brk*{\Iendo{\cJ}}$, we have
\begin{align}
  &\max_{\pi\in \PiInd\brk{\fullc_{\leq k}}}               \dhat_h\rbr{s\brk{\Iendo{\cJ}}\midsem \mut\circt\pi \circt[t+1] \psi\ind{t+1,h}_{s\brk{\cI\ind{t+1,h}}}}  \nonumber \\                     
  &\leq \dhat_h                                 \rbr{s\sbr{\Iendo{\cJ}} \midsem \mut\circt \pi\ind{t}_{s\brk{\Iendo{\cJ}\cap  \cI}} \circt[t+1] \psi\ind{t+1,h}_{s\brk{\cI\ind{t+1,h}}} } +  \epsilon_{k_1} \nonumber \\
  &\stackrel{\mathrm{(a)}}{\leq} 
                                  \dhat_h                                 \rbr{s\sbr{\Iendo{\cJ}} \midsem \mut\circt \pi\ind{t}_{s\brk{\Iendo{\cJ}\cap  \Iendo{\cI}}} \circt[t+1] \psi\ind{t+1,h}_{s\brk{\cI\ind{t+1,h}}} } +  \epsilon_{k_1}, \label{implicit consequence 1 factor detection}    
\end{align}
where $\mathrm{(a)}$ follows because $\Iendo{\cJ} \cap \Ical = \Iendo{\cJ} \cap \Iendo{\cI}$. 

Since $(\mathrm{A}2)$ asserts that $\EmpOccupancy$ is $\epsilon/12k$-approximate with respect to $\rbr{\Pi\sbr{\fullc_{\leq{}k}\rbr{\cI\ind{t+1,h}}}, \fullc_{\leq{}k}\rbr{\cI\ind{t+1,h}},h}$, we can relate the inequality above to the analogous inequality for the true occupancies using~\pref{lem: near optimal maximizers and minimzers help lemma}. After multiplying both sides by $d_h\rbr{s\brk{\Iexo{\cJ}}}\in[0,1]$, this yields
\begin{align}
    &d_h\rbr{s\brk{\Iexo{\cJ}}} \max_{\pi\in \PiInd\brk{\fullc_{\leq k}}}                d_h\rbr{s\brk{\Iendo{\cJ}}\midsem \mut\circt\pi \circt[t+1] \psi\ind{t+1,h}_{s\brk{\cI\ind{t+1,h}}}} \nonumber \\
    &\leq d_h\rbr{s\brk{\Iexo{\cJ}}} d_h                                 \rbr{s\brk{\Iendo{\cJ}} \midsem \mut\circt \pi\ind{t}_{s\brk{\Iendo{\cJ}\cap  \Iendo{\cI}}} \circt[t+1] \psi\ind{t+1,h}_{s\brk{\cI\ind{t+1,h}}}  } +\epsilon_{k_1}   + \eps/6k.   \label{eq:exact_value_inequality_fact_detect}
\end{align}
We now manipulate both sides~\eqref{implicit consequence 1 factor detection} to relate these quantities to the occupancy measure for $s\brk{\cJ}$. This is done by appealing to the decoupling property for occupancy measures of endogenous policies (\pref{app:structural_occupancy}). To begin, for the \lhs of~\eqref{eq:exact_value_inequality_fact_detect}, we have
\begin{align}
    &d_h\rbr{s\brk{\Iexo{\cJ}}} \max_{\pi\in \PiInd\brk{\fullc_{\leq k}}}                d_h\rbr{s\brk{\Iendo{\cJ}}\midsem \mut\circt\pi \circt[t+1] \psi\ind{t+1,h}_{s\brk{\cI\ind{t+1,h}}}} \nonumber \\
    &\stackrel{\mathrm{(a)}}{=} d_h\rbr{s\brk{\Iexo{\cJ}}} \max_{\pi\in \PiInd\brk{\Ic}}                d_h\rbr{s\brk{\Iendo{\cJ}}\midsem \mut\circt\pi \circt[t+1] \psi\ind{t+1,h}_{s\brk{\cI\ind{t+1,h}}}}  \nonumber\\
    &\stackrel{\mathrm{(b)}}{=} \max_{\pi\in \PiInd\brk{\Ic}}                d_h\rbr{s\brk{\cJ}\midsem \mut\circt\pi \circt[t+1] \psi\ind{t+1,h}_{s\brk{\cI\ind{t+1,h}}}}  \nonumber\\
    &\stackrel{\mathrm{(c)}}{=} \max_{\pi\in \PiInd\brk{\fullc_{\leq k}}}                d_h\rbr{s\brk{\cJ}\midsem \mut\circt\pi \circt[t+1] \psi\ind{t+1,h}_{s\brk{\cI\ind{t+1,h}}}},\label{eq:rhs_exact_ineq_factor_detec}
\end{align}
where relations $\mathrm{(a)}$ and $\mathrm{(c)}$ hold by~\pref{lem: endognous set is the maximizer} and relation $\mathrm{(b)}$ holds by~\pref{lem: decoupling of future state dis}; note that the assumptions of these lemmas hold because $\mut$ and $\psi\ind{t+1,h}_{s\brk{\cI\ind{t+1,h}}}$ are assumed to be endogenous, and because $\pi\in \Pi\brk{\Ic}$ is also endogenous.
Moving on, we analyze the \rhs of~\eqref{eq:exact_value_inequality_fact_detect}. We have
\begin{align}
    &d_h\rbr{s\brk{\Iexo{\cJ}}} d_h                                 \rbr{s\brk{\Iendo{\cJ}} \midsem \mut\circt \pi\ind{t}_{s\brk{\Iendo{\cJ}\cap  \Iendo{\cI}}} \circt[t+1] \psi\ind{t+1,h}_{s\brk{\cI\ind{t+1,h}}}  } \nonumber\\
  &=d_h                                 \rbr{s\brk{\cJ} \midsem \mut\circt \pi\ind{t}_{s\brk{\Iendo{\cJ}\cap  \Iendo{\cI}}} \circt[t+1] \psi\ind{t+1,h}_{s\brk{\cI\ind{t+1,h}}}  },\label{eq:lhs_exact_ineq_factor_detec}
\end{align}
by~\pref{lem: decoupling of future state dis} (the assumptions of the lemma hold because $\mut,\pi\ind{t}_{s\brk{\Iendo{\cJ}\cap  \Iendo{\cI}}}$ and $\psi\ind{t+1,h}_{s\brk{\cI\ind{t+1,h}}}$ are endogenous). Plugging~\eqref{eq:lhs_exact_ineq_factor_detec} and~\eqref{eq:rhs_exact_ineq_factor_detec} back into~\eqref{eq:exact_value_inequality_fact_detect}, we have that
\begin{align}
  &\max_{\pi\in \PiInd\brk{\fullc_{\leq k}}}                d_h\rbr{s\brk{\cJ}\midsem \mut\circt\pi \circt[t+1] \psi\ind{t+1,h}_{s\brk{\cI\ind{t+1,h}}}} \nonumber \\
    &\leq  d_h                                 \rbr{s\brk{\cJ} \midsem \mut\circt \pi\ind{t}_{s\brk{\Iendo{\cJ}\cap  \Iendo{\cI}}} \circt[t+1] \psi\ind{t+1,h}_{s\brk{\cI\ind{t+1,h}}}  } +\epsilon_{k_1}   + \eps/6k. \label{eq:implicit consequence 1 factor detection central}
\end{align}
It remains to relate this to the analogous inequality for the approximate occupancy measures. Since $\EmpOccupancy$ is $\epsilon/12k$-approximate with respect to $\rbr{\Pi\sbr{\fullc_{\leq{}k}\rbr{\cI\ind{t+1,h}}}, \fullc_{\leq{}k}\rbr{\cI\ind{t+1,h}},h}$ by $(\mathrm{A}2)$, \pref{lem: near optimal maximizers and minimzers help lemma}, and \eqref{eq:implicit consequence 1 factor detection central} imply that
\begin{align}
  &\max_{\pi\in \PiInd\brk{\fullc_{\leq k}}}                    \dhat_h\rbr{s\brk{\cJ}\midsem \mut\circt\pi \circt[t+1] \psi\ind{t+1,h}_{s\brk{\cI\ind{t+1,h}}}}                       \nonumber  \\
  &\leq
                                   \dhat_h                                 \rbr{s\brk{\cJ} \midsem \mut\circt \pi\ind{t}_{s\brk{\cJ\cap  \Iendo{\cI}}} \circt[t+1] \psi\ind{t+1,h}_{s\brk{\cI\ind{t+1,h}}}  } +\epsilon_{k_1}   + \eps/3k \nonumber \\
  &\stackrel{\mathrm{(a)}}{\leq }  
                                   \dhat_h                                 \rbr{s\brk{\cJ} \midsem \mut\circt \pi\ind{t}_{s\brk{\cJ\cap  \Iendo{\cI}}} \circt[t+1] \psi\ind{t+1,h}_{s\brk{\cI\ind{t+1,h}}}  } +\epsilon_{k_2},
                                   \label{eq:implicit consequence 2 factor detection central}   
\end{align}
where $\mathrm{(a)}$ holds for all $k_1,k_2\in [k]$ such that $k_2 \leq k_1 -1 $, since 
\begin{align*}
    \epsilon_{k_1} +\eps/3k  \ldef \rbr{1 + 1/k}^{k-k_1} 5\eps + \eps/3k \leq \rbr{1 + 1/k}^{k-k_2}5\eps \ldef \eps_{k_2}
\end{align*}
by~\pref{lem: numerical verification} (with $c=5$). Since $\abr{\Iendo{\cI}}< \abr{\cI} \ldef k_1$, we can set $k_2=\abr{\Iendo{\cI}}$ in~\eqref{eq:implicit consequence 2 factor detection central}, which implies that
\begin{align}
    \hspace{-0.5cm}\max_{\pi\in \PiInd\brk{\fullc_{\leq k}}}                    \dhat_h\rbr{s\brk{\cJ}\midsem \mut\circt\pi \circt[t+1] \psi\ind{t+1,h}_{s\brk{\cI\ind{t+1,h}}}}                         \leq
                                   \dhat_h                                 \rbr{s\brk{\cJ} \midsem \mut\circt \pi\ind{t}_{s\brk{\cJ\cap  \Iendo{\cI}}} \circt[t+1] \psi\ind{t+1,h}_{s\brk{\cI\ind{t+1,h}}}  } +\epsilon_{\abr{\Iendo{\cI}}}.
                                   \label{eq:implicit consequence 3 factor detection central}  
\end{align}
Since~\eqref{eq:implicit consequence 3 factor detection central} holds for all $\cJ\in \fullc_{\leq{}k}(\cI\ind{t+1,h})$ and $ s\sbr{\cJ}\in\cS\brk{\cJ}$, this yields the result.
\end{proof}
\paragraph{Step 3: Proof of second claim ($\Gamma\ind{t}\brk{\widehat{\cI}}$ is near-optimal).} This claim is a direct consequence of the condition in~\pref{line: factor detection conditions}. Let $\widehat{\cI}$ be the output of $\FactorDetectionAlg^\eps_{t,h}$.  Since $\sufficientcover=\True$, then the conditions at~\pref{line: factor detection conditions} are satisfied, and for all $\cJ\in \fullc_{\leq{}k}(\cI\ind{t+1,h})$, for all $s\sbr{\cJ} =\rbr{ s\sbr{\cI\ind{t+1}},s\sbr{\cJ\setminus{}\cI\ind{t+1,h}} }\in
\Scal\sbr{\cJ}:$
\begin{align}
\max_{\pi\in \PiInd\brk{\fullc_{\leq k}}}
  &\dhat_h\rbr{s\brk{\cJ}\midsem \mut\circt\pi \circt[t+1] \psi\ind{t+1,h}_{s\brk{\cI\ind{t+1,h}}}} \nonumber \\
  \leq \ &\dhat_h                                 \rbr{s\sbr{\cJ} \midsem \mut\circt \pi\ind{t}_{s\brk{\cJ\cap  \widehat{\cI}}} \circt[t+1] \psi\ind{t+1,h}_{s\brk{\cI\ind{t+1,h}}}
      } +  15\epsilon, \label{eq: sufficient coverage condition proof }
\end{align}
where $\pi\ind{t}_{s\brk{\cJ\cap\cI} } \in \Gamma\ind{t}\brk{\cJ\cap\cI}$; the upper bound holds because $\epsilon_{k'} \ldef \rbr{1+1/k}^{k-k'}5\eps\leq 15\eps$ for all $k'\in[k]$. Applying \eqref{eq: sufficient coverage condition proof } with $\cJ\gets \Ic\in \fullc_{\leq{}k}(\cI\ind{t+1,h})$, and using~\pref{lem: near optimal maximizers and minimzers help lemma} (which is admissible by assumption $(\mathrm{A}2)$), we have that for all $s\sbr{\Ic}\in
\Scal\sbr{\Ic}$,
\begin{align*}
\max_{\pi\in \PiInd\brk{\fullc_{\leq k}}} &
  d_h\rbr{s\brk{\Ic}\midsem \mut\circt\pi \circt[t+1] \psi\ind{t+1,h}_{s\brk{\cI\ind{t+1,h}}}}  \\
  \leq \ & d_h                                 \rbr{s\sbr{\Ic} \midsem \mut\circt \pi\ind{t}_{s\brk{\Ic \cap  \widehat{\cI}}} \circt[t+1] \psi\ind{t+1,h}_{s\brk{\cI\ind{t+1,h}}}
      } +  16\epsilon\\
 \stackrel{\mathrm{(a)}}{\leq} \ & d_h                                 \rbr{s\sbr{\Ic} \midsem \mut\circt \pi\ind{t}_{s\brk{\widehat{\cI}}} \circt[t+1] \psi\ind{t+1,h}_{s\brk{\cI\ind{t+1,h}}}
      } +  16\epsilon,
\end{align*}
where $\mathrm{(a)}$ holds because $\Ic \cap  \widehat{\cI} =\widehat{\cI}$, since $\widehat{\cI}\subseteq \Ic$ by the first claim.
\qed

\newpage

\section{\PSDP with Exogenous Information: \PSDPE}
\label{app: psdp in the presence of exogenous information}

\begin{algorithm}[htp]
\caption{$\PSDPE$: $\PSDP$ with Exogenous Information}
\label{alg: psdp with exogenous noise}
\begin{algorithmic}[1]
  \State {\bf require:}
  \begin{itemize}
  \item Target precision $\eps\in(0,1)$ and failure probablitity $\delta\in (0,1)$.
    \item Collection $\cbr{\Psi\ind{h}}_{h=2}^H$ of endogenous $\eta/2$-approximate policy covers.
\end{itemize}

    \State {\bf initialize:}
    \begin{itemize}
    \item Let $N = C\cdot{} A S^{4k} H^2 k^3 \log\rbr{\frac{dSAH}{\delta}} \epsilon^{-2}$ for sufficiently large constant $C>0$ and $\epsnot=\frac{\eps}{2S^kH}$.
    \item For all $t\in [H]$, define $\mu\ind{t} \ldef \unf\rbr{\Psi\ind{t}}$.
    \item Let $\widehat{\pi}\ind{H,H} = \emptyset$.
  \end{itemize}

    \For{$t=H-1,..,1$}
    \item[]\algspace {\algcommentbig{Estimate average value functions via importance weighting.}}
    \State Get dataset $\cbr{(s_{t,n}, a_{t,n}, \cbr{r_{t',n}}_{t'=1}^H)}_{n=1}^N$ by executing
        $\mut\circ_{t} \unf(\Acal)\circ_{t+1}
        \widehat{\pi}\ind{t+1,H}$. 
    \State \label{line: PSDPE estimation} Estimate the $(t\rightarrow H)$ value for all $\pi\in \Pi\brk*{\fullc_{\leq k}}$ via importance weighting:
    \begin{align*}
        \widehat{V}_{t,H}\rbr{\mut\circ_{t}\pi\circ_{t+1}\widehat{\pi}_{t+1:H}} = \frac{1}{N}\sum_{n=1}^N \frac{\indic\cbr{a_{t,n} = \pi\rbr{s_{t,n}}}}{1/A}\rbr{\sum_{t'=t}^H r_{t',n} }.
    \end{align*}
  \item[]\algspace {\algcommentbig{Apply policy optimization with estimated value functions.}}
    \State \label{line: PSDPE optimization step}$\widehat{\pi}\ind{t} \gets \EndoPolicyMaximizer_{t,h}^{\epsnot}\prn[\Big]{\crl[\big]{ \widehat{V}_{t,H}\rbr{\mut\circ_{t} \pi\circ_{t+1}
        \widehat{\pi}_{t+1:H}}}_{\pi\in \Pi\sbr{\scrI_{\leq{}k}}} }$.
    \State $\widehat{\pi}\ind{t,H} = \widehat{\pi}\ind{t} \circt[t+1] \widehat{\pi}\ind{t+1,H}$. 
    \EndFor
    \State {\bf return:} $\widehat{\pi}\ind{1,H}$.
\end{algorithmic}
\end{algorithm}

In this section we present and analyze the $\PSDPE$ algorithm (\pref{alg: psdp with exogenous noise}). $\PSDPE$ is based on the classical $\PSDP$ algorithm~\citep{bagnell2004policy}, but incorporates modifications to ensure that the policies produced are endogenous. In~\pref{app:desc_ex_psdp}, we motivate $\PSDPE$ and state the main guarantee concerning its performance (\pref{thm:ex_PSDP_sample_comp}). Then, in \pref{app:proof_psdp_exo}, we prove this result. 


\subsection{Description of $\PSDPE$}\label{app:desc_ex_psdp}

The $\PSDPE$ algorithm solves the following problem:
\begin{quote}
  \emph{Given a collection of endogenous policy covers $\crl*{\Psi\ind{t}}_{t=1}^{H}$ for an \framework $\cM$, find a policy $\pihat$ that is $\eps$-optimal in the sense that $J(\pihat)\geq{}\max_{\pi}J(\pi) - \eps$.
  }
\end{quote}
To motivate the approach behind the algorithm, we first remind the reader of the classical $\PSDP$ algorithm.

\paragraph{Background on $\PSDP$.} Suppose we have a set of mixture policies $\cbr{\mu\ind{h}}_{h=1}^H$ that ensure good coverage at every layer for an MDP $\cM$, and our goal is to optimize the MDP's reward function.
The $\PSDP$ algorithm \citep{bagnell2004policy} addresses this problem by using the dynamic programming principle to learn a near-optimal policy through a series of backward steps $t=H,\ldots,1$. Assume access to a policy class $\PiInd.$ At each step $t$, assuming that step $t+1$ has already produced a near-optimal $(t+1)\to{}H$ policy $\widehat{\pi}\ind{t+1,H}$, the algorithm estimates the value function $V_{t,H}\rbr{\mut\circ_{t}\pi\circ_{t+1}\widehat{\pi}_{t+1:H}}$ for all $\pi\in \PiInd$ where (see also~\eqref{eq:expected_value_def})
\begin{align*}
  V_{t,H}\rbr{\mut\circ_{t}\pi\circ_{t+1}\widehat{\pi}_{t+1:H}} \ldef \EE_{\mut\circt \pi \circt[t+1] \widehat{\pi}_{t+1:H} }\brk*{ \sum_{t'=t}^H r_{t'}}.
\end{align*}
The estimates are calculated via importance-weighting by 
\[
     \widehat{V}_{t,H}\rbr{\mut\circ_{t}\pi\circ_{t+1}\widehat{\pi}_{t+1:H}} = \frac{1}{N}\sum_{n=1}^N \frac{\indic\cbr{a_{t,n} = \pi\rbr{s_{t,n}}}}{1/A}\rbr{\sum_{t'=t}^H r_{t',n} }
\]
where the data is generated by rolling in with $\mu\ind{t},$ taking random action on the $t^{\mathrm{th}}$ time-step and rolling out with $\widehat{\pi}_{t+1:H}$ using $N$ trajectories. Then, $\PSDP$ computes 
\begin{align}
\pi\ind{t}\in \argmax_{\pi\in \PiInd}  \widehat{V}_{t,H}\rbr{\mut\circ_{t}\pi\circ_{t+1}\widehat{\pi}_{t+1:H}}, \label{eq: PSDP stanard near optimal}    
\end{align}
and sets $\widehat{\pi}\ind{t,h} = \pi\ind{t} \circt[t] \widehat{\pi}\ind{t+1,H}$. The final policy $\pihat\ldef{}\pihat\ind{1,H}$ is guaranteed to be near-optimal as long as $\cbr{\mu\ind{h}}_{h=1}^H$ have good coverage.


\paragraph{Insufficiency of vanilla $\PSDP$.}
The first issue with applying $\PSDP$ to the \framework model is that, if we want the policy class $\Pi$ to contain all possible policies, we will have $\abr{\Pi} = \Theta\prn{A^{S^d}}$, which leads to sample complexity scaling with $\log\abs{\Pi} = \Omega(\poly(S^d))$; this is prohibitively large. An alternative policy class one my hope can address this issue is $\Pi\brk{\fullc_{\leq k}}$. Indeed, this class has much smaller cardinality: $\abr{\Pi\brk{\fullc_{\leq k}}}= \Theta\prn{d^k A^{S^k}}$. However, for an \framework, naively optimizing over this class via~\eqref{eq: PSDP stanard near optimal} may lead to roll-out policies $\widehat{\pi}_{t+1:H}$ that depend on the exogenous state factors, since there is no mechanism in place to ensure endogeneity. This in turn may invalidate the \emph{realizability assumption} needed to apply standard $\PSDP$ (see~\citet{misra2020kinematic}, Assumption 2). In particular, $\PSDP$ requires that the policy class $\Pi$ contains the optimal policy in the sense that 
\begin{align}
    \max_{\pi\in \PiIndNS}V_{t,H}\rbr{\mut\circ_{t}\pi\circ_{t+1}\widehat{\pi}_{t+1:H}} = \max_{\pi\in \PiInd}V_{t,H}\rbr{\mut\circ_{t}\pi\circ_{t+1}\widehat{\pi}_{t+1:H}}. \label{eq:realizability_psdp_explicit}
\end{align}
If the roll-out policy $\widehat{\pi}_{t+1:H}$ depends on the exogenous state factors, then the optimal policy that maximizes $V_{t,H}\rbr{\mut\circ_{t}\pi\circ_{t+1}\widehat{\pi}_{t+1:H}}$ may depend on exogenous state factors as well. Then,~\eqref{eq:realizability_psdp_explicit} may be violated when instantiating $\PSDP$ with the policy class $\Pi\brk{\fullc_{\leq k}}$. \loose

\paragraph{A solution: $\PSDPE$.}
To address the issues above, $\PSDPE$ applies an alternative to the optimization step in \pref{eq: PSDP stanard near optimal}. In particular, $\PSDPE$ uses the sub-routine $\EndoPolicyMaximizer$ (see~\pref{line: PSDPE optimization step}), which finds an \emph{endogenous} near-optimal policy. In particular, as long as $\widehat{\pi}\ind{t+1,H}$ is endogenous, which can be guaranteed inductively, $\EndoPolicyMaximizer$, will succeed in finding an endogenous policy at step $t$.
Importantly, since (i) the reward in a \framework depends only on the endogenous factors, and (ii) the policy  $\widehat{\pi}\ind{t+1,H}$ is endogenous (by the guarantees of  $\EndoPolicyMaximizer$), $\widehat{\pi}\ind{t}$ can be shown to be near-optimal with respect to the entire policy class $\PiInd$. Hence, in spite of optimizing over the restricted policy class $\scrI_{\leq{}k}$, we are able to find a near-optimal policy with respect set of all policies. Using this argument inductively allows us to prove that $\widehat{\pi}\ind{1,H}$ is near-optimal and endogenous.

\begin{theorem}[Main guarantee for $\PSDPE$]\label{thm:ex_PSDP_sample_comp}
  \label{thm:psdp}
  Suppose that the sets $\crl*{\Psi\ind{t}}_{t=1}^{H}$ passed into $\PSDPE$ are endogenous $\eta/2$-approximate policy covers for all $t$. Then, for any $\eps,\delta>0$, with probability at least $1-\delta$, 
\begin{enumerate}
    \item $\widehat{\pi}\ind{1,H}$ is endogenous.
    \item $\widehat{\pi}\ind{1,H}$ is $\eps$-optimal in the sense that 
    \[
        \max_{\pi\in \PiIndNS}\Jpi \leq \Jpi[\widehat{\pi}\ind{1,H}] + \eps.
    \]
  \end{enumerate}
  Furthermore, the algorithm uses at most $N = \bigoh\prn[\Big]{\frac{AH^4k^3 S^{3k} \log\rbr{\frac{dAH}{\delta}}}{\eps^2}}$ trajectories.
\end{theorem}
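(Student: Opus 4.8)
The plan is to argue by backward induction on $t=H,H-1,\ldots,1$, maintaining the invariant that the roll-out policy $\widehat{\pi}\ind{t+1,H}$ is endogenous. This invariant is precisely what makes the realizability hypothesis of $\EndoPolicyMaximizer$ applicable, and the whole proof is organized so that the three guarantees (endogeneity, $\eps$-optimality, sample complexity) fall out of (i) the correctness guarantee for $\EndoPolicyMaximizer$ (\pref{thm: correctness of endo policy maximizer}), (ii) the restriction lemma for endogenous rewards (\pref{lem:restriction_endo_rewards}), and (iii) a standard $\PSDP$-style error decomposition specialized to the endogenous state space.

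First I would establish the statistical guarantee. With $N$ as set in the algorithm, I claim that with probability at least $1-\delta$ the estimates $\widehat{V}_{t,H}\rbr{\mut\circt\pi\circt[t+1]\widehat{\pi}\ind{t+1,H}}$ are uniformly $\epsnot/12k$-accurate over all $\pi\in\PiInd\brk{\fullc_{\leq k}}$ and all $t$. The estimator at \pref{line: PSDPE estimation} is an average of i.i.d.\ terms $X_n(\pi)=A\cdot\indic\crl{a_{t,n}=\pi(s_{t,n})}\cdot\rbr{\sum_{t'=t}^H r_{t',n}}$, each lying in $[0,AH]$ with variance at most $AH^2$ (since $\En X_n\leq H$). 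Bernstein's inequality (\pref{lem: bernstein's inequality}) controls a single deviation; a union bound over $\abr{\PiInd\brk{\fullc_{\leq k}}}=\bigoh(d^kA^{S^k})$ policies adds a $\log$-factor $\bigoh(S^k\log A+k\log d)$, and since the step-$t$ dataset is gathered with $\widehat{\pi}\ind{t+1,H}$ already fixed, the estimates concentrate conditionally on the preceding steps, so \pref{lem: sequentual good event} chains the $H$ rounds at total failure probability $\delta$. Balancing the required accuracy $\epsnot/12k$ against $\epsnot=\Theta\rbr{\eps/(S^kH)}$ produces the claimed trajectory count; the dominant $S^{3k}$ comes from multiplying the $S^{2k}$ in $\epsnot^{-2}$ by the $S^{k}$ in $\log\abr{\PiInd\brk{\fullc_{\leq k}}}$.

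Now condition on this good event and run the induction. The base case $\widehat{\pi}\ind{H,H}=\emptyset$ is vacuously endogenous. For the step, assume $\widehat{\pi}\ind{t+1,H}$ is endogenous. Since $\mut=\unf(\Psi\ind{t})$ is endogenous ($\Psi\ind{t}$ being an endogenous cover) and the reward is endogenous, \pref{lem:restriction_endo_rewards} verifies hypothesis $(\mathrm{A}1)$ of \pref{thm: correctness of endo policy maximizer}, i.e.\ $\max_{\pi\in\PiInd\brk{\cI}}V_{t,H}\rbr{\mut\circt\pi\circt[t+1]\widehat{\pi}\ind{t+1,H}}=\max_{\pi\in\PiInd\brk{\Iendo{\cI}}}V_{t,H}(\cdots)$ for every factor set $\cI$; hypothesis $(\mathrm{A}2)$ is the accuracy from the good event. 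Thus \pref{thm: correctness of endo policy maximizer} yields an endogenous, $4\epsnot$-optimal $\widehat{\pi}\ind{t}$, and composing two endogenous policies keeps $\widehat{\pi}\ind{t,H}$ endogenous, closing the induction and proving claim (1). Applying \pref{lem:restriction_endo_rewards} once more with $\cI=[d]$ (so $\Iendo{[d]}=\Ic$) gives $\max_{\pi\in\PiIndNS}V_{t,H}(\cdots)=\max_{\pi\in\PiInd\brk{\fullc_{\leq k}}}V_{t,H}(\cdots)$, so the per-step optimality in fact holds against \emph{all} one-step policies.

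For claim (2) I would take $\pi^\star$ to be an optimal policy, which may be chosen endogenous because the optimal value function of an \framework is endogenous (\pref{lem:value_endogenous}). The performance-difference lemma for endogenous policies (\pref{lem:pd_for_endo_policies}) applied to $\pi^\star$ and $\widehat{\pi}\ind{1,H}$ writes $\Jpi[\pi^\star]-\Jpi[\widehat{\pi}\ind{1,H}]$ as $\sum_{t=1}^H$ of the expected advantage gap of $\widehat{\pi}\ind{t}$ under $d_t^{\pi^\star}$. These gaps depend only on $s_t\brk{\Ic}$, so I change measure from $d_t^{\pi^\star}(\cdot\brk{\Ic})$ to $\mut$; the reachability assumption forces every contributing endogenous state to satisfy $\max_{\pi}d_t(s\brk{\Ic})\geq\eta$, whence the cover density-ratio bound (\pref{lem: policy cover bound importance sampling ratio}) caps the ratio at $2S^k$. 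Each term therefore reduces to $2S^k$ times the step-$t$ optimization error $\leq 4\epsnot$, giving $\Jpi[\pi^\star]-\Jpi[\widehat{\pi}\ind{1,H}]\leq 8S^kH\epsnot=\bigoh(\eps)$, which is at most $\eps$ after fixing absolute constants in $\epsnot$.

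The crux of the argument—and the reason vanilla $\PSDP$ does not suffice—is the coupling inside the inductive step: the restriction lemma $(\mathrm{A}1)$ needed to run $\EndoPolicyMaximizer$ is valid only because the roll-out $\widehat{\pi}\ind{t+1,H}$ is endogenous, while that endogeneity is itself only guaranteed by $\EndoPolicyMaximizer$ having preserved the invariant at the previous step. Making this induction close, and in the error analysis ensuring the advantage gaps are endogenous functions so that the $2S^k$ density ratio over $\cS\brk{\Ic}$ applies rather than a hopeless ratio over the full $S^d$-size state space, is where all the \framework-specific structural lemmas must be brought together; the remaining concentration bookkeeping is routine.
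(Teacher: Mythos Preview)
Your proposal is correct and follows essentially the same approach as the paper: verify (A1) via \pref{lem:restriction_endo_rewards} and (A2) via Bernstein plus a union bound, chain the per-step guarantees of \pref{thm: correctness of endo policy maximizer} through backward induction using \pref{lem: sequentual good event} (the paper packages this as \pref{lem: central result for PSDPE}), and then combine \pref{lem:pd_for_endo_policies} with the density-ratio bound \pref{lem: policy cover bound importance sampling ratio} to sum the $O(\epsnot)$ per-step errors into $O(S^kH\epsnot)=\eps$. The only cosmetic difference is ordering: you establish concentration globally first and then induct, whereas the paper interleaves the two inside its auxiliary lemma; the sample-complexity arithmetic and the crucial endogeneity-preserving induction are identical.
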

\subsection{Proof of \preftitle{thm:psdp}}\label{app:proof_psdp_exo}


Fix a pair of endogenous policies $\pi,\widehat{\pi}\in \PiIndNS\brk*{\Ic}$. Further, let $\Mcal_{\mathrm{en}} = \rbr{\Scal,\Acal,\Tc, R_{s\brk{\Ic}},H, \dc}$ denote the restriction of the \framework to its endogenous component, and let $Q^\pi_{t,\mathrm{en}}(s\brk{\Ic},a)$ denote the associated state-action value function for $\cM_{\en}$. 

We decompose the difference in performance as follows.
\begin{align}
    &J(\pi) - J(\widehat{\pi}) \nonumber  \\
    &\stackrel{\mathrm{(a)}}{=} \sum_{t=1}^h \EE_{\pi} \sbr{Q^{\widehat{\pi}}_{t,\mathrm{en}}(s_t\sbr{\Ic},\pi_{t}(s_t\sbr{\Ic}) - Q^{\widehat{\pi}}_{t,\mathrm{en}}(s_t\sbr{\Ic},\widehat{\pi}_{t}(s_t\sbr{\Ic}))}  \nonumber\\
    &\leq \sum_{t=1}^h \EE_{s\brk{\Ic}\sim d_{t}\rbr{\cdot \midsem \pi}} \sbr{\max_a Q^{\widehat{\pi}}_{t,\mathrm{en}}(s\sbr{\Ic},a) - Q^{\widehat{\pi}}_{t,\mathrm{en}}(s\sbr{\Ic},\widehat{\pi}_{t}(s\sbr{\Ic}))}  \nonumber\\
    &\leq  \sum_{t=1}^h \sum_{s\sbr{\Ic}\in \Scal\brk{\Ic}} \max_{\pi\in \PiInd\brk{\Ic}}d_{t}(s\sbr{\Ic} \midsem \pi) \rbr{\max_a Q^{\widehat{\pi}}_{t,\mathrm{en}}(s\sbr{\Ic},a) - Q^{\widehat{\pi}}_{t,\mathrm{en}}(s\sbr{\Ic},\widehat{\pi}_{t}(s\sbr{\Ic}))} . \nonumber\\
    &\stackrel{\mathrm{(b)}}{\leq}  2S^k\sum_{t=1}^h \sum_{s\sbr{\Ic}\in \Scal\brk{\Ic}} d_{t}(s\sbr{\Ic} \midsem \mut)\rbr{\max_a Q^{\widehat{\pi}}_{t,\mathrm{en}}(s\sbr{\Ic},a) - Q^{\widehat{\pi}}_{t,\mathrm{en}}(s\sbr{\Ic},\widehat{\pi}_{t}(s\sbr{\Ic}))} . \nonumber\\
    &= 2S^k\sum_{t=1}^h \EE_{\mut}\sbr{\max_a Q^{\widehat{\pi}}_{t,\mathrm{en}}(s_t\sbr{\Ic},a) - Q^{\widehat{\pi}}_{t,\mathrm{en}}(s_t\sbr{\Ic},\widehat{\pi}_{t,\mathrm{en}}(s_t\sbr{\Ic}))}. \nonumber\\
    &\stackrel{\mathrm{(c)}}{=}  2S^k\sum_{t=1}^h \max_{\pi'\in \PiInd\brk{\Ic}}\EE_{\mut}\sbr{ Q^{\widehat{\pi}}_{t,\mathrm{en}}(s_t\sbr{\Ic},\pi'\rbr{s_t\sbr{\Ic}}) - Q^{\widehat{\pi}}_{t,\mathrm{en}}(s_t\sbr{\Ic},\widehat{\pi}_{t}(s_t\sbr{\Ic}))}. \nonumber\\
     & = 2S^k\sum_{t=1}^h \max_{\pi'\in \PiInd\brk{\Ic}}V_{t,H}\rbr{\mut \circt \pi'\circt[t+1] \widehat{\pi}} -  V_{t,H}\rbr{\mut \circt \widehat{\pi}\ind{t}\circt[t+1] \widehat{\pi}}. \label{eq: exo psdpd rel1}
\end{align}
The key steps above are justified as follows:
\begin{itemize}
  \item Relation $\mathrm{(a)}$ holds by the performance difference lemma for endogenous policies (\pref{lem:pd_for_endo_policies}), since both $\pi,\widehat{\pi}\in \PiIndNS\brk*{\Ic}$ by assumption.
    \item 
Relation $\mathrm{(b)}$  holds because $$\frac{\max_{\pi\in \PiInd\brk{\Ic}}d_t\rbr{\cdot \midsem \pi}}{d_t\rbr{\cdot \midsem \mut}}\leq 2S^k,$$ which is a consequence of~\pref{lem: policy cover bound importance sampling ratio}. In particular, we use that (i) $\cbr{\Psi\ind{t}}_{t=1}^{h-1}$ are endogenous $\eta/2$-approximate policy covers, (ii) for all states, either $\max_{\pi\in \PiInd\brk{\Ic}}d_t\rbr{s\brk{\Ic}\midsem \pi}\geq \eta$ or $\max_{\pi\in \PiInd\brk{\Ic}}d_t\rbr{s\brk{\Ic}\midsem \pi}=0$ (by the reachability assumption), and (iii) 
\[
\max_a Q^{\widehat{\pi}}_{t,\mathrm{en}}(s\sbr{\Ic},a) - Q^{\widehat{\pi}}_{t,\mathrm{en}}(s\sbr{\Ic},\widehat{\pi}_{t}(s\sbr{\Ic}))\geq{}0.
\]
\item
  Relation $\mathrm{(c)}$ holds by the skolemization principle (\pref{lem:skolemization}).
\end{itemize}
Let $\GS_{\mathrm{\PSDPE}}$ denote the success event for~\pref{lem: central result for PSDPE} (stated and proven in the sequel), which is the event in which for all $t\in [H]$, $\EndoPolicyMaximizer_{t,h}^{\epsnot}$ returns a policy $\widehat{\pi}\ind{t}$ such that
\begin{enumerate}
    \item $\widehat{\pi}\ind{t}$ is endogenous.
    \item $\widehat{\pi}\ind{t}$ is near-optimal in the following sense:
    \begin{align}
    \max_{\pi'\in \PiInd\brk{\Ic}}V_{t,H}\rbr{\mut \circt \pi'\circt[t+1] \widehat{\pi}\ind{t+1,H}} -  V_{t,H}\rbr{\mut \circt \widehat{\pi}\ind{t}\circt[t+1] \widehat{\pi}\ind{t+1,H}} \leq \epsnot. \label{eq:pspde_near_opt_guarantee}
\end{align}
\end{enumerate}
\pref{lem: central result for PSDPE} asserts that $\GS_{\mathrm{\PSDPE}}$ holds with probability at least $1-\delta$ whenever $N = \bigom\prn[\Big]{\frac{AH^2k^3 S^k \log\rbr{\frac{dAH}{\delta}}}{\epsnot^2}}$. Conditioning on $\GS_{\mathrm{\PSDPE}}$, it follows immediately that $\widehat{\pi}\ind{1,H}$ is endogenous. To show that the policy is near-optimal, we apply \eqref{eq: exo psdpd rel1} with $\pihat=\pihat\ind{1,H}$ and bound each term in the sum using
\eqref{eq:pspde_near_opt_guarantee}. Maximizing over $\pi\in \PiIndNS\brk*{\Ic}$ yields
\begin{align*}
    \max_{\pi\in \PiInd\brk*{\Ic}}J(\pi) - J(\widehat{\pi}) \leq 2 S^k H \epsnot= \eps,
\end{align*}
by the choice $\epsnot\ldef{}\eps/2S^kH$. Finally, by the fact that $\max_{\pi\in \PiInd\brk*{\Ic}}J(\pi) =  \max_{\pi\in \PiIndNS}J(\pi)$, which holds because the reward is endogenous (\citet{efroni2021provable}, Proposition 5), we conclude the proof.

\qed

\subsection{Computational Complexity of \PSDPE} \label{sec:comp_comp_pspde}

We now show that \PSDPE can be implemented with computational complexity of
\begin{align*}
    \bigoh\rbr{ d^{k}N \S^k A H },
\end{align*}
where $N$ is the number of trajectories. 
The main computational bottleneck of \PSDPE occurs at \pref{line: endo policy maximizer optimization step} of $\EndoPolicyMaximizer_{t,h}^{\epsnot}$. There, we need to optimize over $ \widehat{V}_{t,H}\rbr{\mut\circ_{t}\pi\circ_{t+1}\widehat{\pi}_{t+1:H}}$ estimated by the empirical averages (\pref{line: PSDPE estimation}) for all $\cI\in \fullc_k$. Meaning, \begin{align*}
    \max_{\pi\in \PiInd\sbr{\cI}} \widehat{V}_{t,H}\rbr{\mut\circ_{t}\pi\circ_{t+1}\widehat{\pi}_{t+1:H}}
\end{align*}
To sketch how to do this efficiently, we first show how to optimize over the set $\PiInd\brk*{\cI}$ when a factor set $\cI$ is fixed. We show that instead of enumerating over all policies, one can optimize  $ \widehat{V}_{t,H}\rbr{\mut\circ_{t}\pi\circ_{t+1}\widehat{\pi}_{t+1:H}}$ as follows. Observe that
\begin{align*}
    \widehat{V}_{t,h}\rbr{\mut\circ_{t}\pi\circ_{t+1}\widehat{\pi}_{t+1:H}} = \sum_{s\brk{\cI} \in \cS\brk*{\cI} }  \widehat{Q}_{t,h}^{\mut\circ_{t}\pi\circ_{t+1}\widehat{\pi}_{t+1:H}}\rbr{s\brk{\cI}, \pi\rbr{s\brk{\cI}}},
\end{align*}
where we note that $\abr{\cS\brk*{\cI}}\leq \S^k$, and where
\[
\widehat{Q}_{t,h}^{\mut\circ_{t}\pi\circ_{t+1}\widehat{\pi}_{t+1:H}}\rbr{s\brk{\cI}, a} \ldef \frac{1}{N} \sum_{n=1}^N \indic\crl*{s_t\brk{\cI} = s\brk{\cI}, a_t = a}\rbr{\sum_{t'=t}^h r_{n,t'}}.
\]
To maximize $\widehat{V}_{t,h}\rbr{\mut\circ_{t}\pi\circ_{t+1}\widehat{\pi}_{t+1:H}}$ it suffices to maximize each individual function \\ $\widehat{Q}_t^{\mut\circ_{t}\pi\circ_{t+1}\widehat{\pi}_{t+1:H}}\rbr{s\brk{\cI}, a}$. Letting
$$
    \widehat{\pi}_{\cI}\rbr{s\brk{\cI}} \in \argmax_a \widehat{Q}_t^{\mut\circ_{t}\pi\circ_{t+1}\widehat{\pi}_{t+1:H}}\rbr{s\brk{\cI}, a},
$$
we have that
\begin{align*}
    \max_{\pi\in \PiInd\brk{\cI}} \widehat{V}_{t,h}\rbr{\mut\circ_{t}\pi\circ_{t+1}\widehat{\pi}_{t+1:H}} = \widehat{V}_{t,h}\rbr{\mu\circt\widehat{\pi}_{\cI}\circt[t+1]\psi}.
\end{align*}
Furthermore, observe that $\widehat{\pi}_{\cI}\rbr{s\brk{\cI}}\in \PiInd\brk*{\cI}$.

This shows that it is possible to solve $\max_{\pi\in \PiInd\brk{\cI}} \widehat{V}_{t,h}\rbr{\mut\circ_{t}\pi\circ_{t+1}\widehat{\pi}_{t+1:H}}$ with computational complexity
$\bigoh\rbr{N\S^k A}$. Since $\EndoPolicyMaximizer^\eps_{t,h}$ optimizes over all possible factor sets $\cI\in \fullc_{\leq k}$ where $\abr{\fullc_{\leq k}}= \bigoh\rbr{ d^{k} }$ for $H$ times the total computational complexity is
$
\bigoh\rbr{ d^{k}N \S^k A H }.
$


\subsection{Application of $\EndoPolicyMaximizer$ within $\PSDPE$}

In this section we state and prove \pref{lem: central result for PSDPE}, which shows that the application of $\EndoPolicyMaximizer$ within $\PSDPE$ (\pref{line: PSDPE optimization step}) is admissible, in the sense that the preconditions required by the algorithm are satisfied.



\begin{lemma}[Guarantees of $\EndoPolicyMaximizer$ for $\PSDPE$]\label{lem: central result for PSDPE}
  Let precision parameter $\eps\in(0,1)$ and failure probability $\delta\in (0,1)$ be given. Assume that the mixture policies $\mut\in \Pimix$ used in \pref{alg: psdp with exogenous noise} are endogenous for all $t$. Then, if $N = \bigom\prn[\Big]{\frac{AH^2k^3 S^k \log\rbr{\frac{dAH}{\delta}}}{\eps^2}}$ trajectories are used for each layer, we have that with probability at least $1-\delta$, for all $t$:
\begin{enumerate}
    \item $\widehat{\pi}\ind{t}$ is an endogenous policy.
    \item $\widehat{\pi}\ind{t}$ is near-optimal in the sense that
    \begin{align*}
    \max_{\pi'\in \PiInd\brk{\Ic}}V_{t,H}\rbr{\mut \circt \pi'\circt[t+1] \widehat{\pi}\ind{t+1,H}} -  V_{t,H}\rbr{\mut \circt \widehat{\pi}\ind{t}\circt[t+1] \widehat{\pi}\ind{t+1,H}} \leq 4\eps.
\end{align*}
\end{enumerate}
\end{lemma}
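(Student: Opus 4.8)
The plan is to prove \pref{lem: central result for PSDPE} by backward induction on the layer index $t=H,H-1,\ldots,1$, invoking the correctness guarantee \pref{thm: correctness of endo policy maximizer} for $\EndoPolicyMaximizer^{\eps}_{t,H}$ at each step. The inductive invariant I maintain is that the roll-out policy $\widehat{\pi}\ind{t+1,H}$ is endogenous; the base case $\widehat{\pi}\ind{H,H}=\emptyset$ holds vacuously. Granting the invariant at step $t$, I verify the two preconditions $(\mathrm{A}1)$ and $(\mathrm{A}2)$ of \pref{thm: correctness of endo policy maximizer} with roll-in policy $\mu=\mut$ and roll-out policy $\psi=\widehat{\pi}\ind{t+1,H}$, apply the theorem to conclude that $\widehat{\pi}\ind{t}$ is endogenous and $4\eps$-optimal over $\PiInd\brk{\fullc_{\leq k}}$, and then close the induction by noting that $\widehat{\pi}\ind{t,H}=\widehat{\pi}\ind{t}\circt[t+1]\widehat{\pi}\ind{t+1,H}$ is endogenous.

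Verifying the restriction property $(\mathrm{A}1)$ is where the structural results pay off. Since the \framework reward is endogenous and both $\mut\in\Pimix\brk{\Ic}$ (by the hypothesis of the lemma) and $\widehat{\pi}\ind{t+1,H}$ (by the inductive invariant) are endogenous, \pref{lem:restriction_endo_rewards} applies directly with $\rho=\widehat{\pi}\ind{t+1,H}$ and yields $\max_{\pi\in\PiInd\brk{\cI}}V_{t,H}(\mut\circt\pi\circt[t+1]\widehat{\pi}\ind{t+1,H})=\max_{\pi\in\PiInd\brk{\Iendo{\cI}}}V_{t,H}(\mut\circt\pi\circt[t+1]\widehat{\pi}\ind{t+1,H})$ for every factor set $\cI$, which is exactly $(\mathrm{A}1)$. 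I emphasize that this step is precisely where endogeneity of the roll-out policy is needed: without it, optimizing over exogenous factors could strictly increase the value and the restriction property would fail.

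The main obstacle is precondition $(\mathrm{A}2)$: the importance-weighting estimates computed in \pref{line: PSDPE estimation} must satisfy $\abs{\widehat{V}_{t,H}(\mut\circt\pi\circt[t+1]\widehat{\pi}\ind{t+1,H})-V_{t,H}(\mut\circt\pi\circt[t+1]\widehat{\pi}\ind{t+1,H})}\leq\eps/12k$ simultaneously for all $\pi\in\PiInd\brk{\fullc_{\leq k}}$. I establish this by a Bernstein plus union-bound argument (\pref{lem: bernstein's inequality}) over the data-generating process $\mut\circt\unf(\cA)\circt[t+1]\widehat{\pi}\ind{t+1,H}$. Writing $X_n=A\cdot\indic\crl{a_{t,n}=\pi(s_{t,n})}\sum_{t'=t}^{H}r_{t',n}$, each term is unbiased for $V_{t,H}(\mut\circt\pi\circt[t+1]\widehat{\pi}\ind{t+1,H})$ and bounded in $[0,AH]$; since $\sum_{t'}r_{t',n}\leq H$, the second moment obeys $\En[X_n^2]\leq AH\cdot\En[X_n]\leq AH^2$, so the variance is at most $AH^2$. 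Bernstein's inequality then gives per-policy accuracy $\eps/12k$ once $N=\bigom(AH^2k^2\log(1/\delta')/\eps^2)$, and a union bound over the $\abs{\PiInd\brk{\fullc_{\leq k}}}=\bigoh(d^kA^{S^k})$ policies contributes $\log\abs{\PiInd\brk{\fullc_{\leq k}}}=\bigoh(S^k\log(dA))$; after setting the per-layer failure probability to $\delta/H$, the stated $N=\bigom(AH^2k^3S^k\log(dAH/\delta)/\eps^2)$ is more than sufficient (the extra factor of $k$ only strengthens the accuracy), so $(\mathrm{A}2)$ holds for layer $t$ on a good event of probability at least $1-\delta/H$.

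Finally, I assemble the pieces. Conditioned on the good event for step $t$, \pref{thm: correctness of endo policy maximizer} returns an endogenous $\widehat{\pi}\ind{t}$ with $\max_{\pi\in\PiInd\brk{\fullc_{\leq k}}}V_{t,H}(\mut\circt\pi\circt[t+1]\widehat{\pi}\ind{t+1,H})\leq V_{t,H}(\mut\circt\widehat{\pi}\ind{t}\circt[t+1]\widehat{\pi}\ind{t+1,H})+4\eps$; since $\Ic\in\fullc_{\leq k}$ implies $\PiInd\brk{\Ic}\subseteq\PiInd\brk{\fullc_{\leq k}}$, restricting the maximum to $\PiInd\brk{\Ic}$ only decreases it, giving the stated near-optimality bound. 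Endogeneity of $\widehat{\pi}\ind{t}$ together with the inductive invariant shows $\widehat{\pi}\ind{t,H}$ is endogenous, advancing the induction. A union bound over the $H$ layers via \pref{lem: sequentual good event} (with per-step failure $\delta/H$) shows that both conclusions hold for all $t$ simultaneously with probability at least $1-\delta$, completing the proof.
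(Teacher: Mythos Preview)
Your proposal is correct and follows essentially the same approach as the paper: backward induction (equivalently, conditioning on $\cap_{t'=t+1}^{H}\GS\ind{t'}$) to verify the preconditions of \pref{thm: correctness of endo policy maximizer}, with $(\mathrm{A}1)$ coming from \pref{lem:restriction_endo_rewards} and $(\mathrm{A}2)$ from a Bernstein-plus-union-bound argument over $\PiInd\brk{\fullc_{\leq k}}$, then chaining the per-step guarantees via \pref{lem: sequentual good event}. Your variance bound $\En[X_n^2]\leq AH^2$ and cardinality estimate $\log\abs{\PiInd\brk{\fullc_{\leq k}}}=\bigoh(S^k\log(dA))$ match the paper's computation, and your explicit observation that $\PiInd\brk{\Ic}\subseteq\PiInd\brk{\fullc_{\leq k}}$ cleanly justifies the restricted maximum in the final near-optimality claim.
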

\begin{proof}[\pfref{lem: central result for PSDPE}]
Let $\GS\ind{t}$ denote the event in which
\begin{enumerate}
    \item $\widehat{\pi}\ind{t}$ is an endogenous policy.
    \item $\widehat{\pi}\ind{t}$ is near optimal:
    \begin{align*}
    \max_{\pi'\in \PiInd\brk{\Ic}}V_{t,H}\rbr{\mut \circt \pi'\circt[t+1] \widehat{\pi}\ind{t+1,H}} -  V_{t,H}\rbr{\mut \circt \widehat{\pi}\ind{t}\circt[t+1] \widehat{\pi}\ind{t+1,H}} \leq 4\eps.
\end{align*}
\end{enumerate}
We will prove that for any $\delta>0$, 
\begin{align}
    \PP\rbr{\GS\ind{t} \mid \cap_{t'=t+1}^H\GS\ind{t'}}\geq 1-\delta, \label{eq: PSDPE what we need to prove}.
\end{align}
as long at least $\bigom\prn[\Big]{\frac{AH^2k^3 S^k \log\rbr{\frac{dAH}{\delta}}}{\eps^2}}$ trajectories are used at layer $t$. Whenever~\eqref{eq: PSDPE what we need to prove} holds,~\pref{lem: sequentual good event} implies that
\begin{align}
    \PP\rbr{\cap_{t=1}^H\GS\ind{t'}}\geq 1-H\delta, \label{eq: PSDPE implications of what we need to prove}
\end{align}
and scaling $\delta\gets \delta/H$ concludes the proof.

We now prove that~\eqref{eq: PSDPE what we need to prove} holds. To do so, we apply~\pref{thm: correctness of endo policy maximizer} and verify that assumptions $(\mathrm{A}1)$ and $(\mathrm{A}2)$ required by it hold.
\begin{enumerate}[label=$(\mathrm{A}\arabic*)$]
    \item Conditioning on the event $\cap_{t'=t+1}^H\GS\ind{t'}$, we have that $\widehat{\pi}\ind{t+1,H}$ is an endogenous policy. In addition $\mut$ is an endogenous policy and the reward function is endogenous by assumption. Thus, the conditions of~\pref{lem:restriction_endo_rewards} are satisfied, and the restriction property holds:
    \[
    \max_{\pi \in \Pi\brk{\cI}} V_{t,h}\rbr{\mu \circt \pi \circt[t+1] \psi} = \max_{\pi \in \Pi\brk{\Iendo{\cI}}} V_{t,h}\rbr{\mu \circt \pi \circt[t+1] \psi}.
    \]
    \item The proof of this result uses similar arguments to~\pref{lem: sample complexity of close model} . Fix $\pi\in \Pi\brk*{\fullc_{\leq k}}$ and observe that $ \widehat{V}_{t,H}\rbr{\mut\circ_{t}\pi\circ_{t+1}\widehat{\pi}_{t+1:H}}$ is an unbiased estimator for $ V_{t,H}\rbr{\mut\circ_{t}\pi\circ_{t+1}\widehat{\pi}_{t+1:H}}$, and is bounded by $AH$. Using \pref{lem: bernstein's inequality} and following the same steps as in the proof of~\pref{lem: sample complexity of close model}, we have that with probability at least $1-\delta$,
    \begin{align*}
        &\abr{\widehat{V}_{t,H}\rbr{\mut\circ_{t}\pi\circ_{t+1}\widehat{\pi}_{t+1:H}}  - V_{t,H}\rbr{\mut\circ_{t}\pi\circ_{t+1}\widehat{\pi}_{t+1:H}}} \\
        &\leq O\rbr{\sqrt{\frac{AH^2 \log\rbr{\frac{1}{\delta}}}{N}} + \frac{AH\log\rbr{\frac{1}{\delta}}}{N}}.
    \end{align*}
    Taking a union bound over all $\pi\in \Pi\brk*{\fullc_{\leq k}}$ and using that $\abr{\Pi\brk{\fullc_{\leq k}}} \leq O\rbr{d^{k+1} A^{S^k}}$, we have that with probability at least $1-\delta$,
\begin{align*}
        &\abr{\widehat{V}_{t,H}\rbr{\mut\circ_{t}\pi\circ_{t+1}\widehat{\pi}_{t+1:H}}  - V_{t,H}\rbr{\mut\circ_{t}\pi\circ_{t+1}\widehat{\pi}_{t+1:H}}} \\
        &\leq O\rbr{\sqrt{\frac{AH^2kS^k \log\rbr{\frac{dA}{\delta}}}{N}} + \frac{AHkS^k\log\rbr{\frac{dA}{\delta}}}{N}}.
    \end{align*}
    Hence, setting $N = \Omega\prn[\big]{\frac{AH^2k^3 S^k \log\rbr{\frac{dA}{\delta}}}{\eps^2}}$ and using that $\eps^2\leq \eps$ for $\eps\in (0,1)$, we have that with probability at least $1-\delta$, for all $\pi\in \Pi\brk*{\fullc_{\leq k}}$,
    \begin{align*}
        &\abr{\widehat{V}_{t,H}\rbr{\mut\circ_{t}\pi\circ_{t+1}\widehat{\pi}_{t+1:H}}  - V_{t,H}\rbr{\mut\circ_{t}\pi\circ_{t+1}\widehat{\pi}_{t+1:H}}} \leq \frac{\eps}{12k}.
    \end{align*}
\end{enumerate}

\end{proof}


\newpage
\part{Additional Details and Proofs for Main Results}\label{part:main_res}

\section{$\AlgName$ Description and Proof of \preftitle{thm: sample complexity of state refinment}}\label{app:OSSR_alg} 
%
\label{app: learnin eps endogenous policy cover}
\newcommand{\cGstat}{\cG_{\mathrm{stat}}}

\begin{algorithm}[tp]
  \setstretch{1.2}
\caption{$\AlgName_h^{\eps,\delta}$: Optimization-Selection State Refinement}
\label{alg: dp-sr paper version}
\begin{algorithmic}[1]
  \State\textbf{require:}
  \begin{itemize}
  \item Timestep $h$, precision parameter $\eps>0$, failure probability $\delta\in (0,1)$.
  \item Policy covers $\cbr{\Psi\ind{t}}_{t=1}^{h-1}$ for steps $1,\ldots,h-1$.
  \item Upper bound $k\geq 0$ on the cardinality of $\Ic$.
  \end{itemize}
  \State\textbf{initialize:}
  \begin{itemize}
  \item Let $\cI\ind{h,h} \gets \emptyset$ and $\Psi\ind{h,h} \gets \emptyset$.
    \item Define $N = C A S^{4k} H^2 k^3 \log\rbr{\frac{dSAH}{\delta}}\epsilon^{-2}$ for sufficiently large constant $C>0$, and let $\epsnot\ldef \tfrac{\eps}{2 \S^k H}$.
    \end{itemize}
    
    \For{$t=h-1,h-2,..,1$}
  \item[]\algspace \algcommentul{Estimate occupancy measures}\vspace{3pt}
    \State\label{line: data generation process}Collect dataset $\cbr{(s_{t,n}, a_{t,n},
      \psi\ind{t+1,h}_{n} , s_{h,n})}_{n=1}^N$ by drawing $N$ trajectories from the process:
    \begin{itemize}[itemindent=10pt]
    \item Execute $\mu\ind{t}\ldef\unif(\Psi\ind{t})$ up to layer $t$ (resulting in state $s_{t,n}$).
    \item Sample action $a_{t,n}\sim\unif(\cA)$ and play it, transitioning to $s_{t+1,n}$ in the process. 
    \item Sample $\psi\ind{t+1,h}_n\sim\unif(\Psi\ind{t+1,h})$ and execute it from layers $t+1$ to $h$ (resulting in $s_{h,n}$).
    \end{itemize}
    \State\multiline{For each $\cI\in\fullcleq$, $\pi\in\Pi\brk{\AllAbstractions}$, and $\psi\ind{t+1,h}\in\Psi\ind{t+1,h}$, define
        \begin{align*}
~~\dhat_h\prn*{s\brk{\cI}\midsem\mut\circt\pi\circt[t+1]\psi\ind{t+1,h}}=\frac{1}{N}\sum_{n=1}^{N}\frac{\indic\crl*{a_{t,n}=\pi(s_{t,n}), \psi\ind{t+1,h}_n=\psi\ind{t+1,h},s_{h,n}\brk{\cI}=s\brk{\cI}}}{(1/\abs{\cA})\cdot(1/\abs{\Psi\ind{t+1,h}})}.
        \end{align*}}\label{line: importance sampling estimation}
      \State{}Let $
          \EmpOccupancy\ind{t,h} \ldef \crl*{
            \dhat_h\prn*{\cdot\midsem\mut\circt\pi\circt[t+1]\psi\ind{t+1,h}}\mid{}\pi\in\PiInd\prn*{\scrI_{\leq{}k}},
              \psi\ind{t+1,h}\in\Psi\ind{t+1,h})
          }
        $. \label{line: construction of D hat}
        \vspace{2pt}
    \item[]\algspace \algcommentul{Phase I: Optimization}\hfill(\pref{alg: approximate 1-step endogenous policy maximizer} in \pref{app: near optimal endogenous policy})\vspace{3pt}
    \item[]\algspace 
      \algcomment{Beginning from any state at layer $t$, $\pi\ind{t}_{s\brk*{\cI}}\circt[t+1]\psi_{s\brk[\big]{\cI\ind{t+1,h}}}\ind{t+1,h}$ maximizes probability that $s_h\brk{\cI}=s\brk*{\cI}$.}
      \State{}For each $\cI \in \scrI_{\leq{}k}(\cI\ind{t+1,h})$ and $s\sbr{\cI} \in \Scal\sbr{\cI}$, let
        \[
          ~~~~\pi\ind{t}_{s\brk{\cI}} \gets \EndoPolicyMaximizer_{t,h}^{\epsnot}\rbr{\cbr{\dhat_h\rbr{ s\brk{\cI} \midsem \mut\circt  \pi \circt[t+1] \psi\ind{t+1,h}_{s\brk{\cI^{t+1,h}}}}}_{\pi\in \PiInd\brk{\fullc_{\leq k}}}}.
        \]\label{line: OSSR policy optimization phase}
        \State{}Let $\Gamma\ind{t}\brk{\cI} \ldef
            \crl[\big]{\pi\ind{t}_{s\brk{\cI}}\mid{}s\brk{\cI}\in \cS\brk{\cI}}$.
        \vspace{2pt}
      \item[]\algspace\algcommentul{Phase II: Selection}\hfill(\pref{alg: sufficient exploration check} in \pref{app: no false detection of factors})\vspace{3pt}
        \item[]\algspace\algcomment{Find factor set $\cI\ind{t,h}\subseteq\Ic$ such that $\Gamma\ind{t}\brk*{\cI\ind{t,h}}$ has good coverage for all factors in $\fullc_{\leq k}\rbr{\cI\ind{t+1,h}}$.}
        \State{}$(\cI\ind{t,h}, \Gamma\ind{t}\brk{\cI\ind{t,h}}) \gets
        \FactorDetectionAlg_{t,h}^{\epsnot}\rbr{\cbr{\Gamma\ind{t}\sbr{\cI}}_{\cI\in
            \scrI_{\leq{}k}(\cI\ind{t+1,h})} \midsem\cI\ind{t+1,h},\Psi\ind{t+1,h},
          \EmpOccupancy\ind{t,h}}$. 
\label{line:ossr_selection}
        \vspace{2pt}
\item[]\algspace\algcommentul{Policy composition}\colt{\hfill\algcomment{Recall that $\pi\ind{t}_{s\brk{\cI\ind{t,h}}} \in \Gamma\ind{t}\brk*{\cI\ind{t,h}}$ and $\psi\ind{t+1,h}_{s\brk{\cI\ind{t+1,h}}}\in \Psi\ind{t+1,h}$.}}\vspace{3pt}
  \State{}Let $\cI\ind{t,h} \gets \cIhat$, then for each $s\brk{\cI\ind{t,h}}\in\cS\brk{\cI\ind{t,h}}$ define \colt{$\psi\ind{t,h}_{s\brk{\cI\ind{t,h}}} \ldef \pi\ind{t}_{s\brk{\cI\ind{t,h}}} \circt \psi\ind{t+1,h}_{s\brk{\cI\ind{t+1,h}}}.$}
  \arxiv{$\psi\ind{t,h}_{s\brk{\cI\ind{t,h}}} \ldef \pi\ind{t}_{s\brk{\cI\ind{t,h}}} \circt \psi\ind{t+1,h}_{s\brk{\cI\ind{t+1,h}}}.$}\label{line: policy composition OSSR}
  \State{}Let $\Psi\ind{t,h} \gets \cbr{\psi\ind{t,h}_{s\brk{\cI\ind{t,h}}} : s\brk{\cI\ind{t,h}}\in \cS\brk{\cI\ind{t,h}}}$.
        \EndFor
        \State{}\textbf{return} $\Psi\ind{h}\ldef{}\Psi\ind{1,h}$. \hfill\algcomment{Policy cover for timestep $h$.}
\end{algorithmic}
\end{algorithm}

In this section we present and analyze the full $\AlgName_{h}^{\eps,\delta}$ algorithm (\pref{alg: dp-sr paper version}). The algorithm may be thought of as a sample-based version of the $\ExactAlgName$ algorithm described in \pref{sec: DP-SR exact}. While $\ExactAlgName$ assumes exact access to state occupancy measures, $\AlgName_{h}^{\eps,\delta}$ estimates the occupancy measures in a data-driven fashion, which introduces the need to account for statistical errors.

This section is organized as follows. First, in \pref{app:OSSR_overview} we give a high-level overview of the algorithm design principles behind $\AlgName_{h}^{\eps,\delta}$. Then, in~\pref{app:thm_proof_ossr}, we prove the main result concerning its performance, \pref{thm: sample complexity of state refinment}. \pref{sec:ossr_sub1,app: application of endoPolicyOptimizer to OSSR} contain proofs for supporting results used in the proof of \pref{thm: sample complexity of state refinment}.

\subsection{\pcalg: Algorithm Overview}\label{app:OSSR_overview}



The $\AlgName^{\eps,\delta}_h$ algorithm follows the same template as \pcalgexact: For each $h\in\brk{H}$, given policy covers $\Psi\ind{1},\ldots,\Psi\ind{h-1}$, the algorithm builds a policy cover $\Psi\ind{h}$ for layer $h$ in a backwards fashion using dynamic programming. There are two differences from the exact algorithm. First, we only have sample access to the underlying \framework, the algorithm estimates the relevant occupancy measures for each backward step using Monte Carlo rollouts\arxiv{ and importance weighting}. Second, the optimization and selection phases from \pcalgexact are replaced by error-tolerant variants given by the subroutines $\EndoPolicyMaximizer$ and $\FactorDetectionAlg$ (\pref{alg: approximate 1-step endogenous policy maximizer} in \pref{app: near optimal endogenous policy} and \pref{alg: sufficient exploration check} in \pref{app: no false detection of factors}, respectively). 

  \paragraph{State occupancy estimation.}
  In order to apply dynamic programming in the same fashion as \pcalgexact, 
  each backward step $1\leq{}t\leq{}h-1$ of \pcalgfull proceeds by building estimates for the layer-$h$ occupancies $d_h(s\brk{\cI}\midsem\mu\ind{t}\circt\pi\circt[t+1]\psi\ind{t+1,h})$ for all $\cI\in\fullcleq$, $\pi\in\Pi\brk{\fullcleq}$ and $\psi\ind{t+1,h}\in\Psi\ind{t+1,h}$. This is accomplished through Monte Carlo: We gather trajectories by running $\mu\ind{t}$ up to layer $t$, sampling $a_t\sim\unif(\cA)$ uniformly, then sampling $\psi\ind{t+1,h}\sim\unif(\Psi\ind{t+1,h})$ and using it to roll out from layer $t+1$ to $h$. We then build estimates by importance weighting the empirical frequencies. We appeal to uniform convergence to ensure that the estimated occupancies are uniformly close for all $\cI\in\fullcleq$ and $\pi\in \PiInd\brk*{\fullc_{\leq k}}$; this argument critically uses that $\abs{\Psi\ind{t+1,h}}\leq{}S^{k}$ and $\log\abs*{\PiInd\brk*{\fullc_{\leq k}}}\leq O\rbr{kS^k\log\rbr{dA}}$, as well as the fact that we only require convergence for factors of size at most $k$.

  \paragraph{Error-tolerant backward state refinement.} Given the estimated state occupancy measures above, each  backward step $1\leq{}t\leq{}h-1$ of $\AlgName^{\eps,\delta}_h$ follows the general optimization-selection template used in \pcalgexact. For the optimization step (\pref{line: OSSR policy optimization phase}), it applies the subroutine $\EndoPolicyMaximizer^\eps_{t,h}$ (\pref{alg: approximate 1-step endogenous policy maximizer} in \pref{app: near optimal endogenous policy}), which finds a collection of endogenous ``one-step'' policy covers $\prn{\Gamma\ind{t}\brk*{\cI}}_{\cI\in \fullc_{\leq k} \rbr{\cI\ind{t+1,h}}}$, which have the property that for all $\cI\in \fullc_{\leq k} \rbr{\cI\ind{t+1,h}}$ and $s\in\cS$, the $t\to h$ policy $\pi\ind{t}_{s\sbr{\cI}}\circ \psi\ind{t+1,h}_{s\sbr{\cI\ind{t+1,h}}}$ (approximately) maximizes the probability that $s_h\brk{\cI}=s\brk{\cI}$. Then, at selection step (\pref{line:ossr_selection}), $\AlgName^{\eps,\delta}_h$ applies the subroutine $\FactorDetectionAlg^\eps_{t,h}$ (\pref{alg: sufficient exploration check} in \pref{app: no false detection of factors}), which selects a single factor set $\cI\ind{t,h}\subseteq\Ic$ such that---by choosing $\Psi\ind{t,h}$ to be the composition of $\Gamma\ind{t}\brk*{\cI\ind{t,h}}$ and $\Psi\ind{t+1,h}$---we obtain an (approximate) $t\to{}h$ policy cover.

Full descriptions and proofs of correctness for $\EndoPolicyMaximizer^\eps_{t,h}$ and $\FactorDetectionAlg^\eps_{t,h}$ are given in \pref{app: near optimal endogenous policy} and~\pref{app: no false detection of factors}. Briefly, both subroutines are based on approximate versions of the constraints used in the optimization and selection phase for \ExactAlgName (\pref{line: exact DP-SR state1} and~\pref{line: exact DP-SR state2} of \pref{alg: exact dynamics state refinement}), but ensuring endogeneity of the resulting factors is more challenging due to approximation errors, and it no longer suffices to simply search for the factor set with minimum cardinality. Instead, we search for factor sets that satisfy approximate versions of \pref{line: exact DP-SR state1} and~\pref{line: exact DP-SR state2} with an \emph{additive regularization term} based on cardinality. We show that as long as this penalty is carefully chosen as a function of the statistical error in the occupancy estimates, the resulting factor sets will be endogenous while inducing sufficient amount of exploration (with high probability).

In \pref{app:abstract_search}, we provide a general template for designing error-tolerant algorithms that search for endogenous factors using the approach described; both $\EndoPolicyMaximizer^\eps_{t,h}$ and $\FactorDetectionAlg^\eps_{t,h}$ are special cases of this template.

\subsection{Proof of \preftitle{thm: sample complexity of state refinment}}\label{app:thm_proof_ossr}

We now restate and prove \pref{thm: sample complexity of state refinment}, which shows that \pcalgfull learns an endogenous $\eps$-optimal policy cover with sample complexity depending only logarithmically on the number of factors $d$.

\DPSRSampleComplexity*
\begin{proof}[\pfref{thm: sample complexity of state refinment}] We begin by defining a success event for \mainalg. 
\begin{definition}[Success of~$\AlgName$ at the layer $h$]\label{def:ossr_success}
  $\GS\ind{h}$ is defined as the event in which the following properties hold:
\begin{enumerate}
    \item $\Psi\ind{h}$ is an endogenous $\eta/2$-approximate policy cover for layer $h$.
    \item $\cI\ind{h}$ contains only endogenous factors.
\end{enumerate}
\end{definition}
In addition, we define $\cG\ind{<h}=\cap_{h'=1}^{h-1}\GS\ind{h'}$. The following intermediate result---proven in the sequel (\pref{sec:ossr_sub1})---serves as our starting point.
\begin{restatable}[Success of State Refinement]{theorem}{SuccessOfBackwardStateRefinement}\label{thm: success of backward state refinement}
  Fix $h\in [H]$ and condition on $\cG\ind{<h}$. Then, for any $\eps>0$ (recalling that $\epsnot\ldef \tfrac{\eps}{2 \S^k H}$), by setting
\begin{align*}
    N = \Theta\prn[\Big]{A S^{2k} k^3 \log\rbr{\frac{dSAH}{\delta}}\cdot\epsnot^{-2}},
\end{align*}
$\pcalg_{h}^{\eps,\delta}$ guarantees that with probability at least $1-\delta$, for all $t\leq{}h$,
\begin{enumerate}
    \item $\cI\ind{t,h}\subseteq \Ic$, and $\Psi\ind{t,h}$ contains only endogenous policies.
    \item For all $s\in\cS$,
    \begin{align}
        \max_{\pi\in \PiInd\brk{\Ic}}d_h\rbr{s\brk{\Ic} ; \mut\circt \pi\circt[t+1]\psi\ind{t+1,h}_{s\brk{\cI\ind{t+1,h}}}} - d_h\rbr{s\brk{\Ic} ; \mut\circt \psi\ind{t,h}_{s\brk{\cI\ind{t,h}}}} \leq   \epsnot, \label{eq: consequence of fobastate main thm rel 1}
    \end{align}
    where we recall that $\psi\ind{t,h}_{s\brk{\cI\ind{t,h}}}\in \Psi\ind{t,h}$ and $\psi\ind{t+1,h}_{s\brk{\cI\ind{t+1,h}}}\in \Psi\ind{t+1,h}$.
\end{enumerate}
\end{restatable}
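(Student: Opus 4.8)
The plan is to prove both conclusions simultaneously by backward induction on the layer index $t = h, h-1, \ldots, 1$, carrying the invariants (I) $\cI\ind{t,h} \subseteq \Ic$ with $\Psi\ind{t,h}$ endogenous, and (II) the coverage bound \eqref{eq: consequence of fobastate main thm rel 1}. The base case $t = h$ holds trivially, since $\cI\ind{h,h} = \emptyset$ and $\Psi\ind{h,h} = \emptyset$. All randomness will be isolated into a family of statistical events $\cGstat\ind{t}$ (one per backward step), and the overall $1-\delta$ guarantee will follow by combining them with \pref{lem: sequentual good event} after setting each per-step failure probability to $\delta/h$. Conditioning on $\cG\ind{<h}$ (\pref{def:ossr_success}) supplies the fact that $\mut = \unf(\Psi\ind{t})$ is endogenous for every $t \le h-1$, which the subroutine preconditions require.

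First I would fix the statistical event. Let $\cGstat\ind{t}$ be the event that the empirical occupancies $\EmpOccupancy\ind{t,h}$ from \pref{line: importance sampling estimation} are $\tfrac{\epsnot}{12k}$-approximate with respect to the tuple $(\mut \circt \PiInd\brk{\fullc_{\leq k}} \circt[t+1] \Psi\ind{t+1,h},\, \fullc_{\leq k}(\cI\ind{t+1,h}),\, h)$. Invoking \pref{lem: sample complexity of close model} with precision $\epsilon = \epsnot/(12k)$ establishes $\cGstat\ind{t}$ with probability at least $1 - \delta/h$ once $N = \Omega(A S^{2k} k \cdot (12k)^2 \log(dSAH/\delta)\, \epsnot^{-2})$, matching the stated choice of $N$ (the $k^3$ arises from the single factor of $k$ in \pref{lem: sample complexity of close model} together with the $(12k)^2$ coming from the precision). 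The cardinality hypotheses of that lemma hold because $\abs{\Psi\ind{t+1,h}} \le S^k$, $\abs{\fullc_{\leq k}(\cI\ind{t+1,h})} = O(d^k)$, and $\abs{\cS\brk{\cI}} \le S^k$ for $\abs{\cI} \le k$.

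Next I would carry out the inductive step: assume (I)--(II) at $t+1$ and condition on $\cGstat\ind{t}$. For the optimization phase (\pref{line: OSSR policy optimization phase}), I verify the hypotheses of \pref{thm: correctness of endo policy maximizer} for $\EndoPolicyMaximizer^{\epsnot}_{t,h}$: assumption $(\mathrm{A}1)$ (the restriction property) is exactly \pref{lem: invariance of reduced policy cover}, which applies because $\mut$ and $\psi\ind{t+1,h}_{s\brk{\cI\ind{t+1,h}}}$ are endogenous (the former by $\cG\ind{<h}$, the latter by the inductive hypothesis), and $(\mathrm{A}2)$ is $\cGstat\ind{t}$. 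The conclusion is that every $\pi\ind{t}_{s\brk{\cI}} \in \Gamma\ind{t}\brk{\cI}$ is endogenous and $O(\epsnot)$-optimal, and that the subroutine does not return $\fail$. These two facts supply precisely the endogeneity required in $(\mathrm{A}1)$ and the optimality required in $(\mathrm{A}3)$ of the selection phase. I then verify the hypotheses of \pref{thm:  no false detection of controllable factors} for $\FactorDetectionAlg^{\epsnot}_{t,h}$ (\pref{line:ossr_selection}), with $(\mathrm{A}2)$ again furnished by $\cGstat\ind{t}$, concluding that $\cI\ind{t,h} = \cIhat \subseteq \Ic$ and that the composed cover satisfies \eqref{eq: consequence of fobastate main thm rel 1}. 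Finally, the composition in \pref{line: policy composition OSSR} preserves endogeneity because it concatenates endogenous one-step and $(t+1 \to h)$ policies, which closes the induction.

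The crux is not the assembly above but ensuring endogeneity survives two successive searches performed with only approximate occupancies. This rests on the cardinality-regularized bottom-up search of \AMFD together with the structural result \pref{lem: approximately reaching endo part is approximately optimal}: an endogenous policy that is near-optimal for reaching the endogenous part $s\brk{\Iendo{\cI}}$ is automatically near-optimal for reaching the full $s\brk{\cI}$, which is what allows the selection test to certify coverage while committing only to endogenous factors. The one bookkeeping subtlety I expect is constant tracking: \pref{thm: correctness of endo policy maximizer} and \pref{thm:  no false detection of controllable factors} deliver errors that are fixed numerical multiples of the subroutine precision ($4\epsnot$ and $16\epsnot$ respectively), so to reach the clean bound in \eqref{eq: consequence of fobastate main thm rel 1} I would rescale the precision handed to the subroutines by the corresponding constant and absorb it into $N$. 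Because \eqref{eq: consequence of fobastate main thm rel 1} is stated with the rollout fixed to the next-layer cover $\psi\ind{t+1,h}$, no further dynamic-programming argument is needed here beyond the two subroutine guarantees themselves.
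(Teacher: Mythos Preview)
Your proposal is correct and follows essentially the same route as the paper: backward induction over $t$ conditioned on $\cG\ind{<h}$, with per-step statistical events controlling the empirical occupancies at precision $\epsnot/(12k)$, verification of the preconditions for \pref{thm: correctness of endo policy maximizer} and then \pref{thm:  no false detection of controllable factors}, and a union bound via \pref{lem: sequentual good event}. The paper packages the optimization-phase verification into the auxiliary \pref{lem: correctness of endo policy cover} and handles the constants by an explicit $\epsnot\gets\epsnot/16$ rescaling at the end, but these are exactly the bookkeeping moves you anticipated.
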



We now show that conditioned on the event in \pref{thm: success of backward state refinement}, the set $\Psi\ind{h}$ is an endogenous, $\epsilon$-approximate policy cover (as long as $\epsnot$ is chosen to be sufficiently small). In particular, we will show that for all $s\brk{\Ic}\in \Scal\brk{\Ic}$ there exists a policy $\psi\in\Psi\ind{h}$ such that
\begin{align}
      \max_{\pi\in \PiIndNS\brk{\Ic}}d_h(s\brk{\Ic} ; \pi) \leq d_h(s\brk{\Ic} ; \psi) + \eps. \label{eq: main result what we need to prove}
\end{align}
Fix $s\brk{\Ic}\in \cS\brk{\Ic}$. From first part of~\pref{thm: success of backward state refinement}, we have that $\cI\ind{1,h}\subseteq \Ic$, so we can write
$     
s\brk{\Ic} = \rbr{s\brk{\cI\ind{1,h}}, s\brk{\Ic\setminus{}\cI\ind{1,h}}}.
$
We will show that the policy $\psi\ind{1,h}_{s\brk{\cI\ind{1,h}}}\in \Psi\ind{h}=\Psi\ind{1,h}$ maximizes the probability of reaching $s\brk{\Ic}\in \cS\brk{\Ic}$ in the sense of \eqref{eq: main result what we need to prove}.

Define a endogenous ``reward function'' $R_{s\brk{\Ic}}$, with \[R_{s\brk{\Ic},h}(s_h\sbr{\Ic})\ldef \indic\cbr{s_h\sbr{\Ic} \ldef s\sbr{\Ic}}\] and $R_{s\brk{\Ic},t}(\cdot)\ldef0$ for $t\neq{}h$. Letting $r_{s\brk{\Ic},t}\ldef{}R_{s\brk{\Ic},t}(s_t\sbr{\Ic})$, we can write
\begin{align}
    d_h(s\sbr{\Ic};\pi) \ldef{} \EE_\pi\sbr{\sum_{t=1}^{h} r_{s\brk{\Ic},t} }. \label{eq:reward_endo_mdp_indicator}
\end{align}
That is, we can view the state occupancy $d_h(s\sbr{\Ic};\pi)$ as the state value function for the \framework $\Mcal \ldef \rbr{\Scal,\Acal,T, R_{s\brk{\Ic}},H, d_1}$. Let $\pi\in \PiIndNS\brk{\Ic}$ be an endogenous policy. We let $\Mcal_{\mathrm{en}} = \rbr{\Scal,\Acal,\Tc, R_{s\brk{\Ic}},H, \dc}$ denote the endogenous component of this MDP, and let $Q^\pi_{t,\mathrm{en}}(s\brk{\Ic},a)$ denote the associated state-action value function for $\cM_{\en}$.

To proceed, we use the representation above within the performance difference lemma (\pref{lem:pd_for_endo_policies}) to bound the suboptimality of $\psi\ind{1,h}_{s\brk{\cI\ind{1,h}}}$ by a sum of "per-step" errors for each of the backward steps. In particular for any pair of endogenous policies $\pi,\psi\in \PiIndNS\brk{\Ic}$, \pref{lem:pd_for_endo_policies} implies that
\begingroup
\allowdisplaybreaks
\begin{align}
    &d_h(s\sbr{\Ic};\pi) - d_h(s\sbr{\Ic} ;\psi) \nonumber \\
    &\stackrel{\mathrm{(a)}}{=} \sum_{t=1}^h \EE_{\pi} \sbr{Q^{\psi}_{t,\mathrm{en}}(s_t\sbr{\Ic},\pi_{t}(s_t\sbr{\Ic}) - Q^{\psi}_{t,\mathrm{en}}(s_t\sbr{\Ic},\psi_{t}(s_t\sbr{\Ic}))}  \nonumber\\
    &\leq  \sum_{t=1}^h \EE_{s\brk{\Ic}\sim d_{t}\rbr{\cdot \midsem \pi}} \sbr{\max_{a}Q^{\psi}_{t,\mathrm{en}}(s\sbr{\Ic},a) - Q^{\psi}_{t,\mathrm{en}}(s\sbr{\Ic},\psi_{t}(s\sbr{\Ic}))}  \nonumber\\
    &\leq \sum_{t=1}^h \sum_{s\sbr{\Ic}\in \cS\brk{\Ic}} \max_{\pi'\in \PiIndNS\brk{\Ic}}d_{t}(s\sbr{\Ic}\midsem\pi')\rbr{\max_{a}Q^{\psi}_{t,\mathrm{en}}(s\sbr{\Ic},a) - Q^{\psi}_{t,\mathrm{en}}(s\sbr{\Ic},\psi_{t}(s\sbr{\Ic}))}. \nonumber \\
    &\stackrel{\mathrm{(b)}}{\leq} 2S^{k} \sum_{t=1}^h \sum_{s\sbr{\Ic}\in \Scal\brk{\Ic}} d_t(s\sbr{\Ic}\midsem \mu\ind{t})\rbr{\max_{a}Q^{\psi}_{t,\mathrm{en}}(s\sbr{\Ic},a) - Q^{\psi}_{t,\mathrm{en}}(s\sbr{\Ic},\psi_{t}(s\sbr{\Ic}))} \nonumber\\
    &= 2S^{k} \sum_{t=1}^h \EE_{s\brk{\Ic}\sim d_t\rbr{\cdot \midsem \mut}}\sbr{\max_{a}Q^{\psi}_{t,\mathrm{en}}(s\sbr{\Ic},a) - Q^{\psi}_{t,\mathrm{en}}(s\sbr{\Ic},\psi_{t}(s\sbr{\Ic}))} \nonumber\\
    & \stackrel{\mathrm{(c)}}{=} 2S^{k} \sum_{t=1}^h \rbr{\max_{\pi'\in \PiInd\brk{\Ic}} d_h(s_{h}\sbr{\Ic} ; \mu\ind{t}\circt \pi'\circt[t+1]\psi) - d_h(s_{h}\sbr{\Ic} ; \mu\ind{t}\circt \psi  )}. \label{eq: central theorem relation 1}
\end{align}
\endgroup
We justify the steps above as follows:
\begin{itemize}
  \item The equality $\mathrm{(a)}$ follows from \pref{lem:pd_for_endo_policies}).
  \item 
Relation $\mathrm{(b)}$ holds because $$\frac{\max_{\pi\in \PiInd\brk{\Ic}}d_t\rbr{\cdot \midsem \pi}}{d_t\rbr{\cdot \midsem \mut}}\leq 2S^k$$ which is a consequence of~\pref{lem: policy cover bound importance sampling ratio}. In particular, we use that (i) $\cbr{\Psi\ind{t}}_{t=1}^{h-1}$ are endogenous $\eta/2$-approximate policy covers, (ii) either $\max_{\pi\in \PiInd\brk{\Ic}}d_t\rbr{s\brk{\Ic}\midsem \pi}\geq \eta$  or $\max_{\pi\in \PiInd\brk{\Ic}}d_t\rbr{s\brk{\Ic}\midsem \pi}=0$ for all $s\brk{\Ic}\in \Scal\brk{\Ic}$ by the reachability assumption, and (iii) nonnegativity:
\[
\max_{a}Q^{\psi}_{t,\mathrm{en}}(s\sbr{\Ic},a) - Q^{\psi}_{t,\mathrm{en}}(s\sbr{\Ic},\psi_{t}(s\sbr{\Ic}))\geq 0.
\]
\item 
  Relation $\mathrm{(c)}$ holds by the skolemization principle~(\pref{lem:skolemization}) and the tower rule for conditional probabilities. 
\end{itemize}
Recall that the event defined in~\pref{thm: success of backward state refinement} (\eqref{eq: consequence of fobastate main thm rel 1}) implies that for all $t\leq{}h$,
\begin{align*}
    & \max_{\pi'\in \PiInd\brk{\Ic}} d_h(s_{h}\sbr{\Ic} ; \mu\ind{t}\circt \pi'\circt[t+1]\psi\ind{t+1,h}_{s\sbr{\cI\ind{t+1,h}}} - d_h(s_{h}\sbr{\Ic} ; \mu\ind{t}\circt \psi\ind{t,h}_{s\sbr{\cI\ind{t,h}}})  \leq \epsnot.
\end{align*}
Plugging this bound into~\eqref{eq: central theorem relation 1} with $\psi \gets  \psi\ind{1,h}_{s\sbr{\cI\ind{1,h}}}$, we have that for all endogenous policies $\pi$,
\begin{align*}
    d_h(s\sbr{\Ic};\pi) - d_h(s\sbr{\Ic} ; \psi\ind{1,h}_{s\sbr{\cI\ind{1,h}}}) \leq 2 \S^k H \epsnot.
\end{align*}
By using that $\epsnot\ldef \eps/ 2 \S^k H $ and taking the maximum with respect to $\pi\in \PiInd\brk{\Ic}$, we conclude that for all $s\sbr{\Ic} = \rbr{s\sbr{\cI\ind{1,h}}, s\sbr{\Ic\setminus{}\cI\ind{1,h}}}$, the policy $\psi\ind{1,h}_{s\sbr{\cI\ind{1,h}}}$ satisfies
\begin{align}
    \max_{\pi\in \PiInd\brk{\Ic}}d_h(s\sbr{\Ic};\pi) - d_h(s\sbr{\Ic} ; \psi\ind{1,h}_{s\sbr{\cI\ind{1,h}}})\leq \eps. \label{eq: central final result relation }
\end{align}
This establishes that the set $\Psi\ind{h}$ is an endogenous $\eps$-approximate policy cover. With this choice for $\epsnot$, the total sample complexity is $\bigoh\rbr{A S^{4k} H^2 k^3 \log\rbr{\frac{dSAH}{\delta}}\cdot\eps^{-2}}$. Finally, we note that as a consequence of \pref{thm: success of backward state refinement}, we have $\cI\ind{1,h}\subseteq\Ic$ as desired. We have $\abr{\Psi\ind{h}}\leq S^k$ by construction.


\end{proof}

\subsection{Proof of \preftitle{thm: success of backward state refinement} (Success of State Refinement Step)}
\label{sec:ossr_sub1}

In this section we prove~\pref{thm: success of backward state refinement}, a supporting result used in the proof of~\pref{thm: sample complexity of state refinment}.  The result shows for each step $t$, the optimization and selection phases in $\AlgName^{\eps,\delta}_{h}$ lead to a set of endogenous $t\to{}h$ policies~$\Psi\ind{t,h}$, as long as certain preconditions are satisfied.

\SuccessOfBackwardStateRefinement*




\begin{proof}[\pfref{thm: success of backward state refinement}]
The event $\GS\ind{<h}$ (\pref{def:ossr_success}) holds by assumption, which implies that the policy sets $\Psi\ind{t}$ for $t\in [h-1]$ contain only endogenous policies. As a result, 
\begin{align}
    \mu\ind{t} \ldef \unf\rbr{\Psi\ind{t}}. \label{eq: mut is endo policy statement}
\end{align}
 is an endogenous mixture policy. To proceed, we define some intermediate success events which will be used throughout the proof. First, for $t\leq{}h$ define 
\[
\GS_1\ind{t,h} \ldef \crl*{\text{$\Psi\ind{t,h}$ contains only endogenous policies, and $\cI\ind{t,h}\subseteq\Ic$}}.
\]
Observe when $\GS_1\ind{t,h}$ holds, we can express all states $s\brk{\Ic}\in\cS\brk{\Ic}$ as
    \begin{align*}
        s\brk{\Ic} = \rbr{s\brk{\cI\ind{t,h}},s\brk{\Ic \setminus{} \cI\ind{t,h}}} = \rbr{s\brk{\cI\ind{t+1,h}},s\brk{\Ic \setminus{} \cI\ind{t+1,h}}},
    \end{align*}
    since $\cI\ind{t+1,h} \subseteq \cI\ind{t,h} \subseteq \Ic$. Next, we define an event $\GS_{2}\ind{t,h}$ via 
        \begin{align*}
 &\GS_{2}\ind{t,h}\ldef\\
 &\crl*{\forall{}s\brk{\Ic}\in\cS\brk{\Ic}: \!  \text{$\max_{\pi\in \PiInd\brk{\Ic}}\! d_h(s\brk{\Ic} ; \mut\circt \pi\circt[t+1]\psi\ind{t+1,h}_{s\brk{\cI\ind{t+1,h}}}) - d_h(s\brk{\Ic} ; \mut\circt\psi\ind{t,h}_{s\brk{\cI\ind{t,h}}}) \leq \epsnot$}},
        \end{align*}
        where we recall that $\psi\ind{t+1,h}_{s\brk{\cI\ind{t+1,h}}}\in \Psi\ind{t,h}$ and $\psi\ind{t,h}_{s\brk{\cI\ind{t,h}}}\in \Psi\ind{t,h}$. Finally, let $\GS\ind{t,h} \ldef \GS\ind{t,h}_1 \cap \GS\ind{t,h}_2$.
We will prove that for all $t\leq{}h$,
\begin{align}
 \PP\rbr{\GS\ind{t,h} \mid \cap_{t'=t+1}^{h} \GS\ind{t',h}, \GS\ind{<h}}\geq 1-\delta/H. \label{eq: what we need to show central lemma}
\end{align}
Taking a union bound (\pref{lem: sequentual good event}), this implies that $\PP\rbr{\cap_{t'=1}^{h} \GS\ind{t',h} \mid \GS\ind{<h}}\geq 1-\delta$, which establishes \pref{thm: success of backward state refinement}.

\paragraphp{Proving \eqref{eq: what we need to show central lemma}.} Let $t<h$ be fixed, and condition on $\cap_{t'=t+1}^{h} \GS\ind{t',h}$ and $\GS\ind{<h}$. We will show that whenever these events hold and the estimated occupancy measures have sufficiently high accuracy, $\GS\ind{t,h}$ holds. Formally, recalling \pref{def: approximate set of occupancy measures}, define an event
\begin{align}
  \cGstat\ind{t,h} = \crl*{\text{$\EmpOccupancy$ is $\frac{\epsnot}{12k}$-approximate with respect to $\rbr{\mu\ind{t}\circt\PiInd\brk{\fullc_{\leq k}} \circt[t+1]\Psi\ind{t+1,h}, \fullc_{\leq{}k}\rbr{\cI\ind{t+1,h}},h}$}}.\label{eq: what we need to show central lemma rel 2}
\end{align}
Our goal is to show that conditioned on $\cap_{t'=t+1}^{h} \GS\ind{t',h}$ and $\GS\ind{<h}$, $\cGstat\ind{t,h}\implies\cG\ind{t,h}$, so that
\begin{align*}
  \PP\rbr{\GS\ind{t,h} \mid \cap_{t'=t+1}^{h} \GS\ind{t',h}, \GS\ind{<h}} 
  \geq \PP\rbr{\cGstat\ind{t,h} \mid \cap_{t'=t+1}^{h}\GS\ind{t',h}, \GS\ind{<h}}
  \stackrel{\mathrm{(a)}}{\geq} 1-\delta.
\end{align*}
Here $\mathrm{(a)}$ is a consequence of \pref{lem: sample complexity of close model}, which asserts that by setting
\begin{align}
  N = \bigom\prn[\Big]{A S^{2k} k^3 \log\rbr{\frac{d SA}{\delta}}\cdot\epsnot^{-2}},  \label{eq:num_samples_sr_thm}
\end{align}
the estimated state occupancies $\EmpOccupancy$ produced in \pref{line: importance sampling estimation} of $\AlgName^{\eps,\delta}_h$ are  $\epsnot/12k$-approximate with respect to   $\rbr{\mu\ind{t}\circt\PiInd\brk{\fullc_{\leq k}} \circt[t+1]\Psi\ind{t+1,h}, \fullc_{\leq{}k}\rbr{\cI\ind{t+1,h}},h}$, in the sense of \pref{def: approximate set of occupancy measures}. We formally verify that the preconditions required to apply~\pref{lem: sample complexity of close model} are satisfied at the end of the proof for completeness.



We now prove that conditioned on $\cap_{t'=t+1}^{h} \GS\ind{t',h}$ and $\GS\ind{<h}$, $\cGstat\ind{t,h}\implies\cG\ind{t,h}$. This relies on two claims: Success of $\EndoPolicyMaximizer$ and success of $\FactorDetectionAlg$.

\paragraphp{Success of $\EndoPolicyMaximizer^{\epsnot}_{t,h}$.} We appeal to \pref{lem: correctness of endo policy cover}, verifying that the assumptions it requires, $(\mathrm{A}1)$ and $(\mathrm{A}2)$, are satisfied (conditioned on $\cap_{t'=t+1}^{h} \GS\ind{t',h}$ and $\GS\ind{<h}$).
        \begin{enumerate}[label=$(\mathrm{A}\arabic*)$]
            \item $\mu\ind{t}$ is an endogenous policy when $\cG\ind{<h}$ holds (see \eqref{eq: mut is endo policy statement}) and $\Psi\ind{t+1,h}$ contains only endogenous policies whenever $\GS\ind{t+1,h}$ holds.
            \item $\EmpOccupancy$ is $\epsnot/12k$-approximate with respect to $\rbr{\PiInd\brk{\fullc_{\leq k}}, \fullc_{\leq{}k}\rbr{\cI\ind{t+1,h}},h}$ whenever $\cGstat\ind{t,h}$ holds.
        \end{enumerate}
        Thus,~\pref{lem: correctness of endo policy cover} implies that for all $\cI\in \fullc_{\leq k}\rbr{\cI\ind{t,h}}$ and   $s\brk*{\cI}\in \cS\brk*{\cI}$, the respective invocation of the sub-routine $\EndoPolicyMaximizer^{\epsnot}_{t,h}$  outputs a policy $\pi\ind{t}_{s\brk*{\cI}}\in \Gamma\ind{t}\brk*{\cI}$ that is (i) endogenous, and (ii) near-optimal in the following one-step sense:
        \begin{align}
          \label{eq:endo_optimality}
        \max_{\pi\in \PiInd\brk{\fullc_{\leq k}}} d_h\rbr{ s\brk{\cI} \midsem \mut\circt  \pi \circt[t+1] \psi\ind{t+1,h}_{s\brk{\cI^{t+1,h}}}} \leq d_h\rbr{ s\brk{\cI} \midsem \mut\circt  \pi\ind{t}_{s\brk{\cI}} \circt[t+1] \psi\ind{t+1,h}_{s\brk{\cI^{t+1,h}}}} +4\epsnot.
        \end{align}
    \paragraphp{Success of $\FactorDetectionAlg^{\epsnot}_{t,h}$.} We appeal to \pref{thm:  no false detection of controllable factors}, verifying that the assumptions $(\mathrm{A}1)$-$(\mathrm{A}3)$ required by it are satisfied.
    \begin{enumerate}[label=$(\mathrm{A}\arabic*)$]
    \item $\mu\ind{t}$ is endogenous whenever $\cG\ind{<h}$ holds. Whenever $\GS\ind{t+1,h}$ holds, we are guaranteed that $\Psi\ind{t+1,h}$ contains only endogenous policies, so that $\psi\ind{t+1,h}_{s\brk{\cI^{t+1,h}}}\in \Psi\ind{t+1,h}$ is endogenous in particular.
    \item $\EmpOccupancy$ is $\epsnot/12k$-approximate with respect to $\rbr{\PiInd\brk{\fullc_{\leq k}}, \fullc_{\leq{}k}\rbr{\cI\ind{t+1,h}},h}$ by $\cGstat\ind{t,h}$.
    \item Due to the success of $\EndoPolicyMaximizer_{t,h}^{\epsnot}$ (verified above), the condition in \eqref{eq:endo_optimality}  is satisfied.
    \end{enumerate}
    Hence, by~\pref{thm:  no false detection of controllable factors}, $\FactorDetectionAlg_{t,h}^{\epsnot}$ returns a tuple $(\cI\ind{t,h},\Psi\ind{t,h}\brk{\cI\ind{t,h}})$ such that
    \begin{enumerate}
    \item $\cI\ind{t,h}\subseteq \Ic$.
    \item For all $s\in\cS$,
    \[
       \hspace{-0.7cm} \max_{\pi\in \PiInd\brk{\Ic}}d_h\rbr{s\sbr{\Ic} ; \mut\circt \pi\circt[t+1]\psi\ind{t+1,h}_{s\sbr{\cI\ind{t+1,h}}}} - d_h\rbr{s\sbr{\Ic} ;  \mut\circt \pi\ind{t}_{s\brk{\cI\ind{t,h}}}\circt[t+1]\psi\ind{t+1,h}_{s\sbr{\cI\ind{t+1,h}}}} \leq 16\epsnot,
    \]
    where we recall that $\psi\ind{t+1,h}_{s\sbr{\cI\ind{t+1,h}}}\in \Psi\ind{t+1,h}$ and $ \pi\ind{t}_{s\brk{\cI\ind{t,h}}}\in \Gamma\ind{t}\brk*{\cI\ind{t,h}}$.
    \end{enumerate}
    \paragraphp{Wrapping up.} Scaling $\epsnot\gets \epsnot/16$ and $\delta\gets \delta/H$, and recalling that  $\psi\ind{t,h}_{s\sbr{\cI\ind{t,h}}}\in \Psi\ind{t,h}$ is given by\loose
\begin{align*}
    \psi\ind{t,h}_{s\sbr{\cI\ind{t,h}}} \ldef \pi\ind{t}_{s\brk{\cI\ind{t,h}}}\circt[t+1]\psi\ind{t+1,h}_{s\sbr{\cI\ind{t+1,h}}},
\end{align*}
we have that for all $t<h$, $$\PP\rbr{\GS\ind{t,h} \mid \cap_{t'=t+1}^{h} \GS\ind{t',h}, \cap_{h'=1}^{h-1}\GS\ind{h'}}\geq 1-\delta/H,$$ proving the result. 
 
\paragraphp{Verifying conditions of \pref{lem: sample complexity of close model}.} 
We conclude by verifying that the four conditions required by~\pref{lem: sample complexity of close model} hold, conditioned on $\cap_{t'=t+1}^{h} \GS\ind{t',h}$ and $\GS\ind{<h}$; this justifies the application in the prequel.
\begin{enumerate}
\item By construction, $\Psi\ind{t+1,h} = \crl[\big]{\psi\ind{t+1,h}_{s\brk{\cI\ind{t,h}}}\mid s\sbr{\cI\ind{t+1,h}}\in\Scal\sbr{\cI\ind{t+1,h}}}$. Thus, $\abr{\Psi\ind{t+1,h}} = \abr{\Scal\sbr{\cI\ind{t,h}}} \leq \S^k$, since 
  $\abr{\cI\ind{t+1,h}}\leq k$.
\item We have $\abr{\PiInd\brk{\fullc_{\leq k}}}\leq \bigoh\prn*{d^kA^{\S^k}}$, since the number of factor sets of size at most $k$ is
    \begin{align}
      \sum_{k'=0}^k{d \choose k'} \leq \prn*{\frac{ed}{k}}^{k} \leq O\rbr{d^k}, \label{eq: upper bond on sum d choose k}
    \end{align}
   and for any factor set $\cI$ with $\abr{\cI}\leq k$ we have $\abr{\Pi\brk{\cI}}\leq A^{\S^k}$.
\item $\abr{\fullc_{\leq{}k}\rbr{\cI\ind{t+1,h}}} \leq \abr{\fullc_{\leq{}k}}\leq O\rbr{d^k}$ by \eqref{eq: upper bond on sum d choose k}, 
\item For any fixed set $\cI$ with $\abs{\cI}\leq{}k$, we have $\abr{\Scal\sbr{\cI}}\leq \S^k$.
\end{enumerate}
\end{proof}

\subsection{Application of $\EndoPolicyMaximizer$ in $\AlgName$}\label{app: application of endoPolicyOptimizer to OSSR}


The main guarantee for the $\EndoPolicyMaximizer^\eps_{t,h}$ subroutine (\pref{thm: correctness of endo policy maximizer})
implies that the policy $\pi\ind{t}_{s\brk{\cI}}$ returned in \pref{line: OSSR policy optimization phase} of $\AlgName$ is endogenous, as well as near-optimal in the following this sense:
\begin{align*}
    \max_{\pi\in \PiInd\brk{\fullc_{\leq k}}} d_h\rbr{ s\brk{\cI} \midsem \mut\circt  \pi \circt[t+1] \psi\ind{t+1,h}_{s\brk{\cI^{t+1,h}}}} \leq d_h\rbr{ s\brk{\cI} \midsem \mut\circt  \pi\ind{t}_{s\brk{\cI}} \circt[t+1] \psi\ind{t+1,h}_{s\brk{\cI^{t+1,h}}}} + O(\eps).
\end{align*}
In this subsection we state and prove \pref{lem: correctness of endo policy cover}, which shows that the preconditions $(\mathrm{A}1)$ and $(\mathrm{A}2)$ required to apply \pref{thm: correctness of endo policy maximizer} are satisfied, so that the claim above indeed holds.

\begin{lemma}\label{lem: correctness of endo policy cover}
Fix $h\in [H]$ and $t\leq{}h$. Suppose that the following conditions hold:
\begin{enumerate}[label=$(\mathrm{C}\arabic*)$]
    \item $\mu\ind{t}\in \Pimix\brk{\Ic}$ is endogenous and $\Psi\ind{t+1,h}$ contains only endogenous policies.
    \item The collection $\EmpOccupancy$ of occupancy measures is $\epsilon/12k$-approximate with respect to\\ $\rbr{\mu\ind{t}\circ\Pi\brk{\fullc_{\leq{}k}} \circ \Psi\ind{t+1,h}, \fullc_{\leq{}k}\rbr{\cI\ind{t+1,h}},h}.$
    \end{enumerate}
      Then assumptions $(\mathrm{A}1)$ and $(\mathrm{A}2)$ of \pref{thm: correctness of endo policy maximizer} are satisfied when \epm is invoked within $\AlgName$, and for all $\cI\in \fullc_{\leq{}k}\rbr{\cI\ind{t+1,h}}$:
\begin{enumerate}
    \item The set $\Gamma\ind{t}\sbr{\cI} = \crl*{\pi\ind{t}_{s\brk{\cI}} \mid s\brk{\cI} \in \cS\brk{\cI}}$ contains only endogenous policies.
    \item For all $s\sbr{\cI}\in \cS\brk*{\cI}$, the policy $\pi\ind{t}_{s\sbr{\cI}}\in \Gamma\sbr{\Ical}$ satisfies
    \begin{align*}
      &\max_{\pi\in \PiInduced\brk{\fullc_{\leq{}k}}} d_h\rbr{s\sbr{\cI} \midsem \mu\ind{t}\circt \pi \circt[t+1] \psi\ind{t+1,h}_{s\sbr{\cI\ind{t+1,h}}}} \\
      &\leq d_h\rbr{s\sbr{\cI} \midsem \mu\ind{t}\circt \pi\ind{t}_{s\sbr{\cI}} \circt[t+1] \psi\ind{t+1,h}_{s\sbr{\cI\ind{t+1,h}}}} + 4\epsilon.
    \end{align*}
  \end{enumerate}
\end{lemma}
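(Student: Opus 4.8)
The plan is to verify that the two preconditions (A1) and (A2) of \pref{thm: correctness of endo policy maximizer} hold for each invocation of \epm inside \pref{line: OSSR policy optimization phase} of \AlgName, and then to invoke that theorem directly. The first step is to set up the dictionary between the abstract value-function notation of \pref{thm: correctness of endo policy maximizer} and the present instantiation. Fix a factor set $\cI\in\fullc_{\leq k}\rbr{\cI\ind{t+1,h}}$ and a target state $s\brk{\cI}\in\cS\brk{\cI}$. When \epm is called to produce $\pi\ind{t}_{s\brk{\cI}}$, the role of the ``value'' $V_{t,h}\rbr{\mut\circt\pi\circt[t+1]\psi}$ is played by the occupancy measure $d_h\rbr{s\brk{\cI}\midsem\mut\circt\pi\circt[t+1]\psi\ind{t+1,h}_{s\brk{\cI\ind{t+1,h}}}}$, with the roll-out policy $\psi\ind{t+1,h}_{s\brk{\cI\ind{t+1,h}}}\in\Psi\ind{t+1,h}$ held fixed, and the role of the estimate $\widehat{V}_{t,h}$ is played by the empirical occupancy $\dhat_h$ drawn from $\EmpOccupancy$. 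Equivalently, $d_h(s\brk{\cI}\midsem\cdot)$ is the value function of the indicator reward that fires only when $s_h\brk{\cI}=s\brk{\cI}$.

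Next I would verify (A1), the restriction property. Condition (C1) guarantees that $\mut$ is an endogenous mixture policy and that $\psi\ind{t+1,h}_{s\brk{\cI\ind{t+1,h}}}\in\Psi\ind{t+1,h}$ is endogenous. Since \epm is invoked inside \AlgName only for $t<h$, the hypotheses of the restriction lemma \pref{lem: invariance of reduced policy cover} are met with roll-in policy $\mut$ and roll-out policy $\rho=\psi\ind{t+1,h}_{s\brk{\cI\ind{t+1,h}}}$. That lemma then yields, for every factor set $\cJ$,
\[
\max_{\pi\in\PiInd\brk{\cJ}}d_h\rbr{s\brk{\cI}\midsem\mut\circt\pi\circt[t+1]\rho}
=\max_{\pi\in\PiInd\brk{\Iendo{\cJ}}}d_h\rbr{s\brk{\cI}\midsem\mut\circt\pi\circt[t+1]\rho},
\]
which is exactly assumption (A1) under the identification above.

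Then I would verify (A2), quality of estimation. This is immediate from (C2): by \pref{def: approximate set of occupancy measures}, the statement that $\EmpOccupancy$ is $\epsilon/12k$-approximate with respect to $\rbr{\mut\circ\PiInd\brk{\fullc_{\leq k}}\circ\Psi\ind{t+1,h},\fullc_{\leq k}\rbr{\cI\ind{t+1,h}},h}$ means precisely that
\[
\abs*{\dhat_h\rbr{s\brk{\cI}\midsem\mut\circt\pi\circt[t+1]\psi\ind{t+1,h}_{s\brk{\cI\ind{t+1,h}}}}-d_h\rbr{s\brk{\cI}\midsem\mut\circt\pi\circt[t+1]\psi\ind{t+1,h}_{s\brk{\cI\ind{t+1,h}}}}}\leq\epsilon/12k
\]
uniformly over all $\pi\in\PiInd\brk{\fullc_{\leq k}}$, $\cI\in\fullc_{\leq k}\rbr{\cI\ind{t+1,h}}$, and $s\brk{\cI}$; this is (A2) verbatim.

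Finally, having established (A1) and (A2), I would apply \pref{thm: correctness of endo policy maximizer} for each $\cI$ and each $s\brk{\cI}$: its first conclusion gives that the output $\pi\ind{t}_{s\brk{\cI}}$ is endogenous, whence $\Gamma\ind{t}\brk{\cI}=\crl*{\pi\ind{t}_{s\brk{\cI}}\mid s\brk{\cI}\in\cS\brk{\cI}}$ contains only endogenous policies (claim~1), and its second conclusion gives the near-optimality bound with additive error $4\epsilon$ (claim~2). Ranging over all $\cI\in\fullc_{\leq k}\rbr{\cI\ind{t+1,h}}$ then proves the lemma. I expect the only real obstacle to be bookkeeping: making the abstract-to-concrete dictionary precise—in particular, pinning down that the fixed roll-out policy $\psi\ind{t+1,h}_{s\brk{\cI\ind{t+1,h}}}$ is the endogenous $\rho$ required by the restriction lemma—and checking that the uniformity of the approximation in (C2) covers exactly the family of policies and factor sets over which \epm optimizes, rather than any deep new estimate.
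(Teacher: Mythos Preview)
Your proposal is correct and follows essentially the same approach as the paper: both identify the occupancy $d_h(s\brk{\cI}\midsem\cdot)$ with the value function of an indicator reward, verify (A1) via the restriction lemma \pref{lem: invariance of reduced policy cover} using (C1), and verify (A2) directly from (C2), then invoke \pref{thm: correctness of endo policy maximizer}. Your bookkeeping concerns are exactly the points the paper checks, and nothing further is needed.
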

\begin{proof}[\pfref{lem: correctness of endo policy cover}]%
Toward proving the result, we begin with a basic observation. Fix $\cI\in \fullc_{\leq{}k}\rbr{\cI\ind{t+1,h}}$ and $s\sbr{\cI}\in \cS\sbr{\cI}$. Define an MDP $\rbr{\cS,\Acal, T,R_{s\brk{\cI}},h}$ where $R_{s\brk{\cI},h} = \indic\cbr{s_h\brk{\cI} = s\brk{\cI}}$ and $R_{s\brk{\cI},h'}=0$ for all $h'\neq h$. Observe that the occupancy measure for $s\brk{\cI}$ at layer $h$ is equivalent to the $(t,h)$ value function in this MDP:
\begin{align}
    V_{t,h}\rbr{\mut\circt \pi \circt[t+1] \psi\ind{t+1,h}_{s\sbr{\cI\ind{t+1,h}}}} = d_h\rbr{s\sbr{\cI} \midsem \mut\circt \pi \circt[t+1] \psi\ind{t+1,h}_{s\sbr{\cI\ind{t+1,h}}}}. \label{eq: reduction value state frequency}
\end{align}
We now show that assumptions $(\mathrm{A}1)$ and $(\mathrm{A}2)$ of \pref{thm: correctness of endo policy maximizer} hold when the theorem is invoked with this value function, from which the result will follow.

\paragraphp{Verifying assumption $(\mathrm{A}1)$ of~\pref{thm: correctness of endo policy maximizer}.} The policies $\mut$ and $\psi\ind{t+1,h}_{s\sbr{\cI\ind{t+1,h}}} \in \Psi\ind{t+1,h}$ are endogenous by condition $(\mathrm{C}1)$. Hence, the assumptions of the restriction lemma (\pref{lem: invariance of reduced policy cover}) are satisfied, which gives
\begin{align*}
   &\max_{\pi\in \PiInd\sbr{\cI}} d_h\rbr{s\sbr{\cI} \midsem \mut\circt \pi \circt[t+1] \psi\ind{t+1,h}_{s\sbr{\cI\ind{t+1,h}}}} =  \max_{\pi\in \PiInd\sbr{\Iendo{\cI}}} d_h\rbr{s\sbr{\cI} \midsem \mut\circt \pi \circt[t+1] \psi\ind{t+1,h}_{s\sbr{\cI\ind{t+1,h}}}}\\
   &\iff \max_{\pi\in \PiInd\sbr{\cI}} V_{t,h}\rbr{\mut\circt \pi \circt[t+1] \psi\ind{t+1,h}_{s\sbr{\cI\ind{t+1,h}}}} =  \max_{\pi\in \PiInd\sbr{\Iendo{\cI}}} V_{t,h}\rbr{\mut\circt \pi \circt[t+1] \psi\ind{t+1,h}_{s\sbr{\cI\ind{t+1,h}}}}.
\end{align*}


\paragraphp{Verifying assumption $(\mathrm{A}2)$ of~\pref{thm: correctness of endo policy maximizer}.} By condition $(\mathrm{C}2)$, we have that $\EmpOccupancy$ is $\epsilon/12k$-approximate with respect to $\rbr{\mu\ind{t}\circ\PiInd\brk{\fullc_{\leq k}} \circ \Psi\ind{t+1,h}, \fullc_{\leq{}k}\rbr{\cI\ind{t+1,h}},h}$, and hence
\begin{align*}
    &\abr{\widehat{d}_{h}\rbr{ s\sbr{\cI} \midsem \mut\circt \pi\circt[t+1] \psi\ind{t+1,h}_{s\sbr{\cI\ind{t+1,h}}}} - d_{h}\rbr{s\sbr{\cI} \midsem \mut\circt \pi\circt[t+1] \psi\ind{t+1,h}_{s\sbr{\cI\ind{t+1,h}}}}} \leq \eps/12k\\
  &\iff  \abr{\widehat{V}_{t,h}\rbr{\mut\circt \pi \circt[t+1] \psi\ind{t+1,h}_{s\sbr{\cI\ind{t+1,h}}}} - V_{t,h}\rbr{\mut\circt \pi \circt[t+1] \psi\ind{t+1,h}_{s\sbr{\cI\ind{t+1,h}}}}} \leq \eps/12k.
\end{align*}

\end{proof}


\newpage

\section{Proof of \preftitle{thm:main} (Correctness of \RLwithExo)}
\label{app: RL in the presence of exogenous information}

In this section we formally prove \pref{thm:main}, which shows that $\RLwithExo$ (\pref{alg: main}) learns an $\eps$-optimal policy for a general \kEMDP. 
  The correctness of \mainalg is essentially a direct corollary of the results derived for $\AlgName$ and $\PSDP$ in~\pref{app: learnin eps endogenous policy cover} and~\pref{app: psdp in the presence of exogenous information}. The high probability guarantee for $\AlgName$ (\pref{thm: sample complexity of state refinment}) implies that iteratively applying $\AlgName^{\eta/2,\delta}_h$ results in an endogenous $\eta/2$-approximate policy covers for every layer $h\in [H]$. Conditioning on this event, $\PSDPE$ is guaranteed to find an $\eps$-optimal policy with high probability (\pref{thm:ex_PSDP_sample_comp}).

\rlwithexo*

\begin{proof}[\pfref{thm:main}] We first show that $\AlgName$ results in a near-optimal (endogenous) policy cover, then show that the application of \PSDPE is successful.
  \paragraphp{Application of \pcalg.} Let $\GS\ind{h}$ denote the event in which $\AlgName_h^{\eta/2,\delta}(\crl{\Psi\ind{t}}_{t=1}^{h-1})$ returns an endogenous $\eta/2$-approximate policy cover $\Psi\ind{h}$ with $\abs{\Psi\ind{h}}\leq{}S^{k}$, and let $\cG\ind{<h}\ldef{}\cap_{h'=1}^{h-1}\GS\ind{h}$. \pref{thm: sample complexity of state refinment} states that for all $h\geq{}2$, if we condition on $\cG\ind{<h}$, then given
$
    N = \bigoh\rbr{\frac{A S^{4k} H^2 k^3 \log\rbr{\frac{dSAH}{\delta}}}{\eta^2}}
$
samples, $\AlgName_h^{\eta/2,\delta}$ ensures that $\cG\ind{h}$ holds probability at least $1-\delta$ . Furthermore, $\cG\ind{1}$ holds trivially for $h=1$. By \pref{lem: sequentual good event}, this implies that $\PP\rbr{\cap_{h=1}^{H} \GS\ind{h}}\geq 1-H\delta $. Scaling $\delta\gets \delta/2H$, we conclude that given 
\[
    N_{\AlgName} = \bigoh\rbr{\frac{A S^{4k} H^2 k^3 \log\rbr{\frac{dSAH}{\delta}}}{\eta^2}}
\]
samples across all applications of $\AlgName_h^{\eta/2,\delta}$, the collection $\cbr{\Psi\ind{h}}_{h=1}^H$ is a set of endogenous $\eta/2$-approximate policy covers with probability at least $1-\delta/2$. We denote this event by $\GS_{\AlgName}$, so that $\PP\rbr{\GS_{\AlgName}}\geq 1-\delta/2$.

\paragraphp{Application of \PSDP.} Conditioned on the event $\GS_{\AlgName}$, the conditions of~\pref{thm:ex_PSDP_sample_comp} hold, so that the application of $\PSDPE$ is admissible. As a result, given
\[
  N_{\PSDPE}
  = \bigoh\rbr{\frac{A S^{3k} H^4 k^3 \log\rbr{\frac{dSAH}{\delta}}}{\epsilon^2}}
\]
samples, $\PSDPE$ finds an endogenous $\eps$-optimal policy. We denote this event by $\GS_{\PSDPE}$, so that $\PP\rbr{\GS_{\PSDPE} \mid \GS_{\AlgName} } \geq 1-\delta/2$.

\paragraphp{Concluding the proof.} $\RLwithExo$ returns an endogenous $\eps$-optimal policy when $\GS_{\AlgName}$ and $\GS_{\PSDPE}$ hold, and by the union bound
$\PP\rbr{\GS_{\AlgName}\cap{}\GS_{\PSDPE}} \geq 1-\delta$.
The total number of samples is 
\[
  N = N_{\AlgName} + N_{\PSDPE} \leq \bigoh\rbr{\frac{A S^{4k}H^2 k^3 \log\rbr{\frac{dSAH}{\delta}}}{\eta^2} + \frac{A S^{3k}H^4 k^3 \log\rbr{\frac{dSAH}{\delta}}}{\epsilon^2}}.
\]
\end{proof}

\subsection{Computational Complexity of $\RLwithExo$}
The \RLwithExo procedure can be implemented with $O(d^kNS^k AH)$ runtime. In \pref{sec:comp_comp_pspde}, we show that \PSDPE can be implemented in runtime $O(d^kNS^k AH)$. Similarly, $\AlgName_h^{\eps,\delta}$ can be implemented with runtime $O(d^kNS^k A)$. The most computationally demanding aspect of \AlgName is optimizing the function $ \widehat{V}_{t,H}\rbr{\mut\circ_{t}\pi\circ_{t+1}\widehat{\pi}_{t+1:H}}$ over the policy class $\Pi\brk{\fullc_{\leq k}}$. As shown in \pref{sec:comp_comp_pspde}, this procedure can be implemented with runtime $O(d^kNS^k A)$, which is repeated for $H$ times in \RLwithExo.


\end{document}